\newcommand{\linyi}[1]{{\color{olive}(Linyi: #1)}}
\renewcommand\sout{\bgroup\markoverwith
{\color{orange!90!black}{\rule[.5ex]{2pt}{1pt}}}\ULon}
\crefname{ineq}{inequality}{inequalities}
\theoremstyle{definition}
\newtheorem{definition}{Definition}
\theoremstyle{plain}
\newtheorem{theoremappend}{Theorem}
\theoremstyle{plain}
\theoremstyle{plain}
\theoremstyle{plain}
\theoremstyle{plain}
\declaretheorem[name=Lemma]{lemma}
\newcommand{\eg}[0]{\textit{e.g.}}
\newcommand{\ie}[0]{\textit{i.e.}}
\newcommand{\eps}[0]{\varepsilon}
\newcommand{\wtilde}[0]{\widetilde}
\newcommand{\namef}[0]{perturbed return function\xspace}
\newcommand{\nameF}[0]{Perturbed return function\xspace}
\newcommand{\namepj}[0]{perturbed cumulative reward\xspace}
\newcommand{\ebound}[0]{expectation bound\xspace}
\newcommand{\pbound}[0]{percentile bound\xspace}
\newcommand{\sigrtj}[0]{$\sigma$-randomized trajectory\xspace}
\newcommand{\sigrtjs}[0]{$\sigma$-randomized trajectories\xspace}
\newcommand{\sigrpi}[0]{$\sigma$-randomized policy\xspace}
\newcommand{\uit}[1]{\underline{\textit{#1}}}
\newcommand{\texttc}[1]{\texttt{\textsc{#1}}}
\newcommand{\argmax}{\mathop{\mathrm{argmax}}}
\newcommand{\lip}[0]{Lipschitz\xspace}
\newcommand{\env}[0]{environment\xspace}   
\newcommand{\qpi}[0]{Q^\pi}
\newcommand{\tpi}[0]{\tilde\pi}
\newcommand{\wqpi}[0]{\wtilde Q^\pi}
\newcommand{\vmin}[0]{V_{\min}}
\newcommand{\vmax}[0]{V_{\max}}
\newcommand{\jmin}[0]{J_{\min}}
\newcommand{\jmax}[0]{J_{\max}}
\newcommand{\jcur}[0]{J_{\rm cur}}
\newcommand{\jglobal}[0]{J_{\rm global}}
\newcommand{\epsball}[0]{\cal B^\eps}
\newcommand{\epsbball}[0]{\cal B^{\bar\eps}}
\renewcommand{\vert}[0]{\oplus}
\newcommand{\vertt}[2]{\vert_{t=#1}^{#2}}
\newcommand{\uj}[0]{\underline{J}}
\newcommand{\uje}[0]{\underline{J_E}}
\newcommand{\ujp}[0]{\underline{J_p}}
\newcommand{\sysname}[0]{\textsc{CROP}\xspace}
\newcommand{\staters}[0]{\textsc{\sysname-LoAct}\xspace}
\newcommand{\glbrs}[0]{\textsc{\sysname-GRe}\xspace}
\newcommand{\adasearch}[0]{\textsc{\sysname-LoRe}\xspace}
\newcommand{\norm}[1]{\lVert{#1}\rVert}
\newcommand{\bb}[1]{\mathbb{#1}}
\newcommand{\pmat}[2]{
                    \begin{pmatrix}
                    #1\\#2
                    \end{pmatrix}
                    }
\renewcommand{\cal}[1]{\mathcal{#1}}
\titlespacing{\section}{0pt}{6pt}{6pt}
\titlespacing{\subsection}{0pt}{6pt}{6pt}
\newcommand{\pushright}[1]{\ifmeasuring@#1\else\omit\hfill$\displaystyle#1$\ignorespaces\fi}
\let\save@mathaccent\mathaccent
\newcommand*\if@single[3]{%
  \setbox0\hbox{${\mathaccent"0362{#1}}^H$}%
  \setbox2\hbox{${\mathaccent"0362{\kern0pt#1}}^H$}%
  \ifdim\ht0=\ht2 #3\else #2\fi
  }
\newcommand*\rel@kern[1]{\kern#1\dimexpr\macc@kerna}
\newcommand*\widebar[1]{\@ifnextchar^{{\wide@bar{#1}{0}}}{\wide@bar{#1}{1}}}
\newcommand*\wide@bar[2]{\if@single{#1}{\wide@bar@{#1}{#2}{1}}{\wide@bar@{#1}{#2}{2}}}
\newcommand*\wide@bar@[3]{%
  \begingroup
  \def\mathaccent##1##2{%
    \let\mathaccent\save@mathaccent
    \if#32 \let\macc@nucleus\first@char \fi
    \setbox\z@\hbox{$\macc@style{\macc@nucleus}_{}$}%
    \setbox\tw@\hbox{$\macc@style{\macc@nucleus}{}_{}$}%
    \dimen@\wd\tw@
    \advance\dimen@-\wd\z@
    \divide\dimen@ 3
    \@tempdima\wd\tw@
    \advance\@tempdima-\scriptspace
    \divide\@tempdima 10
    \advance\dimen@-\@tempdima
    \ifdim\dimen@>\z@ \dimen@0pt\fi
    \rel@kern{0.6}\kern-\dimen@
    \if#31
      \overline{\rel@kern{-0.6}\kern\dimen@\macc@nucleus\rel@kern{0.4}\kern\dimen@}%
      \advance\dimen@0.4\dimexpr\macc@kerna
      \let\final@kern#2%
      \ifdim\dimen@<\z@ \let\final@kern1\fi
      \if\final@kern1 \kern-\dimen@\fi
    \else
      \overline{\rel@kern{-0.6}\kern\dimen@#1}%
    \fi
  }%
  \macc@depth\@ne
  \let\math@bgroup\@empty \let\math@egroup\macc@set@skewchar
  \mathsurround\z@ \frozen@everymath{\mathgroup\macc@group\relax}%
  \macc@set@skewchar\relax
  \let\mathaccentV\macc@nested@a
  \if#31
    \macc@nested@a\relax111{#1}%
  \else
    \def\gobble@till@marker##1\endmarker{}%
    \futurelet\first@char\gobble@till@marker#1\endmarker
    \ifcat\noexpand\first@char A\else
      \def\first@char{}%
    \fi
    \macc@nested@a\relax111{\first@char}%
  \fi
  \endgroup
}
\title{\sysname: Certifying Robust Policies for Reinforcement Learning through Functional Smoothing}
\author{Fan Wu$^{1}$
\quad
Linyi Li$^{1}$ 
\quad
Zijian Huang$^{1}$ 
\quad
Yevgeniy Vorobeychik$^{2}$ 
\quad
Ding Zhao$^{3}$ 
\quad
Bo Li$^{1}$
\\
$^{1}$University of Illinois at Urbana-Champaign
\qquad
$^{2}$Washington University in St. Louis
\\
$^{3}$Carnegie Mellon University
\\
{\tt\footnotesize \{fanw6,linyi2,zijianh4,lbo\}@illinois.edu}
\qquad
{\tt\footnotesize yvorobeychik@wustl.edu}
\qquad
\\
{\tt\footnotesize dingzhao@andrew.cmu.edu}
}
\begin{document}

\maketitle

\begin{abstract}
As reinforcement learning (RL) has achieved great success and been even adopted in safety-critical domains such as autonomous vehicles, a range of empirical studies have been conducted to improve its robustness against adversarial attacks. However, how to certify its robustness with theoretical guarantees still remains challenging. In this paper, we present the first unified framework \sysname (\underline{C}ertifying  \underline{Ro}bust \underline{P}olicies for RL) to provide robustness certification on both action and reward levels.
In particular, we propose two robustness certification criteria: \textit{robustness of per-state actions} and \textit{lower bound of cumulative rewards}.
 We then develop a {local smoothing} algorithm for policies derived from Q-functions 
 to guarantee the robustness of actions taken along the trajectory;
    we also develop a \textit{global smoothing} algorithm for certifying the lower bound of a finite-horizon cumulative reward, as well as
  a novel \textit{local smoothing} algorithm to perform adaptive search in order to obtain tighter reward certification.
    Empirically, we apply \sysname to evaluate several existing empirically robust RL algorithms, including adversarial training and different robust regularization, in four environments (two representative Atari games, Highway, and CartPole).
    Furthermore, by evaluating these algorithms against adversarial attacks, we demonstrate that our certifications are often tight. All experiment results are available at website \url{https://crop-leaderboard.github.io}.
\end{abstract}

\section{Introduction}

\vspace{-2mm}
Reinforcement learning (RL) has been widely applied to different applications, such as robotics~\citep{kober2013reinforcement,deisenroth2013survey,polydoros2017survey}, autonomous driving vehicles~\citep{shalev2016safe,sallab2017deep}, and trading~\citep{deng2016deep,almahdi2017adaptive,ye2020reinforcement}.
However, recent studies have shown that learning algorithms are  vulnerable to adversarial attacks~\citep{goodfellow2014explaining,kurakin2016adversarial,moosavi2016deepfool,jia2017adversarial,eykholt2018robust}, and a range of attacks have also been proposed against the input states and trained policies of RL~\citep{huang2017adversarial,kos2017delving,lin2017tactics,behzadan2017vulnerability}.
As more and more safety-critical applications are being deployed in  real-world~\citep{christiano2016transfer,fisac2018general,cheng2019end,eop2020promoting}, how to test and improve their robustness before massive production is of great importance.

To defend against adversarial attacks in RL, different \textit{empirical defenses} have been proposed~\citep{mandlekar2017adversarially,behzadan2017whatever,pattanaik2018robust,fischer2019online,zhang2021robust,oikarinen2020robust,donti2020enforcing,shen2020deep,eysenbach2021maximum}. In particular, adversarial training~\citep{kos2017delving,behzadan2017whatever,pattanaik2018robust} and regularized RL algorithms by enforcing the smoothness of the trained models~\citep{shen2020deep,zhang2021robust} have been studied to improve the robustness of trained policies. 
However, several strong adaptive attacks have been proposed against these empirical defenses~\citep{gleave2019adversarial,hussenot2019copycat,russo2019optimal} and it is important to provide  \textit{robustness certification} for a given learning algorithm to end such repeated game between attackers and defenders.

To provide \textit{robustness certification}, several studies have been conducted on classification. For instance, both deterministic~\citep{ehlers2017formal,katz2017reluplex,cheng2017maximum,tjeng2017evaluating,weng2018towards,zhang2018efficient,singh2019abstract,gehr2018ai2,wong2018provable,raghunathan2018certified} and probabilistic approaches \citep{lecuyer2019certified,cohen2019certified,lee2019tight,salman2019provably,carmon2019unlabeled,jeong2020consistency} have been explored to provide a lower bound of classification accuracy given bounded adversarial perturbation.
Considering the sequential decision making property of RL, which makes it  more challenging to be directly certified compared to classification, in this paper we ask: \textit{How to provide efficient and effective robustness certification for RL algorithms?} \textit{What criteria should be used to certify the robustness of RL algorithms?}

Different from classification which involves one-step prediction only,  RL algorithms provide both action prediction and reward feedback, making \textit{what to certify} and \textit{how to certify} robustness of RL challenging.
In this paper we focus on  Q-learning  and propose two certification criteria: \textit{per-state action stability} and \textit{lower bound of perturbed cumulative reward}.
In particular, to certify the \textit{per-state action stability}, we propose the \textit{local smoothing} on each input state and therefore derive the certified radius for  perturbation  at each state, within which the  action prediction will not be altered.
To certify the \textit{lower bound of cumulative reward}, we propose both  \textit{global smoothing} over the finite trajectory to obtain the expectation or percentile bounds given trajectories smoothed with sampled noise sequences; and 
\textit{local smoothing} to calculate an absolute lower bound based on our adaptive search algorithm. 

We leverage our framework to test nine empirically robust RL algorithms on multiple RL environments. We show that the certified robustness depends on both the algorithm and the environment properties. For instance, RadialRL~\citep{oikarinen2020robust} is the most certifiably robust method on Freeway.
In addition,
based on the per-state certification, we observe that for some environments such as Pong, some states are more certifiably robust and such pattern is periodic. Given the information of which states are more vulnerable, it is possible to design robust algorithms to specifically focus on these vulnerable states. 
Based on the lower bound of perturbed cumulative reward, we show that our certification is tight  by comparing our bounds with empirical results under adversarial attacks.

\vspace{-1mm}
{\bf \underline{Technical Contributions.}\;}
In this paper, we take an important step
towards providing robustness certification for Q-learning. 
We make contributions on both theoretical
and empirical fronts.
\vspace{-2mm}
\begin{itemize}[noitemsep,leftmargin=*]
\item We propose a framework for certifying the robustness of Q-learning algorithms, which is notably the \textit{first} that provides the robustness certification w.r.t. the cumulative reward.
\item We propose two robustness certification criteria for Q-learning algorithms, together with corresponding certification algorithms based on global and local smoothing strategies.
\item We theoretically prove the certification radius for input state and lower bound of perturbed cumulative reward under bounded adversarial state perturbations.
\item \looseness=-1 We conduct extensive experiments to provide certification for nine empirically robust RL algorithms on multiple RL environments. We provide several interesting observations which would further inspire the development of robust RL algorithms.
\end{itemize}
\vspace{-4mm}

\section{Preliminaries}
\label{sec:prelim}
\vspace{-2mm}

\looseness=-1 \textbf{Q-learning and Deep Q-Networks (DQNs).}\quad
Markov decision processes (MDPs) are at the core of RL.
Our focus is on discounted discrete-time MDPs, which are defined by tuple $(\cal S,\cal A, R,P,\gamma, d_0)$, 
where $\cal S$ is a set of states (each with dimensionality $N$), 
$\cal A$ represents a set of discrete actions, 
$R:\cal S\times \cal A\rightarrow\bb R$ is the reward function, 
and $P: \cal S\times \cal A \rightarrow \cal P(\cal S)$ is the transition function with $\cal P(\cdot)$ defining the set of probability measures,
$\gamma \in [0,1]$ is the discount factor, 
and $d_0 \in \cal P(\cal S)$ is the distribution over the initial state. 
At time step $t$, the agent is in the state $s_t \in \cal S$. After choosing action $a_t \in \cal A$, the agent transitions to the next state $s_{t+1}\sim P(s_t,a_t)$ and receives reward $R(s_t,a_t)$. The goal is to learn a policy $\pi:\cal S\rightarrow \cal P(\cal A)$ that maximizes the expected cumulative reward $\mathbb{E}[\sum\nolimits_t \gamma^t r_t]$.

Q-learning~\citep{watkins1992q} learns an action-value function (Q-function), $Q^\star(s,a)$, which is 
the maximum expected cumulative reward  the agent can achieve after taking action $a$ in state $s$:
    $Q^\star(s,a) = R(s,a) + \gamma \underset{s'\sim P(s,a)}{\bb E}\left[\max_{a'} Q^\star (s',a') \right]$.
In deep Q-Networks (DQNs)~\citep{mnih2013playing},  $Q^\star$ is approximated using a neural network parametrized by $\theta$, \ie, $Q^\pi(s,a;\theta) \approx Q^\star(s,a)$.
Let $\rho\in\cal P(\cal S \times \cal A)$ be the observed distribution defined over states $s$ and actions $a$, the network can be trained via minimizing loss function 
    $\cal L(\theta) = \underset{(s,a)\sim \rho,s'\sim P(s,a)}{\bb E}\left[ \left(R(s,a)+\gamma \max _{a^{\prime}} Q^\pi\left(s^{\prime}, a^{\prime} ; \theta\right)-Q^\pi(s, a ; \theta)\right)^{2}\right].$
The greedy policy $\pi$ is defined as taking the action with highest $Q^\pi$ value in each state $s$: $\pi(s) = \arg\max_{a\in \cal A} Q^\pi(s,a)$.

\textbf{Certified Robustness for Classifiers via Randomized Smoothing.}\quad
    Randomized smoothing~\citep{cohen2019certified} has been proposed to provide probabilistic certified robustness for  classification.
    It achieves state-of-the-art certified robustness on large-scale datasets such as ImageNet under $\ell_2$-bounded constraints~\citep{salman2019provably,yang2020randomized}.
    In particular, given a base model and a test instance, a smoothed model is constructed by outputting the most probable prediction over different Gaussian perturbed inputs.

\vspace{-2mm}
\section{Robustness Certification in Q-learning}
\label{sec:criteria}
\vspace{-2mm}

In this section, we first introduce the threat model, 
followed by two robustness certification criteria for the Q-learning algorithm: \emph{per-state action} and \emph{cumulative reward}.
We consider the standard adversarial setting in Q-learning \citep{huang2017adversarial,kos2017delving,zhang2021robust}, where the adversary can apply $\ell_2$-bounded perturbation $\epsball = \{\delta \in \sR^n \mid \norm{\delta}_2 \leq \eps\}$ to  input state observations of the agent during decision (test) time to cause the policy to select suboptimal actions.
The agent observes the perturbed state and takes action $a' = \pi(s+\delta)$, following policy $\pi$.
Following the Kerckhoff's principle~\citep{shannon1949communication}, we consider a \textit{worst-case} adversary  who applies adversarial perturbations 
to \emph{every} state at decision time.
Our analysis and methods are generalizable to other $\ell_p$ norms following~\citep{yang2020randomized,lecuyer2019certified}.


\vspace{-2mm}
\subsection{Robustness Certification for Q-learning with Different Criteria}
\vspace{-2mm}

To provide the robustness certification for  Q-learning, we propose two certification criteria:   \textit{per-state action robustness} and  \textit{lower bound of the cumulative reward}.

\textbf{Robustness Certification for Per-State Action.}\quad
We first aim to explore the robustness (stability/consistency) of the per-state action given adversarially perturbed input states.
    \begin{definition}[Certification for per-state action]
    \label{def:per-state}
    Given a trained network $\qpi$ with policy $\pi$, we define the robustness certification for per-state action as the \emph{maximum perturbation magnitude} $\bar\eps$, 
    such that for any perturbation $\delta\in \epsbball$, the predicted action under the perturbed state will be the same as the action taken in the clean environment, \ie,
    $
        \pi(s+\delta)=\pi(s)
        , \, \forall 
        \delta\in\epsbball \label{eq:bar-eps}
    $.
    \end{definition}
    \vspace{-2mm}
\textbf{Robustness Certification for Cumulative Reward.}\quad 
Given that the cumulative reward is important for RL, here in addition to the per-state action, we also define the robustness certification regarding the cumulative reward under input state perturbation.
    \begin{definition}[Cumulative reward]
    \label{def:cum-reward}
    Let $P: \cal S \times \cal A \rightarrow \cal P(\cal S)$ be the transition function of the environment with $\cal P(\cdot)$ defining the set of probability measures.  
    Let $R,d_0,\gamma,\qpi, \pi$ be the reward function, initial state distribution, discount factor, a given trained Q-network, and the corresponding greedy policy as introduced in~\Cref{sec:prelim}.
    $J(\pi)$  represents the \emph{cumulative reward} and $J_\eps(\pi)$  represents the \emph{\namepj} under perturbations $\delta_t\in \epsball$ at each time step $t$: 
    \vspace{8mm}
    \begin{small}
    \begin{equation}
    \begin{aligned}[c]
    J(\pi)  &:= \sum_{t=0}^{\infty} \gamma^t R(s_t, \pi(s_t)),\\
    \text{ where }s_{t+1}&\sim P(s_t,a_t),s_0\sim d_0,
    \end{aligned}
    \qquad\text{and}\qquad
    \begin{aligned}[c]
    J_\eps(\pi)  &:= \sum_{t=0}^{\infty} \gamma^t R(s_t,\pi(s_t+\delta_t)),\\
    \text{ where }s_{t+1}&\sim P(s_t,\pi(s_t+\delta_t)),s_0\sim d_0.
    \end{aligned}
    \end{equation}
    \end{small}
    \end{definition}
    \vspace{-3mm}
    \looseness=-1 The randomness of $J(\pi)$ arises from the environment dynamics, while that of $J_\eps(\pi)$ includes additional randomness from the perturbations $\{\delta_t\}$.
    We focus on a finite horizon $H$ in this paper, where a sufficiently large $H$ can approximate $J(\pi)$ and $J_\eps(\pi)$ to arbitrary precision when $\gamma < 1$.
    

    \begin{definition}[Robustness certification for cumulative reward]
    \label{def:cert-cum-reward}
    The robustness certification for cumulative reward is the \emph{lower bound} of \textit{\namepj} $\uj$ such that $\uj \leq J_\eps({\pi})$ under perturbation in $\epsball = \{\delta \in \sR^n \mid \norm{\delta}_2 \leq \eps\}$ applied to all time steps.
    \end{definition}
    \vspace{-2mm}    
    We will provide details on the certification of per-state action in~\Cref{sec:cert-rad} and the certification of cumulative reward in~\Cref{sec:cert-cum} 
    based on different smoothing strategies and certification methods.
    

\vspace{-2mm}
\section{Robustness Certification Strategies for Per-State Action}
\label{sec:cert-rad}
\vspace{-2mm}

In this section, we discuss the  robustness certification for \textit{per-state action}, aiming to calculate a lower bound of \textit{maximum perturbation magnitude} $\bar\eps$  in~\Cref{def:per-state}.

\vspace{-2mm}
\subsection{Certification for Per-State Action via Action-Value Functional Smoothing}
\label{sec:cert-act-thm}
\vspace{-2mm}

    \looseness=-1
    Let $Q^\pi$ be the action-value function given by the trained network $Q$ with policy $\pi$.
    We derive a smoothed function $\wtilde Q^\pi$ through per-state \textit{local smoothing}. Specifically, at each time step $t$, for each action $a\in\cal A$, we draw random noise from a Gaussian distribution $\gN(0,\sigma^2I_N)$ to smooth $\qpi(\cdot,a)$. 
    \vspace{8mm}
    \begin{small}
    \begin{align}
        \label{eq:smooth-q}
        \wtilde Q^\pi(s_t,a)
        := \underset{\Delta_t\sim \gN(0,\sigma^2I_N)}{\E}\qpi(s_t+\Delta_t,a)\quad \forall s_t\in\cal S, a\in\cal A,
        \ \ \textrm{and}\ \
        \tilde \pi(s_t) := \argmax_a \wtilde Q^\pi(s_t,a)\quad \forall s_t\in\cal S.
    \end{align}
    \end{small}
    \vspace{5mm}
\begin{restatable}[\lip continuity of the smoothed value function]{lemma}{lipcontq}
    \label{lem:lipschitz}
    Given the action-value function $Q^\pi: \cal S\times \cal A\rightarrow [V_{\min}, V_{\max}]$, the smoothed function $\wtilde Q^\pi$ with smoothing parameter $\sigma$  is L-\lip continuous with  
    $L=\frac{ V_{\max} - V_{\min} }{\sigma}\sqrt{\nicefrac{2}{\pi}}$ w.r.t. the state input.
\end{restatable}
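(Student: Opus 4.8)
The plan is to reduce the Lipschitz continuity of $\wtilde Q^\pi(\cdot, a)$ to the well-known Lipschitz property of Gaussian-smoothed bounded functions. First I would fix an arbitrary action $a \in \cal A$ and consider the scalar function $g(s) := Q^\pi(s,a)$, which by assumption takes values in $[V_{\min}, V_{\max}]$. The smoothed version is $\tilde g(s) = \E_{\Delta \sim \gN(0,\sigma^2 I_N)}[g(s+\Delta)] = (g * \varphi_\sigma)(s)$, where $\varphi_\sigma$ is the density of $\gN(0,\sigma^2 I_N)$. By a standard change of variables this equals $\int_{\sR^N} g(u)\, \varphi_\sigma(u - s)\, du$, so $\tilde g$ is differentiable with $\nabla \tilde g(s) = \int g(u)\, \nabla_s \varphi_\sigma(u-s)\, du = \frac{1}{\sigma^2}\E_{\Delta}[\Delta\, g(s+\Delta)]$.

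Next I would bound $\norm{\nabla \tilde g(s)}_2$ uniformly in $s$. The key trick is that $\E_\Delta[\Delta] = 0$, so for any constant $c$ we have $\nabla \tilde g(s) = \frac{1}{\sigma^2}\E_\Delta[\Delta\,(g(s+\Delta) - c)]$; choosing $c = \tfrac{1}{2}(V_{\max}+V_{\min})$ (the midpoint) makes $|g(s+\Delta) - c| \le \tfrac{1}{2}(V_{\max}-V_{\min})$ pointwise. Then for any fixed unit vector $v$, $\dotp{v}{\nabla \tilde g(s)} = \frac{1}{\sigma^2}\E_\Delta[\dotp{v}{\Delta}\,(g(s+\Delta)-c)] \le \frac{V_{\max}-V_{\min}}{2\sigma^2}\,\E_\Delta|\dotp{v}{\Delta}|$. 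Since $\dotp{v}{\Delta} \sim \gN(0,\sigma^2)$ is a one-dimensional Gaussian, $\E|\dotp{v}{\Delta}| = \sigma\sqrt{2/\pi}$. This gives $\dotp{v}{\nabla\tilde g(s)} \le \frac{V_{\max}-V_{\min}}{2\sigma}\sqrt{2/\pi}$ for every unit $v$, hence $\norm{\nabla\tilde g(s)}_2 \le \frac{V_{\max}-V_{\min}}{2\sigma}\sqrt{2/\pi}$.

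Hmm — this yields a constant with a factor $\tfrac12$ that the stated lemma does not have; to match the claim $L = \frac{V_{\max}-V_{\min}}{\sigma}\sqrt{2/\pi}$ I would instead shift by $c = V_{\min}$, so that $0 \le g(s+\Delta) - c \le V_{\max}-V_{\min}$, and then bound $\dotp{v}{\nabla\tilde g(s)} \le \frac{V_{\max}-V_{\min}}{\sigma^2}\E_\Delta[\max(\dotp{v}{\Delta},0)] = \frac{V_{\max}-V_{\min}}{\sigma^2}\cdot\frac{\sigma}{\sqrt{2\pi}} = \frac{V_{\max}-V_{\min}}{\sigma\sqrt{2\pi}}$; applying the same to $-v$ bounds $|\dotp{v}{\nabla\tilde g(s)}|$. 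This still gives the $\frac{1}{\sqrt{2\pi}}$ constant rather than $\sqrt{2/\pi} = \frac{2}{\sqrt{2\pi}}$, so the stated $L$ appears to be a factor-of-two-loose (but still valid) bound — I would present the cleaner derivation and note the stated constant is an upper bound on it. Finally, integrating the gradient bound along the segment from $s$ to $s'$ gives $|\tilde g(s) - \tilde g(s')| \le L\norm{s-s'}_2$, i.e. $\wtilde Q^\pi(\cdot,a)$ is $L$-Lipschitz for each $a$, which is the claim.

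The main obstacle is purely technical rather than conceptual: justifying differentiation under the integral sign (dominated convergence applies since $g$ is bounded and $\varphi_\sigma$ is smooth with integrable derivatives) and carefully computing $\E|\dotp{v}{\Delta}|$ and $\E[\max(\dotp{v}{\Delta},0)]$ for a Gaussian. One subtlety worth flagging: the lemma as stated speaks of $\wtilde Q^\pi$ being "$L$-Lipschitz w.r.t. the state input" without reference to $a$ — I read this as: for each fixed $a$, $s \mapsto \wtilde Q^\pi(s,a)$ is $L$-Lipschitz, which is exactly what the argument above delivers and is what is needed downstream to certify a radius for $\tilde\pi$.
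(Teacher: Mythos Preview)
Your proof is correct and follows essentially the same route as the paper (which in turn cites Salman et al.\ 2019, Lemma~1): write $\widetilde Q^\pi(\cdot,a)$ as a Gaussian convolution, differentiate under the integral, and bound each directional derivative via the one-dimensional Gaussian absolute moment. Your observation about the extra factor of $2$ is also accurate---the paper implicitly shifts $g$ into $[0,V_{\max}-V_{\min}]$ but then bounds $(g-V_{\min})\langle v,\Delta\rangle \le (V_{\max}-V_{\min})\,|\langle v,\Delta\rangle|$ rather than $(V_{\max}-V_{\min})\max(\langle v,\Delta\rangle,0)$, which is exactly where the slack enters, so the stated $L$ is a valid but non-sharp upper bound.
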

\vspace{-3mm}


The proof is given in~\Cref{append:proof-lip-local}. 
Leveraging the \lip continuity in~\Cref{lem:lipschitz}, we derive the following theorem for certifying the robustness of per-state action.

\begin{restatable}{theorem}{thmrad}
\label{thm:radius}
\looseness=-1 Let $Q^\pi: \cal S \times \cal A\rightarrow [\vmin,\vmax]$ be a trained value network, 
$\wtilde Q^\pi$ be the smoothed function with (\ref{eq:smooth-q}).
At time step $t$ with state $s_t$, we can compute the lower bound $r_t$ of maximum perturbation magnitude $\bar\eps(s_t)$~(\ie, $r_t \le \bar\eps(s_t)$, $\bar\eps$ defined in~\Cref{def:per-state}) for locally smoothed policy $\tilde \pi$:
\vspace{8mm}
\begin{small}
\begin{align}
        r_t = \frac{\sigma}{2}\left(
    \Phi^{-1}\left(\frac{\wtilde Q^\pi(s_t,a_1)-V_{\min}}{V_{\max}-V_{\min}}\right)-
    \Phi^{-1}\left(\frac{\wtilde Q^\pi(s_t,a_2)-V_{\min}}{V_{\max}-V_{\min}}\right)
    \right), \label{eq:R-t}
\end{align}
\end{small}
\vspace{4mm}

where $\Phi^{-1}$ is the inverse CDF function, $a_1$ is the action with the highest $\wqpi$ value at state $s_t$, and $a_2$ is the runner-up  action. We name the lower bound $r_t$ as certified radius for the state $s_t$.
\end{restatable}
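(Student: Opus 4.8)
The plan is to reduce the per-state action certification to a binary-classifier-style Neyman–Pearson argument, exactly as in randomized smoothing for classifiers, but phrased in terms of the normalized action-value functions rather than class probabilities. First I would normalize: define $g_a(s) := \bigl(Q^\pi(s,a) - V_{\min}\bigr)/\bigl(V_{\max}-V_{\min}\bigr) \in [0,1]$ for each action $a$, so that its Gaussian smoothing $\tilde g_a(s) = \E_{\Delta\sim\gN(0,\sigma^2 I_N)} g_a(s+\Delta)$ is a $[0,1]$-valued smoothed function, and $\tilde g_a(s) = \bigl(\wtilde Q^\pi(s,a)-V_{\min}\bigr)/(V_{\max}-V_{\min})$. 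Since $\tilde\pi(s_t) = \argmax_a \wtilde Q^\pi(s_t,a) = \argmax_a \tilde g_a(s_t)$, it suffices to show that for every perturbation $\delta$ with $\norm{\delta}_2 \le r_t$ we have $\tilde g_{a_1}(s_t+\delta) > \tilde g_{a_2}(s_t+\delta) \ge \tilde g_a(s_t+\delta)$ for all other $a$; in fact it is enough to control $a_1$ against the runner-up $a_2$, because $\tilde g_{a_2}(s_t) \ge \tilde g_a(s_t)$ for all $a\neq a_1$ and the worst-case lower bound on $\tilde g_{a_1}(s_t+\delta)$ will exceed the worst-case upper bound on $\tilde g_{a_2}(s_t+\delta)$, which dominates that of any other $a$.

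The key technical step is the standard Gaussian-smoothing sandwich bound: for any measurable $h:\sR^N\to[0,1]$ with smoothing $\tilde h(s) = \E\, h(s+\Delta)$, one has, for any $\delta$,
\begin{align}
\Phi\!\left(\Phi^{-1}\bigl(\tilde h(s)\bigr) - \tfrac{\norm{\delta}_2}{\sigma}\right)
\;\le\; \tilde h(s+\delta) \;\le\;
\Phi\!\left(\Phi^{-1}\bigl(\tilde h(s)\bigr) + \tfrac{\norm{\delta}_2}{\sigma}\right).
\end{align}
This is the content of the Gaussian Neyman–Pearson lemma (as used by Cohen et al.\ and Salman et al.), and it can be derived from the $L$-Lipschitz continuity of $\Phi^{-1}\circ\tilde h$ — equivalently, it follows from \Cref{lem:lipschitz} applied to the function $\sigma\,\Phi^{-1}(\tilde h(\cdot))$, whose Lipschitz constant is exactly $1$. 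I would apply the lower bound to $h = g_{a_1}$ and the upper bound to $h = g_{a_2}$: $\tilde g_{a_1}(s_t+\delta) \ge \Phi(\Phi^{-1}(\tilde g_{a_1}(s_t)) - \norm{\delta}_2/\sigma)$ and $\tilde g_{a_2}(s_t+\delta) \le \Phi(\Phi^{-1}(\tilde g_{a_2}(s_t)) + \norm{\delta}_2/\sigma)$.

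Finally I would solve for the radius. The action is preserved as long as the lower bound on $\tilde g_{a_1}(s_t+\delta)$ strictly exceeds the upper bound on $\tilde g_{a_2}(s_t+\delta)$; since $\Phi$ is strictly increasing, this holds precisely when $\Phi^{-1}(\tilde g_{a_1}(s_t)) - \norm{\delta}_2/\sigma > \Phi^{-1}(\tilde g_{a_2}(s_t)) + \norm{\delta}_2/\sigma$, i.e.\ when $\norm{\delta}_2 < \tfrac{\sigma}{2}\bigl(\Phi^{-1}(\tilde g_{a_1}(s_t)) - \Phi^{-1}(\tilde g_{a_2}(s_t))\bigr)$, which after unfolding the normalization is exactly the claimed $r_t$ in (\ref{eq:R-t}). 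Hence $\tilde\pi(s_t+\delta) = \tilde\pi(s_t)$ for all $\norm{\delta}_2 \le r_t$, so $r_t \le \bar\eps(s_t)$. The one point requiring mild care — and the part I expect to be the main obstacle to a fully rigorous write-up — is justifying the Neyman–Pearson sandwich inequality in this "soft" $[0,1]$-valued setting rather than the $\{0,1\}$-valued classifier setting: one should either invoke the generalized version for bounded functions (e.g.\ via the Lipschitzness of $\Phi^{-1}\circ\tilde h$, which is precisely what \Cref{lem:lipschitz} gives after rescaling to $[0,1]$), or reduce to the binary case by a randomized-rounding / level-set argument. I would take the former route, citing \Cref{lem:lipschitz}, so that the proof is short and self-contained given the lemma.
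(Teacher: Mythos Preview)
Your proposal is correct and matches the paper's approach almost exactly: the paper isolates the $1$-Lipschitzness of $s\mapsto\sigma\,\Phi^{-1}(\tilde g_a(s))$ as a standalone helper lemma and then runs the contrapositive of your sandwich argument (if $a_1$ is overtaken by some other action at $s_t+\delta_t$ then summing the two Lipschitz inequalities forces $\|\delta_t\|_2\ge r_t$), which is logically equivalent to what you wrote. The only nitpick is your citation: the $1$-Lipschitz constant for $\sigma\,\Phi^{-1}\circ\tilde h$ does not follow from \Cref{lem:lipschitz} (that lemma only gives the coarser bound $\tfrac{V_{\max}-V_{\min}}{\sigma}\sqrt{2/\pi}$ on $\wtilde Q^\pi$ itself); it requires the sharper Salman et al.\ computation, which the paper states as a separate lemma.
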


The proof is omitted to~\Cref{append:proof-radius}. The theorem provides a certified radius $r_t$ for per-state action given smoothed policy: As long as the perturbation is bounded by $r_t$, \ie, $\|\delta_t\|_2 \le r_t$, the action does not change: $\tpi(s_t + \delta_t) = \tpi(s_t)$.
To achieve high certified robustness for per-state action,~\Cref{thm:radius} implies a tradeoff between value function smoothness and the margin between the values of top two actions: If a larger smoothing parameter $\sigma$ is applied, the action-value function would be smoother and therefore more stable; however, it would shrink the margin between the top two action values leading to smaller certified radius. Thus, there exists a proper smoothing parameter to balance the tradeoff, which depends on   the actual environments and algorithms.

\vspace{-2mm}
\subsection{\staters: Local Randomized Smoothing for Certifying Per-State Action}
\label{sec:cert-rad-algo}
\vspace{-2mm}

\looseness=-1 Next we introduce the algorithm to achieve the certification for per-state action.
Given a $Q^\pi$ network, we apply (\ref{eq:smooth-q}) to derive a smoothed network $\wqpi$. 
At each state $s_t$, we obtain the greedy action $\tilde a_t$ w.r.t. $\wqpi$, and then compute the certified radius $r_t$.
We present the complete algorithm in~\Cref{append:staters}.

There are some additional challenges in smoothing the value function and computing the certified radius in Q-learning compared with the standard classification task~\citep{cohen2019certified}.
\textbf{Challenge 1:}
In classification, the output range of the confidence $[0,1]$ is known a priori; however, in Q-learning, for a given $\qpi$, its range $[V_{\min},V_{\max}]$ is unknown. 
\textbf{Challenge 2:}
In the classification task, the lower  and upper bounds of the top two classes' prediction probabilities can be directly computed via the confidence interval base on multinomial proportions~\citep{goodman1965simultaneous}.
For  Q-networks, the outputs are not probabilities and calculating the multinomial proportions becomes challenging.

\textbf{Pre-processing.}\quad
To address \textbf{Challenge 1},
 we estimate the output range $[V_{\min},V_{\max}]$ of a given network $\qpi$ based on a finite set of valid states $\cal S_{\rm{sub}} \subseteq \cal S$.
In particular, we craft a sufficiently large set $\cal S_{\rm{sub}}$ to estimate $\vmin$ and $\vmax$ for $\qpi$ on $\cal S$, which can be used later in per-state smoothing.
The details for estimating $\vmin$ and $\vmax$ are deferred to~\Cref{append:staters}.

\looseness=-1
\textbf{Certification.}\quad 
To smooth a given state $s_t$, we use Monte Carlo sampling~\citep{cohen2019certified} to sample noise applied to $s_t$, and then 
estimate the corresponding smoothed value function $\wqpi$ at $s_t$ with (\ref{eq:smooth-q}).
In particular, we sample $m$ Gaussian noise $\Delta_i \sim \cal N(0,\sigma^2 I_N)$, clip the Q-network output to ensure that it falls within the range $[V_{\min},V_{\max}]$, and then take the average of the output to obtain the smoothed action prediction based on $\wqpi$.
We then employ~\Cref{thm:radius} to compute the certified radius $r_t$.
We omit the detailed inference procedure to~\Cref{append:staters}.
To address \textbf{Challenge 2}, we leverage Hoeffding's inequality~\citep{hoeffding1994probability}  to compute a lower bound of $\wqpi(s_t,a_1)$ and an upper bound of $\wqpi(s_t,a_2)$ with one-sided confidence level parameter $\alpha$ given the top two actions $a_1$ and $a_2$. 
When the former is higher than the latter, we can certify a positive radius for the given state $s_t$.

\vspace{-2mm}
\section{Robustness Certification Strategies for the Cumulative Reward}
\label{sec:cert-cum}
\vspace{-2mm}

   
    In this section, we present robustness certification strategies for the cumulative reward. The goal is to provide the lower bounds for the \emph{\namepj} in~\Cref{def:cum-reward}.
    In particular, we propose both \textit{global smoothing} and \textit{local smoothing} strategies to certify the perturbed cumulative reward. In the global smoothing, we view the whole state trajectory as a function to smooth, which would lead to relatively loose certification bound.
    We then propose the local smoothing by smoothing each state individually to obtain the absolute lower bound.
    
    
\vspace{-2mm}
\subsection{Certification of Cumulative Reward based on Global Smoothing}
\label{sec:cert-glb}
\vspace{-2mm}

    \looseness=-1
    In contrast to~\Cref{sec:cert-rad} where we perform per-state smoothing to achieve the certification for per-state action, here, we aim to perform \textit{global smoothing} on the state trajectory by viewing the entire trajectory as a function.
    In particular, we first derive the \textit{expectation bound} of the cumulative reward based on global smoothing by estimating  the \lip constant for the cumulative reward w.r.t. the trajectories.
    Since the \lip estimation in the \ebound is algorithm agnostic and could lead to loose estimation bound, we subsequently propose a more practical and tighter \emph{\pbound}.


    \begin{definition}[$\sigma$-randomized trajectory and $\sigma$-randomized policy]
    \label{def:sig-rand} 
    \looseness=-1 Given a state {trajectory} $(s_0,s_1,\ldots,s_{H-1})$ of length $H$ where $s_{t+1}\sim P(s_t,\pi(s_t))$, $s_0\sim d_0$, with $\pi$ the greedy policy of the action-value function $Q^\pi$, we derive a \textit{$\sigma$-randomized trajectory} as $(s_0',s_1',\ldots,s_{H-1}')$, where $s_{t+1}'\sim P(s_t',\pi(s_t'+\Delta_t))$, $\Delta_t\sim\gN(0,\sigma^2I_N)$, and $s_0'=s_0\sim d_0$.
    We correspondingly define a \textit{$\sigma$-randomized policy} $\pi'$ based on $\pi$ in the following form: 
    $\pi'(s_t):=\pi(s_t+\Delta_t)$ where $\Delta_t\sim\gN(0,\sigma^2I_N)$.
    \end{definition}
    \vspace{-\topsep}
    
  
    Let the operator $\vert$  concatenates given input states or noise that are added to each state. The sampled noise sequence is denoted by $\Delta = \vertt{0}{H-1}\Delta_t$, where $\Delta_t \sim \gN(0,\sigma^2I_N)$.

    \begin{definition}[\nameF]
    \label{def:namef}
    \looseness=-1 Let $R,P,\gamma,d_0$ be the reward function, transition function, discount factor, and initial state distribution in~\Cref{def:cum-reward}. 
    We define a bounded \textit{\namef}  $F_\pi: \sR^{H\times N}  \to [\jmin,\jmax]$
     representing 
    cumulative reward with potential perturbation $\delta$:
    \vspace{7mm}
    \begin{small}
    \begin{align}
        \label{eq:global-f}
        F_\pi\left(\vert_{t=0}^{H-1}\delta_t\right) := 
            \sum_{t=0}^H \gamma^t R(s_t,\pi(s_{t}+\delta_t)), \quad\text{where }s_{t+1}\sim P(s_t,\pi(s_{t}+\delta_t)), s_0\sim d_0.
    \end{align}
    \end{small}
    \end{definition}
    \vspace{-3mm}
    \looseness=-1
    We can see when there is no perturbation ($\delta_t=\bf 0$),
     $F_\pi(\vert_{t=0}^{H-1}\mathbf{0})=J(\pi)$;
     when there are adversarial perturbations $\delta_t\in\epsball$ at each time step,
     $F_\pi(\vert_{t=0}^{H-1}\delta_t)=J_\eps(\pi)$, \ie, \textit{\namepj}.



    
    \looseness=-1
    \textbf{Mean Smoothing: Expectation bound.}\quad
        Here we propose to sample  noise sequences $\Delta$ to perform \textit{global smoothing} for the entire state trajectory, and  calculate the lower  bound of the expected \namepj  $\bb E_{\Delta}\left[ J_\eps(\pi')\right]$ under all possible $\ell_2$-bounded perturbations within magnitude $\eps$.
        The expectation is over the noise sequence $\Delta$ involved in the \sigrpi $\pi'$ in~\Cref{def:sig-rand}.
        

        \begin{restatable}[\lip continuity of smoothed \namef]{lemma}{lemlipf}
        \label{lem:global-lip}
            Let $F$ be the \namef function defined in (\ref{eq:global-f}), the smoothed \namef   $\wtilde F_\pi$ is $\frac{(J_{\max}-J_{\min})}{\sigma}\sqrt{\nicefrac{2}{\pi}}$-\lip continuous, where 
             $\wtilde F_\pi\left(\vert_{t=0}^{H-1}\delta_t\right)
              := \underset{\Delta \sim \cal N (0,\sigma^2I_{H\times N})}{\E} F_\pi\left(\vert_{t=0}^{H-1}(\delta_t+\Delta_t)\right)$.
        \end{restatable}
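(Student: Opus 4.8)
The plan is to reduce Lemma~\ref{lem:global-lip} to the Gaussian-smoothing estimate already established in Lemma~\ref{lem:lipschitz}, applied now in the $H\times N$-dimensional perturbation space instead of on $\cal S$. The only genuinely new bookkeeping is that $F_\pi$ is not a fixed function: by Definition~\ref{def:namef} its value at a perturbation sequence $\vertt{0}{H-1}\delta_t$ still carries the environment's randomness (the transitions $s_{t+1}\sim P(s_t,\cdot)$ and the initial state $s_0\sim d_0$). I would dispose of this by a common--random--numbers coupling: fix an exogenous source of randomness $\omega$ (a sequence of i.i.d.\ uniforms, one per decision step, driving $d_0$ and the transition kernel) so that, conditionally on $\omega$, the map $x\mapsto F_\pi^\omega(x)$ with $x=\vertt{0}{H-1}\delta_t\in\sR^{H\times N}$ is a \emph{deterministic}, measurable function taking values in $[\jmin,\jmax]$. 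Note that no continuity or smoothness of $F_\pi^\omega$ in $x$ is required --- only boundedness and measurability.

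For each fixed $\omega$ I would then run the smoothing argument of Lemma~\ref{lem:lipschitz} verbatim. Writing $\phi_\sigma$ for the density of $\cal N(0,\sigma^2I_{H\times N})$, the smoothed map $x\mapsto \Eop_{\Delta}F_\pi^\omega(x+\Delta)=(F_\pi^\omega*\phi_\sigma)(x)$ is differentiable (dominated convergence lets us differentiate under $\Eop_\Delta$ since $F_\pi^\omega$ is bounded and the Gaussian density has integrable derivatives), and by Stein's identity its gradient equals $\tfrac1{\sigma^2}\Eop_{\Delta}[\Delta\,F_\pi^\omega(x+\Delta)]$. A constant shift of $F_\pi^\omega$ leaves this gradient unchanged, so I may replace $F_\pi^\omega$ by $F_\pi^\omega-\jmin\in[0,\jmax-\jmin]$; then for any unit vector $u$,
\begin{align*}
\bigl|\,u^\top\nabla\,\Eop_{\Delta}F_\pi^\omega(x+\Delta)\,\bigr|
\;\le\;\frac{1}{\sigma^2}\,\Eop_{\Delta}\bigl[\,|u^\top\Delta|\cdot|F_\pi^\omega(x+\Delta)-\jmin|\,\bigr]
\;\le\;\frac{\jmax-\jmin}{\sigma^2}\,\Eop_{\Delta}\,|u^\top\Delta|.
\end{align*}
Since $u^\top\Delta\sim\cal N(0,\sigma^2)$ we have $\Eop\,|u^\top\Delta|=\sigma\sqrt{\nicefrac{2}{\pi}}$, so $\|\nabla\,\Eop_{\Delta}F_\pi^\omega(x+\Delta)\|\le\tfrac{\jmax-\jmin}{\sigma}\sqrt{\nicefrac{2}{\pi}}$ uniformly in $x$ and $\omega$, and the mean value inequality gives that $x\mapsto\Eop_{\Delta}F_\pi^\omega(x+\Delta)$ is $\tfrac{(\jmax-\jmin)}{\sigma}\sqrt{\nicefrac{2}{\pi}}$-\lip.

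Finally I would remove the conditioning: $\wtilde F_\pi(x)$ (possibly after taking the outer expectation $\Eop_\omega$) is a mixture of functions each of which is $\tfrac{(\jmax-\jmin)}{\sigma}\sqrt{\nicefrac{2}{\pi}}$-\lip, hence is itself $\tfrac{(\jmax-\jmin)}{\sigma}\sqrt{\nicefrac{2}{\pi}}$-\lip --- equivalently, the bound holds for almost every realization of the environment and therefore for its expectation. I expect the only delicate point to be the first paragraph: phrasing the environment stochasticity so that $F_\pi$ becomes a bona fide bounded measurable function of the perturbation sequence, and checking the Fubini/dominated-convergence conditions used to differentiate under the expectations. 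Once that is in place, the estimate is exactly the high-dimensional instance of Lemma~\ref{lem:lipschitz}, and in particular the Lipschitz constant it produces is independent of the horizon $H$ and of $N$.
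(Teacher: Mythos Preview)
Your proposal is correct and follows essentially the same route as the paper: both reduce Lemma~\ref{lem:global-lip} to the Gaussian-smoothing gradient bound of Lemma~\ref{lem:lipschitz}, applied in the $H\times N$-dimensional perturbation space with the output range $[V_{\min},V_{\max}]$ replaced by $[J_{\min},J_{\max}]$. Your treatment is in fact more careful than the paper's one-paragraph sketch: you explicitly handle the environment stochasticity in $F_\pi$ (from $d_0$ and the transition kernel) via a common-random-numbers coupling and then pass the Lipschitz bound through the outer expectation, a point the paper does not address.
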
            
        \vspace{-3mm}
        \begin{restatable}[Expectation bound]{theorem}{expbound}
            \label{thm:exp-bound}
            Let $\uje=\wtilde F_\pi\left(\vertt{0}{H-1}\mathbf{0}\right) - L\eps\sqrt{H}$,
            where $L = \frac{(J_{\max}-J_{\min})}{\sigma}\sqrt{\nicefrac{2}{\pi}}$.
            Then $\uje \leq \bb E\left[J_\eps(\pi')\right]$.
        \end{restatable}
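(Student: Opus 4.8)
The plan is to reduce the claim to the Lipschitz estimate of \Cref{lem:global-lip} by recognizing that $\bb E\!\left[J_\eps(\pi')\right]$ is exactly the smoothed \namef $\wtilde F_\pi$ evaluated at the adversarial perturbation sequence. First I would fix an arbitrary admissible perturbation sequence $\delta := \vertt{0}{H-1}\delta_t$ with $\norm{\delta_t}_2\le\eps$ for every $t$, and unfold the relevant definitions. By \Cref{def:sig-rand}, running the \sigrpi $\pi'$ against an adversary that plays $\delta_t$ at step $t$ means the action actually taken at state $s_t$ is $\pi'(s_t+\delta_t)=\pi(s_t+\delta_t+\Delta_t)$ with $\Delta_t\sim\gN(0,\sigma^2 I_N)$, and the next state is drawn from $P\big(s_t,\pi(s_t+\delta_t+\Delta_t)\big)$; comparing with \Cref{def:namef}, this is precisely what $F_\pi\!\left(\vertt{0}{H-1}(\delta_t+\Delta_t)\right)$ computes. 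Taking expectation over the smoothing noise sequence $\Delta=\vertt{0}{H-1}\Delta_t$ (the remaining environment randomness being averaged inside $F_\pi$ and $\wtilde F_\pi$ alike) therefore yields $\bb E\!\left[J_\eps(\pi')\right] = \E_{\Delta}F_\pi\!\left(\vertt{0}{H-1}(\delta_t+\Delta_t)\right) = \wtilde F_\pi(\delta)$, which is exactly the definition of $\wtilde F_\pi$ given in \Cref{lem:global-lip}.

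Next I would invoke the Lipschitz bound. Viewing $\delta$ as a vector in $\sR^{H\times N}$, its norm obeys $\norm{\delta}_2 = \sqrt{\sum_{t=0}^{H-1}\norm{\delta_t}_2^2} \le \sqrt{H\eps^2} = \eps\sqrt{H}$. Since $\wtilde F_\pi$ is $L$-\lip (w.r.t.\ $\ell_2$) with $L=\frac{(J_{\max}-J_{\min})}{\sigma}\sqrt{\nicefrac{2}{\pi}}$ by \Cref{lem:global-lip}, we get $\wtilde F_\pi(\delta) \ge \wtilde F_\pi\!\left(\vertt{0}{H-1}\mathbf 0\right) - L\norm{\delta}_2 \ge \wtilde F_\pi\!\left(\vertt{0}{H-1}\mathbf 0\right) - L\eps\sqrt{H} = \uje$. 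Chaining this with the identity from the previous paragraph gives $\uje \le \bb E\!\left[J_\eps(\pi')\right]$; and since $\delta$ was an arbitrary perturbation sequence with per-step norm at most $\eps$, the same inequality holds against the worst-case adversary, which finishes the argument.

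The step I expect to require the most care is the first one: making the rewriting $\bb E[J_\eps(\pi')] = \wtilde F_\pi(\delta)$ rigorous. One must check that the Gaussian noise injected by the $\sigma$-randomized policy at each step plays exactly the role of the smoothing noise of $\wtilde F_\pi$, that the environment transitions are coupled consistently between the two formulations, and --- if the adversary is allowed to choose $\delta_t$ adaptively as a function of the realized state $s_t$ --- that the estimate is applied conditionally on the adversary's realized choices, using that the bound $\wtilde F_\pi(\cdot)\ge\wtilde F_\pi(\mathbf 0)-L\eps\sqrt{H}$ holds uniformly over \emph{all} perturbation sequences with per-step $\ell_2$-norm at most $\eps$. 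Everything after that reduction is just the norm arithmetic $\norm{\delta}_2\le\eps\sqrt{H}$ together with a single application of \Cref{lem:global-lip}, so no further difficulty is anticipated.
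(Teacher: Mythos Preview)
Your proposal is correct and follows essentially the same route as the paper: identify $\bb E[J_\eps(\pi')]=\wtilde F_\pi(\delta)$ by unfolding the definitions of the \sigrpi and the smoothed \namef, bound $\norm{\delta}_2\le \eps\sqrt{H}$, and then apply the Lipschitz estimate of \Cref{lem:global-lip}. Your extra caution about the adaptive-adversary case is well placed---the paper in fact flags this as a limitation of the global-smoothing approach rather than handling it in the proof.
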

        \textit{Proof Sketch.}\quad
        We first derive the equality between  expected \namepj $\bb E\left[J_\eps(\pi')\right]$ and the smoothed \namef $\wtilde F_\pi(\vert_{t=0}^{H-1}\delta_t)$. Thus, to lower bound the former, it suffices to lower bound the latter,
        which can be calculated leveraging the \lip continuity of $\wtilde F$ in~\Cref{lem:global-lip} (proved in~\Cref{append:proof-global-lip}), noticing that the distance between $\vertt{0}{H-1}\mathbf{0}$ and the adversarial perturbations $\vertt{0}{H-1}\delta_t$ is bounded by $\eps\sqrt{H}$. 
        The complete proof is omitted to~\Cref{append:proof-exp-bound}.
        
        We obtain $J_{\min}$ and $J_{\max}$ in \Cref{lem:global-lip} from environment specifications which can be loose in practice. Thus the \lip constant $L$ estimation is coarse and mean smoothing is usually loose.
        We next present a method that circumvents estimating the \lip constant and provides a tight percentile bound.

    \textbf{Percentile Smoothing: Percentile bound.}\quad
        We now propose to apply \textit{percentile smoothing} to smooth the \textit{\namepj} 
        and obtain the lower bound of the $p$-th percentile of $J_\eps(\pi')$, where $\pi'$ is a $\sigma$-randomized policy defined in~\Cref{def:sig-rand}.
        \vspace{8mm}
        \begin{small}
        \begin{align}
        \label{eq:wtilde-f}
            \wtilde F_{\pi}^p\left(\vertt{0}{H-1}\delta_t\right) = \mathrm{sup}_y\left\{ y\in \sR \mid \mathbb{P}\left[F_\pi\left(\vertt{0}{H-1}(\delta_t+\Delta_t)\right) \leq y\right] \leq p \right\}.
        \end{align}
        \vspace{5mm}
        \end{small}
        \begin{restatable}[Percentile bound]{theorem}{percbound}
            \label{thm:perc-bound}
            Let $\ujp = \wtilde F_\pi^{p'}\left(\vertt{0}{H-1}\mathbf{0}\right)$,
            where $p':=\Phi\left(\Phi^{-1}(p)-\nicefrac{\eps\sqrt{H}}{\sigma}\right)$.
            Then $\ujp \leq $ the $p$-th percentile of $J_\eps(\pi')$.
        \end{restatable}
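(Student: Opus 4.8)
\textit{Proof Sketch.}\quad
The plan is to recognize \Cref{thm:perc-bound} as the percentile-smoothing certificate for Gaussian noise, instantiated in the $H\times N$-dimensional perturbation space, where the admissible adversary lies in a ball of radius $\eps\sqrt H$. First I would carry out a bookkeeping reduction. By \Cref{def:sig-rand} and \Cref{def:namef}, for any fixed adversarial sequence $\delta:=\vertt{0}{H-1}\delta_t$ with $\norm{\delta_t}_2\le\eps$ for every $t$, the random variable $J_\eps(\pi')$ has the same law as $F_\pi\big(\vertt{0}{H-1}(\delta_t+\Delta_t)\big)$ with $\Delta:=\vertt{0}{H-1}\Delta_t\sim\gN(0,\sigma^2 I_{H\times N})$, since $\pi'(s_t)=\pi(s_t+\Delta_t)$. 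Hence the $p$-th percentile of $J_\eps(\pi')$ equals $\wtilde F_\pi^p(\delta)$ in the sense of (\ref{eq:wtilde-f}), and $\norm{\delta}_2=\sqrt{\sum_t\norm{\delta_t}_2^2}\le\eps\sqrt H$. It therefore suffices to establish the \emph{uniform} bound $\wtilde F_\pi^{p'}\!\big(\vertt{0}{H-1}\mathbf{0}\big)\le\wtilde F_\pi^p(\delta)$ for every $\delta$ with $\norm{\delta}_2\le\eps\sqrt H$.

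Next I would work threshold by threshold. For a fixed level $y$, set $h_y(u):=\mathbb{P}_{\mathrm{env}}[F_\pi(u)\le y]\in[0,1]$, the probability over the environment dynamics alone for a \emph{deterministic} perturbation sequence $u\in\sR^{H\times N}$; then $g_y(u):=\mathbb{P}[F_\pi(\vertt{0}{H-1}(u_t+\Delta_t))\le y]=\mathbb{E}_{\Delta\sim\gN(0,\sigma^2 I)}[h_y(u+\Delta)]$ is a Gaussian convolution of a $[0,1]$-valued function. The key input is the standard smoothing lemma (Gaussian isoperimetry; cf.\ \citep{cohen2019certified,salman2019provably}): $u\mapsto\Phi^{-1}(g_y(u))$ is $(1/\sigma)$-\lip, which gives the one-sided envelope $g_y(u+v)\le\Phi\big(\Phi^{-1}(g_y(u))+\norm{v}_2/\sigma\big)$ for all $u,v$.

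The deduction is then short. Fix $\delta$ with $\norm{\delta}_2\le\eps\sqrt H$ and any $y$ with $g_y(\mathbf{0})\le p'=\Phi(\Phi^{-1}(p)-\eps\sqrt H/\sigma)$. Using monotonicity of $\Phi$ and $\Phi^{-1}$, the envelope above, and $\norm{\delta}_2\le\eps\sqrt H$, we get $g_y(\delta)\le\Phi\big(\Phi^{-1}(g_y(\mathbf{0}))+\eps\sqrt H/\sigma\big)\le\Phi\big(\Phi^{-1}(p')+\eps\sqrt H/\sigma\big)=p$. Thus $\{y:g_y(\mathbf{0})\le p'\}\subseteq\{y:g_y(\delta)\le p\}$, and taking suprema yields $\wtilde F_\pi^{p'}(\mathbf{0})\le\wtilde F_\pi^p(\delta)$; minimizing over admissible $\delta$ gives $\ujp\le$ the $p$-th percentile of $J_\eps(\pi')$.

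I expect the main obstacle to be a careful justification of the Lipschitz claim for $\Phi^{-1}\circ g_y$ when $F_\pi$ is itself a random function of its argument: the textbook version of the smoothing lemma is stated for a fixed base predictor, so I would first condition on the environment randomness so that $h_y$ becomes a genuine deterministic $[0,1]$-valued function, apply the lemma to $h_y$, and observe that the resulting Lipschitz bound is uniform in that conditioning and hence passes to $g_y=\mathbb{E}_\Delta[h_y(\cdot+\Delta)]$. A minor point to settle along the way is the degenerate case $g_y(\mathbf{0})\in\{0,1\}$, handled by the usual conventions $\Phi^{-1}(0)=-\infty$ and $\Phi^{-1}(1)=+\infty$. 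The full details are deferred to \Cref{append:proof-perc-bound}.
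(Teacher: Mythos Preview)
Your proposal is correct and follows essentially the same route as the paper: both arguments reduce to showing $\wtilde F_\pi^{p'}(\mathbf{0})\le\wtilde F_\pi^p(\delta)$ for every $\delta$ with $\norm{\delta}_2\le\eps\sqrt H$, and both obtain this via the Gaussian smoothing lemma that $u\mapsto\Phi^{-1}\big(\mathbb{P}[F_\pi(u+\zeta)\le y]\big)$ is $(1/\sigma)$-\lip (the paper simply instantiates $y=\wtilde F_\pi^{p'}(\mathbf{0})$ directly rather than arguing via the set inclusion $\{y:g_y(\mathbf{0})\le p'\}\subseteq\{y:g_y(\delta)\le p\}$). One small remark: your anticipated obstacle about environment randomness dissolves immediately, since your own $h_y(u)=\mathbb{P}_{\mathrm{env}}[F_\pi(u)\le y]$ is already a deterministic $[0,1]$-valued function, so the smoothing lemma applies to it directly without any conditioning step.
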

        \vspace{-\topsep}
        \looseness=-1
        The proof is provided in~\Cref{append:proof-perc-bound} based on \citet{chiang2020detection}.
        There are several other advantages of percentile smoothing over mean smoothing.
        \underline{First}, 
        the certification given by percentile smoothing is among the cumulative rewards of the sampled \sigrtjs and is therefore achievable by a real-world policy, 
        while the \ebound is less likely to be achieved in practice given the loose \lip bound.
        \underline{Second}, for a discrete function such as  \namef, the output of mean smoothing is  continuous w.r.t. $\sigma$, while the results given by percentile smoothing remain discrete. 
        Thus, the percentile smoothed function preserves properties of the base function before smoothing, and shares  similar interpretation, \eg, the number of rounds that the agent wins.
        \underline{Third}, 
        taking $p=50\%$ in percentile smoothing leads to the \textit{median smoothing} which achieves additional properties such as  robustness to outliers~\citep{manikandan2011measures}.
    The detailed algorithm \glbrs, including the inference procedure, the estimation of $\wtilde F_\pi$, the calculation of the empirical order statistics, and the configuration of algorithm parameters $\jmin,\jmax$, are deferred to~\Cref{append:glbrs}. 

\vspace{-2mm}    
\subsection{Certification of Cumulative Reward based on Local Smoothing}
\label{sec:cert-ada}
\vspace{-2mm}
    
    
    Though global smoothing  provides efficient and practical bounds for the perturbed cumulative reward, such bounds are still loose as they involve smoothing the entire trajectory at once. In this section, we aim to  provide a tighter lower bound for $J_\eps(\tpi)$ by performing  \textit{local smoothing}. 

    Given a trajectory of $H$ time steps which is guided by the locally smoothed policy $\tpi$,
    we can compute the certified radius at each time step according to~\Cref{thm:radius}, which can be denoted as $r_0,r_1,\ldots,r_{H-1}$. 
    Recall that when the perturbation magnitude $\eps<r_t$, the optimal action $a_t$ at time step $t$ will remain unchanged.
    This implies that when $\eps < \min_{t=0}^{H-1} r_t$, none of the actions in the entire trajectory will be changed, and therefore the lower bound of the cumulative reward when $\eps <  \min_{t=0}^{H-1} r_t$ is the return of the current trajectory in a deterministic environment.
    Increasing $\eps$ has two effects.
    First, the \emph{total} number of time steps where the action is susceptible to change will increase; 
    second, at \emph{each} time step, the action can change from the best to the runner-up or the rest.
    We next introduce an extension of certified radius $r_t$ to characterize the two effects.
    \vspace{-1mm}
    \begin{restatable}{theorem}{thmextendrad}
    \label{thm:extend-radius}
        Let $(r_t^1,\ldots, r_t^{|\cal A|-1})$ be a sequence of certified radii for state $s_t$ at time step $t$, where $r_t^k$ denotes the radius such that if $\eps<r_t^k$, the possible action at time step $t$ will belong to the actions corresponding to top $k$ action values of $\wtilde Q$ at state $s_t$. 
        The  definition of $r_t$ in~\Cref{thm:radius} is equivalent to $r_t^1$ here. 
        The radii can be computed similarly as follows:
        \vspace{8mm}
        \begin{small}
        \begin{align*}
                r_t^k = \frac{\sigma}{2}\left(
            \Phi^{-1}\left(\frac{\wtilde Q^\pi(s_t,a_1)-V_{\min}}{V_{\max}-V_{\min}}\right)-
            \Phi^{-1}\left(\frac{\wtilde Q^\pi(s_t,a_{k+1})-V_{\min}}{V_{\max}-V_{\min}}\right)
            \right),\quad 1\leq k< |\cal A|,
        \end{align*}
        \end{small}
        \vspace{5mm}
        
        where $a_1$ is the action of the highest $\wtilde Q$ value at state $s_t$ and $a_{k+1}$ is the $(k+1)$-th best action. We additionally define $r_t^0(s_t)=0$, which is also compatible with the definition above.
    \end{restatable}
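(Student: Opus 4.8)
The plan is to mirror the argument behind~\Cref{thm:radius}, now comparing the top action $a_1$ against the $(k{+}1)$-th best action $a_{k+1}$ rather than against the runner-up. The single analytic ingredient I would reuse from the proofs of~\Cref{lem:lipschitz} and~\Cref{thm:radius} is that, for each fixed action $a$, the map
\[
    s \;\longmapsto\; \Phi^{-1}\!\left(\frac{\wtilde Q^\pi(s,a)-V_{\min}}{V_{\max}-V_{\min}}\right)
\]
is $\tfrac1\sigma$-Lipschitz in $\ell_2$ norm. This is exactly the Cohen et al.\ randomized-smoothing estimate applied to the $[0,1]$-valued base function $\tfrac{Q^\pi(\cdot,a)-V_{\min}}{V_{\max}-V_{\min}}$ (which is well defined because the Q-outputs are clipped into $[V_{\min},V_{\max}]$), so $\Phi^{-1}$ is only ever evaluated on $[0,1]$.

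First I would fix a time step $t$, a state $s_t$, and a perturbation $\delta_t$ with $\norm{\delta_t}_2<r_t^k$, and argue by contradiction: suppose the smoothed greedy action $a^\star:=\tpi(s_t+\delta_t)$ is \emph{not} among the top-$k$ actions of $\wtilde Q^\pi(s_t,\cdot)$. Since $a_{k+1}$ is by definition the $(k{+}1)$-th best action at $s_t$, every action outside the top $k$ has smoothed value at most $\wtilde Q^\pi(s_t,a_{k+1})$, so $\wtilde Q^\pi(s_t,a^\star)\le \wtilde Q^\pi(s_t,a_{k+1})$; comparing against $a_{k+1}$ rather than the specific $a^\star$ is precisely what lets the ``not in top $k$'' case be treated uniformly. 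On the other hand, $a^\star$ being the argmax at $s_t+\delta_t$ gives $\wtilde Q^\pi(s_t+\delta_t,a^\star)\ge \wtilde Q^\pi(s_t+\delta_t,a_1)$. Both facts pass through the monotone $\Phi^{-1}$ after normalization.

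Next I would apply the Lipschitz bound above to the pair of states $s_t,\,s_t+\delta_t$ — once for $a_1$ (its normalized value can drop by at most $\norm{\delta_t}_2/\sigma$ when moving from $s_t$ to $s_t+\delta_t$) and once for $a^\star$ (its normalized value at $s_t+\delta_t$ is at most its value at $s_t$ plus $\norm{\delta_t}_2/\sigma$). Chaining these two estimates with the two inequalities from the previous paragraph collapses the chain to
\[
    \Phi^{-1}\!\left(\frac{\wtilde Q^\pi(s_t,a_1)-V_{\min}}{V_{\max}-V_{\min}}\right)
    -\Phi^{-1}\!\left(\frac{\wtilde Q^\pi(s_t,a_{k+1})-V_{\min}}{V_{\max}-V_{\min}}\right)
    \;\le\; \frac{2\norm{\delta_t}_2}{\sigma},
\]
i.e.\ $\norm{\delta_t}_2\ge r_t^k$, contradicting the choice of $\delta_t$. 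Hence whenever $\norm{\delta_t}_2<r_t^k$ the chosen action lies in the top $k$, which is the stated meaning of $r_t^k$. Taking $k=1$ recovers $r_t^1=r_t$ of~\Cref{thm:radius}; taking $k=0$ with the convention $a_{k+1}=a_1$ makes the bracket vanish, so $r_t^0=0$ and the guarantee is vacuous, consistent with the defined $r_t^0(s_t)=0$.

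I do not expect a genuine obstacle: the statement is a direct reorganization of~\Cref{thm:radius}, and the only two points needing care — handling all non-top-$k$ actions at once (resolved by bounding via $a_{k+1}$) and legality of applying $\Phi^{-1}$ under the clipping/normalization — are both inherited verbatim from the setup of~\Cref{sec:cert-act-thm}. The mild bookkeeping point worth stating explicitly in the write-up is the monotonicity of $k\mapsto r_t^k$ and of $\Phi^{-1}$, which ensures the nested-radii interpretation (``top $k$'') is internally consistent as $k$ increases.
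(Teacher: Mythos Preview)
Your proposal is correct and follows essentially the same approach as the paper: the paper's proof is a one-line remark that replacing $a_2$ by $a_{k+1}$ in the proof of \Cref{thm:radius} yields \Cref{thm:extend-radius}, and you have simply written out that replacement in full detail (including the observation that any non-top-$k$ action is dominated by $a_{k+1}$, which is the only extra bookkeeping needed).
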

    \vspace{-1mm}    
    We defer the proof in~\Cref{append:proof-extend-rad}.
    With~\Cref{thm:extend-radius}, for any given $\eps$, we can compute all possible actions under perturbations in $\epsball$.
    This allows an exhaustive search to traverse all trajectories satisfying  that all certified radii along the trajectory are smaller than $\eps$.
    Then, we can conclude that 
    $J_\eps(\tpi)$ is lower bounded by the minimum return over all these possible trajectories.

\vspace{-2mm}
\subsubsection{\adasearch: Local smoothing for Certified Reward }
\label{sec:tree-search-impl}
\vspace{-2mm}

    Given a  policy $\pi$ in a {deterministic} environment, let the initial state be  $s_0$, we propose  \adasearch to certify the lower bound of $J_\eps(\tpi)$.
    At a high-level, \adasearch \textit{exhaustively} explores new trajectories leveraging \Cref{thm:extend-radius} with priority queue and \textit{effectively} updates the lower bound of cumulative reward $\uj$ by expanding a trajectory tree  dynamically. 
    The algorithm returns a collection of pairs $\{(\eps_{i}, \uj_{\eps_i})\}_{i=1}^{|C|}$ sorted in ascending order of $\eps_i$, where $|C|$ is the length of the collection. 
    For all $\eps’$, let $i$ be the largest integer such that $ \eps_i \leq \eps’ < \eps_{i+1} $, then as long as the perturbation magnitude $\eps \le \eps'$, the cumulative reward $J_\eps(\tpi) \ge \uj_{\eps_i}$.
    The algorithm is shown in \Cref{alg:adaptive-search-full} in~\Cref{append:adasearch}.
    
    \textbf{Algorithm Description.}\quad
    The method starts from the base case: when perturbation magnitude $\eps = 0$, the lower bound of cumulative reward $\uj$ is exactly the benign reward.
    The method then gradually increases the perturbation magnitude $\eps$~(later we will explain how a new $\eps$ is determined).
    Along the increase of $\eps$, the perturbation may cause the policy $\pi$ to take different actions at some time steps, thus resulting in new trajectories.
    Thanks to the local smoothing, the method leverages \Cref{thm:extend-radius} to figure out the exhaustive list of possible actions under current perturbation magnitude $\eps$, and effectively explore these new trajectories by formulating them as expanded branches of a trajectory tree.
    Once all new trajectories are explored, the method examines all leaf nodes of the tree and figures out the minimum reward among them, which is the new lower bound of cumulative reward $\uj$ under this new $\eps$.
    To mitigate the explosion of branches, \adasearch proposes several \textit{optimization} tricks.
    In the following, we will briefly introduce the \textit{trajectory exploration and expansion} and  \textit{growth of perturbation magnitude} steps, as well as the \textit{optimization} tricks. More algorithm details are deferred to~\Cref{append:adasearch}, where we also provide an analysis on the time complexity of the algorithm.

    \looseness=-1
    In \textbf{trajectory exploration and expansion}, 
    \adasearch organizes all possible trajectories in the form of search tree and progressively grows it.
    For each node (representing a state), leveraging \Cref{thm:extend-radius}, we compute a non-decreasing sequence $\{r^k(s)\}_{k=0}^{|\gA|-1}$ representing the required perturbation radii for $\pi$ to choose each alternative action. 
    Suppose the current $\eps$ satisfies $r^i(s) \le \eps < r^{i+1}(s)$, we can grow $(i+1)$ branches from current state $s$ corresponding to the original action and $i$ alternative actions since $\eps \ge r^j(s)$ for $1\le j\le i$.
    We expand the tree branches using depth-first search~\citep{tarjan1972depth}.
    In \textbf{perturbation magnitude growth}, when all trajectories for perturbation magnitude $\eps$ are explored, we increase $\eps$ to seek certification under larger perturbations. 
    This is achieved by preserving a priority queue~\citep{van1977preserving} of the critical $\eps$'s that we will expand on.
    Concretely, along the trajectory, at each tree node, we search for the possible actions and store actions corresponding to $\{r^k(s)\}_{k=i+1}^{|\gA|-1}$ into the priority queue, since these actions are exactly those need to be explored when $\eps$ grows.
    We repeat the procedure of \textit{perturbation magnitude growth} (\ie, popping out the head element from the queue) and \textit{trajectory exploration and expansion} (\ie, exhaustively expand all trajectories given the perturbation magnitude)
    until the priority queue  becomes empty or the perturbation magnitude $\eps$ reaches the predefined threshold.
    Additionally, we adopt a few \textbf{optimization} tricks commonly used in search algorithms to reduce the complexity of the algorithm, such as pruning and the memorization technique~\citep{michie1968memo}.
    More potential improvements are discussed in~\Cref{append:adasearch}.
    
    So far, we have presented all our certification methods. In \Cref{append:discuss-cert},
    we further discuss the \textit{advantages}, \textit{limitations}, and \textit{extensions} of these methods, and provide more \textit{detailed analysis} to help with understanding.

\vspace{-2mm}
\section{Experiments}
\label{sec:exp}
\vspace{-2mm}
\looseness=-1
In this section, we present evaluation for the proposed robustness certification framework \sysname.
Concretely, we apply our three certification algorithms (\staters, \glbrs, and \adasearch) to certify nine RL methods (\textbf{StdTrain}~\citep{mnih2013playing}, \textbf{GaussAug}~\citep{kos2017delving}, \textbf{AdvTrain}~\citep{behzadan2017whatever}, \textbf{SA-MDP (PGD,CVX)}~\citep{zhang2021robust}, \textbf{RadialRL}~\citep{oikarinen2020robust}, \textbf{CARRL}~\citep{everett2021certifiable}, \textbf{NoisyNet}~\citep{fortunato2017noisy}, and \textbf{GradDQN}~\citep{pattanaik2018robust}) on two high-dimensional Atari games (Pong and Freeway), one low dimensional control environment (CartPole), and an autonomous driving environment~(Highway).
As a summary, we find that 
(1) SA-MDP (CVX), SA-MDP (PGD), and RadialRL achieve high certified robustness in different environments;
(2) Large smoothing variance can help to improve certified robustness significantly on Freeway, while a more careful selection of the smoothing parameter is needed in Pong;
(3) For  methods that demonstrate high certified robustness, our certification of the cumulative reward is  tight. 
We defer the detailed descriptions of the environments to~\Cref{append:env} and the introduction to the RL methods and implementation details to~\Cref{append:impl-rl}.
More interesting results and discussions are omitted to~\Cref{append:results} and our leaderboard, including results on CartPole (\Cref{append:cartpole}) and Highway (\Cref{append:highway}).
As a \textit{foundational} work providing  robustness certification for RL, we expect more RL algorithms and RL environments will be certified under our framework in future work.



\vspace{-2mm}
\subsection{Evaluation of Robustness Certification for Per-state Action}
\label{sec:eval-cert-rad}
\vspace{-2mm}

\renewcommand{\thesubfigure}{\alph{subfigure}}
\newcommand{\mycaption}[1]
{\refstepcounter{subfigure}\textbf{(\thesubfigure) }{\ignorespaces #1}}

\begin{figure}

\newlength{\utilheighta}
\settoheight{\utilheighta}{\includegraphics[width=.160\linewidth]{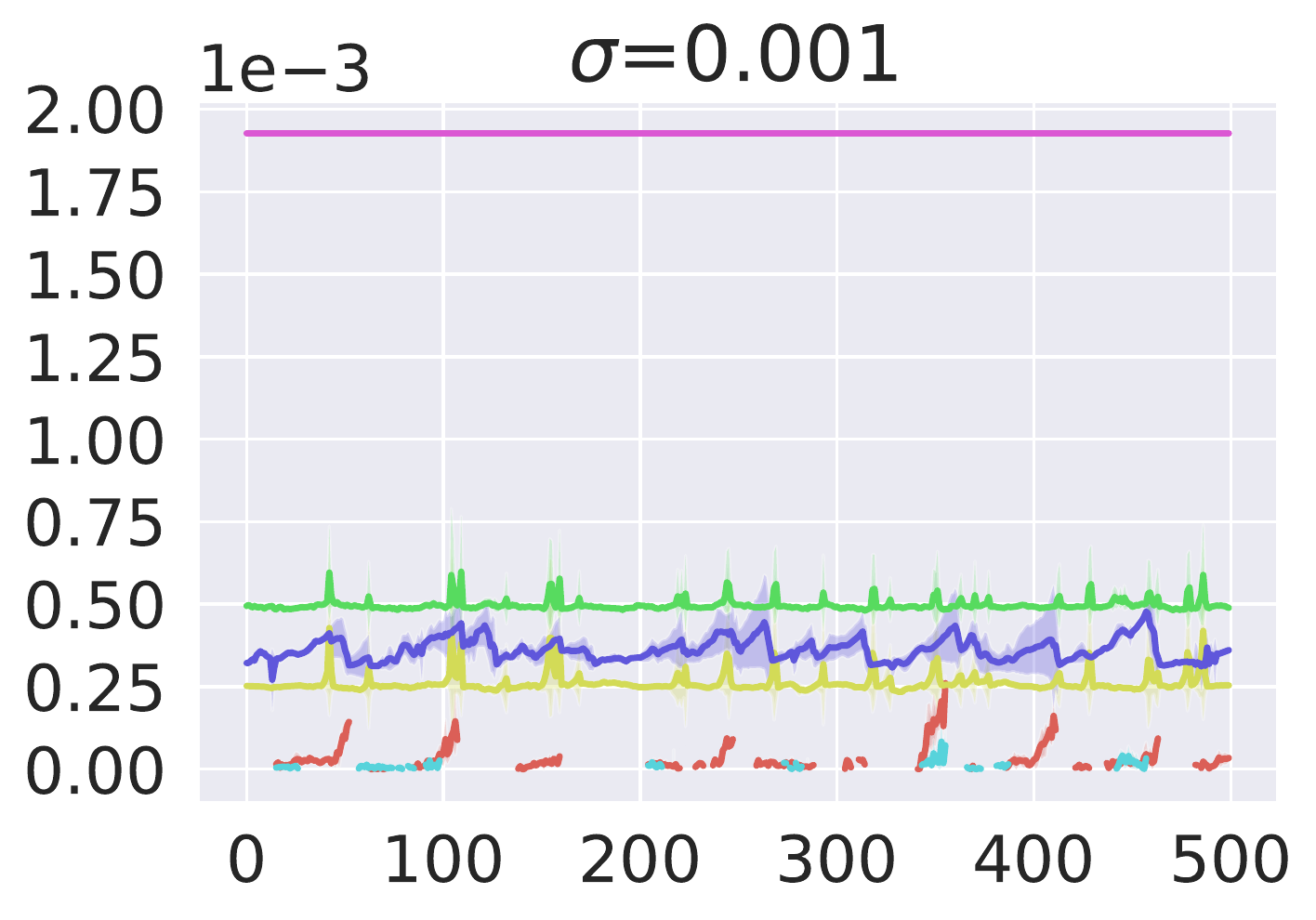}}%

\newlength{\legendheight}
\setlength{\legendheight}{0.3\utilheighta}%

\newcommand{\rowname}[1]
{\rotatebox{90}{\makebox[\utilheighta][c]{\tiny #1}}}

\centering

{
\renewcommand{\tabcolsep}{10pt}

\begin{subtable}[]{\linewidth}
\begin{tabular}{l}
\includegraphics[height=\legendheight]{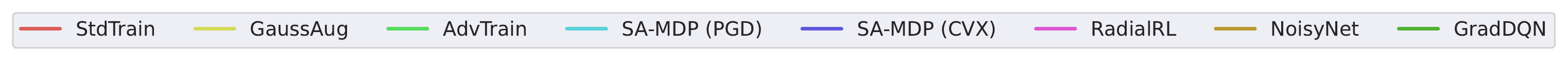}
\end{tabular}
\end{subtable}

\begin{subtable}[]{\linewidth}
\centering
\begin{tabular}{@{}p{5mm}@{}c@{}c@{}c@{}c@{}c@{}c@{}}
\rowname{\makecell{Freeway\\Radius $r$}}&
\includegraphics[height=\utilheighta]{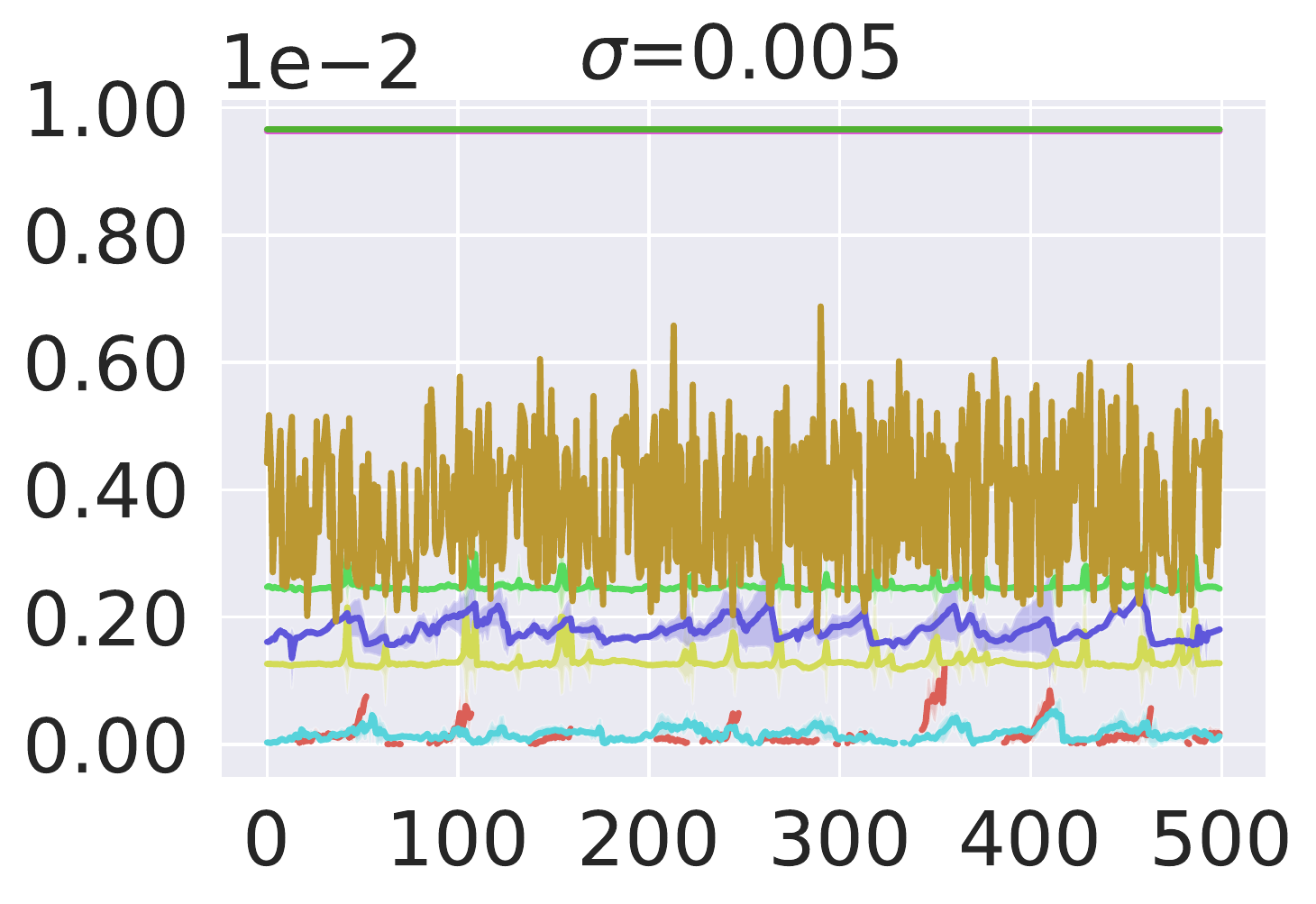}&
\includegraphics[height=\utilheighta]{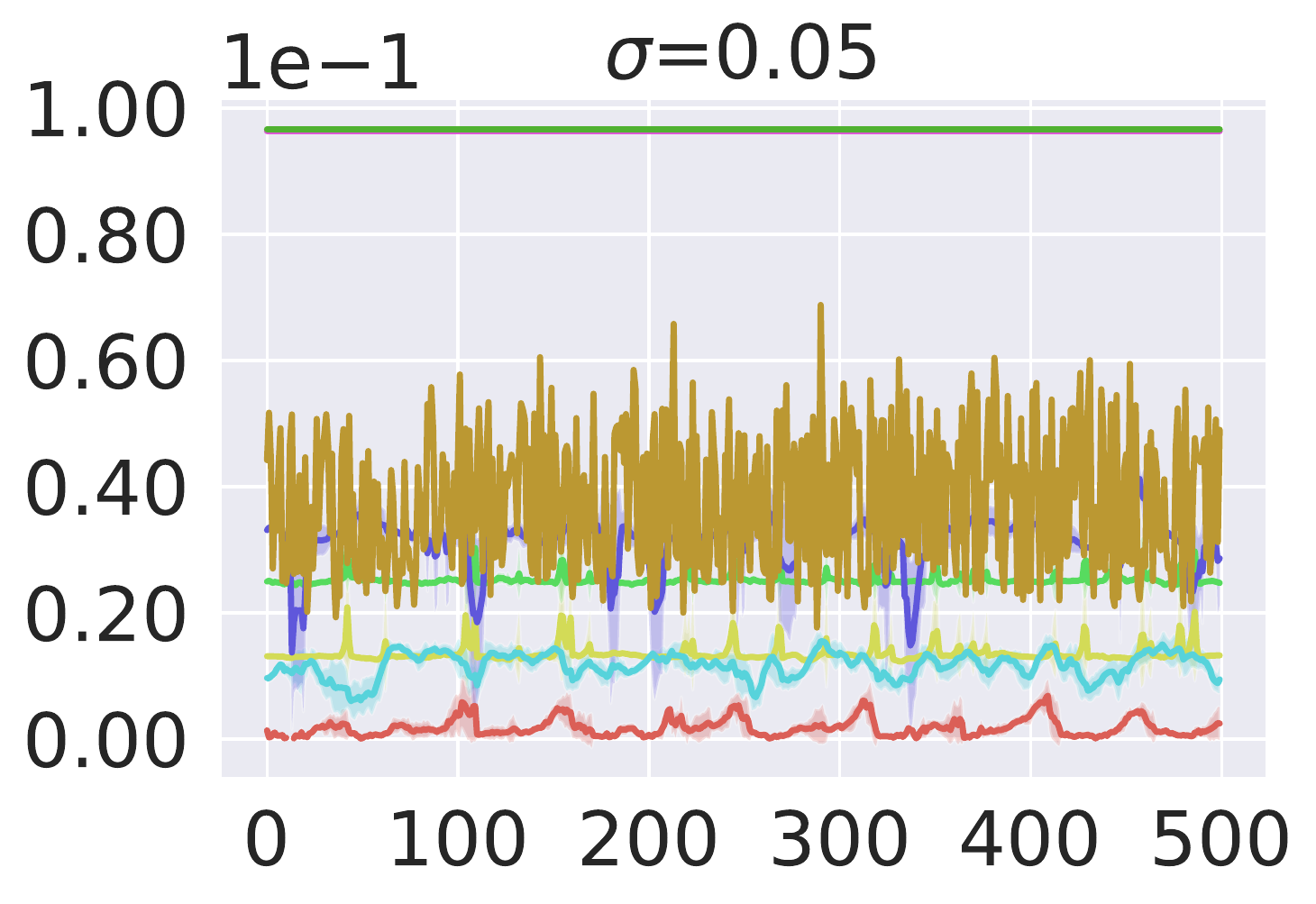}&
\includegraphics[height=\utilheighta]{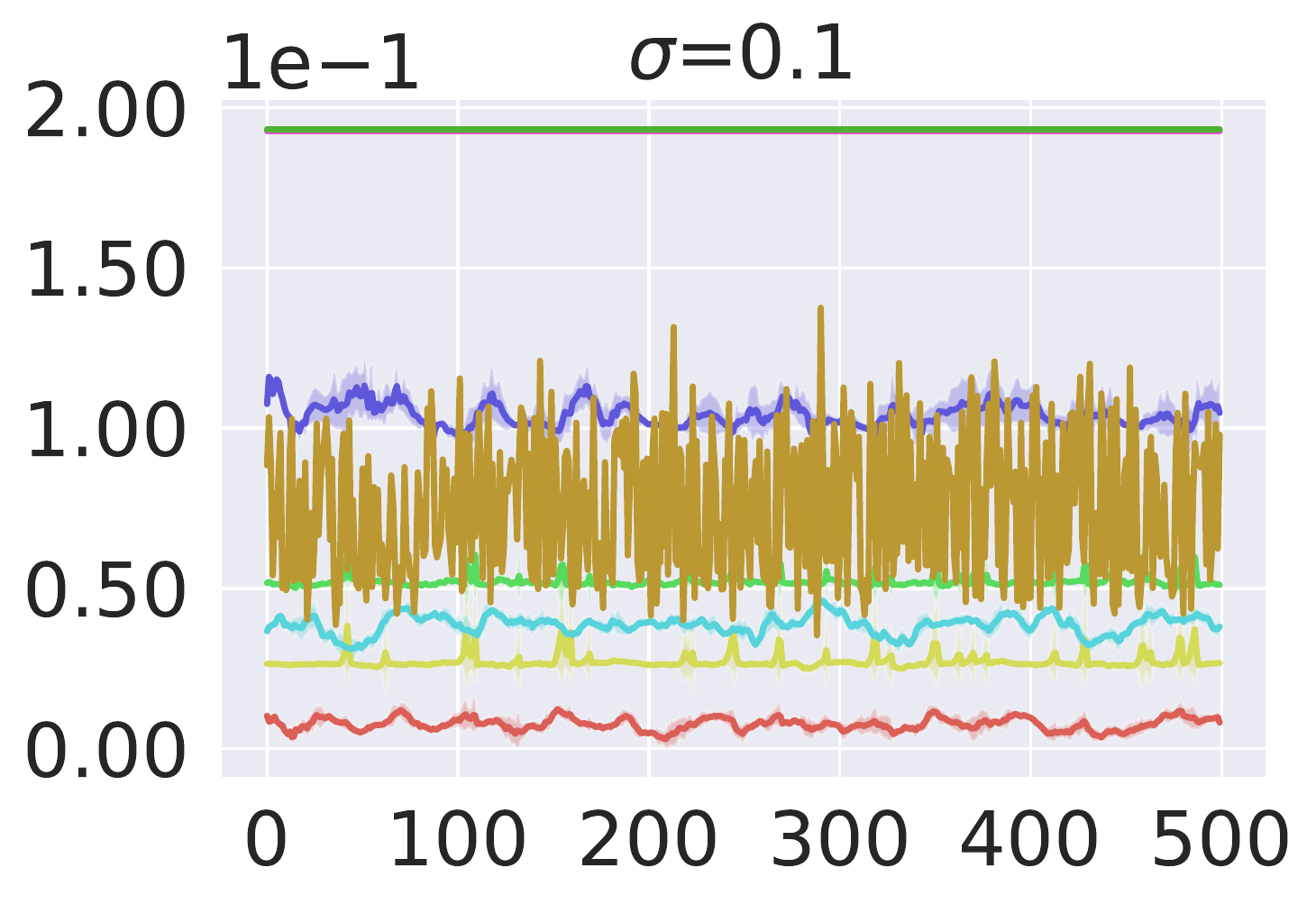}&
\includegraphics[height=\utilheighta]{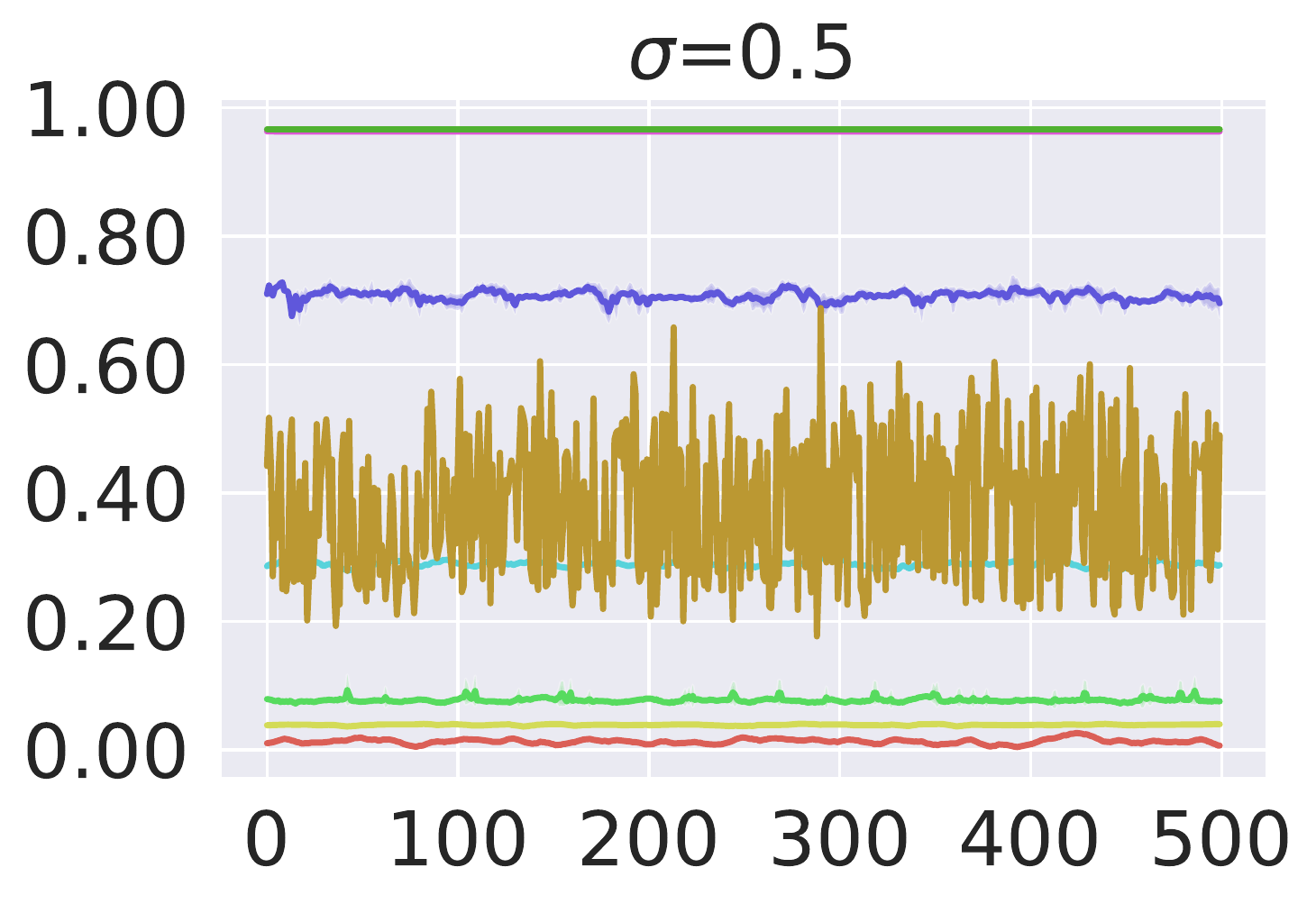}&
\includegraphics[height=\utilheighta]{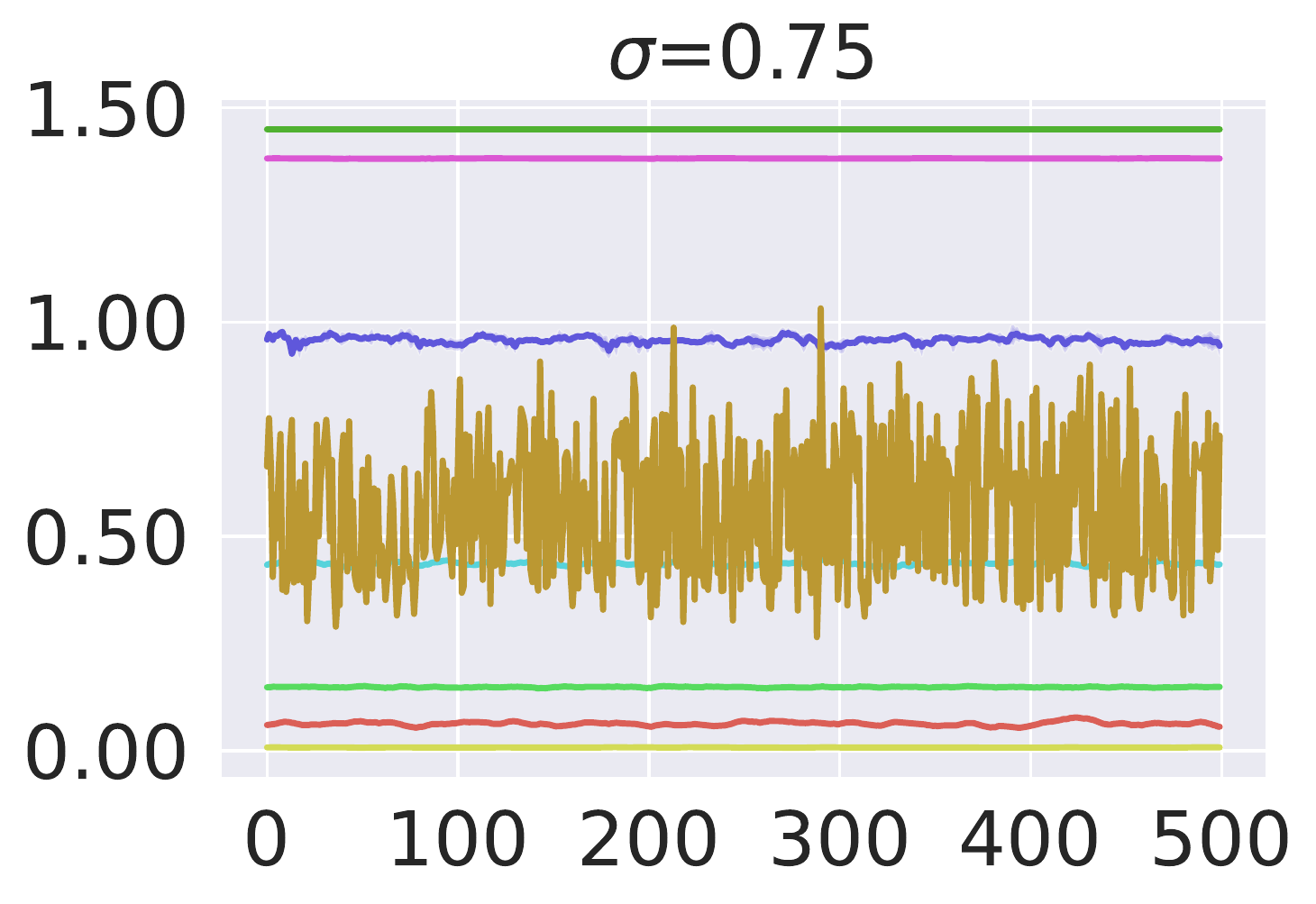}&
\includegraphics[height=\utilheighta]{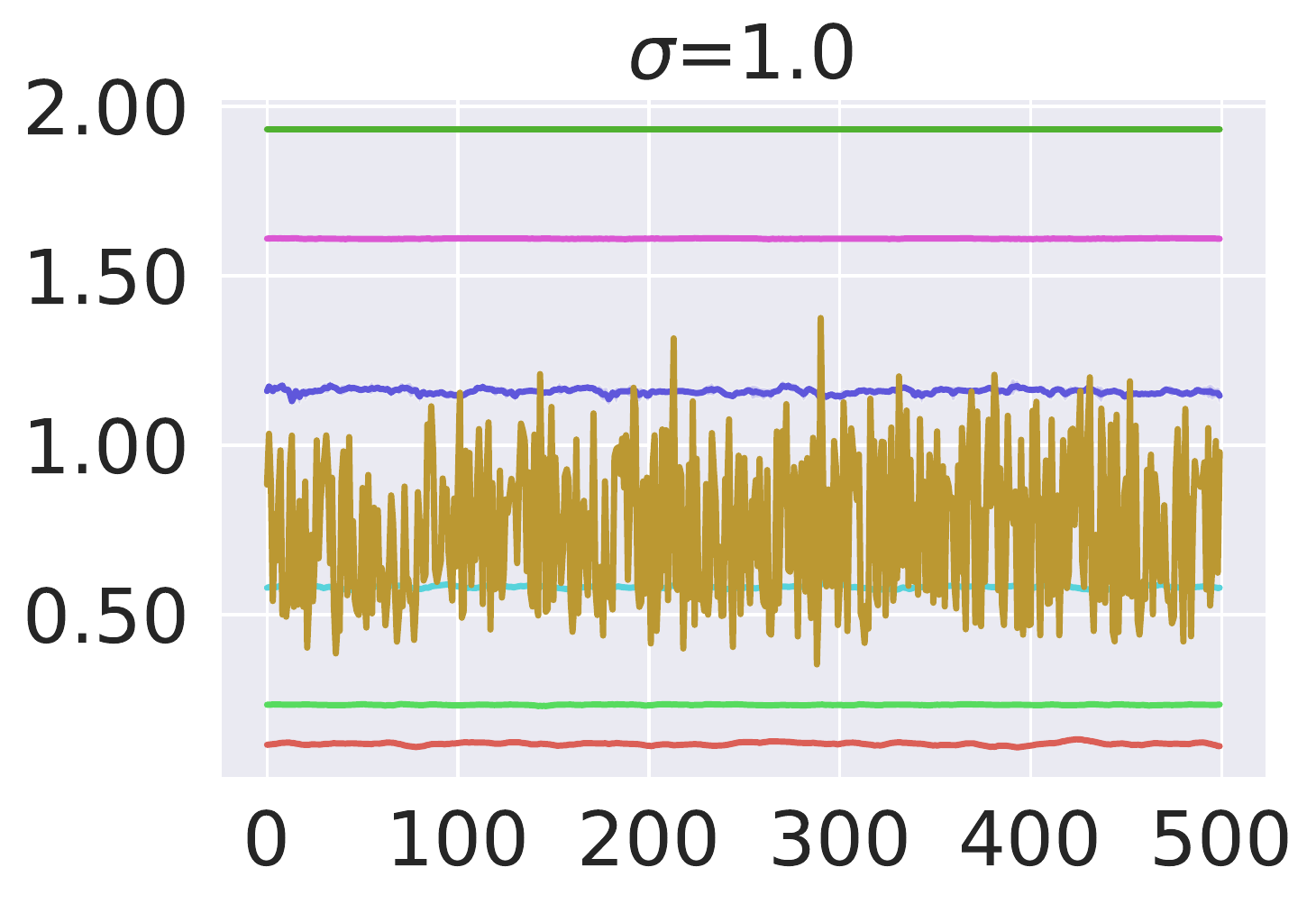}\\[-1.2ex]
\rowname{\makecell{Pong\\Radius $r$}}&
\includegraphics[height=\utilheighta]{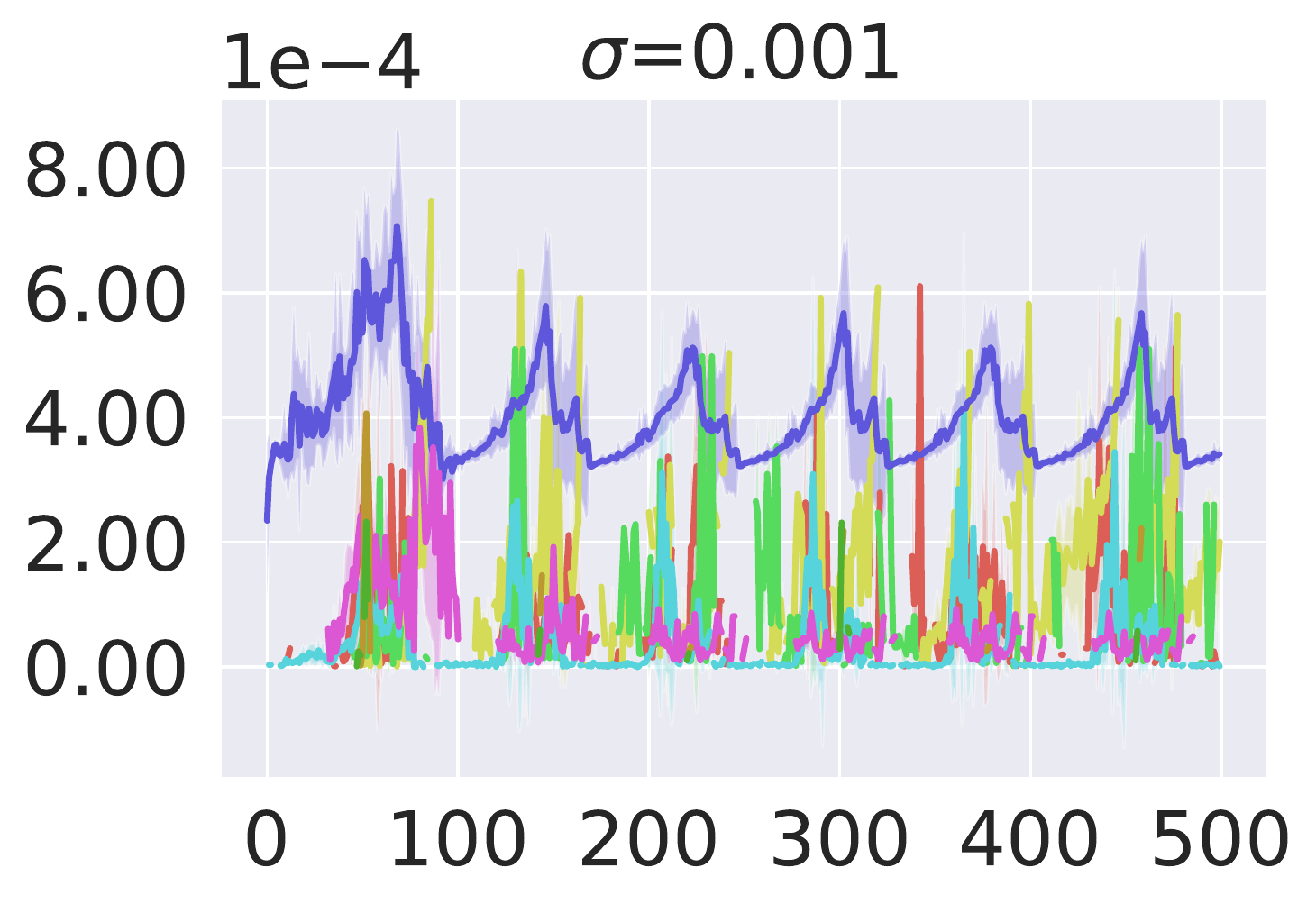}&
\includegraphics[height=\utilheighta]{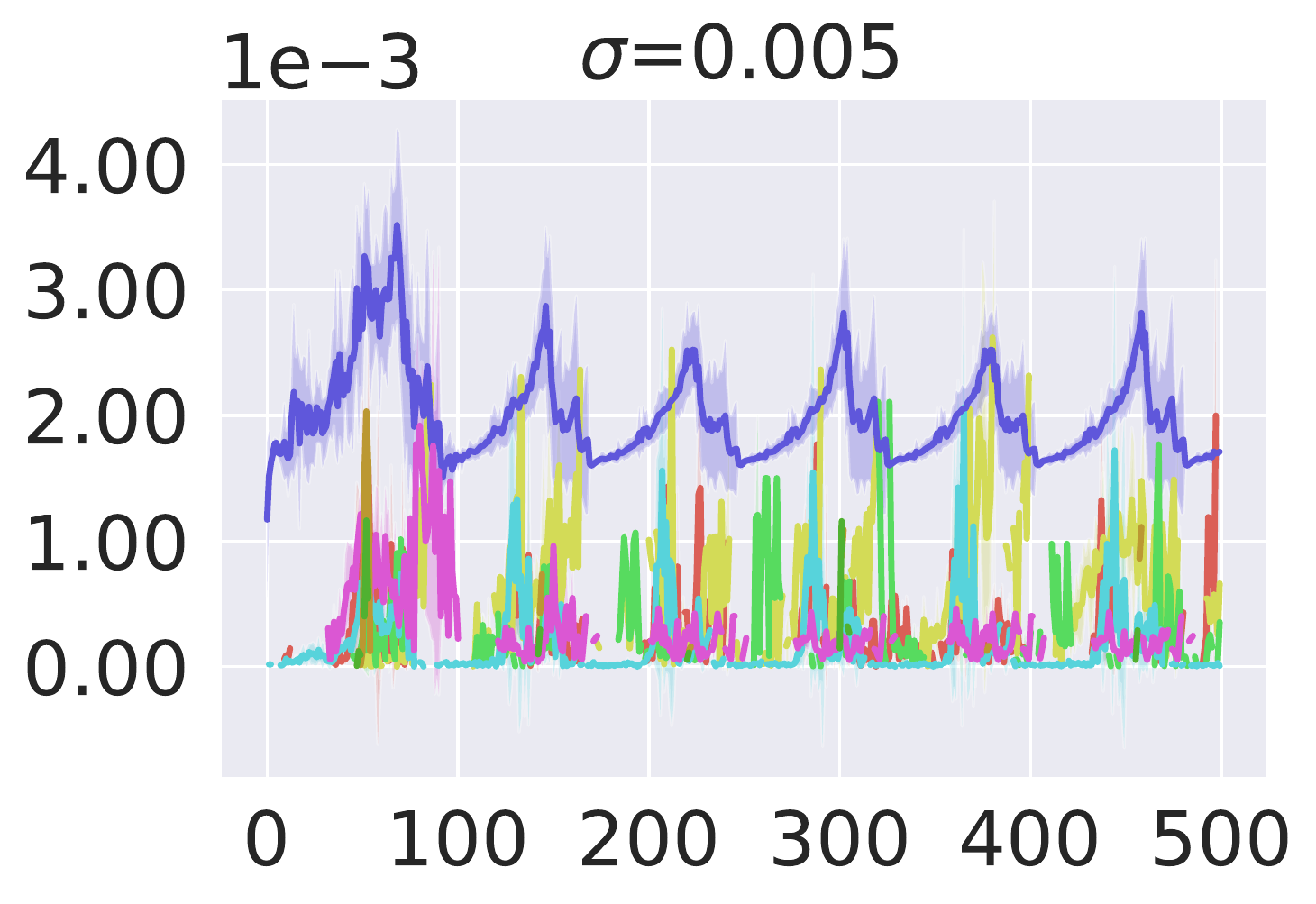}&
\includegraphics[height=\utilheighta]{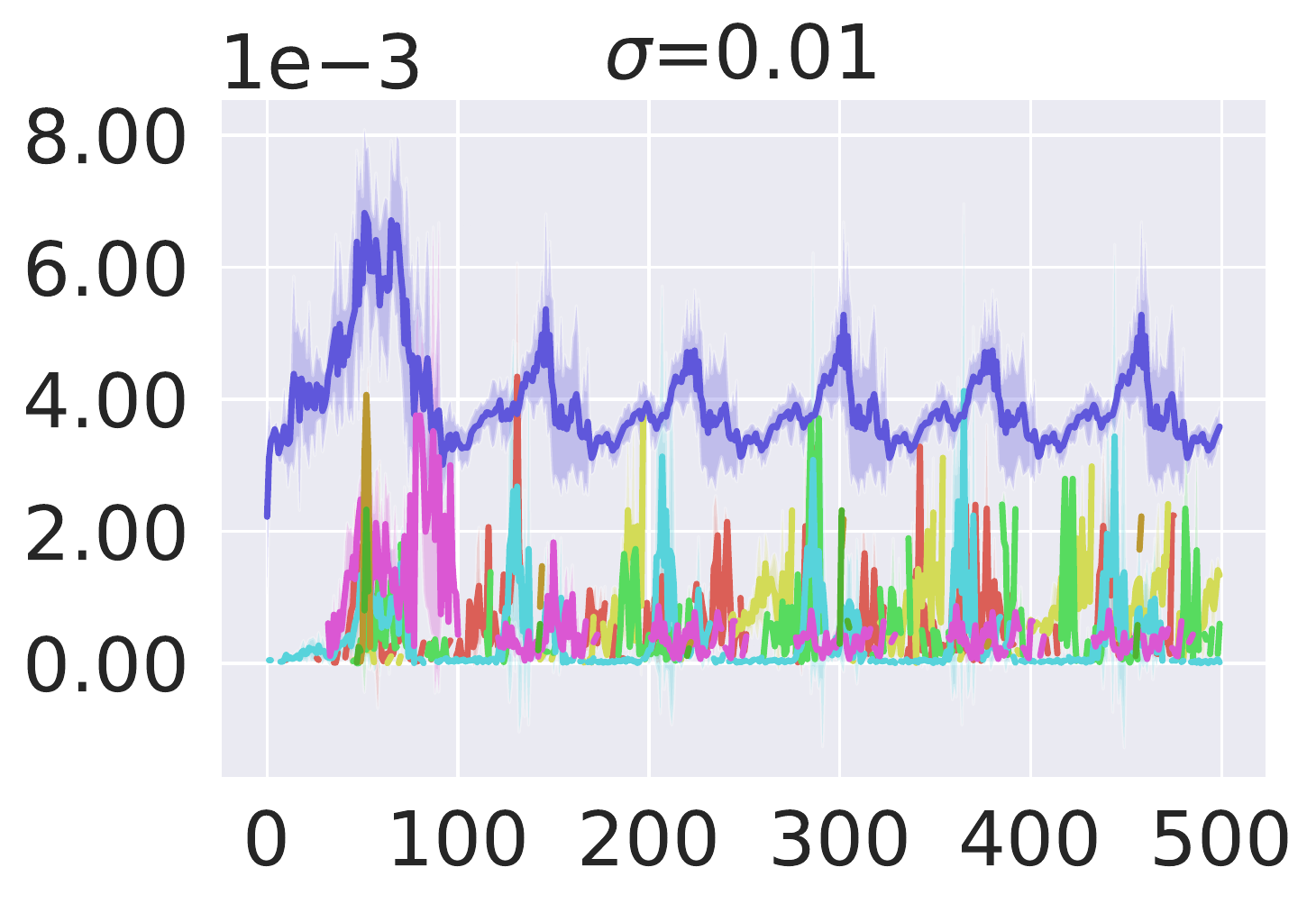}&
\includegraphics[height=\utilheighta]{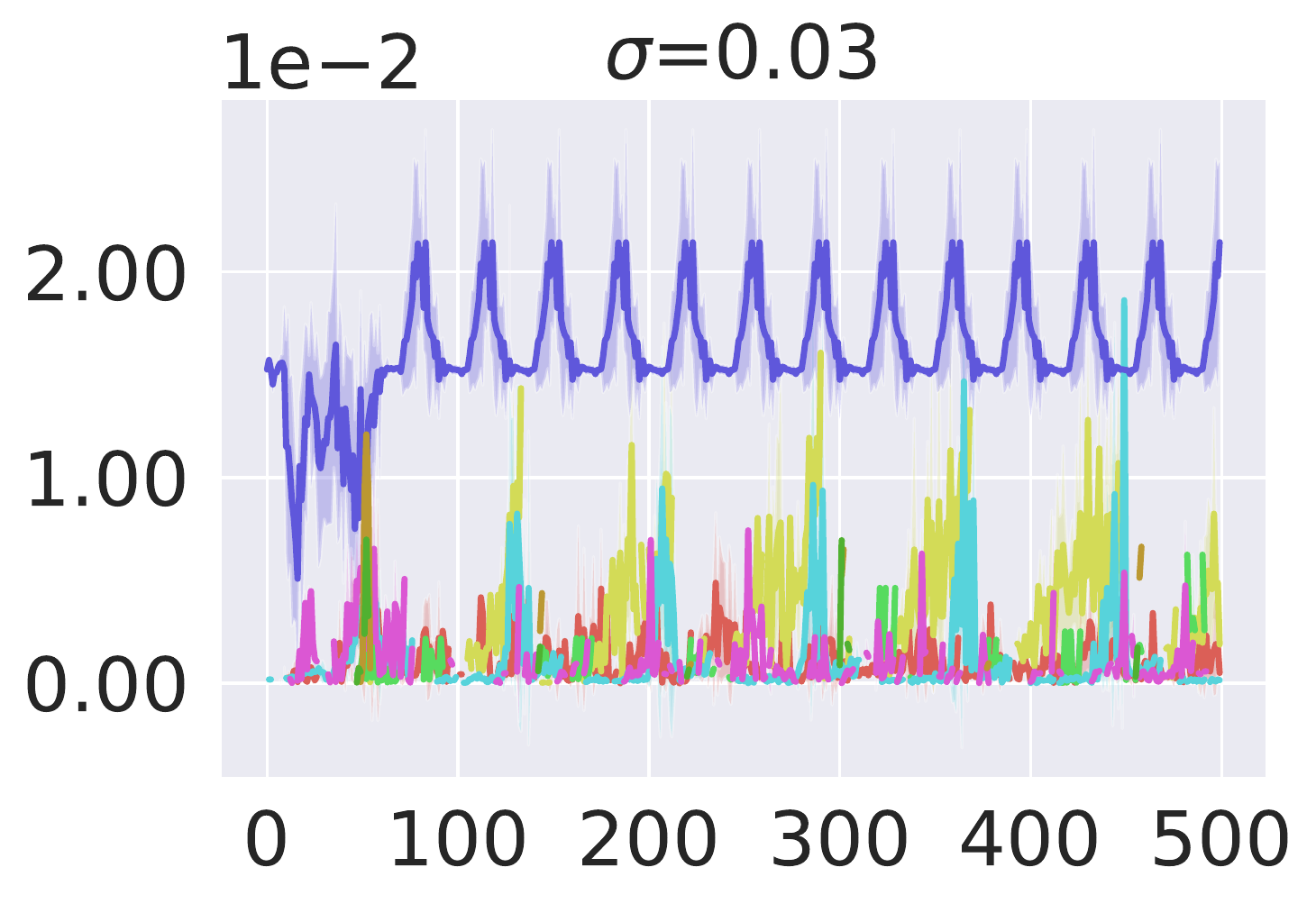}&
\includegraphics[height=\utilheighta]{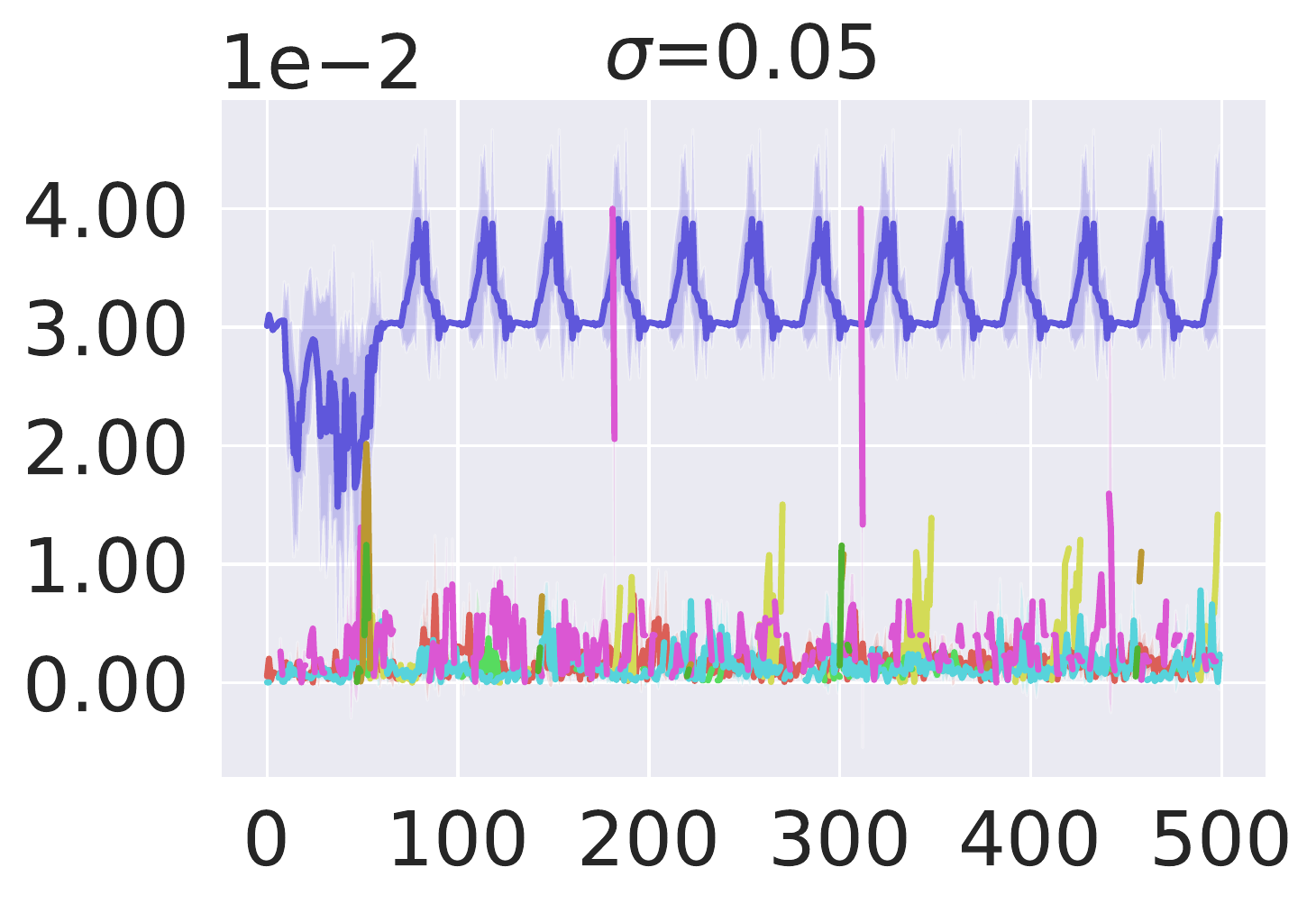}&
\includegraphics[height=\utilheighta]{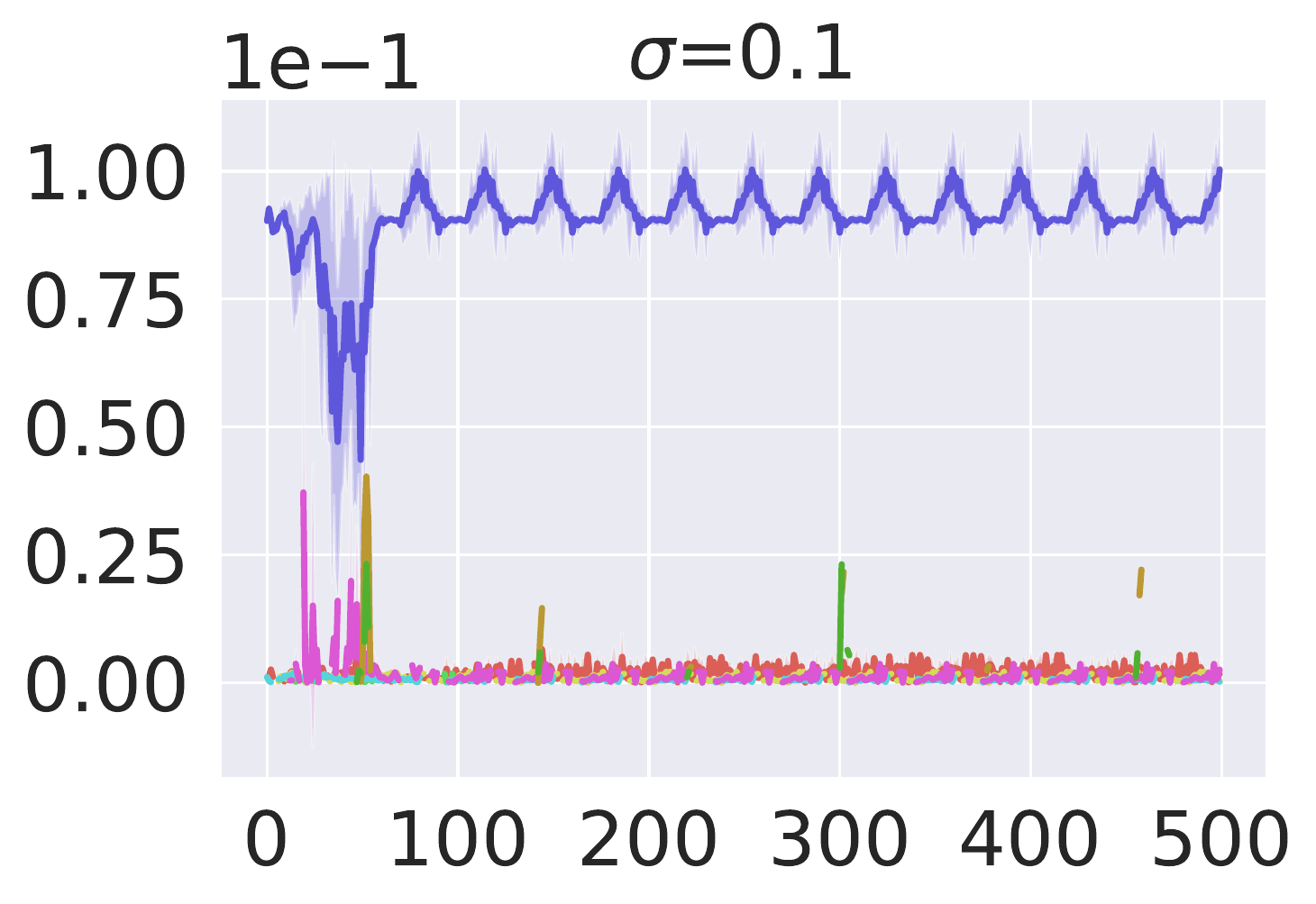}\\[-1.2ex]
        & \makecell{\tiny{time step $t$}}
        & \makecell{\tiny{time step $t$}}
        & \makecell{\tiny{time step $t$}}
        & \makecell{\tiny{time step $t$}}
        & \makecell{\tiny{time step $t$}}
        & \makecell{\tiny{time step $t$}}
\end{tabular}
\end{subtable}

}
\vspace{-3mm}
\caption{\small Robustness certification for \textit{per-state action} in terms of certified radius $r$ at all time steps.
Each column corresponds to a smoothing variance $\sigma$. 
The shaded area represents the standard deviation which is small.
RadialRL is the most certifiably robust method on Freeway, while SA-MDP (CVX) is the most robust on Pong.
}%
\label{fig:statewise}
\vspace{-2em}
\end{figure}

\newlength{\wrapheighta}
\settoheight{\wrapheighta}{\includegraphics[width=.245\textwidth]{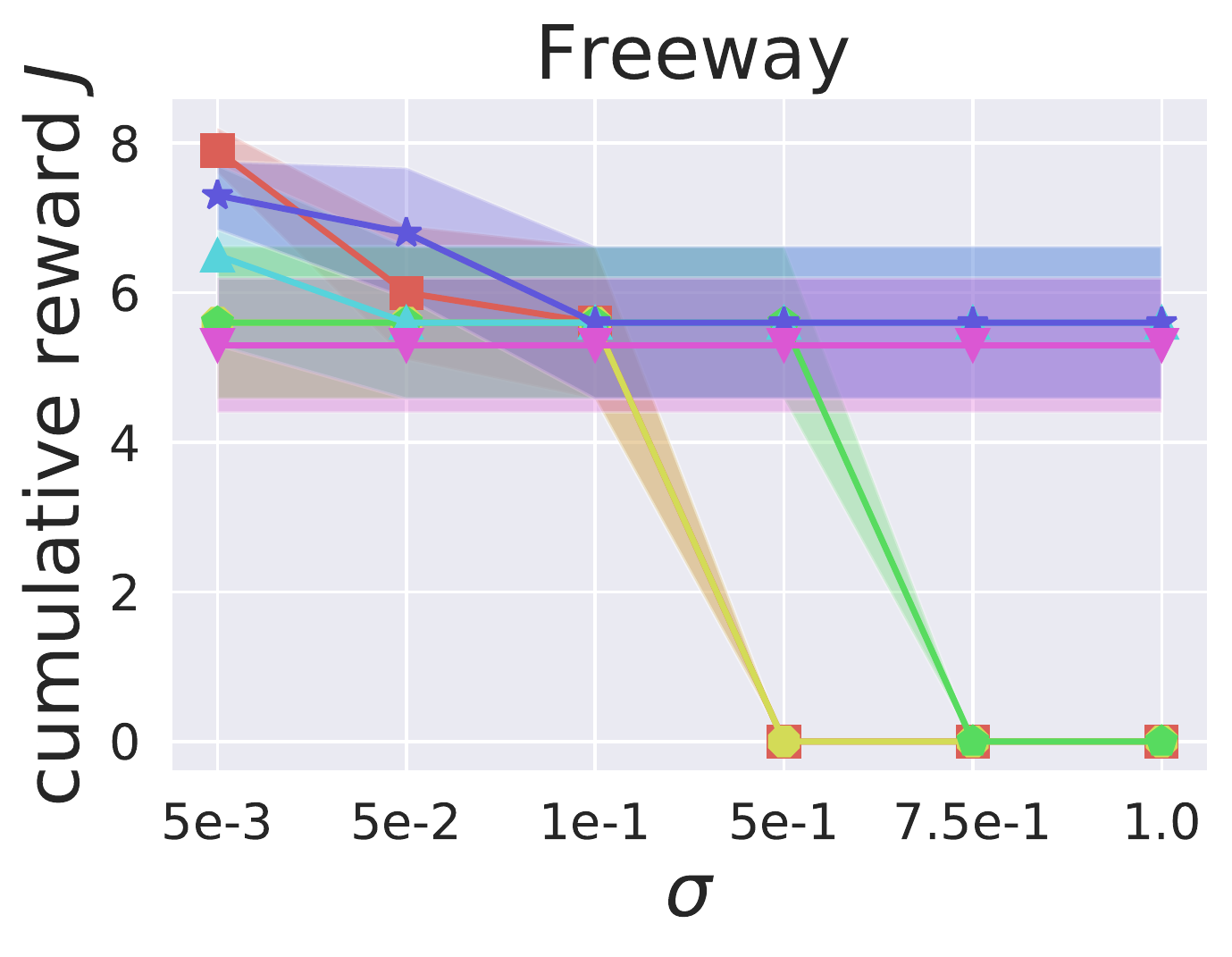}}%

\begin{wrapfigure}{R}{0.63\linewidth}
\vspace{-7mm}
\begin{center}
\includegraphics[height=\wrapheighta]{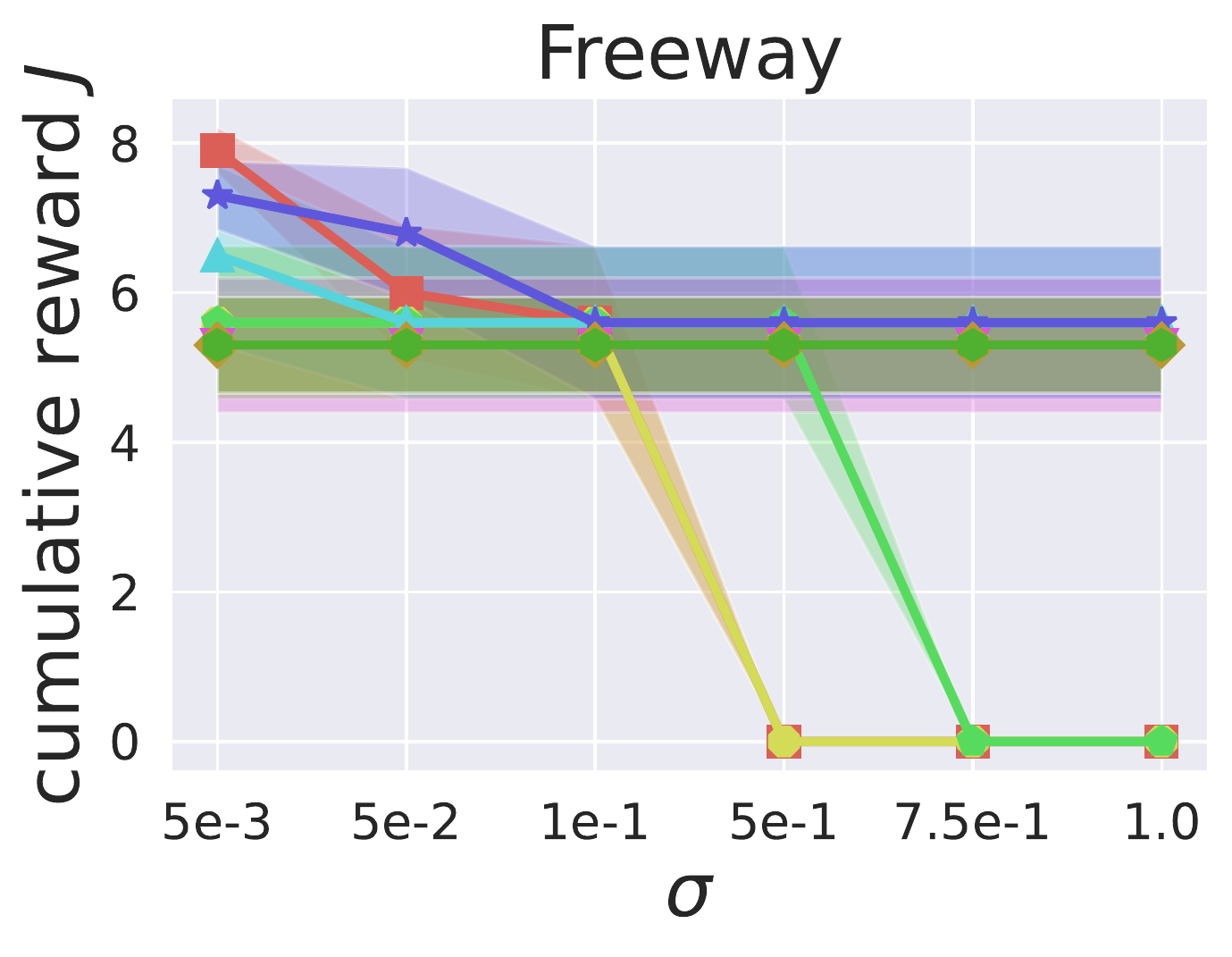}
\includegraphics[height=\wrapheighta]{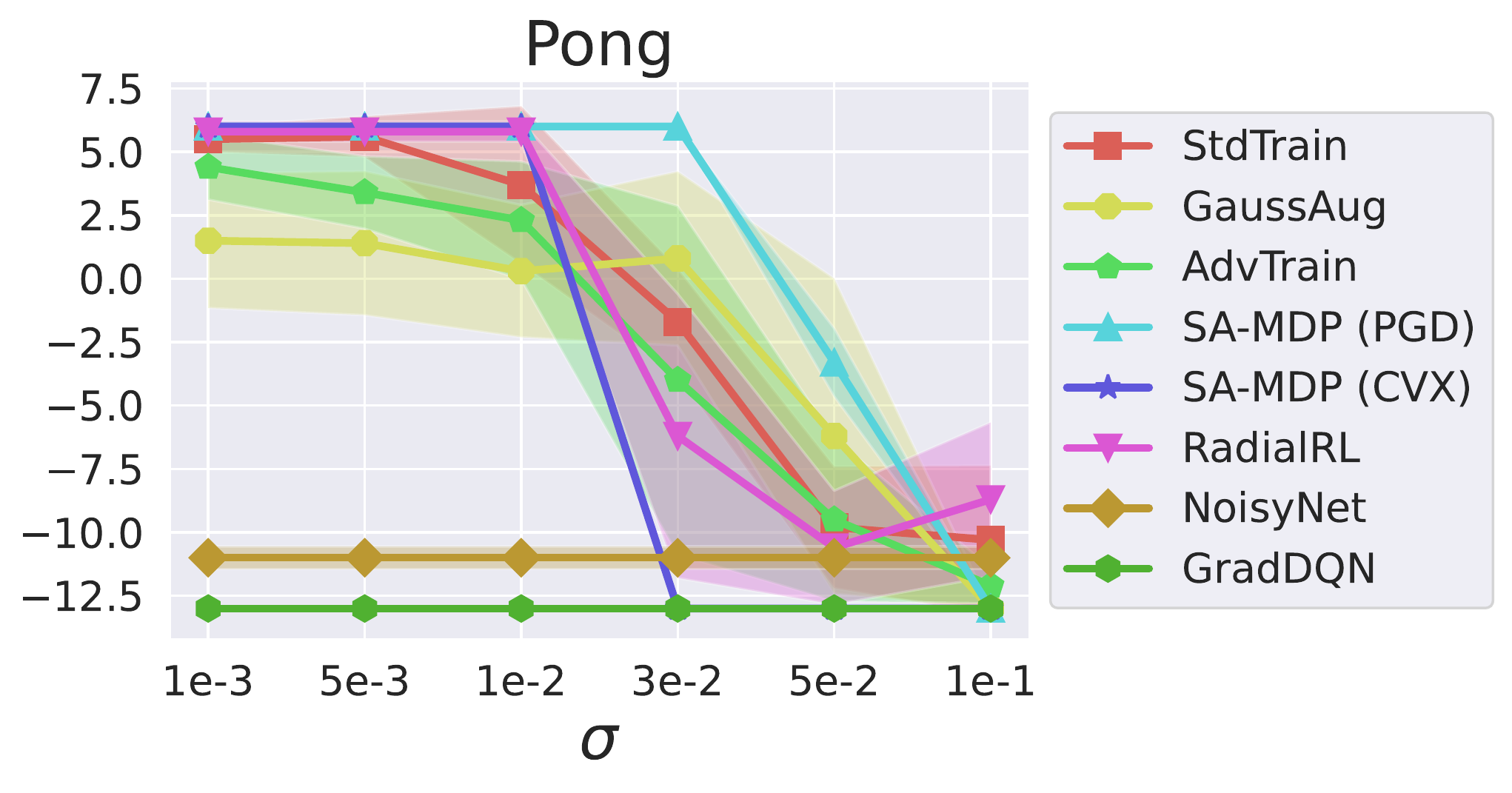}
\end{center}
\vspace{-5mm}
\caption{\small Benign performance of  locally smoothed policy $\tpi$ under different  smoothing variance $\sigma$ with clean state observations.}\label{fig:statewise-clean}
\vspace{-1em}
\end{wrapfigure}

In this subsection, we provide the robustness certification evaluation for per-state action.


\looseness=-1
\textbf{Experimental Setup and Metrics.}\quad
We evaluate the \textit{locally smoothed policy} $\tpi$ derived in (\ref{eq:smooth-q}).
We follow~\Cref{thm:radius} and report the \textit{certified radius} $r_t$ at each step $t$.
We additionally compute  \textit{certified ratio $\eta$ at radius $r$}, representing the percentage of time steps that can be certified, formally denoted as $\eta_r = \nicefrac{1}{H}\sum_{t=1}^{H}\mathds{1}_{\{r_t\geq r\}}$.
More details are omitted to~\Cref{append:exp-setup}.

\textbf{Evaluation Results of \staters.}\quad
We present the certification comparison in~\Cref{fig:statewise} and the benign  performance under different   smoothing variances in~\Cref{fig:statewise-clean}; we omit details of certified ratio to~\Cref{append:cert-ratio}.

\looseness=-1 
On \uit{Freeway}, we evaluate with a large range of smoothing parameter $\sigma$ up to $1.0$, since Freeway can tolerate large noise as shown in~\Cref{fig:statewise-clean}. 
Results under even larger $\sigma$ are deferred to~\Cref{append:freeway-large-sigma},
showing that the certified robustness can be further improved for the empirically robust methods.
From~\Cref{fig:statewise}, we see that \textit{RadialRL consistently achieves the highest certified radius across all $\sigma$'s.} This is because RadialRL explicitly optimizes over the worst-case perturbations.
StdTrain, GaussAug, and AdvTrain are not as robust, and increasing $\sigma$ will not make a difference.
In \underline{\textit{Pong}}, SA-MDP (CVX) is the most certifiably robust, which may be due to the hardness of optimizing the worst-case perturbation in Pong.
More interesting, all  methods present similar periodic patterns for the certified radius on different states, which would inspire further robust training methods to take the ``confident state'' (\eg, when the ball is flying towards the paddle) into account. 
We illustrate the frames with varying certified radius in~\Cref{append:pong-periodic}.
Overall, the certified robustness  of these methods largely matches  empirical observations~\citep{behzadan2017whatever,oikarinen2020robust,zhang2021robust}.


\vspace{-2mm}
\subsection{Evaluation of Robustness Certification for Cumulative Reward}
\label{sec:eval-cert-cum}
\vspace{-2mm}

Here we will discuss the evaluation  for the robustness certification regarding cumulative reward in~\Cref{sec:cert-cum}. 
We show the evaluation results for both \glbrs and \adasearch.

\vspace{-1mm}
\looseness=-1 \textbf{Evaluation Setup and Metrics.}\quad
We evaluate the \textit{\sigrpi} $\pi'$ derived in~\Cref{def:sig-rand} for \glbrs and the \textit{locally smoothed policy} $\tpi$ derived in (\ref{eq:smooth-q}) for \adasearch.
We compute the \ebound $\uje$,  \pbound $\ujp$, and  absolute lower bound $\uj$ following~\Cref{thm:exp-bound},~\Cref{thm:perc-bound}, and~\Cref{sec:tree-search-impl}.
To validate the tightness of the bounds, we additionally perform empirical attacks. 
More rationales see~\Cref{append:rationale}.
The detailed parameters are omitted to~\Cref{append:exp-setup}.


\vspace{-1mm}
\textbf{Evaluation of \glbrs.}\quad
We present the certification of reward in~\Cref{fig:all-bounds} and mainly focus on analyzing \pbound.
We present the bound w.r.t. different attack magnitude $\eps$ under different smoothing parameter $\sigma$.
For each $\sigma$, there exists an upper bound of  $\eps$ that can be certified, which is positively correlated with $\sigma$.
Details see~\Cref{append:glbrs}.
We observe similar conclusion as in the per-state action certification that RadialRL is most certifiably robust for Freeway, while SA-MDP (CVX,PGD) are most robust for Pong although the highest robustness is achieved at different $\sigma$. 

\looseness=-1
\vspace{-1mm}
\textbf{Evaluation of \adasearch.}\quad
In~\Cref{fig:all-bounds}, 
we note that in \underline{\textit{Freeway}}, RadialRL is the most robust, followed by SA-MDP (CVX), then SA-MDP (PGD), AdvTrain, and GaussAug.
As for \uit{Pong}, SA-MDP (CVX) outperforms RadialRL and SA-MDP (PGD). 
Remaining methods are not clearly ranked.


\vspace{-2mm}
\subsection{Discussion on Evaluation Results}
\label{sec:discuss-eval}
\vspace{-2mm}

\textbf{Impact of Smoothing Parameter $\sigma$.}\quad
We draw  similar conclusions regarding the impact of smoothing variance from the results given by \staters and \glbrs.
In \uit{Freeway}, 
as $\sigma$ increases, the robustness of StdTrain, GaussAug, and AdvTrain barely increases, while that of SA-MDP (PGD), SA-MDP (CVX), and RadialRL steadily increases. 
In \uit{Pong}, a $\sigma$ in the range $0.01$-$0.03$ shall be suitable for almost all methods. More explanations from the perspective of benign performance and certified results will be provided in~\Cref{append:discuss-sigma}.
For \adasearch, there are a few different conclusions.
On \uit{Freeway}, it is clear that under larger attack magnitude $\eps$, larger smoothing parameter $\sigma$ always secures higher lower bound $\uj$.
In \uit{Pong}, \adasearch can only achieve non-zero certification with small $\sigma$ under a  small range of attack magnitude,
implying the difficulty of establishing non-trivial certification for these methods on Pong.
We defer details on the selection of $\sigma$ to~\Cref{append:discuss-sigma}.
\vspace{-1mm}

{
\renewcommand{\thesubfigure}{\alph{subfigure}}

\begin{figure}
\vspace{-2em}

\newlength{\utilheightc}
\settoheight{\utilheightc}{\includegraphics[width=.160\linewidth]{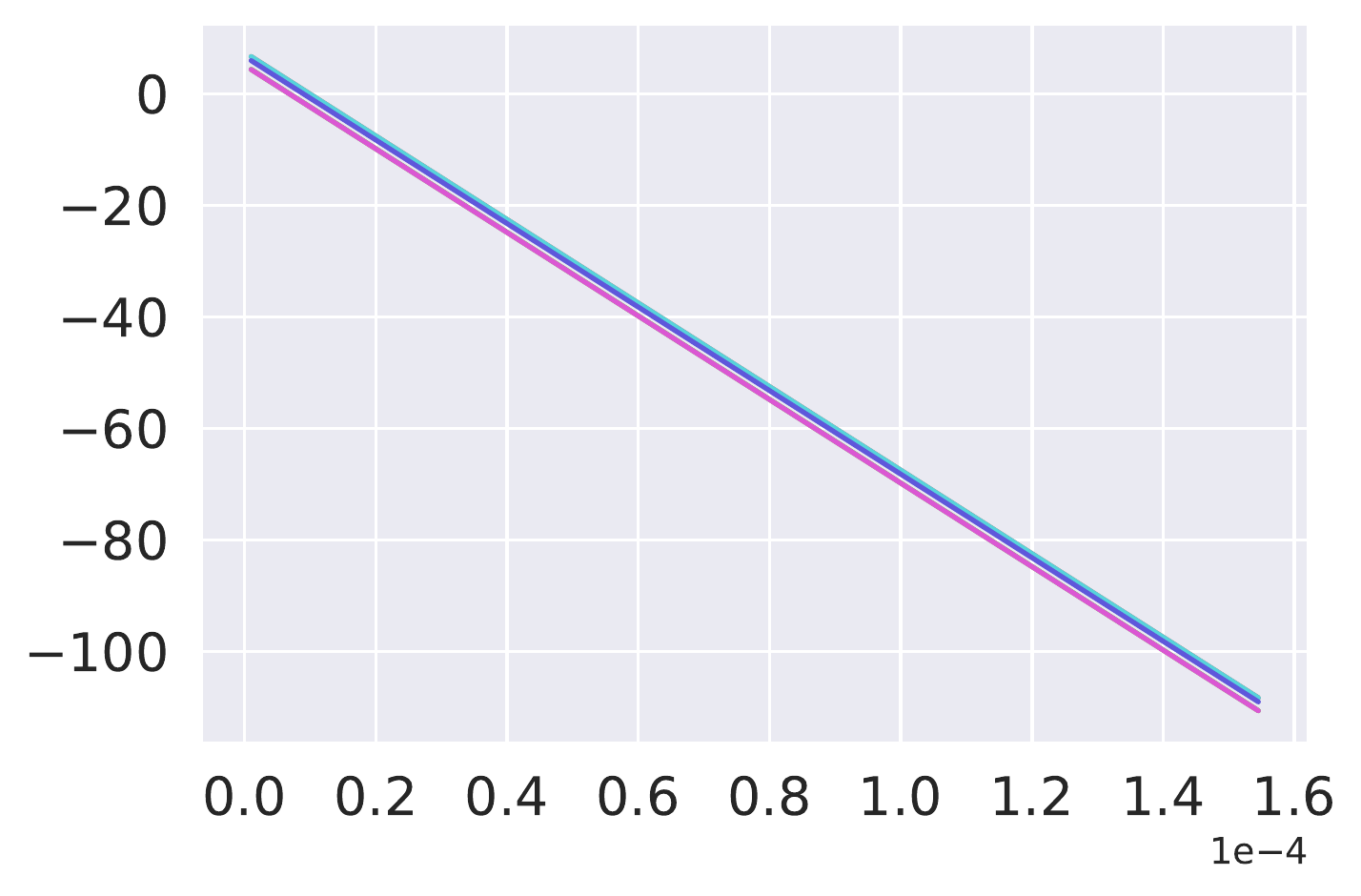}}%

\newlength{\utilheightd}
\settoheight{\utilheightd}{\includegraphics[width=.165\linewidth]{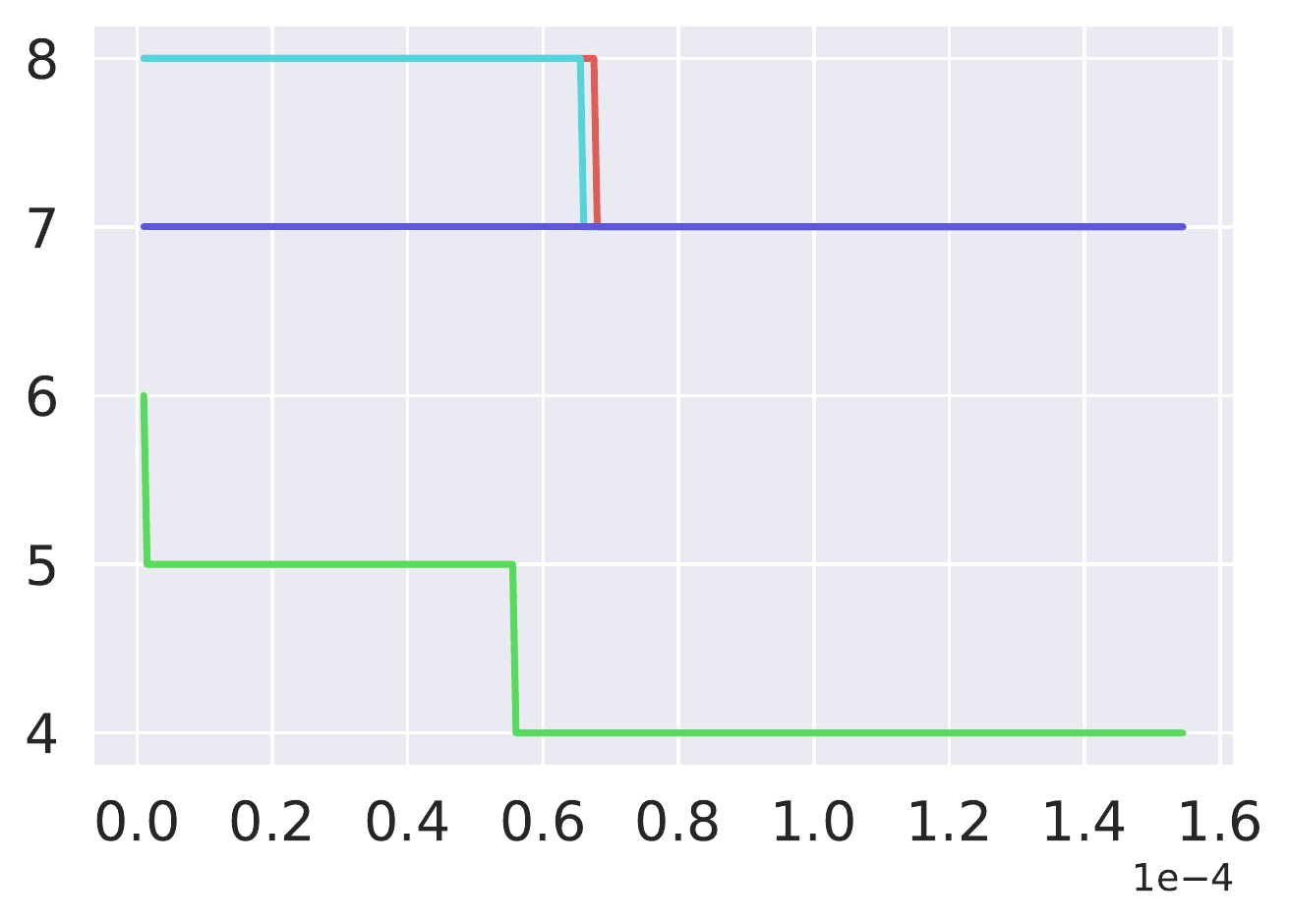}}%

\newlength{\utilheightaa}
\settoheight{\utilheightaa}{\includegraphics[width=.162\linewidth]{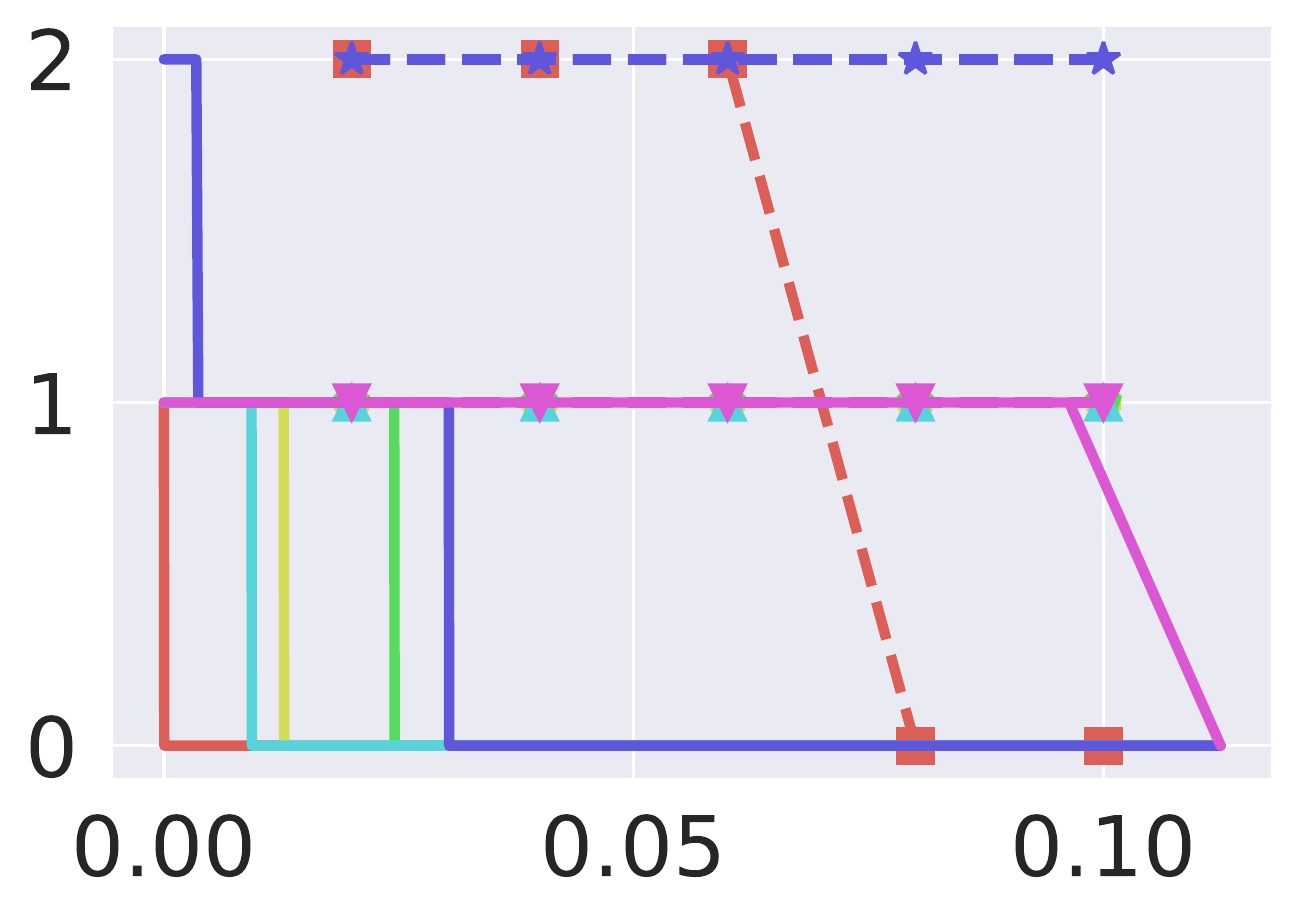}}%

\newlength{\utilheightcp}
\settoheight{\utilheightcp}{\includegraphics[width=.160\linewidth]{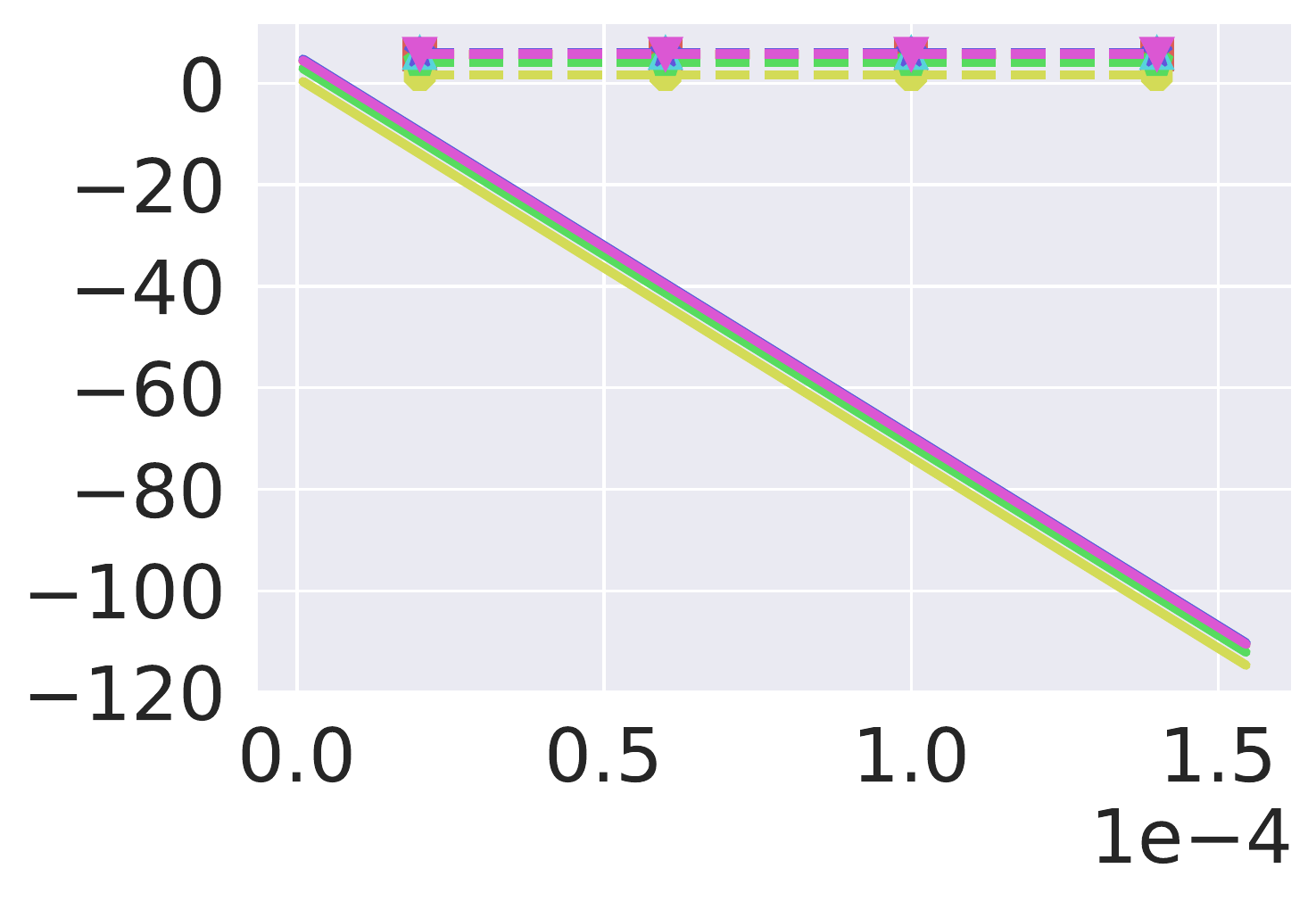}}%

\newlength{\utilheightdp}
\settoheight{\utilheightdp}{\includegraphics[width=.165\linewidth]{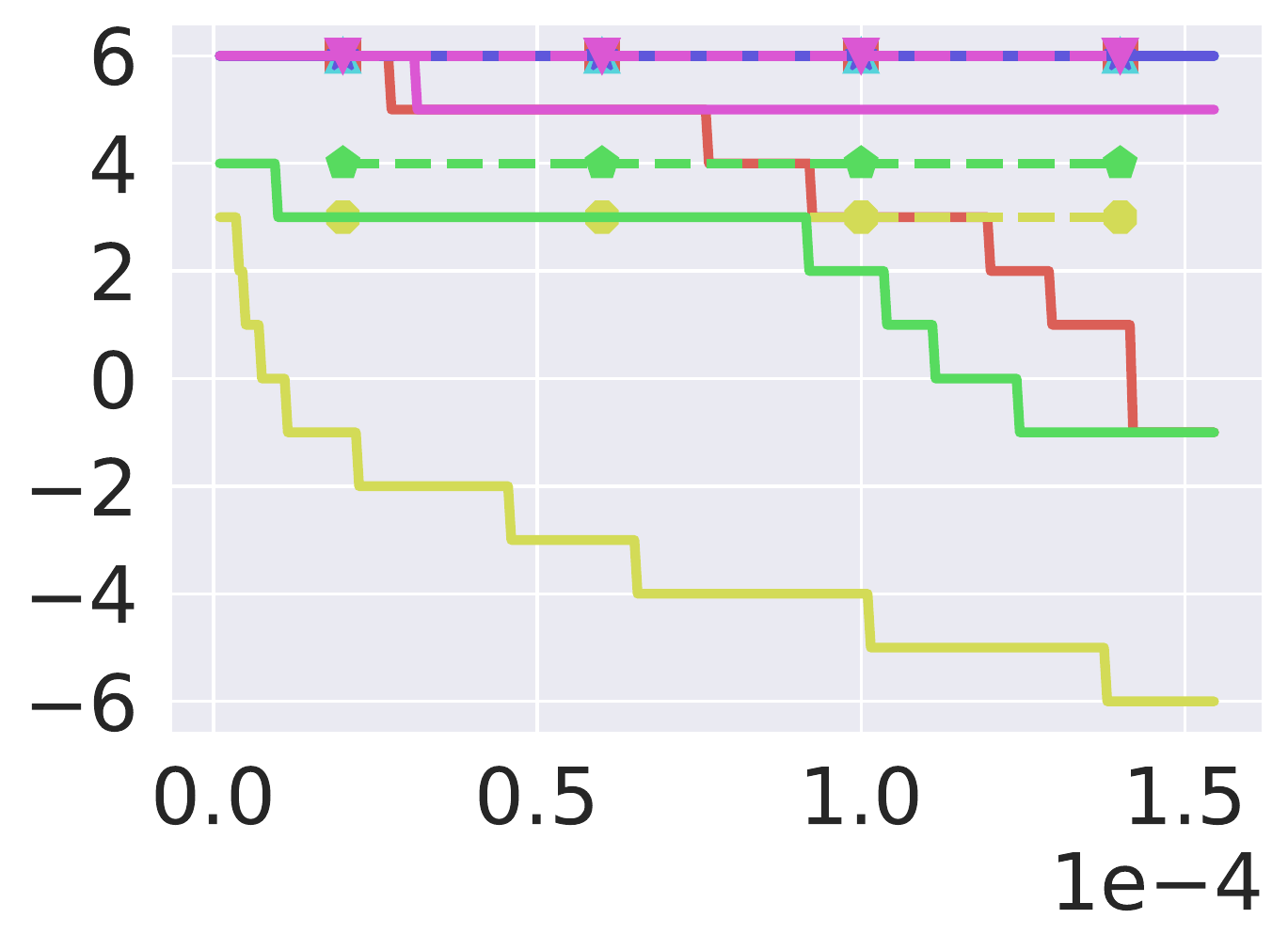}}%

\newlength{\utilheightaap}
\settoheight{\utilheightaap}{\includegraphics[width=.162\linewidth]{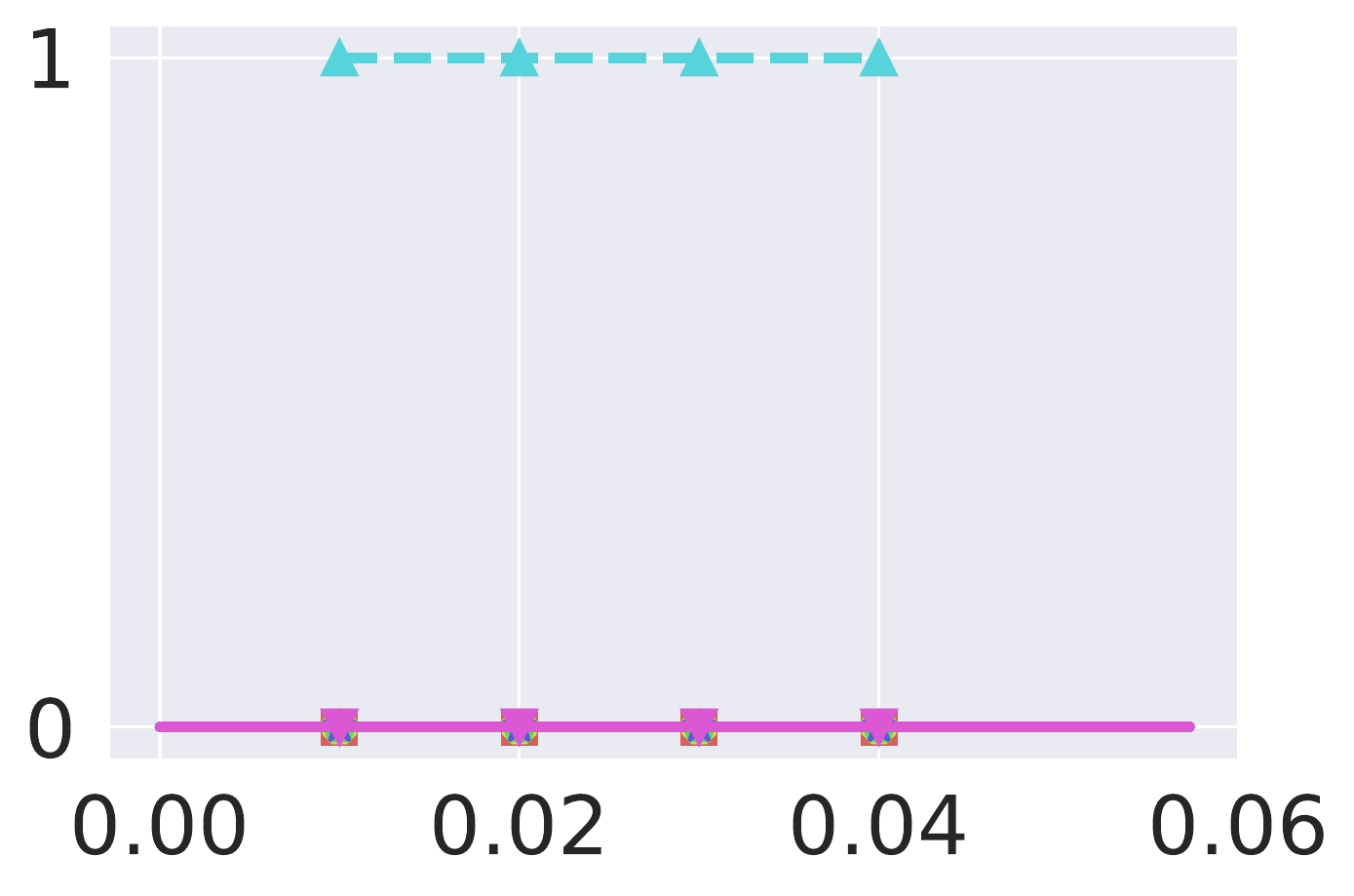}}%



\newlength{\legendheightb}
\setlength{\legendheightb}{0.48\utilheightc}%

\newcommand{\rownamec}[1]
{\rotatebox{90}{\makebox[\utilheightc][c]{\tiny #1}}}

\newcommand{\rownamed}[1]
{\rotatebox{90}{\makebox[\utilheightd][c]{\tiny #1}}}

\centering

{
\renewcommand{\tabcolsep}{10pt}

\begin{subtable}[]{\linewidth}
\begin{tabular}{l}
\includegraphics[height=0.7\legendheightb]{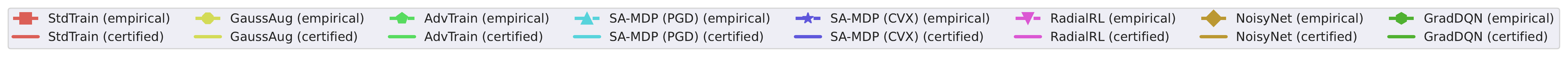}
\end{tabular}
\end{subtable}
\vspace{-2pt}
\begin{subtable}[]{\linewidth}
\centering
\resizebox{\linewidth}{!}{%
\begin{tabular}{@{}p{3mm}@{}c@{}c@{}c@{}c@{}c@{}c@{}}
        & \makecell{\tiny{$\sigma=0.005$}}
        & \makecell{\tiny{$\sigma=0.05$}}
        & \makecell{\tiny{$\sigma=0.1$}}
        & \makecell{\tiny{$\sigma=0.5$}}
        & \makecell{\tiny{$\sigma=0.75$}}
        & \makecell{\tiny{$\sigma=1.0$}}
        \vspace{-2pt}\\
\rownamec{\makecell{(Global) $\uje$}}&
\includegraphics[height=\utilheightc]{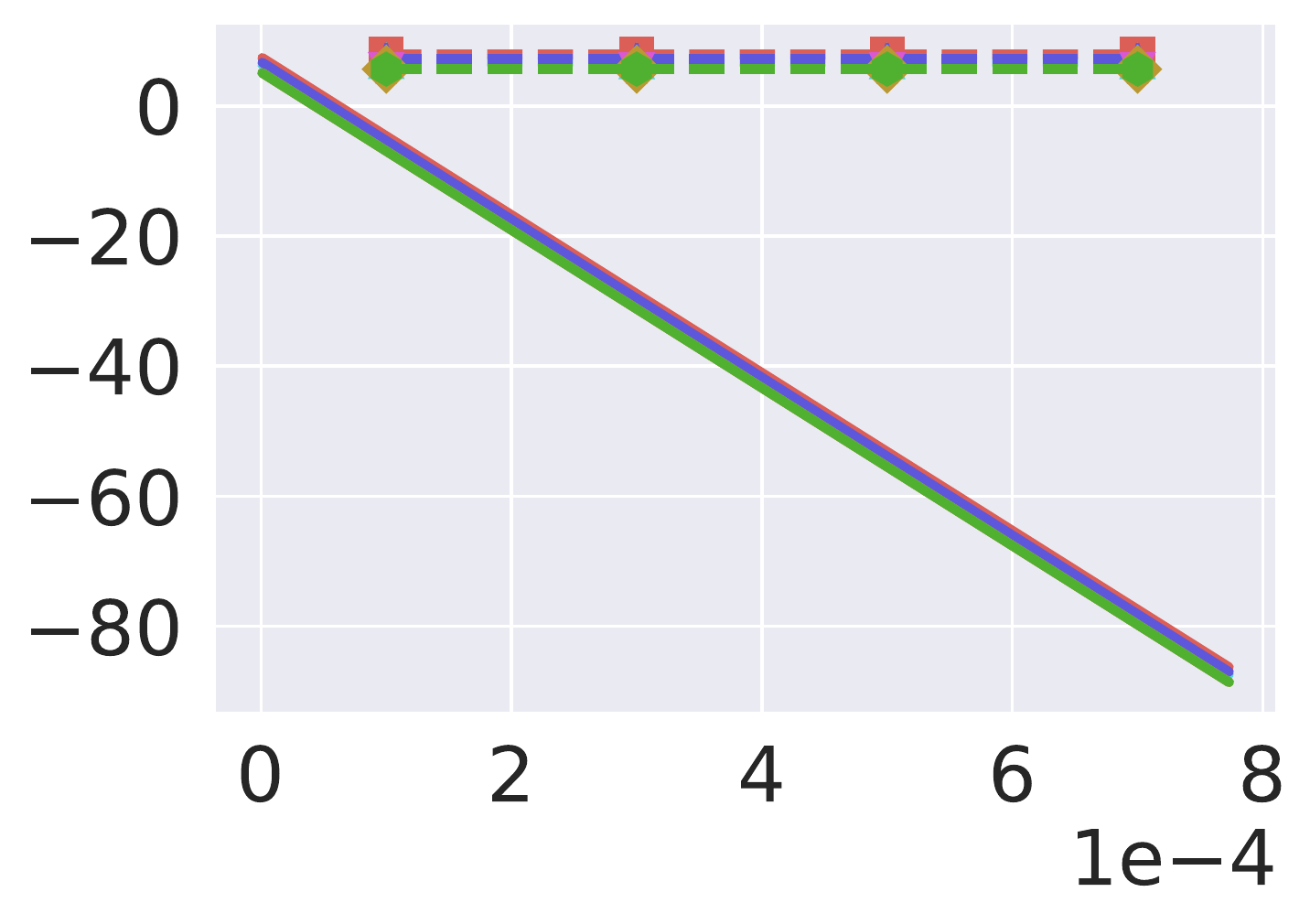}&
\includegraphics[height=\utilheightc]{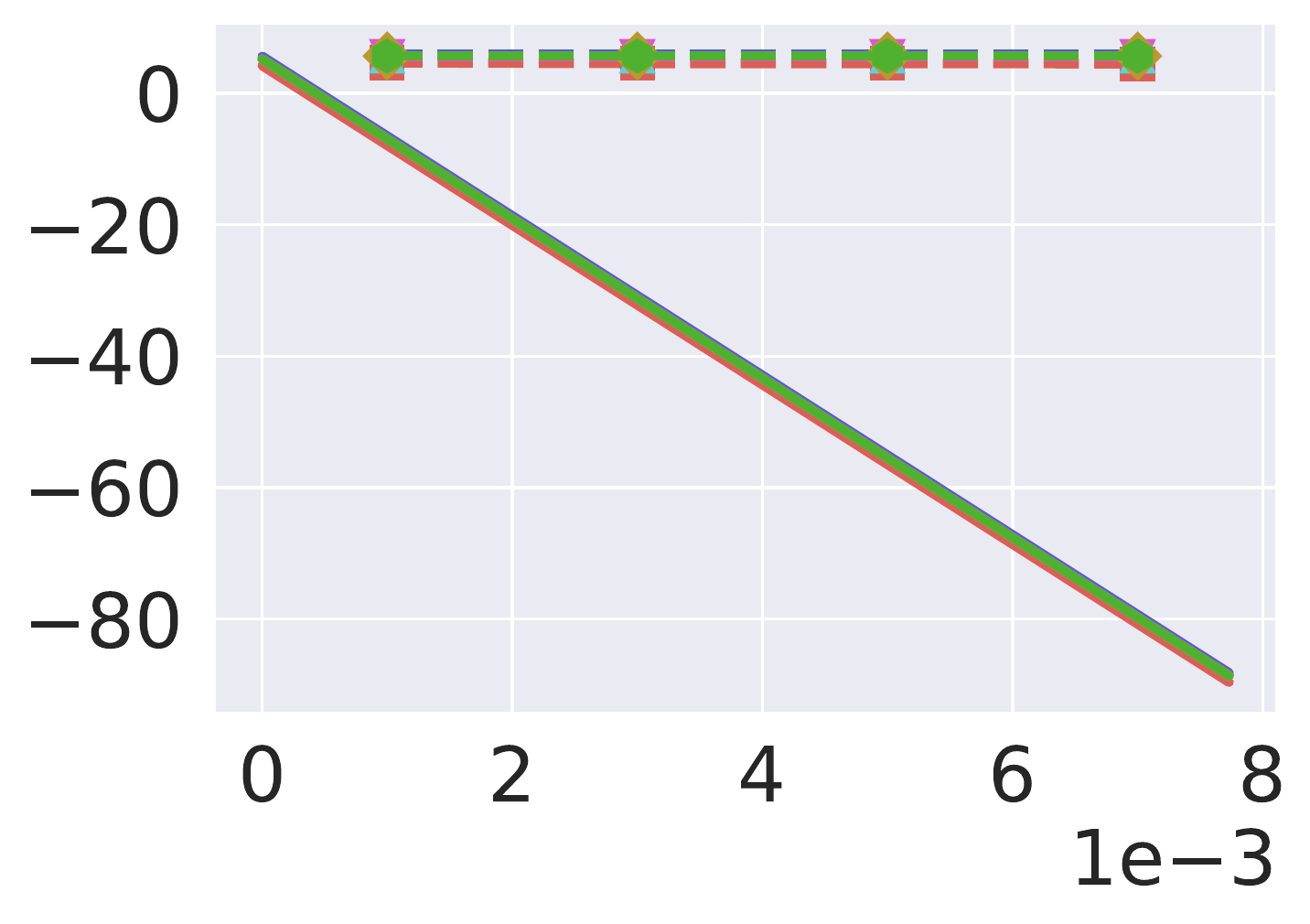}&
\includegraphics[height=\utilheightc]{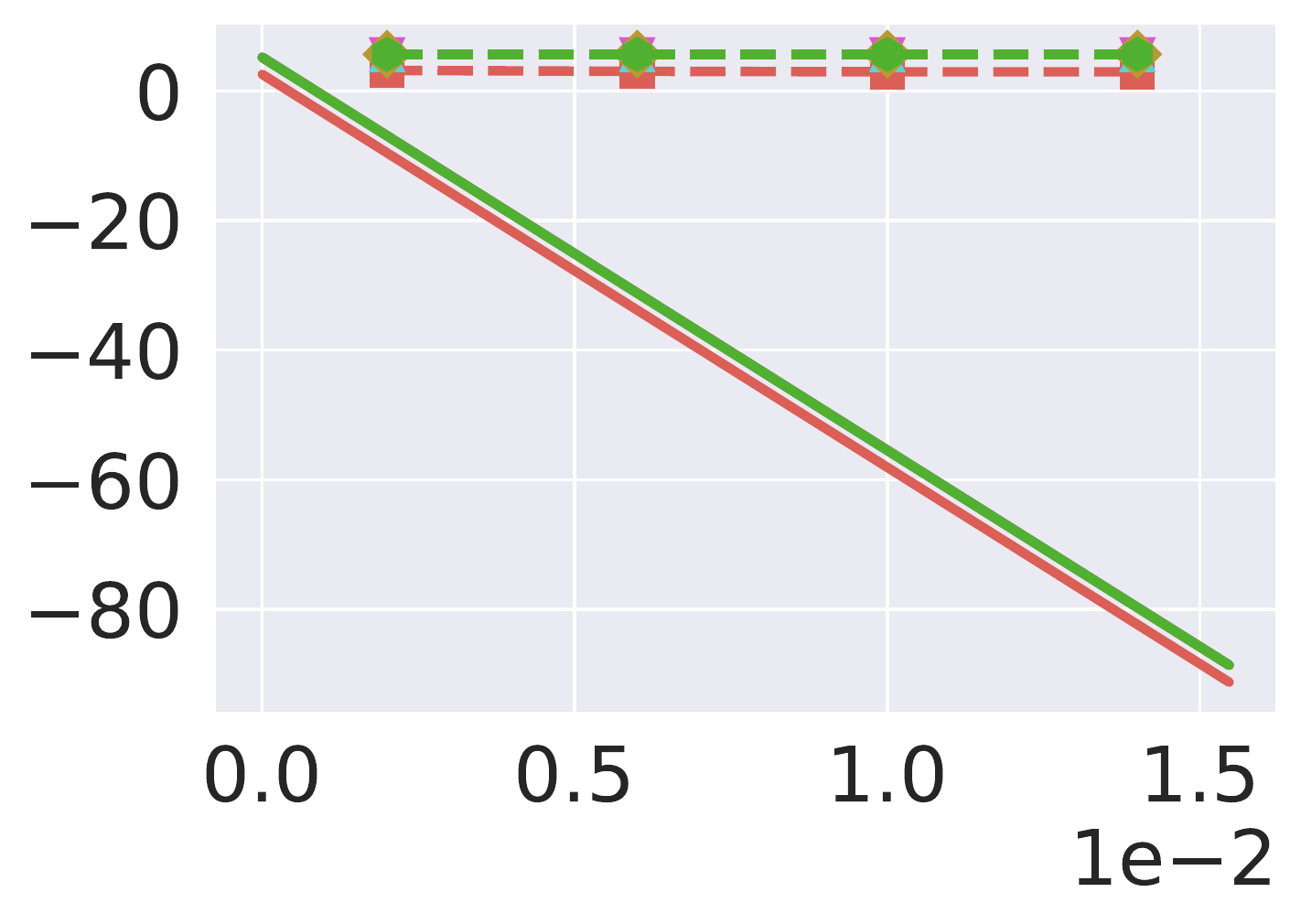}&
\includegraphics[height=\utilheightc]{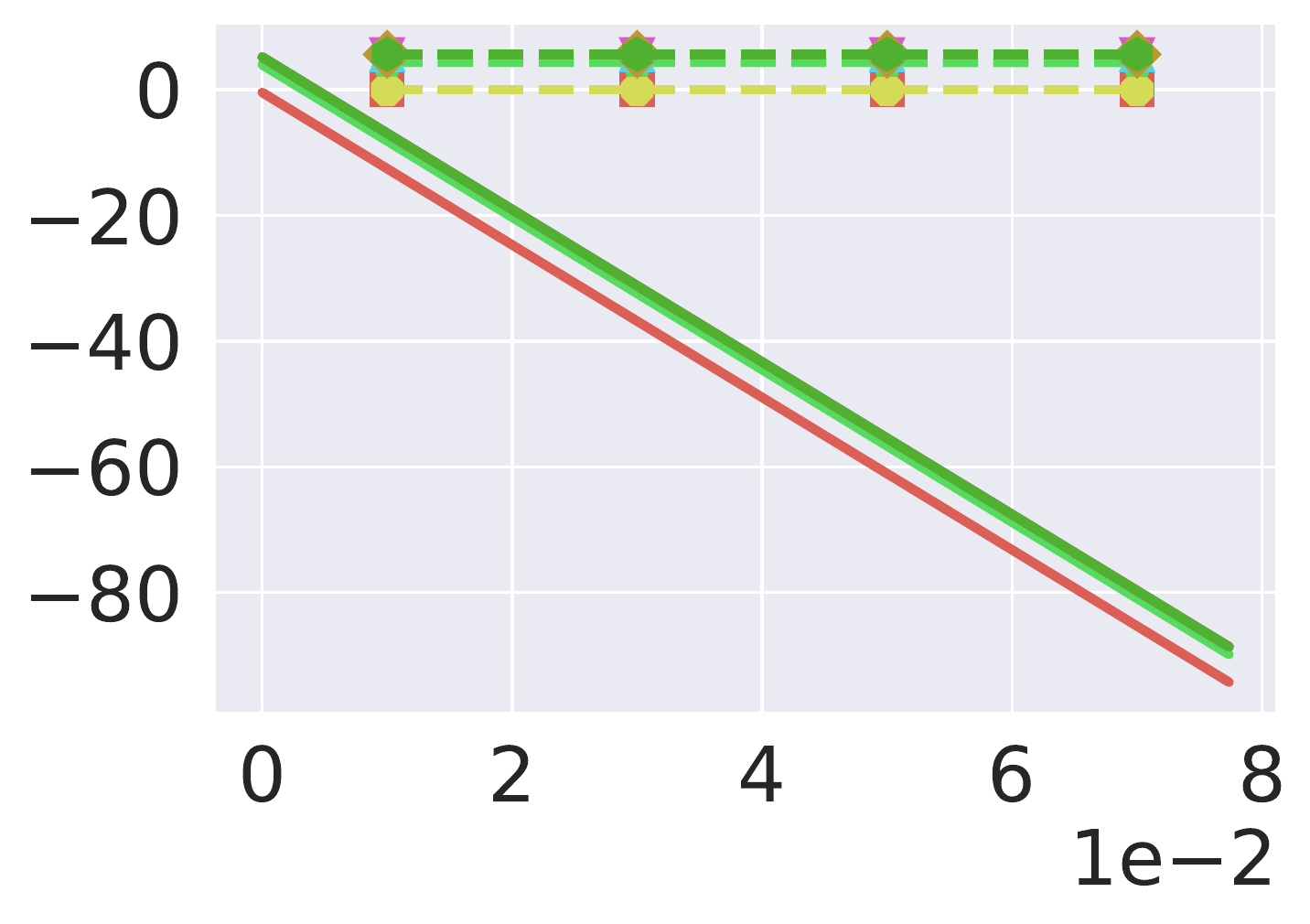}&
\includegraphics[height=\utilheightc]{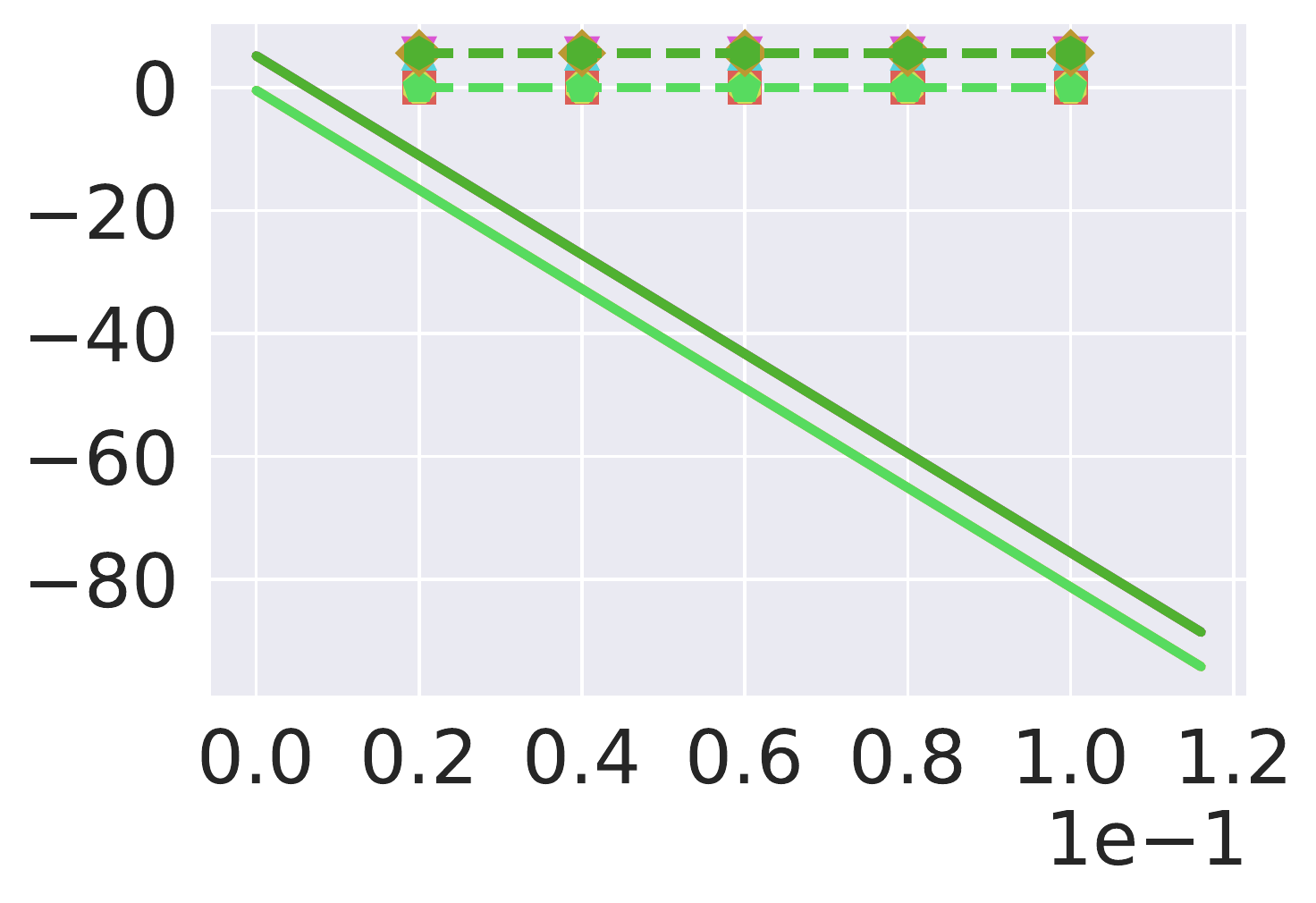}&
\includegraphics[height=\utilheightc]{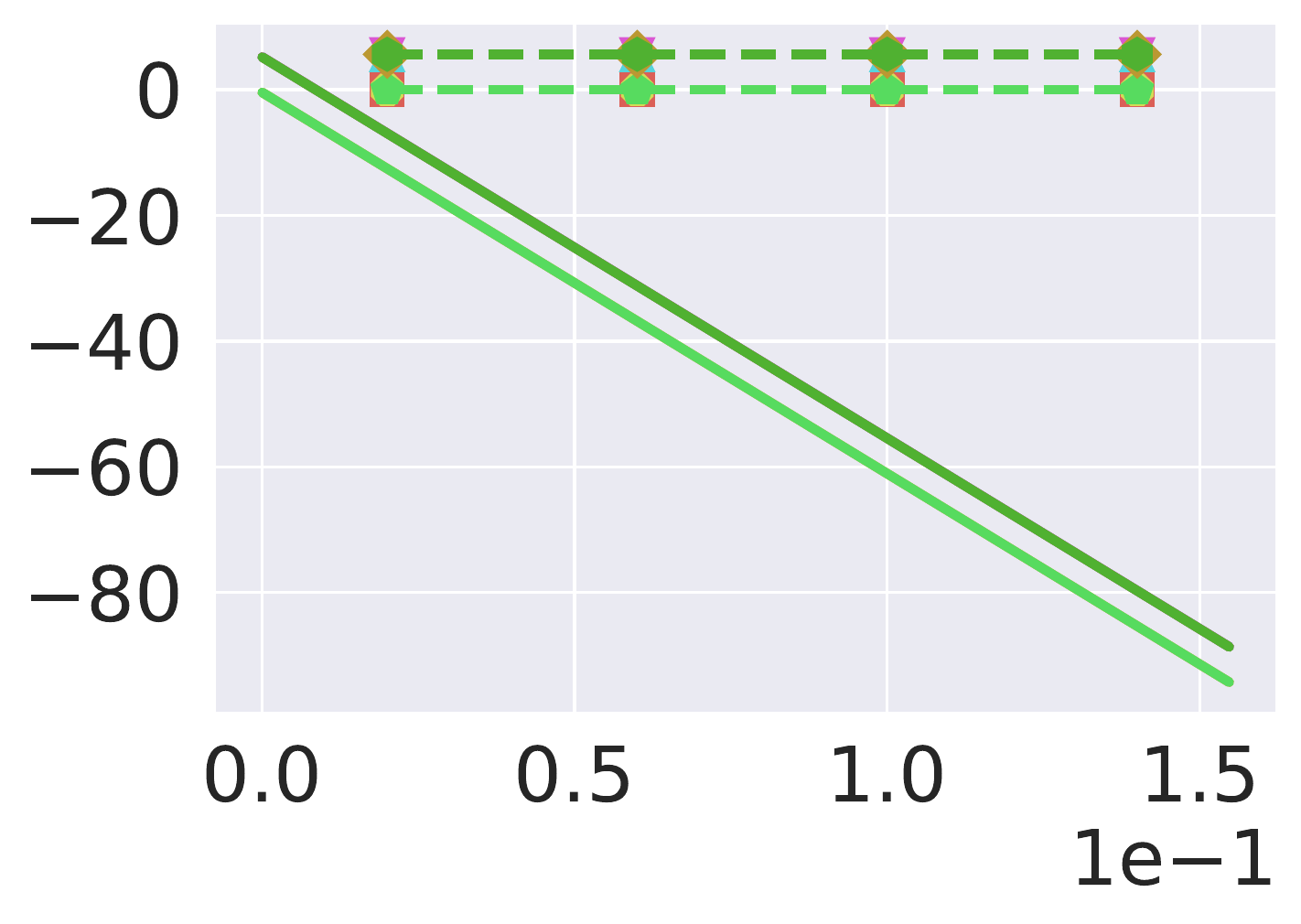}\\[-1.2ex]
\rownamed{\makecell{(Global) $\ujp$}}&
\includegraphics[height=\utilheightd]{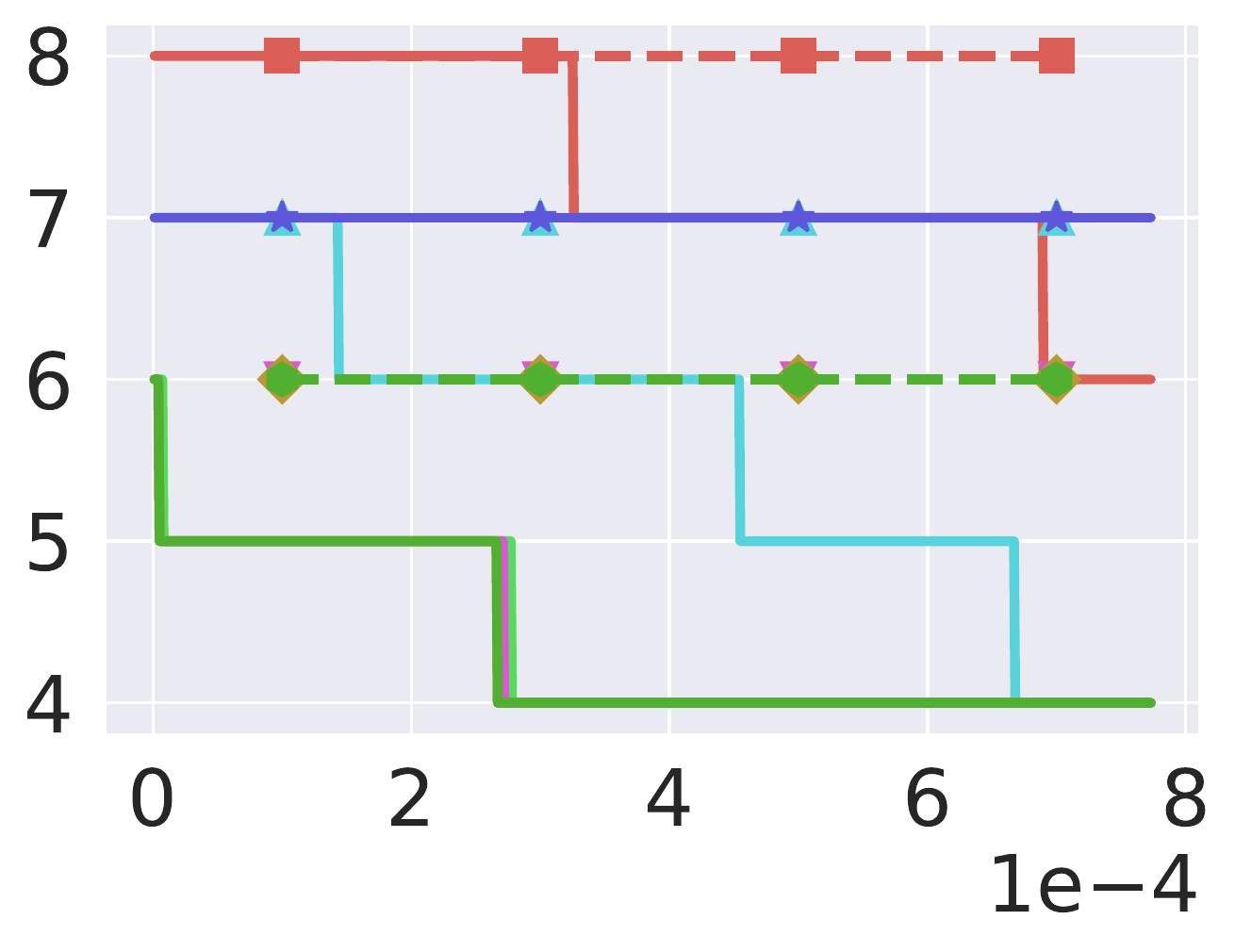}&
\includegraphics[height=\utilheightd]{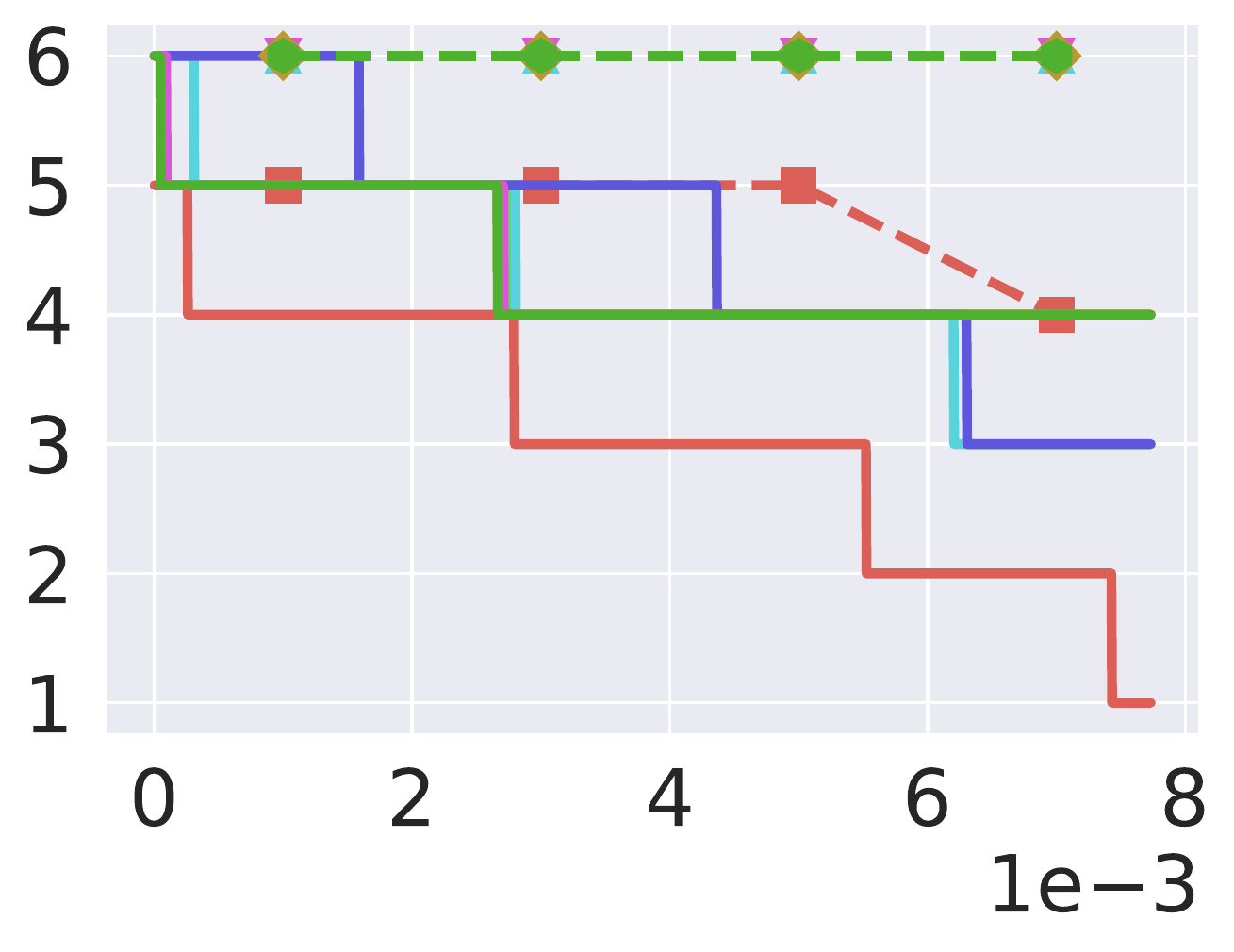}&
\includegraphics[height=\utilheightd]{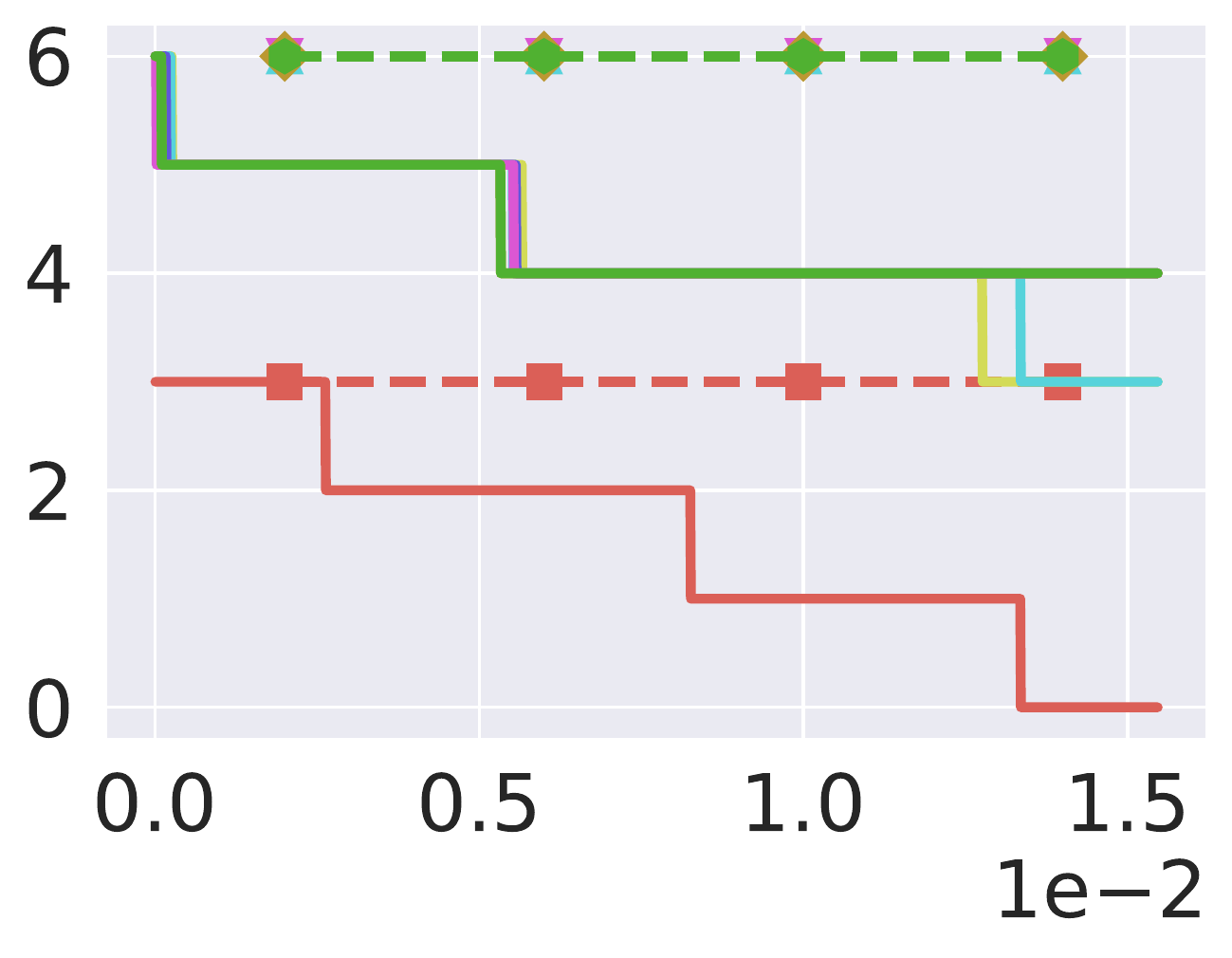}&
\includegraphics[height=\utilheightd]{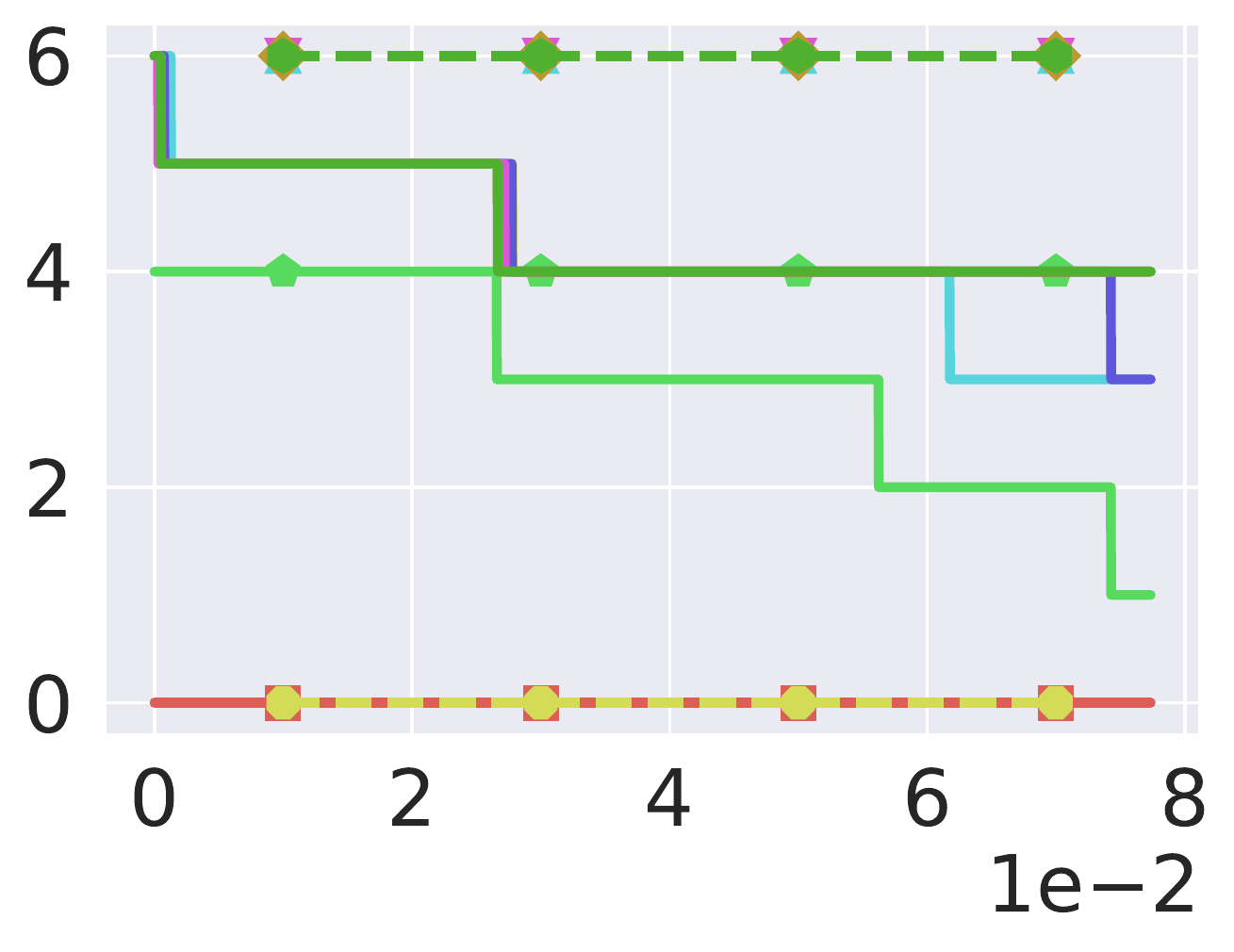}&
\includegraphics[height=\utilheightd]{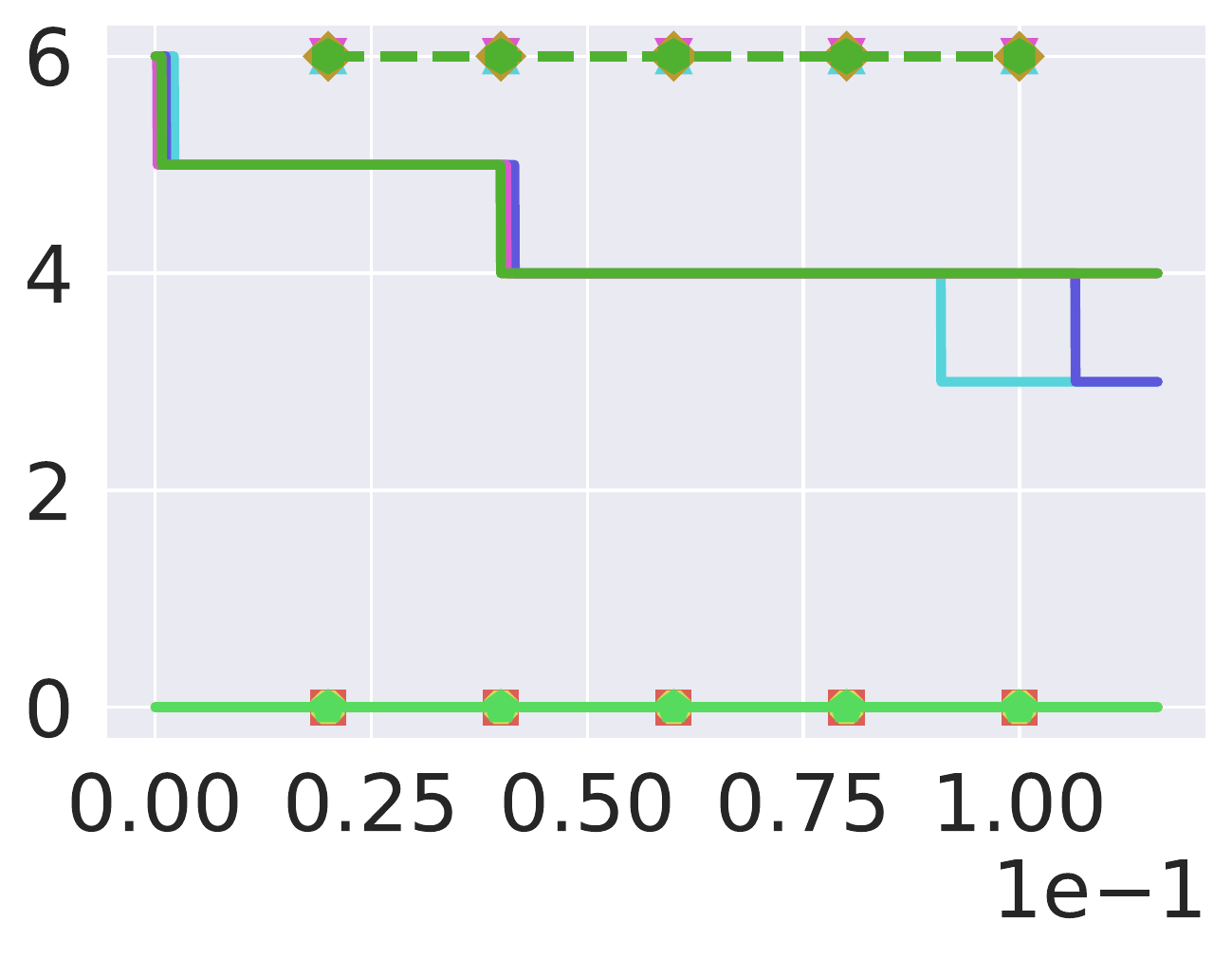}&
\includegraphics[height=\utilheightd]{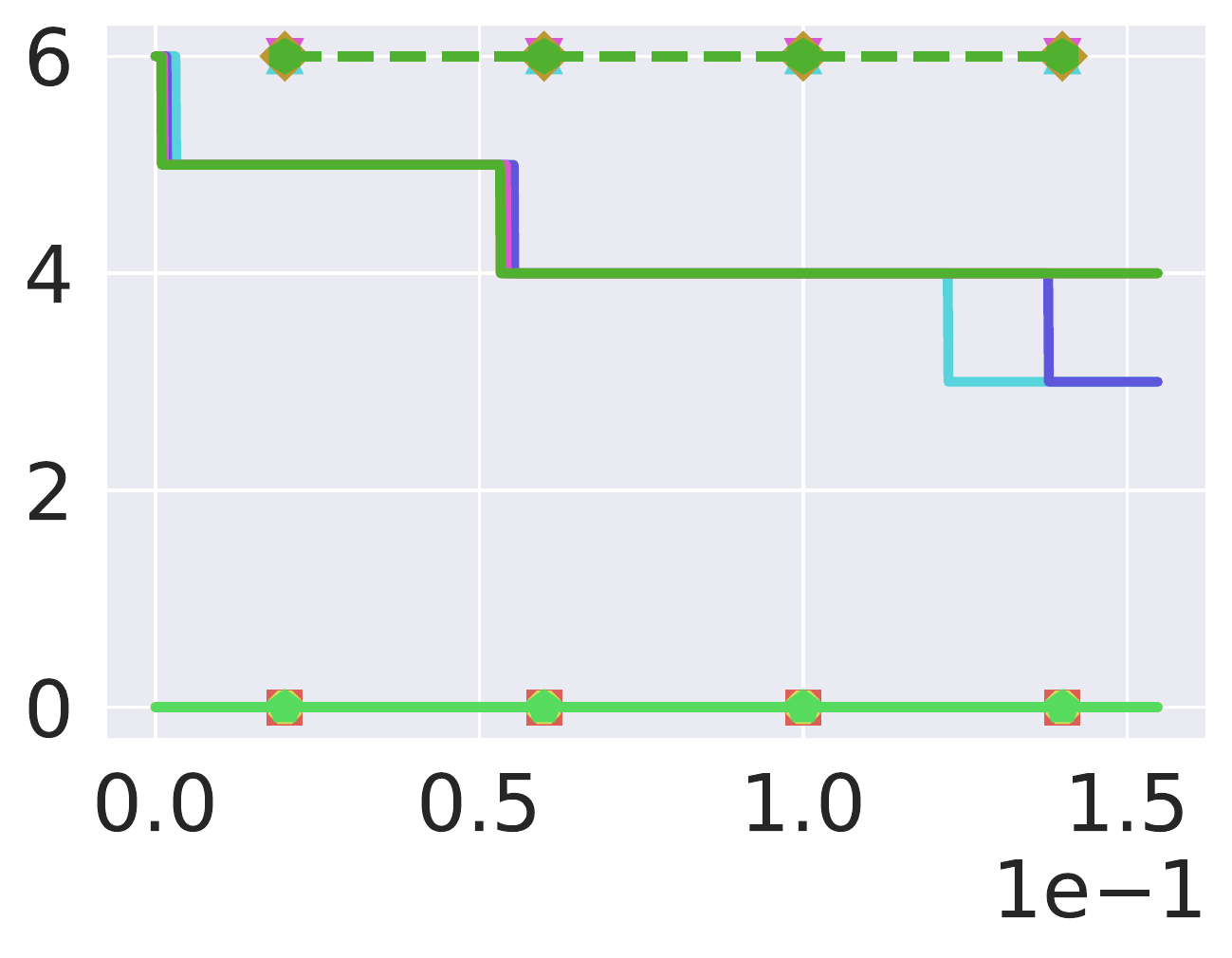}\\[-1.2ex]
\rownamed{\makecell{(Local) $\uj$}}&
\includegraphics[height=\utilheightaa]{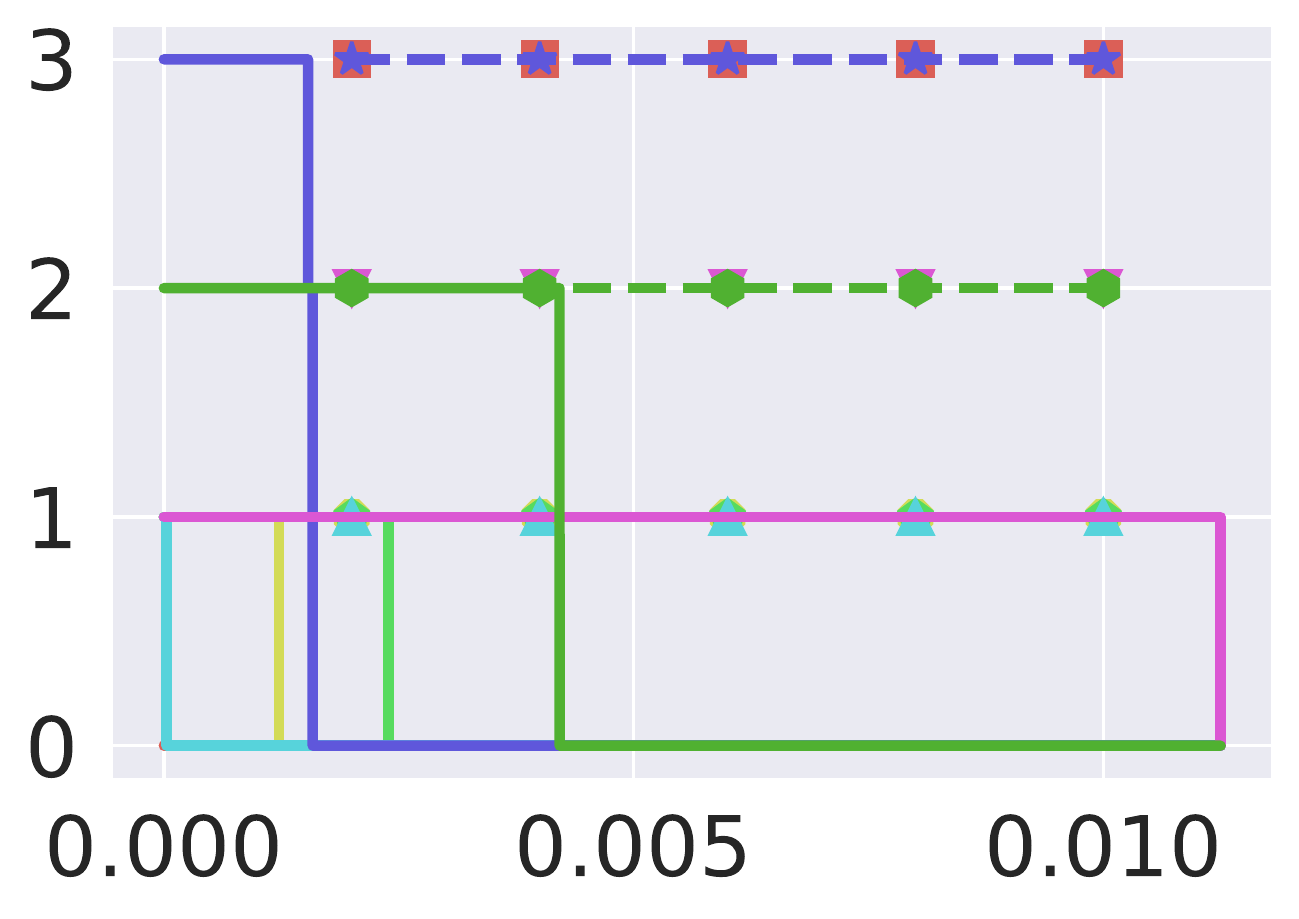}&
\includegraphics[height=\utilheightaa]{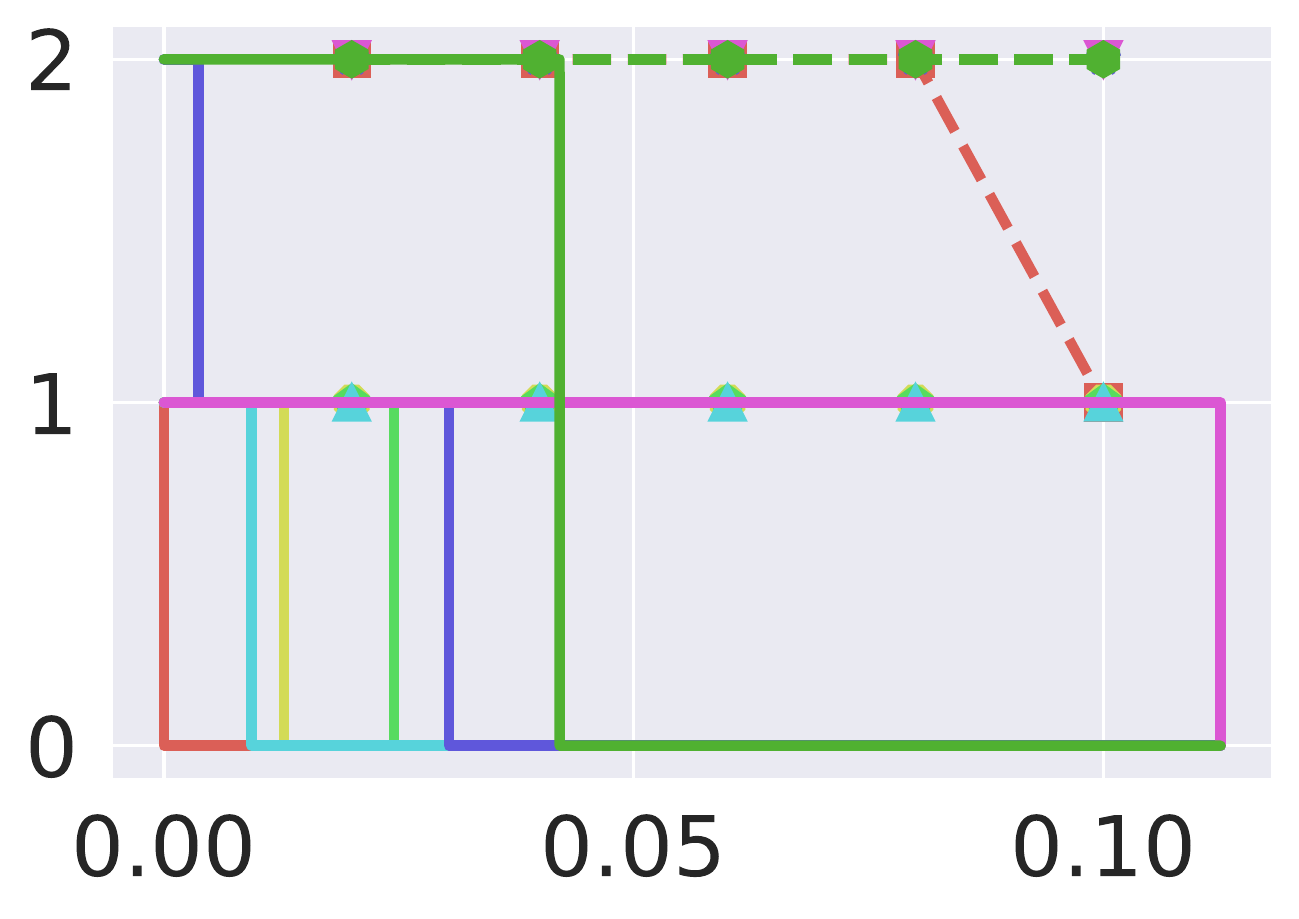}&
\includegraphics[height=\utilheightaa]{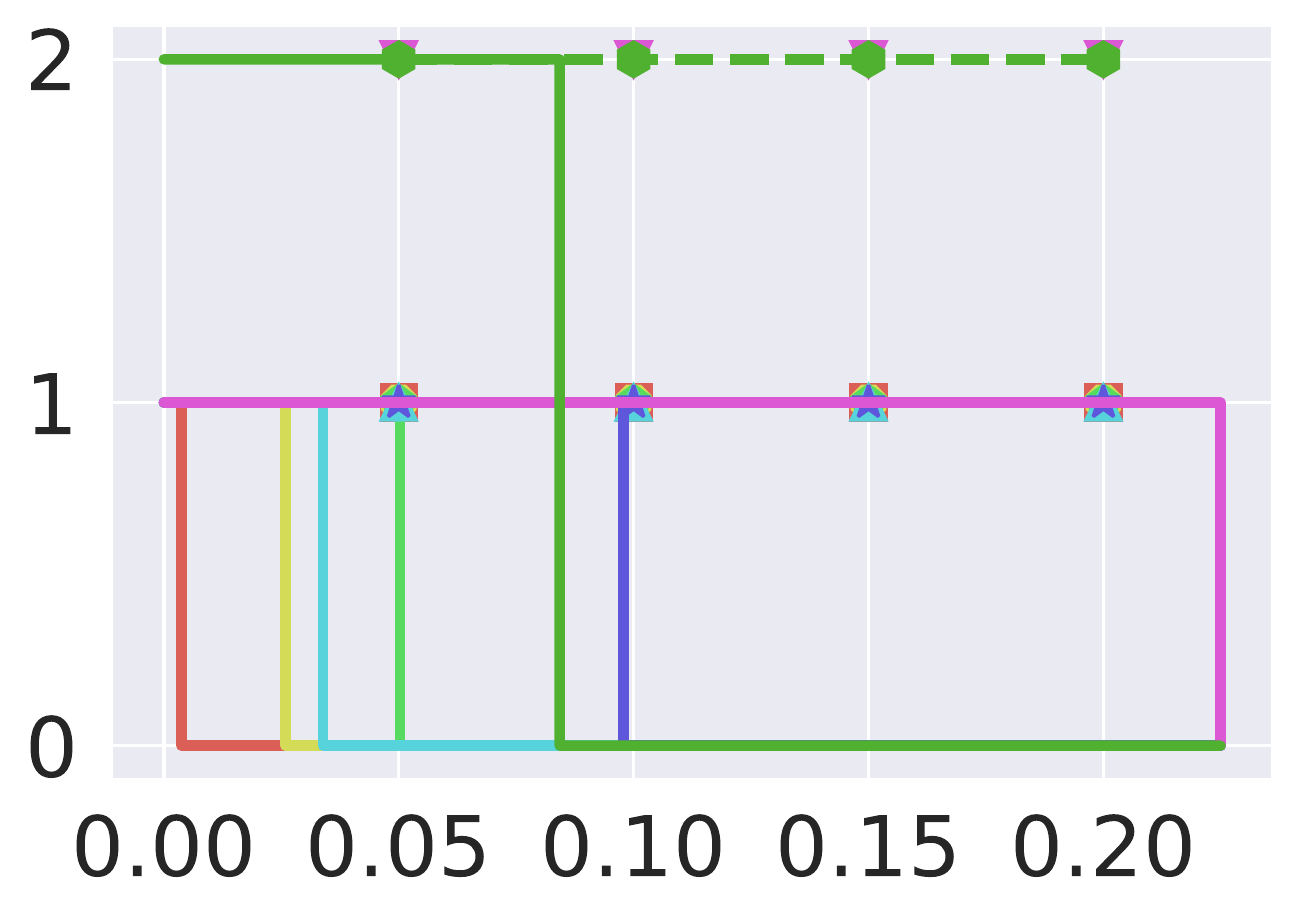}&
\includegraphics[height=\utilheightaa]{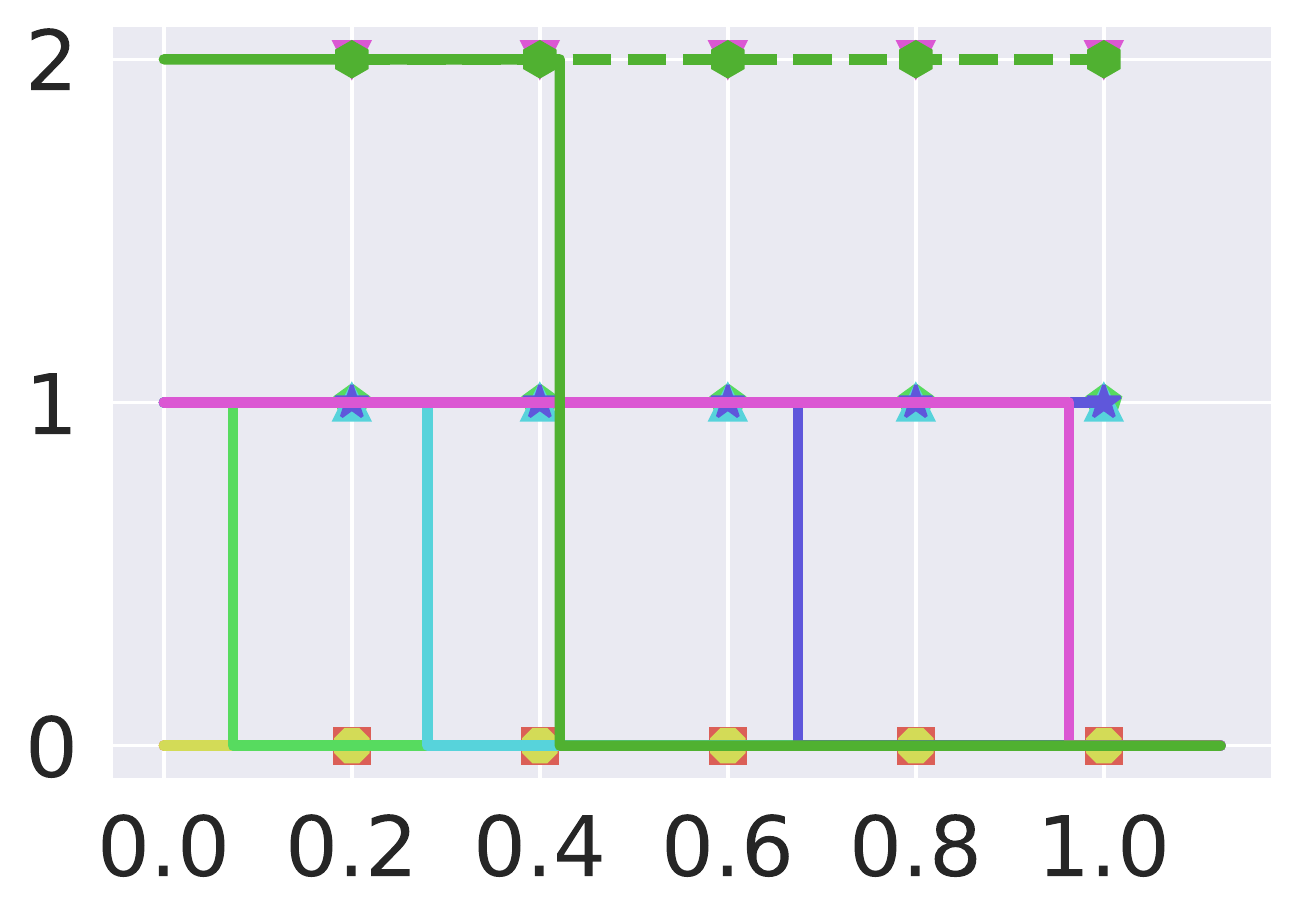}&
\includegraphics[height=\utilheightaa]{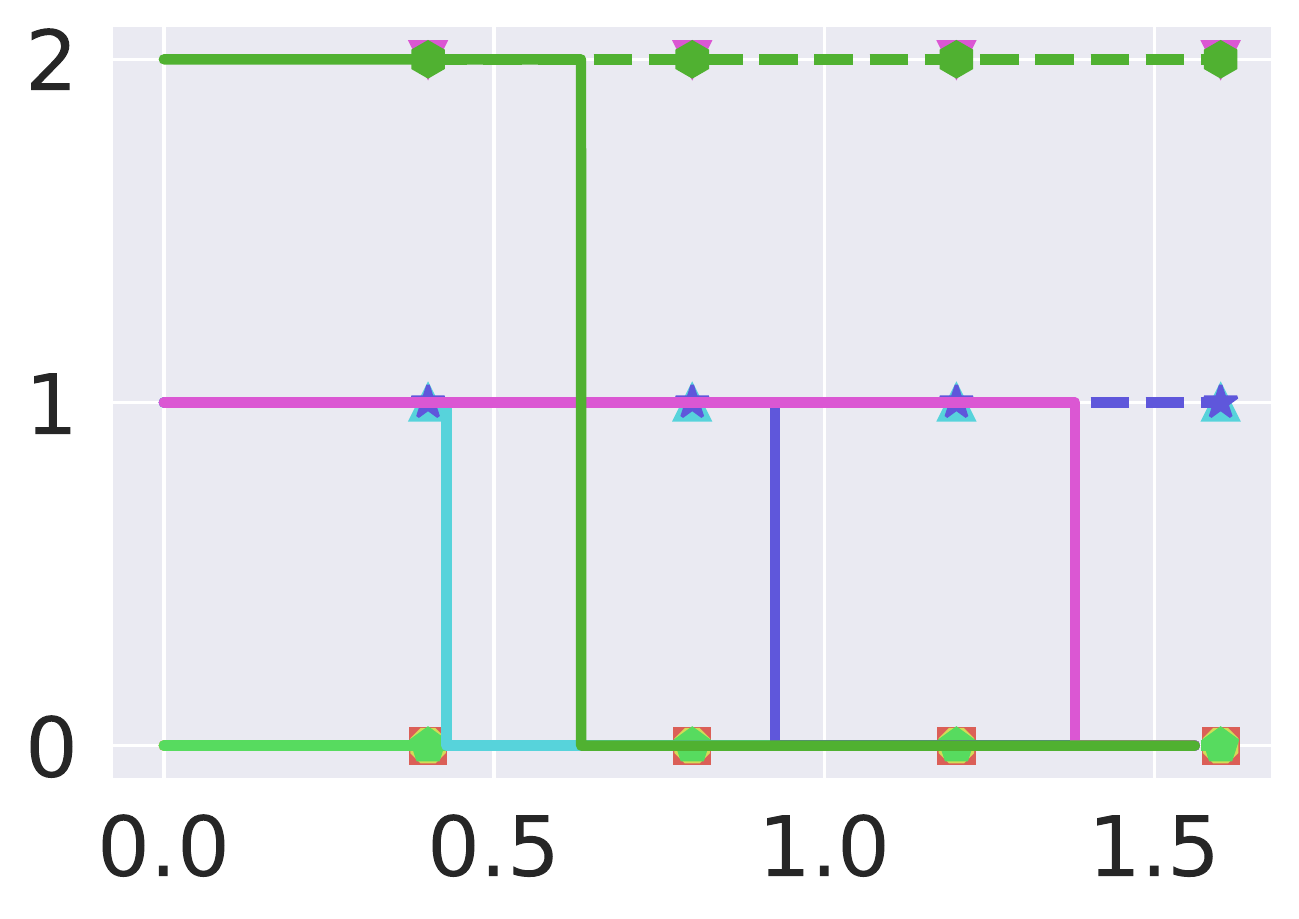}&
\includegraphics[height=\utilheightaa]{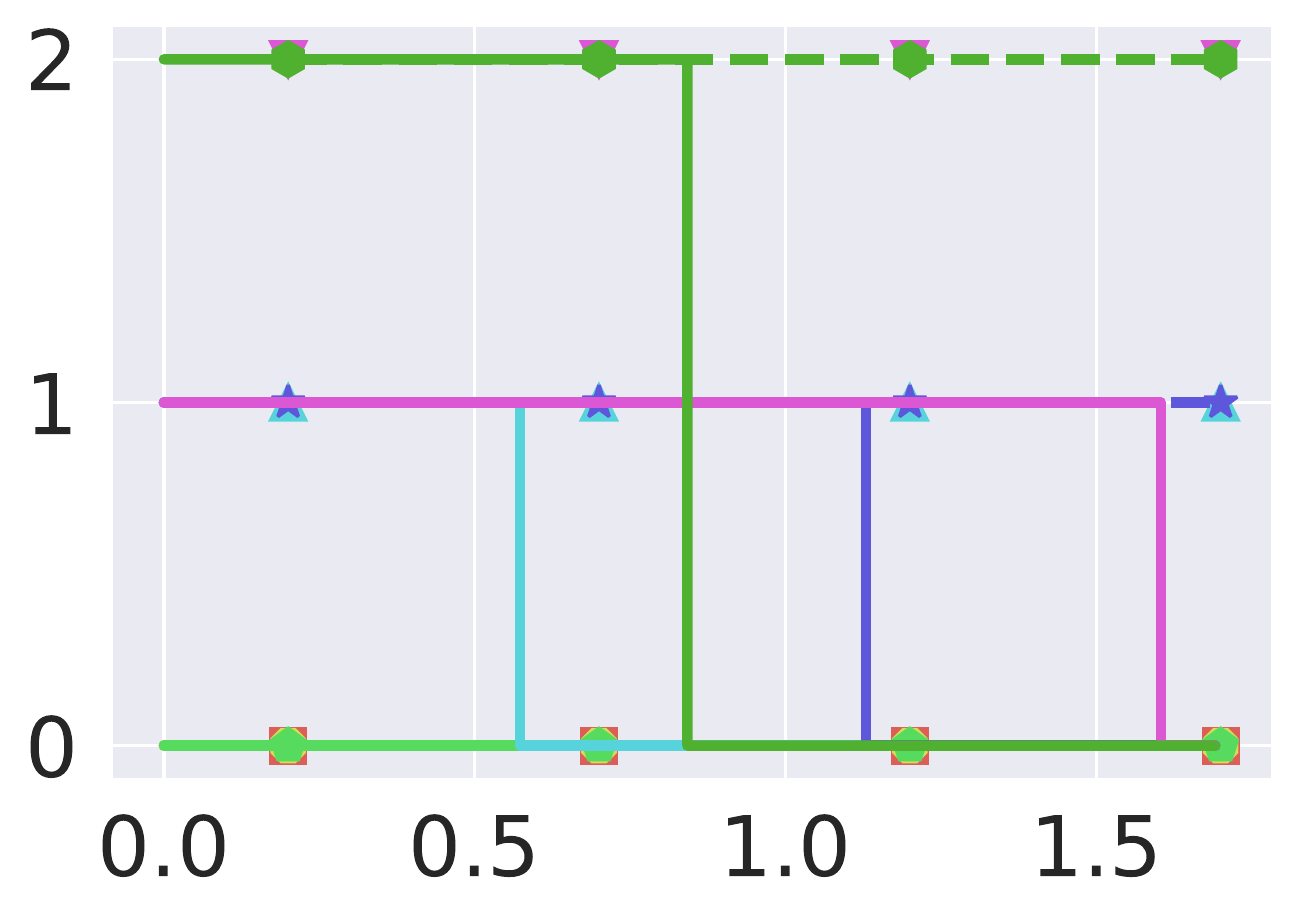}\\[-1.2ex]
        & \makecell{\tiny{attack $\eps$}}
        & \makecell{\tiny{attack $\eps$}}
        & \makecell{\tiny{attack $\eps$}}
        & \makecell{\tiny{attack $\eps$}}
        & \makecell{\tiny{attack $\eps$}}
        & \makecell{\tiny{attack $\eps$}}
\end{tabular}
}
\vspace{-3mm}
\caption{\small Freeway}\label{tab:bound-freeway}
\end{subtable}

\begin{subtable}[]{\linewidth}
\centering
\resizebox{\linewidth}{!}{%
\begin{tabular}{@{}p{3mm}@{}c@{}c@{}c@{}c@{}c@{}c@{}}
        & \makecell{\tiny{$\sigma=0.001$}}
        & \makecell{\tiny{$\sigma=0.005$}}
        & \makecell{\tiny{$\sigma=0.01$}}
        & \makecell{\tiny{$\sigma=0.03$}}
        & \makecell{\tiny{$\sigma=0.05$}}
        & \makecell{\tiny{$\sigma=0.1$}}
        \vspace{-2pt}\\
\rownamec{\makecell{(Global) $\uje$}}&
\includegraphics[height=\utilheightcp]{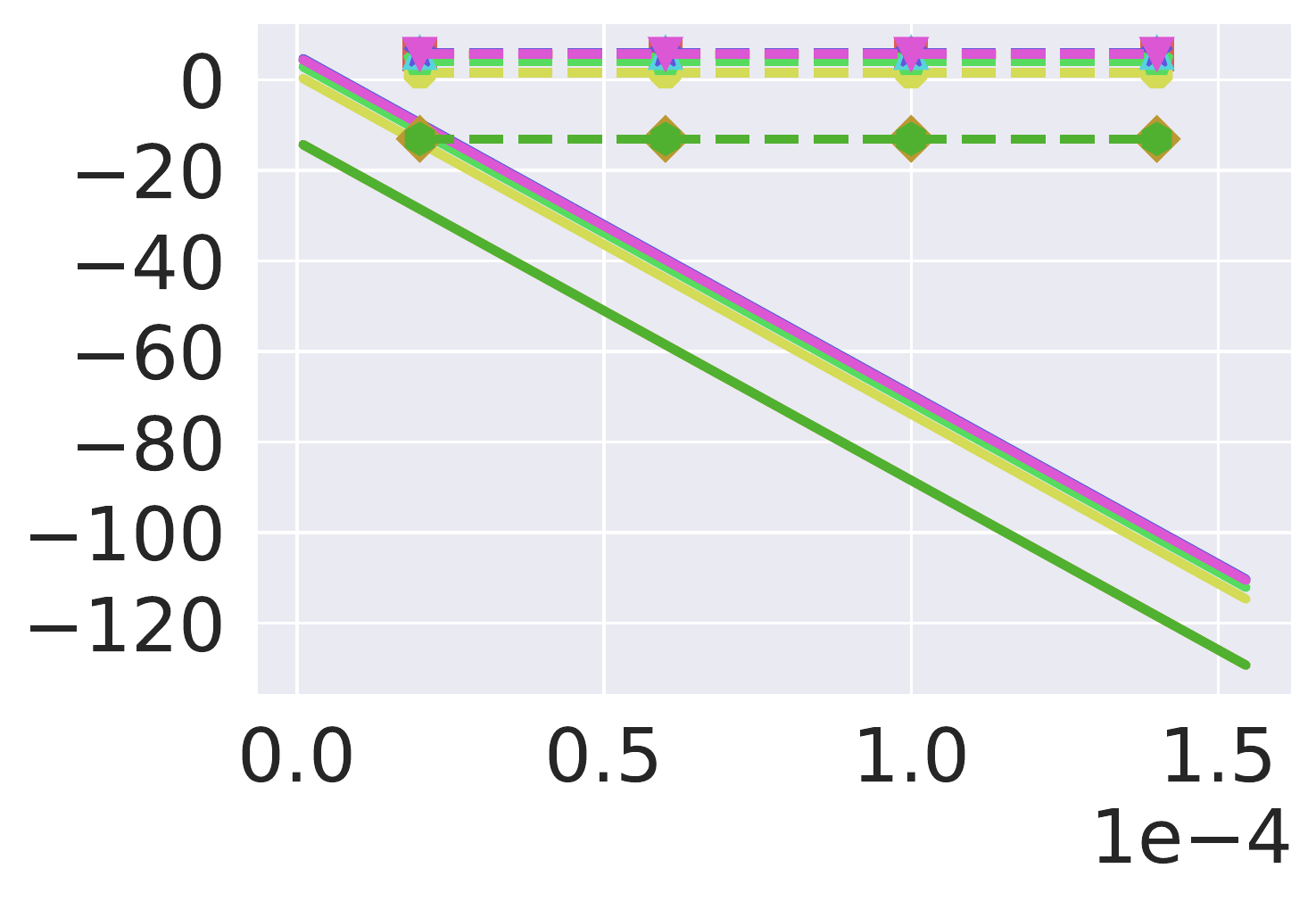}&
\includegraphics[height=\utilheightcp]{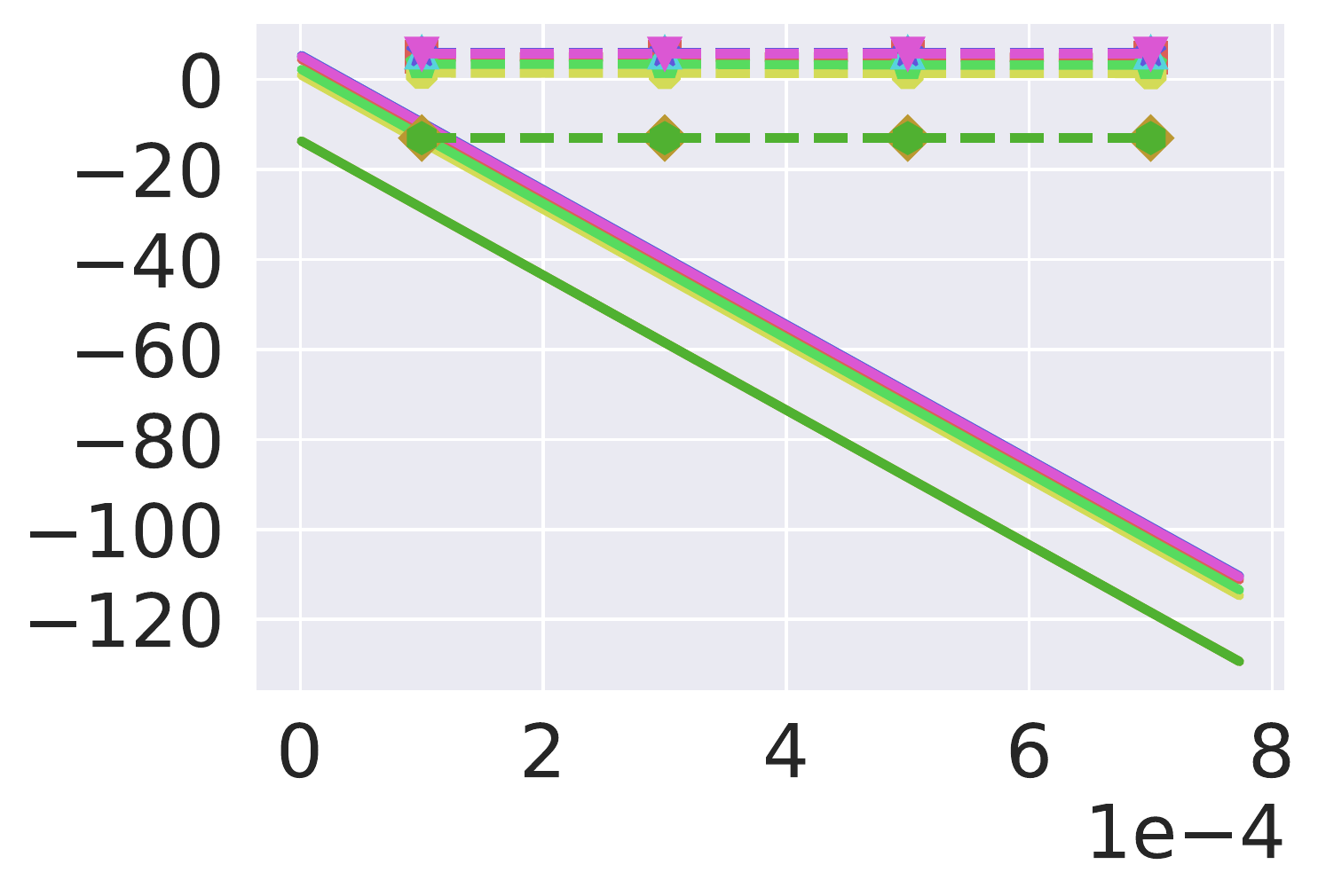}&
\includegraphics[height=\utilheightcp]{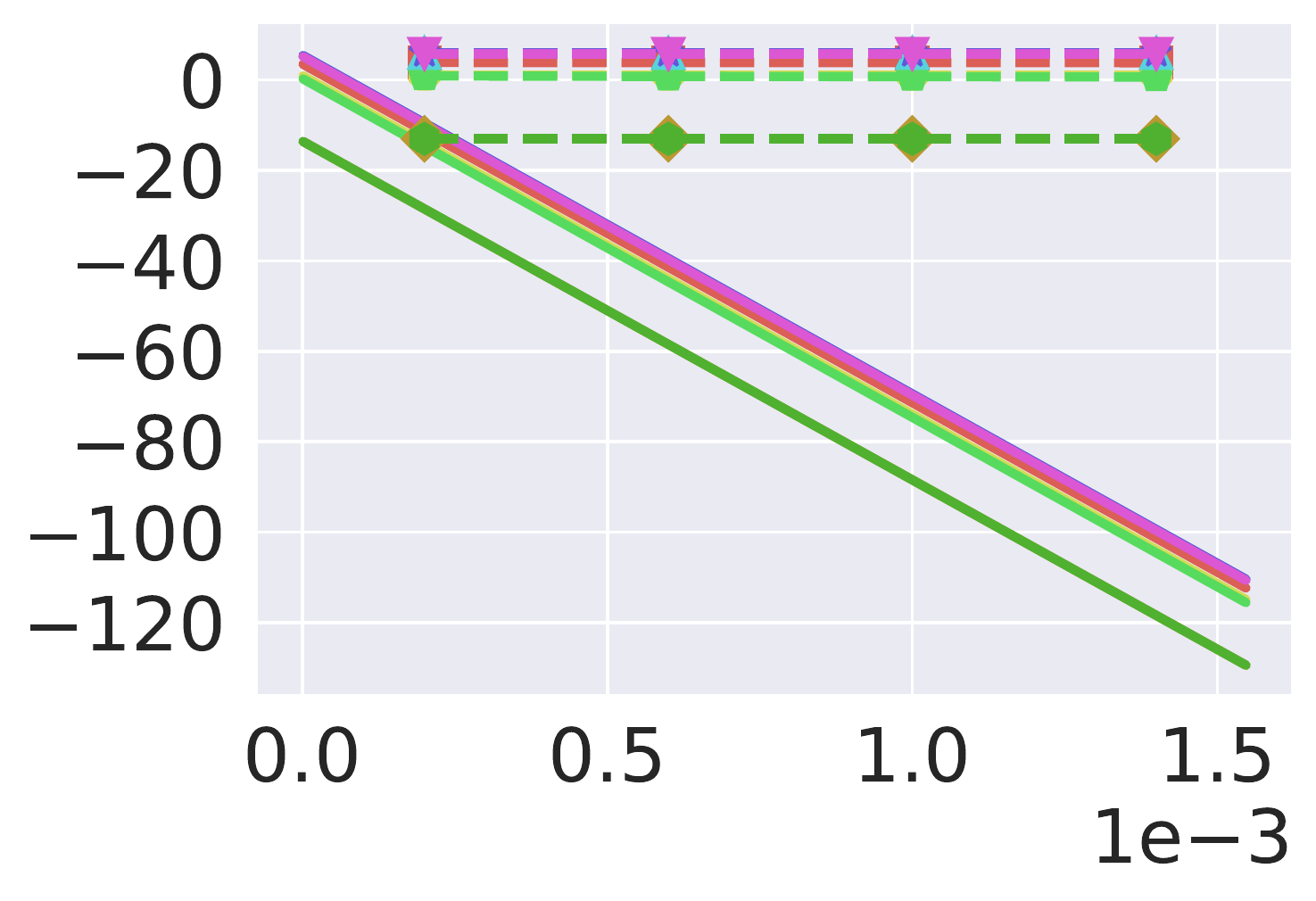}&
\includegraphics[height=\utilheightcp]{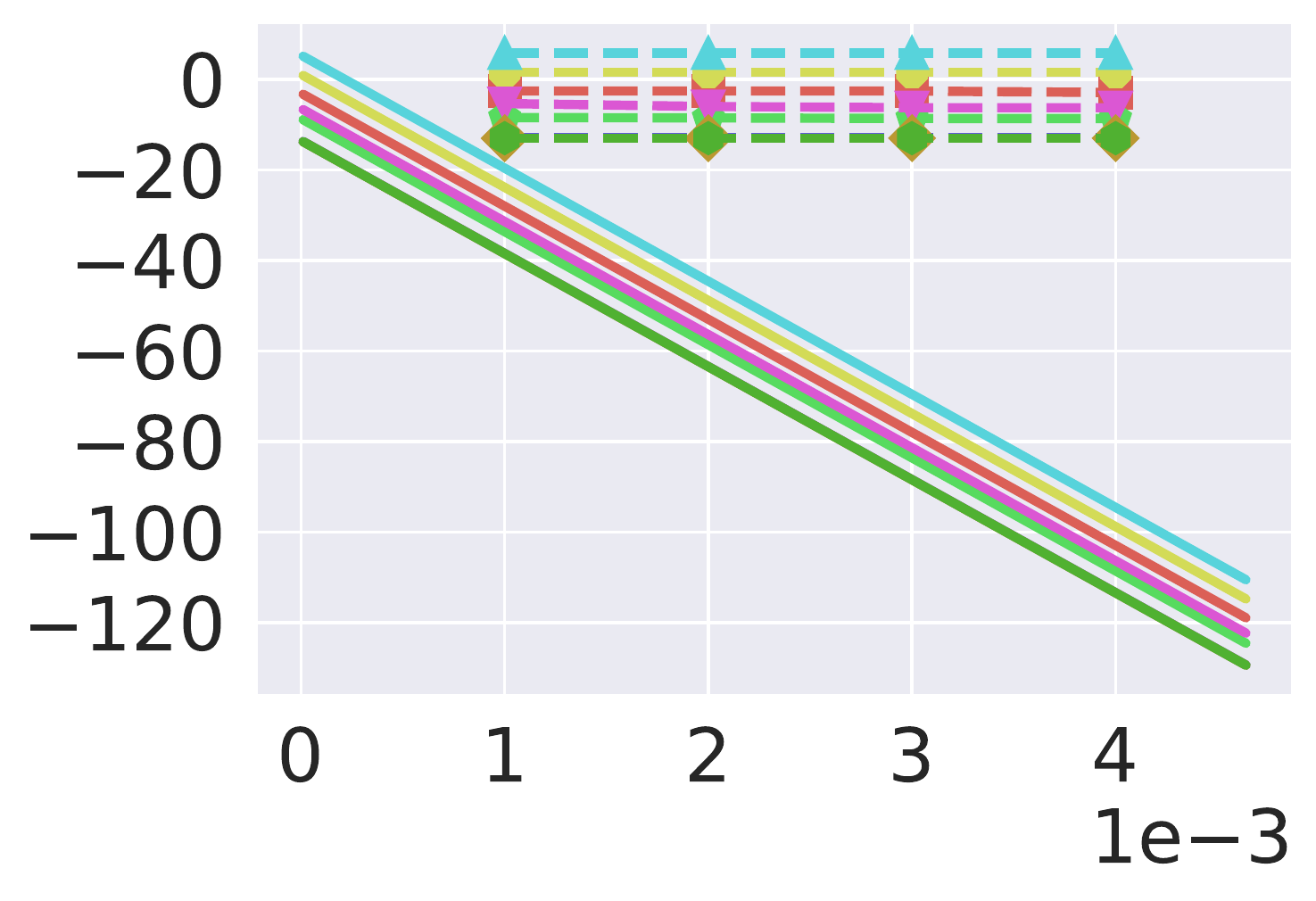}&
\includegraphics[height=\utilheightcp]{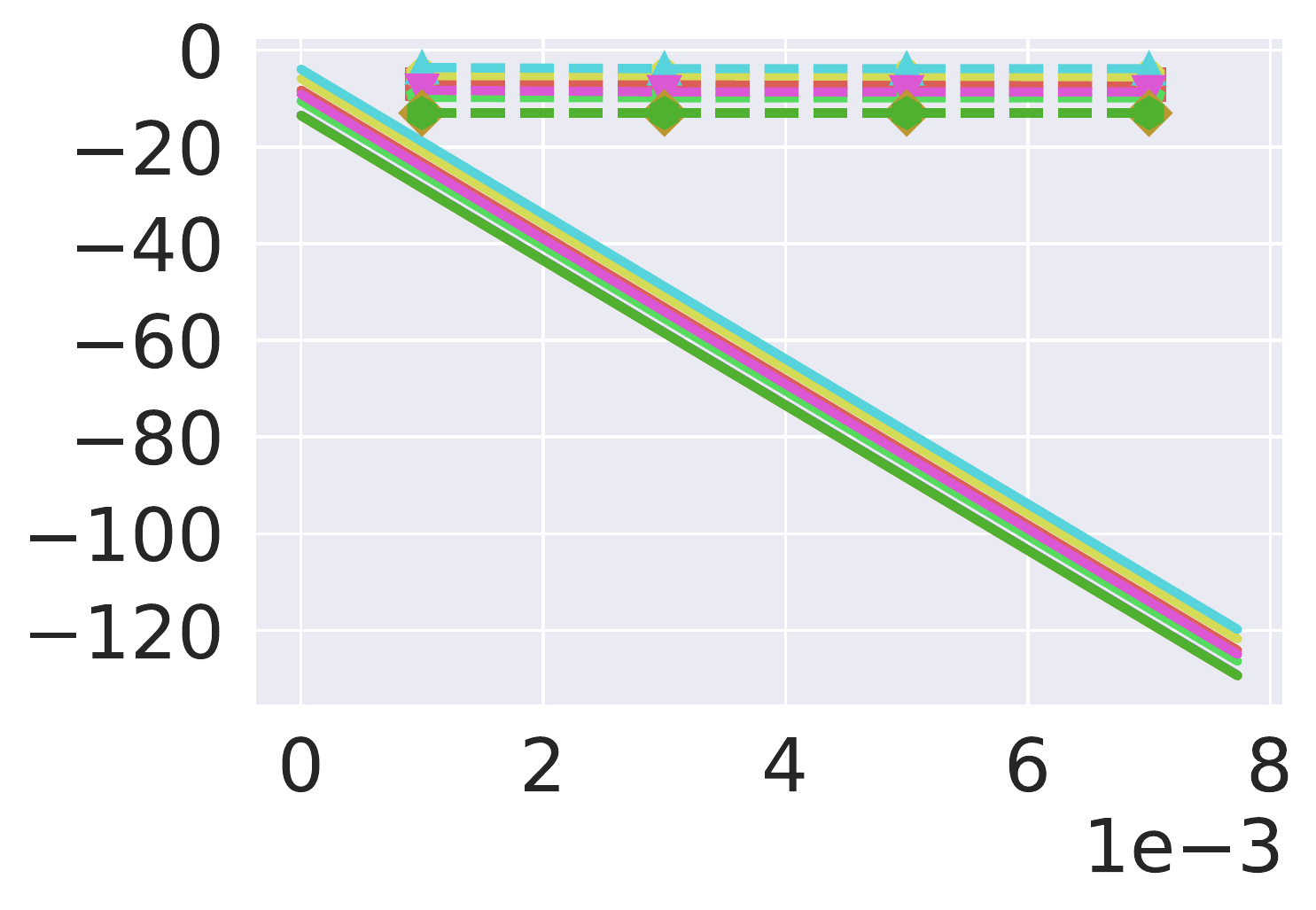}&
\includegraphics[height=\utilheightcp]{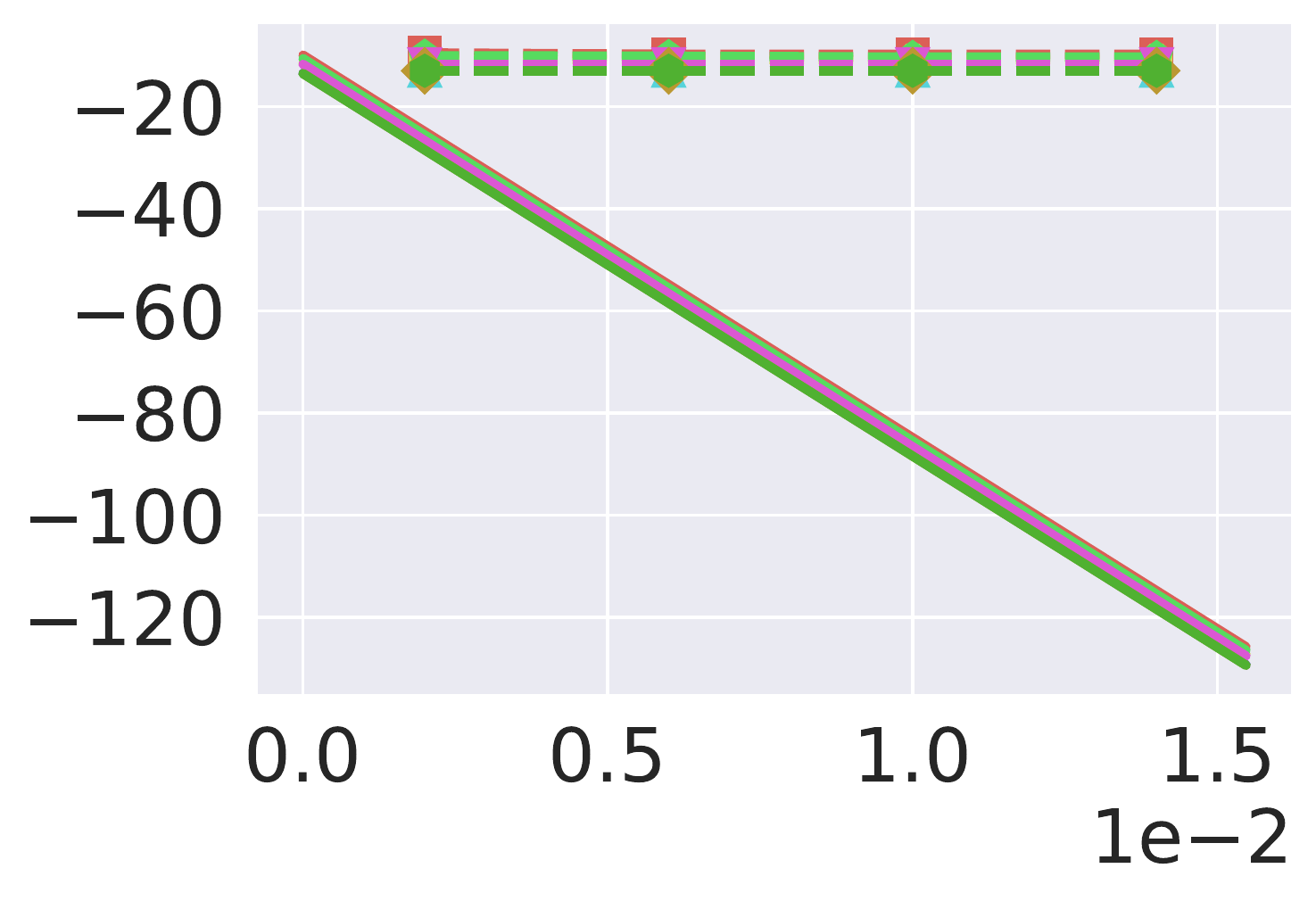}\\[-1.2ex]
\rownamed{\makecell{(Global) $\ujp$}}&
\includegraphics[height=\utilheightdp]{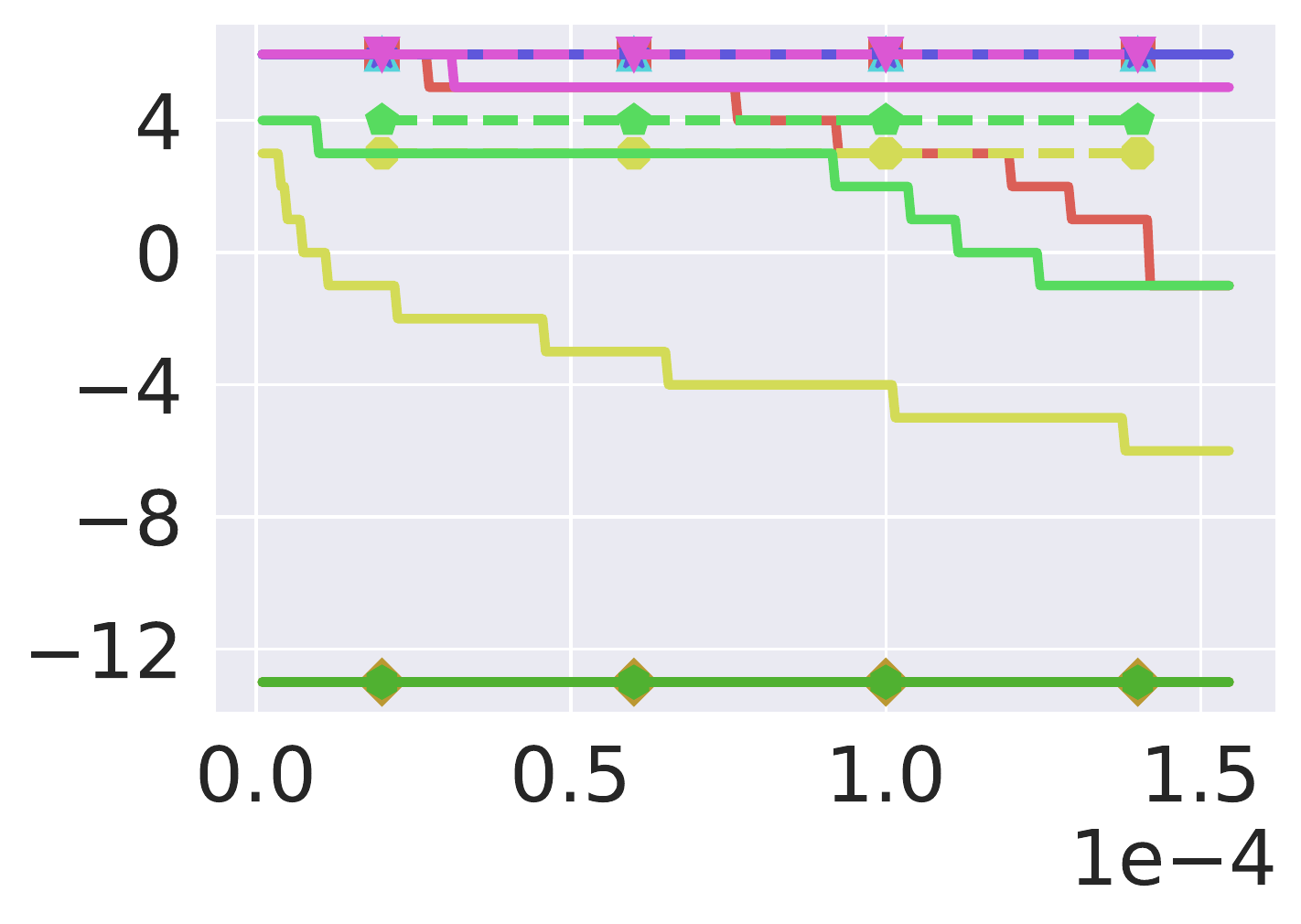}&
\includegraphics[height=\utilheightdp]{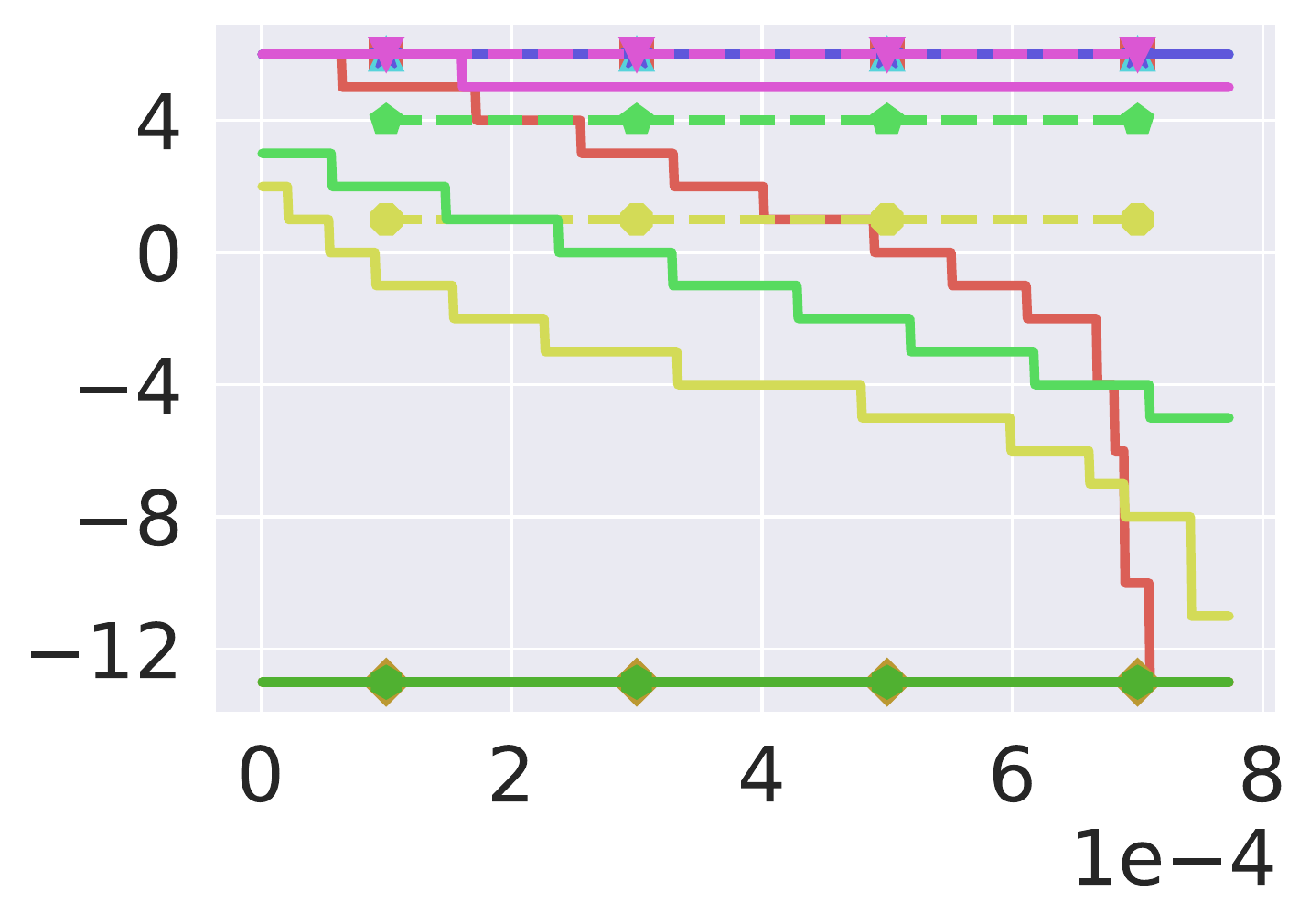}&
\includegraphics[height=\utilheightdp]{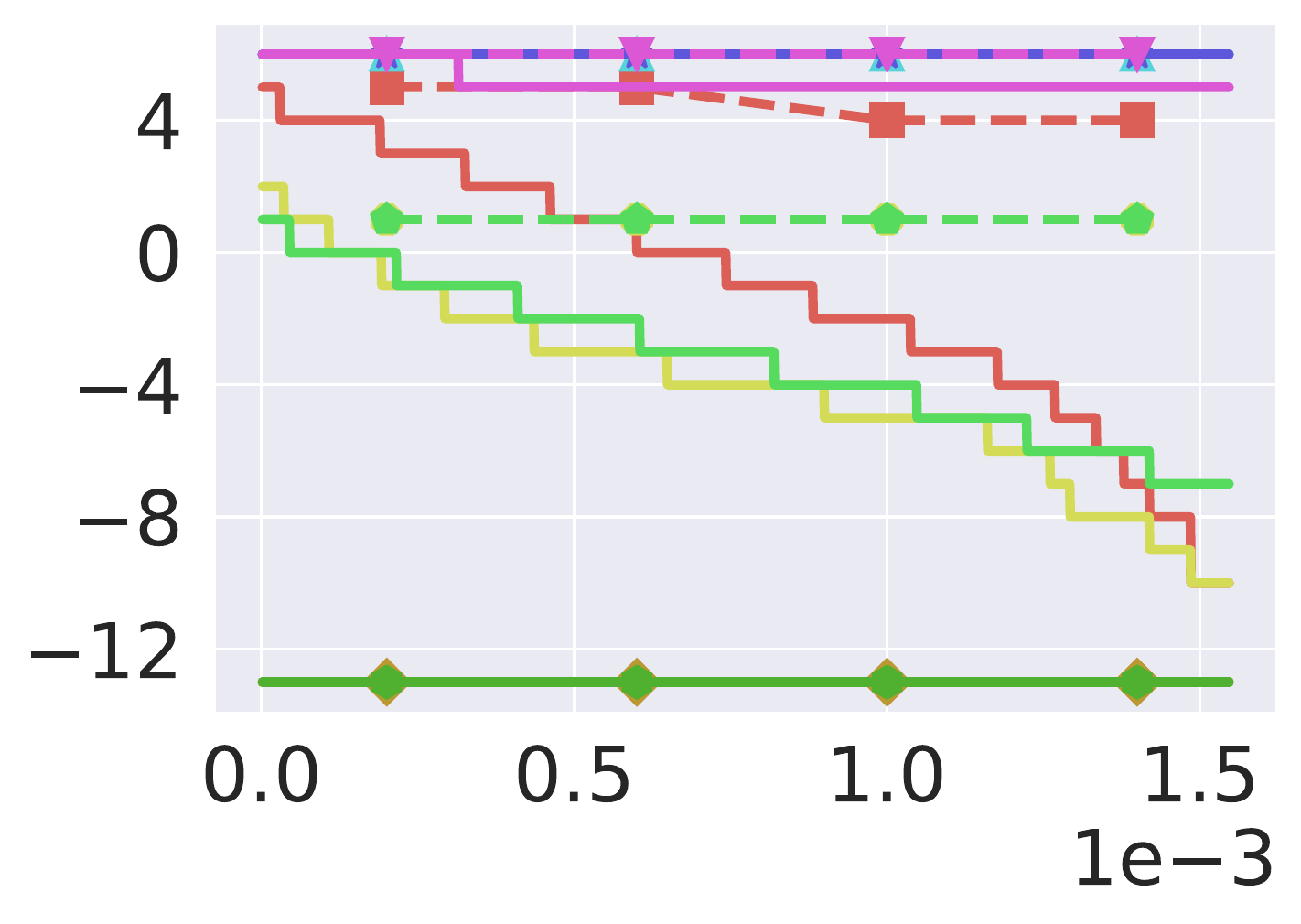}&
\includegraphics[height=\utilheightdp]{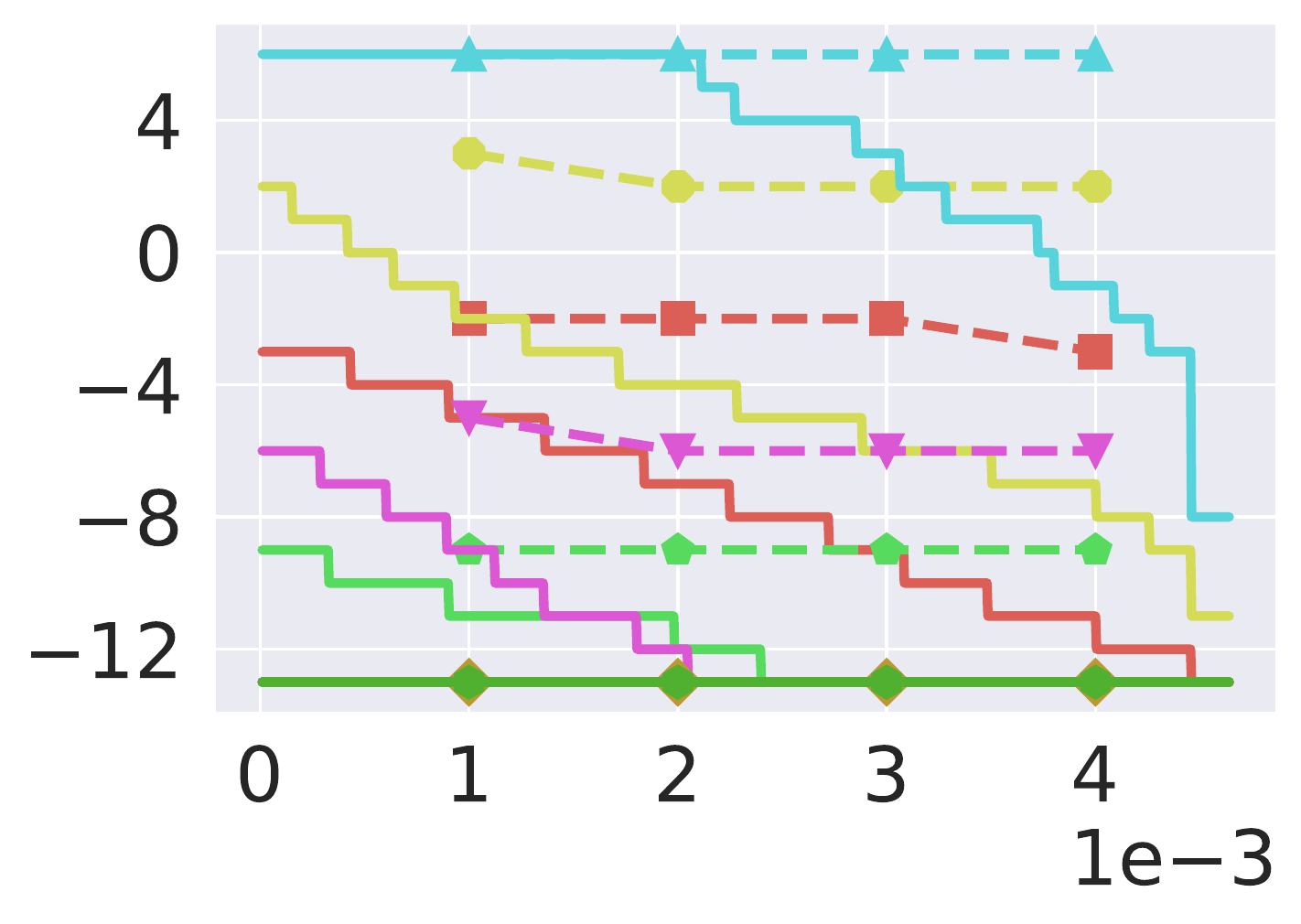}&
\includegraphics[height=\utilheightdp]{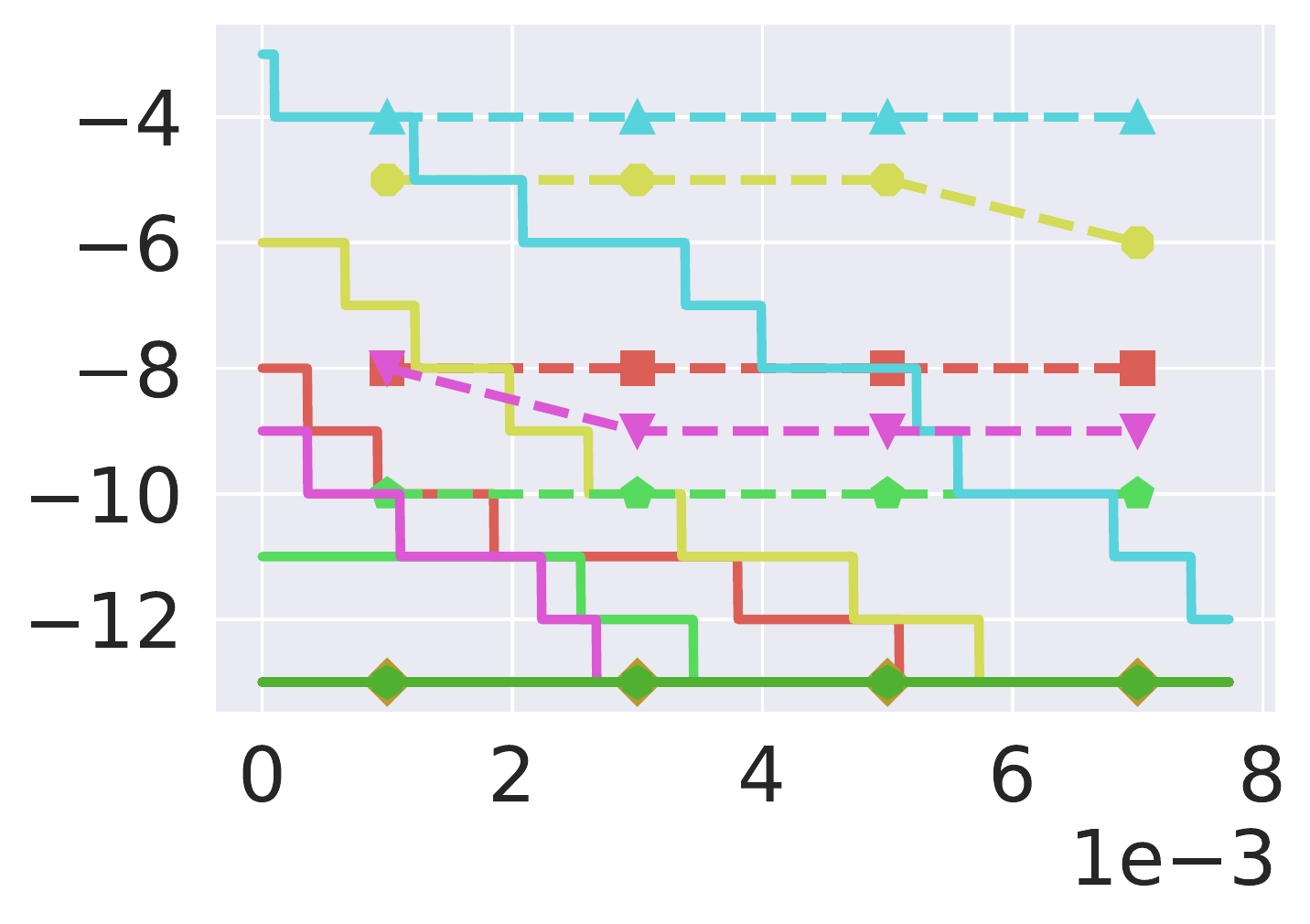}&
\includegraphics[height=\utilheightdp]{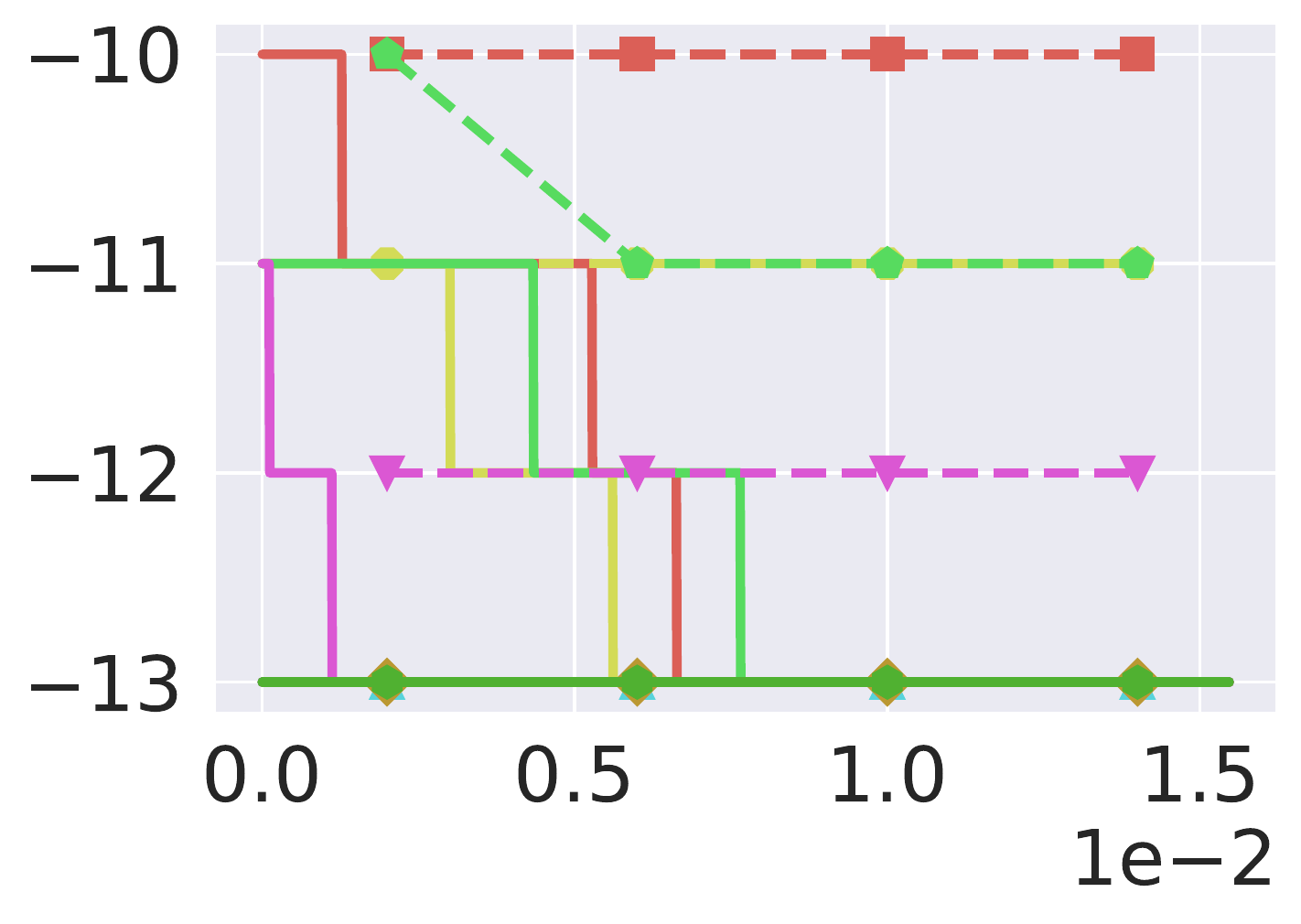}\\[-1.2ex]
\rownamed{\makecell{(Local) $\uj$}}&
\includegraphics[height=\utilheightaap]{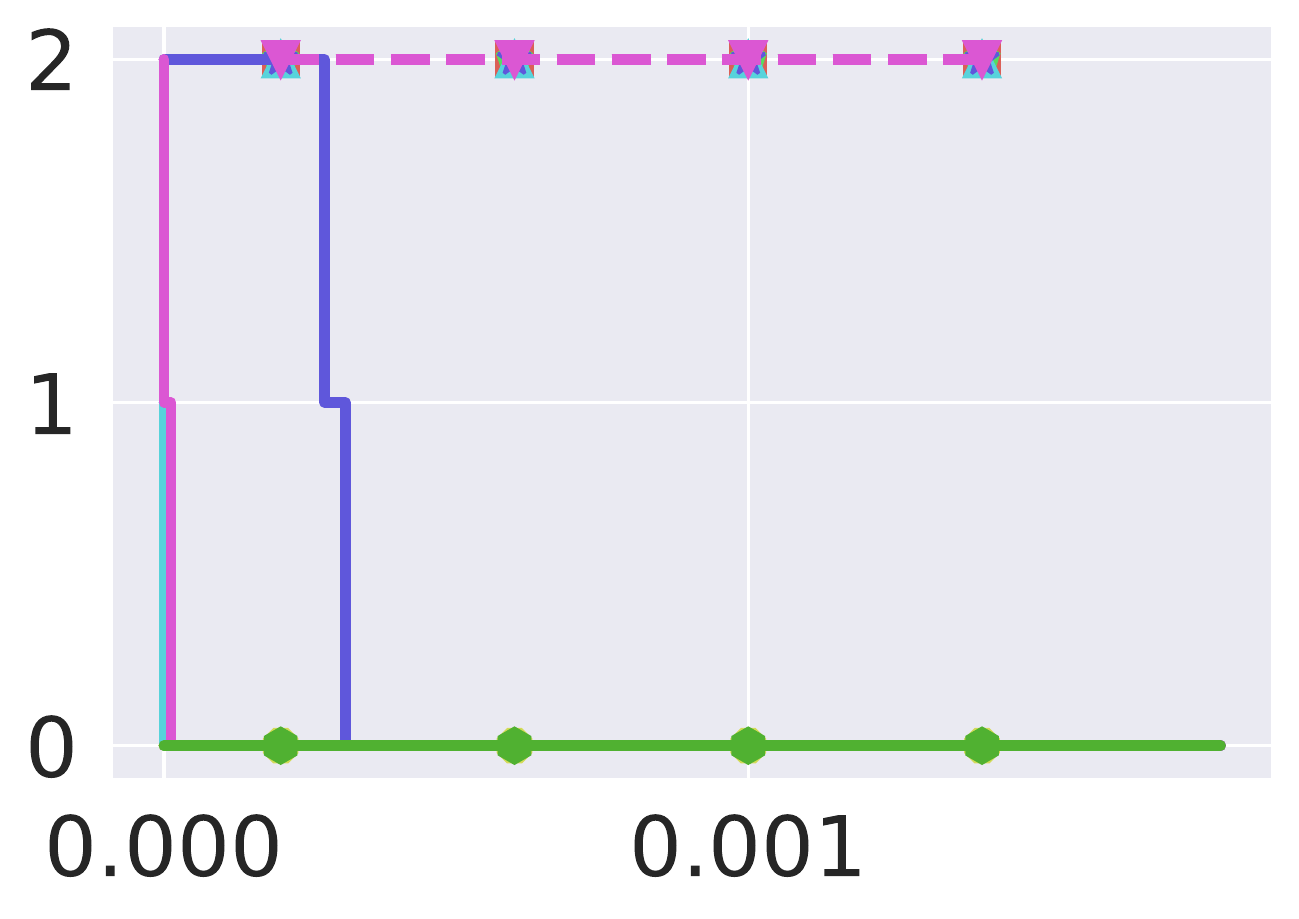}&
\includegraphics[height=\utilheightaap]{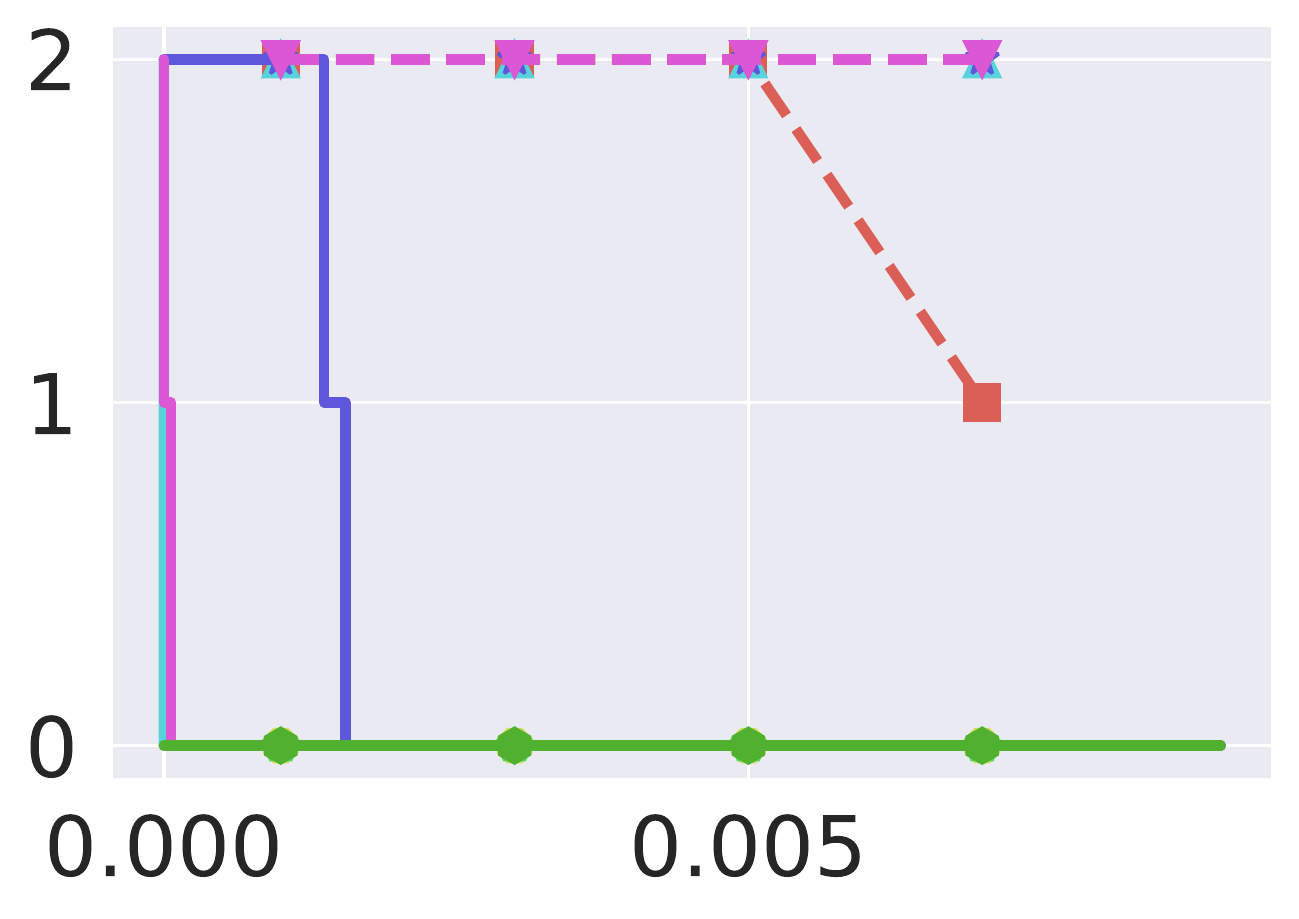}&
\includegraphics[height=\utilheightaap]{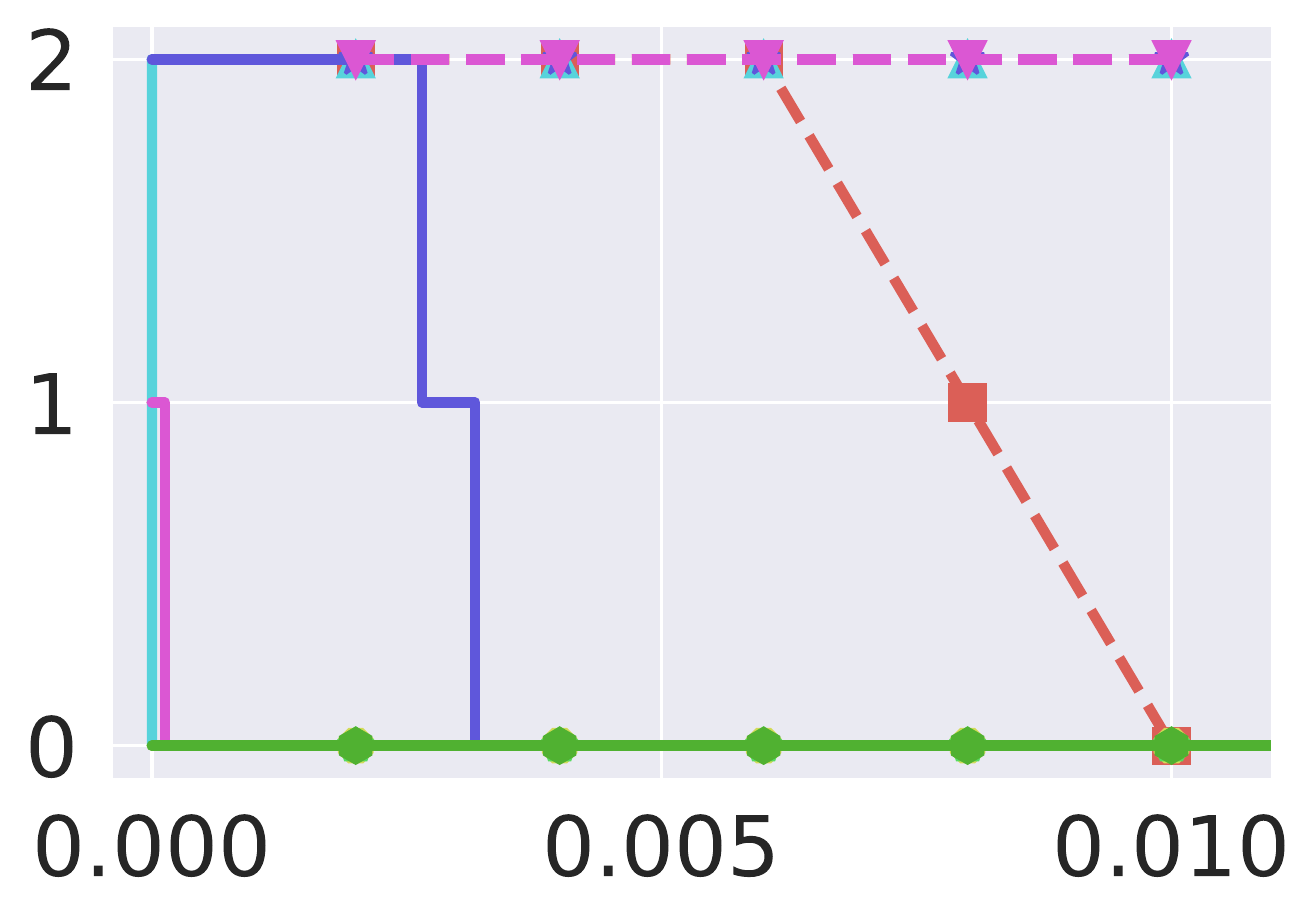}&
\includegraphics[height=\utilheightaap]{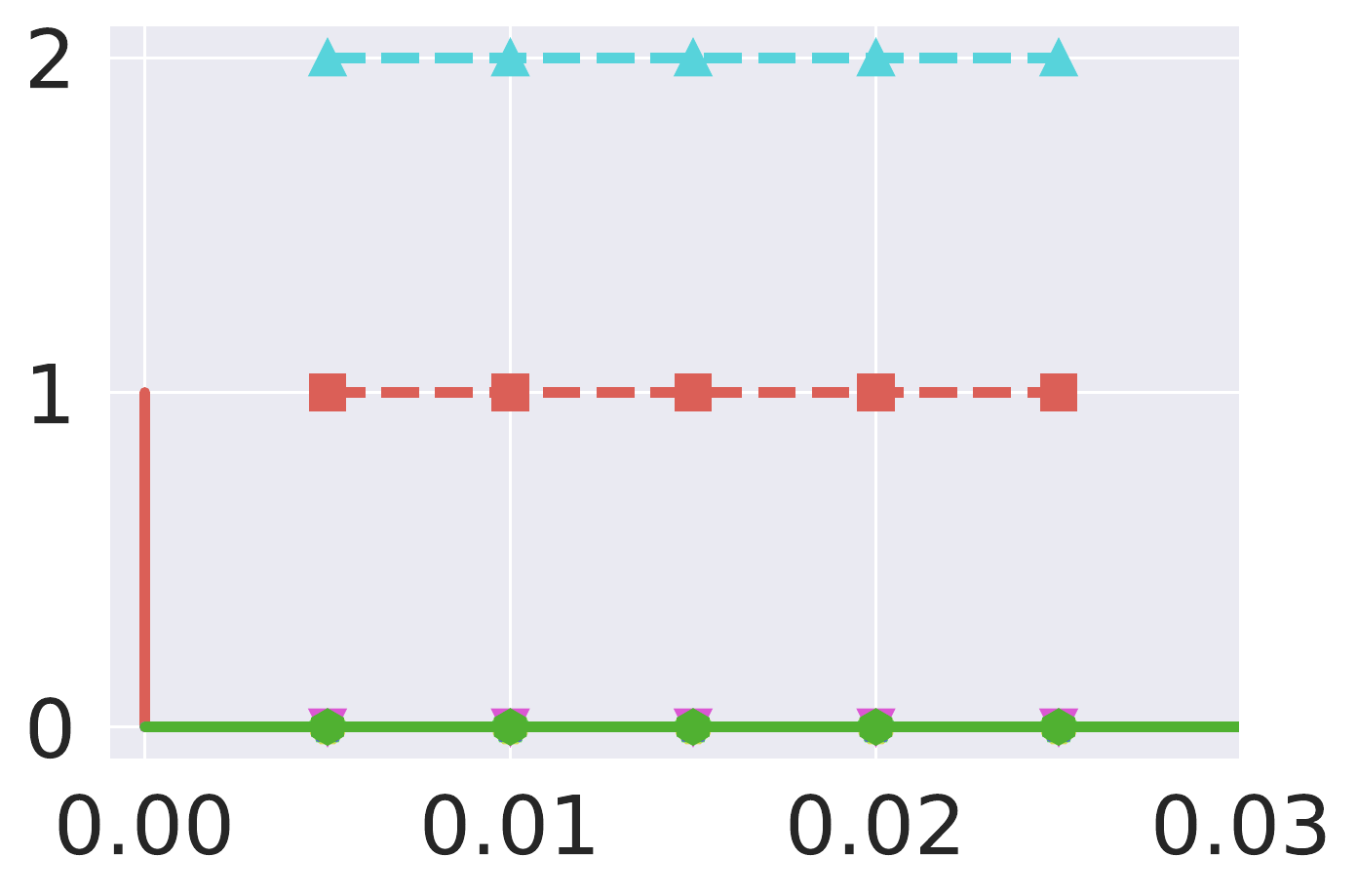}&
\includegraphics[height=\utilheightaap]{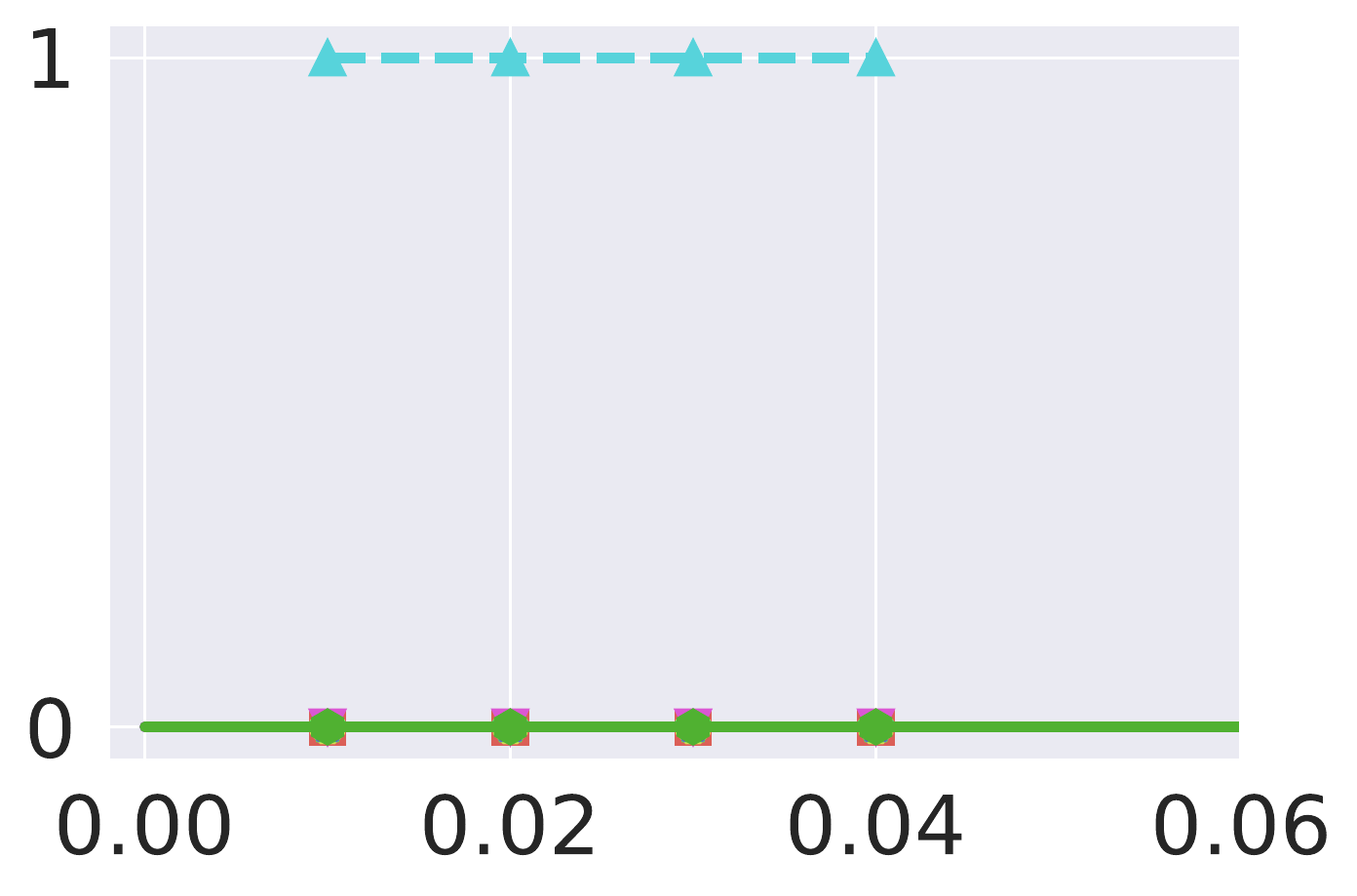}&
\includegraphics[height=\utilheightaap]{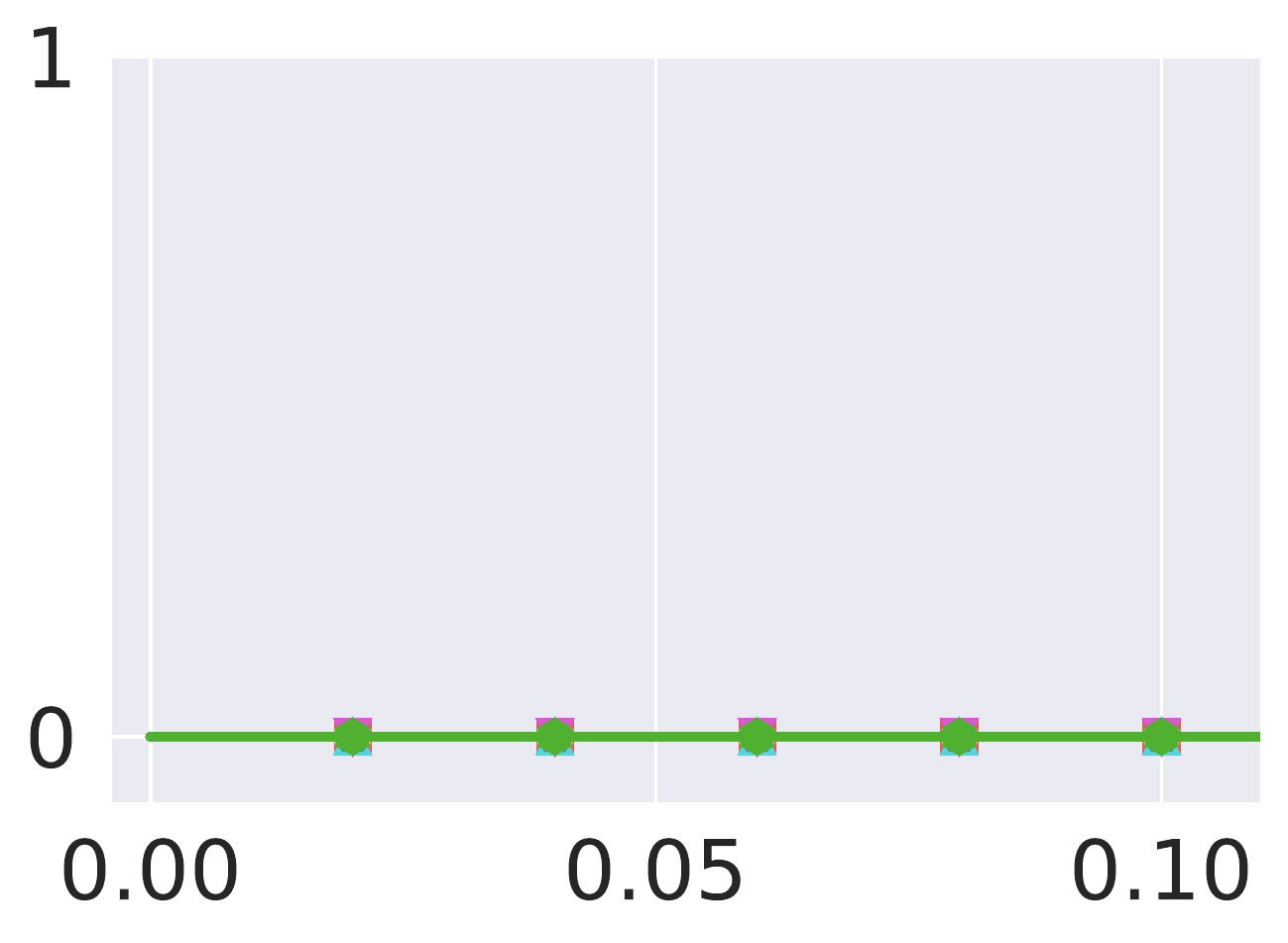}\\[-1.2ex]
        & \makecell{\tiny{attack $\eps$}}
        & \makecell{\tiny{attack $\eps$}}
        & \makecell{\tiny{attack $\eps$}}
        & \makecell{\tiny{attack $\eps$}}
        & \makecell{\tiny{attack $\eps$}}
        & \makecell{\tiny{attack $\eps$}}
\end{tabular}
}
\vspace{-3mm}
\caption{\small Pong}\label{tab:bound-pong}
\end{subtable}

}
\vspace{-3mm}
\caption{\small Robustness certification for cumulative reward, including \textit{expectation bound} $\uje$, \textit{percentile bound} $\ujp$ ($p=50\%$), and \textit{absolute lower bound} $\uj$. Each column corresponds to one smoothing variance. Solid lines represent the certified reward bounds, and dashed lines show the empirical performance under PGD.
}%
\label{fig:all-bounds}
\vspace{-3em}
\end{figure}
}

\textbf{Tightness of the certification $\uje$, $\ujp$, and $\uj$.}\quad
We compare the empirical cumulative rewards achieved under PGD attacks with our certified lower bounds.
First, the empirical results are consistently lower bounded by our certifications, validating the correctness of our bounds.
Regarding the tightness, the improved $\ujp$ is much tighter than the loose $\uje$, supported by discussions in~\Cref{sec:cert-glb}.
The tightness of $\uj$ can be reflected by the zero gap between the certification and the empirical result under a wide range of attack magnitude $\eps$.
Furthermore, the certification 
for SA-MDP (CVX,PGD) are quite tight for Freeway under large attack magnitudes, and RadialRL demonstrates its superiority on Freeway.
Additionally, different methods may achieve the same empirical results under attack, yet their certifications differ tremendously, indicating the importance of the robustness \textit{certification}.

\vspace{-1mm}
\textbf{Game Properties.}\quad
The certified robustness of any method on Freeway is much higher than that on Pong, indicating that Freeway is a more stable game than Pong, also shown  in~\citet{mnih2015human}.

\vspace{-2mm}
\section{Related Work}
\label{sec:related}
\vspace{-2mm}

We briefly review several RL methods that demonstrate empirical  robustness.
\citet{kos2017delving} and \citet{behzadan2017whatever} show that \textit{adversarial training} can increase the agent's resilience.
SA-DQN~\citep{zhang2021robust} leverages \textit{regularization} to
encourage the top-$1$ action to stay unchanged under perturbation.
Radial-RL~\citep{oikarinen2020robust} minimizes an adversarial loss function that incorporates the upper bound of the perturbed loss.
CARRL~\citep{everett2021certifiable} computes the  lower bounds of Q values under perturbation for action selection, but it is only suitable for low-dimensional environments.
Most of these works only provide empirical robustness, with~\citet{zhang2021robust} and~\citet{fischer2019online} additionally provide robustness certificates at state level.
To the best of our knowledge, this is the \emph{first} work providing robustness certification for the cumulative reward of RL methods.
Broader discussions and comparisons of related work are in~\Cref{append:related}.

\textbf{Conclusions.}\quad
To provide the robustness certification for RL methods, we propose a general framework \sysname, aiming to provide certification based on two criteria. Our evaluations show that certain empirically robust RL methods are certifiably robust for specific RL environments.

\subsubsection*{Acknowledgments}
    This work is partially supported by the NSF grant No.1910100, NSF CNS 20-46726 CAR, Alfred P. Sloan Fellowship, and Amazon Research Award.

\textbf{Ethics Statement.}\quad
In this paper, we prove the first framework for certifying the robustness of RL algorithms against evasion attacks. 
We aim to certify and therefore potentially improve the trustworthiness of machine learning models, and we do not expect any ethics issues raised by our work.

\textbf{Reproducibility Statement.}\quad
All theorem statements are substantiated with rigorous proofs in our Appendix. We have upload the source code as the supplementary material for reproducibility purpose.

\bibliography{iclr2022_conference}
\bibliographystyle{iclr2022_conference}

\clearpage

\appendix
\onecolumn



\appendix

The appendices are organized as follows:
\begin{itemize}
    \item
In \Cref{append:proofs}, we present the detailed proofs for lemmas and theorems in~\Cref{sec:cert-rad} and~\Cref{sec:cert-cum}, laying down the basis for the design of our certification strategies.
    \item
In \Cref{append:methods}, we present some additional details of our three certification strategies: \staters to certify the per-state action, and \glbrs and \adasearch to certify the cumulative reward. We include the detailed algorithm description and the complete pseudocode for each algorithm.
Our implementation is publicly available at \url{https://github.com/AI-secure/CROP}.
    \item
In \Cref{append:discuss-cert}, we provide a discussion on the \textit{advantages} and \textit{limitations} of different certification methods, and also present several promising direct \textit{extensions} for future work.
We also provide more \textit{detailed analysis} that help with the understanding of our algorithms.
    \item
In \Cref{append:exp}, we show the experimental details, including the game environment, details descriptions of the RL methods we evaluate, rationales for our evaluation settings and experimental designs, detailed evaluation setup for each of our certification algorithms in the main paper, as well as the setup for a new game CartPole.
    \item
\looseness=-1
In \Cref{append:results}, we present additional evaluation results and discussions for the environments and algorithms evaluated in the main paper, from the perspectives of different metrics and different ranges of parameters. We also provide the running time statistics and specifically illustrate the periodic patterns in the Pong game.
In addition, we present the evaluation results on a standard control \env CartPole and another autonomous driving \env Highway.
    \item
In \Cref{append:related}, we provide a broader discussion of the related works, spanning the evasion attacks in RL, Robust RL, as well as robustness certification for RL. 
\end{itemize}

\section{Proofs}
\label{append:proofs}

\subsection{Proof of \texorpdfstring{\Cref{lem:lipschitz}}{Lemma 1}}
\label{append:proof-lip-local}

We recall~\Cref{lem:lipschitz}:
\lipcontq*

    \begin{proof}
        To prove~\Cref{lem:lipschitz}, we leverage the technique in the proof for Lemma 1 of~\citet{salman2019provably} in their Appendix A.
        
        For each action $a\in\cal A$, our smoothed value function is 
    \vspace{8mm}
    \begin{small}
    \begin{align*}
        \wtilde Q^\pi(s,a)
        := \underset{\Delta\sim \gN(0,\sigma^2I_N)}{\E}\qpi(s+\Delta,a)
        =\frac{1}{(2 \pi)^{\nicefrac{N}{2}}\sigma^N} \int_{\mathbb{R}^{n}} \qpi(t,a) \exp \left(-\frac{1}{2\sigma^2}\|s-t\|^{2}\right) \,d t.
    \end{align*}
    \end{small}
    \vspace{5mm}
    
    Taking the gradient w.r.t. $s$, we obtain
    \vspace{8mm}
    \begin{small}
    \begin{align*}
        \nabla_s \wqpi(s,a) 
        = \frac{1}{(2 \pi)^{\nicefrac{N}{2}}\sigma^N} \int_{\mathbb{R}^{n}} \qpi(t,a) \frac{1}{\sigma^2}(s-t)\exp \left(-\frac{1}{2\sigma^2}\|s-t\|^{2}\right) \,d t.
    \end{align*}
    \end{small}
    \vspace{5mm}
    
    For any unit direction $u$, we have 
    \vspace{8mm}
    \begin{small}
    \begin{align*}
        u \cdot \nabla_s \wqpi(s,a) 
        & \leq \frac{1}{(2 \pi)^{\nicefrac{N}{2}}\sigma^N} \int_{\mathbb{R}^{n}}  \frac{\vmax-\vmin}{\sigma^2}|u(s-t)|\exp \left(-\frac{1}{2\sigma^2}\|s-t\|^{2}\right) \,d t\\
        & = \frac{\vmax-\vmin}{\sigma^2} \cdot  \int_{\mathbb{R}^{n}}  \frac{1}{(2 \pi)^{\nicefrac{1}{2}}\sigma} |s_i-t| \exp \left(-\frac{1}{2\sigma^2}|s_i-t|^{2}\right)\,dt \\
        &\quad \cdot \prod_{j\neq i}  \int_{\mathbb{R}^{n}} \frac{1}{(2 \pi)^{\nicefrac{1}{2}}\sigma} \exp \left(-\frac{1}{2\sigma^2}|s_j-t|^{2}\right)\,dt \\
        &= \frac{ V_{\max} - V_{\min} }{\sigma}\sqrt{\frac{2}{\pi}}
    \end{align*}
    \end{small}
    \vspace{5mm}
    
    Thus, $\wqpi$ is $L$-\lip continuous with $L=\frac{ V_{\max} - V_{\min} }{\sigma}\sqrt{\nicefrac{2}{\pi}}$ w.r.t. the state input.
    \end{proof}

\subsection{Proof of \texorpdfstring{\Cref{thm:radius}}{Theorem 1}}
\label{append:proof-radius}

We recall~\Cref{thm:radius}:

\begin{theoremappend}
\looseness=-1 Let $Q^\pi: \cal S \times \cal A\rightarrow [\vmin,\vmax]$ be a trained value network, 
$\wtilde Q^\pi$ be the smoothed function with (\ref{eq:smooth-q}).
At time step $t$ with state $s_t$, we can compute the lower bound $r_t$ of maximum perturbation magnitude $\bar\eps(s_t)$~(\ie, $r_t \le \bar\eps(s_t)$, $\bar\eps$ defined in~\Cref{def:per-state}) for locally smoothed policy $\tilde \pi$:
\vspace{7mm}
\begin{small}
\begin{align}
        r_t = \frac{\sigma}{2}\left(
    \Phi^{-1}\left(\frac{\wtilde Q^\pi(s_t,a_1)-V_{\min}}{V_{\max}-V_{\min}}\right)-
    \Phi^{-1}\left(\frac{\wtilde Q^\pi(s_t,a_2)-V_{\min}}{V_{\max}-V_{\min}}\right)
    \right), \label{eq:R-t}
\end{align}
\end{small}
\vspace{5mm}

where $\Phi^{-1}$ is the inverse CDF function, $a_1$ is the action with the highest $\wqpi$ value at state $s_t$, and $a_2$ is the runner-up  action. We name the lower bound $r_t$ as certified radius for the state $s_t$.
\end{theoremappend}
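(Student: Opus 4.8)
The plan is to follow the randomized-smoothing certification recipe of \citet{cohen2019certified,salman2019provably}, adapted from a binary classifier to a comparison between the top and runner-up $Q$-values. Fix the time step $t$ and abbreviate $s := s_t$. For each action $a$, introduce the normalized base function $h_a := \frac{Q^\pi(\cdot,a)-V_{\min}}{V_{\max}-V_{\min}}: \cal S \to [0,1]$ and its Gaussian convolution $\xi_a(s) := \frac{\wtilde Q^\pi(s,a)-V_{\min}}{V_{\max}-V_{\min}} = \underset{\Delta\sim\gN(0,\sigma^2 I_N)}{\E} h_a(s+\Delta) \in [0,1]$. Since the affine normalization $x\mapsto \frac{x-V_{\min}}{V_{\max}-V_{\min}}$ and $\Phi^{-1}$ are both strictly increasing, for every $s$ and every pair $a,a'$ we have $\wtilde Q^\pi(s,a)\ge \wtilde Q^\pi(s,a') \iff \Phi^{-1}(\xi_a(s)) \ge \Phi^{-1}(\xi_{a'}(s))$. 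Hence it suffices to prove: for all $\delta$ with $\norm{\delta}_2\le r_t$, $\Phi^{-1}(\xi_{a_1}(s+\delta)) \ge \Phi^{-1}(\xi_a(s+\delta))$ for every $a\neq a_1$, where $a_1 = \tilde\pi(s)$.

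The core ingredient is the stronger-than-Lipschitz fact that $s\mapsto \sigma\,\Phi^{-1}(\xi_a(s))$ is $1$-Lipschitz in the $\ell_2$ norm (equivalently, $\Phi^{-1}\circ\xi_a$ is $\tfrac1\sigma$-Lipschitz). Note this is \emph{not} a consequence of \Cref{lem:lipschitz}, which only bounds the Lipschitz constant of $\wtilde Q^\pi$ itself; it is precisely what makes the $\Phi^{-1}$-based radius in (\ref{eq:R-t}) valid. I would establish it by a Neyman--Pearson / Gaussian-isoperimetric argument: among all measurable $h:\cal S\to[0,1]$ with $\underset{\Delta}{\E}h(s+\Delta)=p$, the convolution value at $s+tu$ (for a unit direction $u$) is extremized by a half-space indicator $h=\mathds{1}\{x: u^\top(x-s)\le \sigma\Phi^{-1}(p)\}$, and for that $h$ one computes exactly $\underset{\Delta}{\E}h(s+tu+\Delta) = \Phi\!\big(\Phi^{-1}(p)-t/\sigma\big)$. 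Consequently $\big|\Phi^{-1}(\xi_a(s+\delta))-\Phi^{-1}(\xi_a(s))\big|\le \norm{\delta}_2/\sigma$ for all $\delta$. (Alternatively, this can be cited directly from Lemma~2 of \citet{salman2019provably}.)

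Granting the Lipschitz bound, the certification is a short computation. Let $a_2$ be the runner-up action at $s$, so by monotonicity $\Phi^{-1}(\xi_a(s))\le \Phi^{-1}(\xi_{a_2}(s))$ for every $a\neq a_1$ — this is the step that reduces ``all competing actions'' to ``the runner-up''. For any $\delta$ with $\norm{\delta}_2\le r_t$ and any $a\neq a_1$,
\begin{align*}
\sigma\Phi^{-1}(\xi_{a_1}(s+\delta)) &\ge \sigma\Phi^{-1}(\xi_{a_1}(s)) - \norm{\delta}_2, \\
\sigma\Phi^{-1}(\xi_{a}(s+\delta)) &\le \sigma\Phi^{-1}(\xi_{a}(s)) + \norm{\delta}_2 \le \sigma\Phi^{-1}(\xi_{a_2}(s)) + \norm{\delta}_2,
\end{align*}
so the difference of the two left-hand sides is at least $\sigma\big(\Phi^{-1}(\xi_{a_1}(s))-\Phi^{-1}(\xi_{a_2}(s))\big) - 2\norm{\delta}_2 = 2r_t - 2\norm{\delta}_2 \ge 0$. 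Therefore $\wtilde Q^\pi(s+\delta,a_1)\ge \wtilde Q^\pi(s+\delta,a)$ for all $a$, i.e. $\tilde\pi(s+\delta)=\tilde\pi(s)$, which is exactly the statement $r_t\le \bar\eps(s)$ in the sense of \Cref{def:per-state}. (The inequality is strict for $\norm{\delta}_2<r_t$; the boundary case $\norm{\delta}_2=r_t$ is handled by a fixed $\argmax$ tie-breaking rule, or by passing to the limit.)

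\emph{Main obstacle.} Everything after the $\tfrac1\sigma$-Lipschitz bound on $\Phi^{-1}\circ\xi_a$ is monotonicity bookkeeping plus the reduction to the runner-up. The real content — and the reason the radius takes the precise form (\ref{eq:R-t}) rather than the weaker $\big(\wtilde Q^\pi(s,a_1)-\wtilde Q^\pi(s,a_2)\big)/(2L)$ one would get from \Cref{lem:lipschitz} alone — is the Gaussian-convolution extremal (half-space) argument, so that is where care is needed, whether proved directly or imported from \citet{salman2019provably}.
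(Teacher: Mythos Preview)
Your proposal is correct and essentially identical to the paper's proof: both hinge on the helper fact that $s\mapsto \sigma\,\Phi^{-1}\!\big(\tfrac{\wtilde Q^\pi(s,a)-V_{\min}}{V_{\max}-V_{\min}}\big)$ is $1$-Lipschitz (the paper states this as a separate lemma, likewise citing \citet{salman2019provably}), then apply it once to $a_1$ and once to the runner-up $a_2$ and combine. The only cosmetic difference is that the paper argues by contrapositive (assume the action flips and deduce $\norm{\delta_t}_2\ge r_t$) whereas you argue directly (assume $\norm{\delta_t}_2\le r_t$ and deduce the action is unchanged).
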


We first present a lemma that can help in the proof of~\Cref{thm:radius}.
\begin{lemma}
\label{lem:helper}
    Let $\Phi$ be the CDF of a standard normal distribution,
    the mapping
    $\eta_a(s):=\sigma \cdot \Phi^{-1}\left(\frac{\wtilde Q^\pi(s,a)-V_{\min}}{V_{\max}-V_{\min}}\right)$ is $1$-\lip continuous.
\end{lemma}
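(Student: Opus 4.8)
The plan is to show that the map $\eta_a(s) = \sigma\cdot\Phi^{-1}\!\left(\frac{\wtilde Q^\pi(s,a)-V_{\min}}{V_{\max}-V_{\min}}\right)$ is a composition of an affine normalization, the scalar function $\Phi^{-1}$, and the scaled smoothed value function, and to control its gradient by combining Lipschitzness of $\wtilde Q^\pi$ from~\Cref{lem:lipschitz} with a pointwise bound on $(\Phi^{-1})'$. Write $g(s) := \frac{\wtilde Q^\pi(s,a)-V_{\min}}{V_{\max}-V_{\min}} \in [0,1]$, so that $\eta_a(s) = \sigma\,\Phi^{-1}(g(s))$. By the chain rule, $\nabla_s \eta_a(s) = \sigma\,(\Phi^{-1})'(g(s))\,\nabla_s g(s)$, and since $\wtilde Q^\pi$ is $\frac{V_{\max}-V_{\min}}{\sigma}\sqrt{2/\pi}$-Lipschitz by~\Cref{lem:lipschitz}, we have $\|\nabla_s g(s)\| \le \frac{1}{V_{\max}-V_{\min}}\cdot\frac{V_{\max}-V_{\min}}{\sigma}\sqrt{2/\pi} = \frac{1}{\sigma}\sqrt{2/\pi}$.

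It remains to bound $(\Phi^{-1})'(g(s))$. First I would recall that if $p = \Phi(x)$ then $(\Phi^{-1})'(p) = 1/\varphi(\Phi^{-1}(p)) = 1/\varphi(x)$, where $\varphi(x) = \frac{1}{\sqrt{2\pi}}e^{-x^2/2}$ is the standard normal density. Here is the key quantitative point: $g(s)$ is not an arbitrary number in $[0,1]$ but is itself the expectation of a $[0,1]$-valued random variable under Gaussian smoothing, namely $g(s) = \Exp_{\Delta\sim\gN(0,\sigma^2 I)}\!\left[\frac{Q^\pi(s+\Delta,a)-V_{\min}}{V_{\max}-V_{\min}}\right]$. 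Following the argument of~\citet{salman2019provably} (their Lemma~1 / the "Lipschitz constant of $\Phi^{-1}\circ(\text{smoothed function})$" argument), one shows that for a Gaussian-smoothed function $h$ taking values in $[0,1]$, the composition $\sigma\,\Phi^{-1}(h(\cdot))$ has gradient norm at most $1$ everywhere — the factor $\sqrt{2/\pi}$ from the crude Lipschitz bound on $h$ gets exactly cancelled by a matching factor when one works directly with the Gaussian structure rather than bounding $\|\nabla h\|$ and $(\Phi^{-1})'$ separately. Concretely, writing $h = \Phi\circ u$ for the "inverse-smoothed" representation $u$, one proves $\|\nabla u\|\le 1/\sigma$ by the same integration-by-parts/Gaussian-tail estimate used in~\citet{salman2019provably}, and then $\eta_a = \sigma u$ is $1$-Lipschitz.

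I would therefore structure the proof in two layers: state that it suffices to bound $\|\nabla_s\eta_a(s)\|\le 1$ at every $s$ where $g(s)\in(0,1)$ (the boundary cases $g(s)\in\{0,1\}$ being handled by continuity/limiting arguments or ruled out by assuming $Q^\pi$ is non-degenerate), and then invoke the Gaussian-smoothing gradient bound. The main obstacle — and the step I would spend the most care on — is making the cancellation rigorous: the naive composition bound gives $\|\nabla\eta_a\| \le \sigma\cdot(\Phi^{-1})'(g(s))\cdot\frac{1}{\sigma}\sqrt{2/\pi} = \sqrt{2/\pi}\,/\varphi(\Phi^{-1}(g(s)))$, which is \emph{not} bounded by $1$ when $g(s)$ is near $0$ or $1$, so one genuinely needs the finer estimate that exploits $h$ being a Gaussian expectation of a bounded function (this is precisely the content of Lemma~1 in~\citet{salman2019provably}, and I would cite and adapt it rather than reprove it from scratch). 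Everything else — the chain rule, the affine rescaling, reducing "$1$-Lipschitz" to "gradient norm $\le 1$" via the mean value inequality along segments — is routine.
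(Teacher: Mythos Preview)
Your proposal is correct and lands on essentially the same approach as the paper: the paper's own proof is simply a pointer to Lemma~2 of \citet{salman2019provably} (you wrote Lemma~1, but your description of the content---the $1$-Lipschitzness of $\sigma\,\Phi^{-1}$ composed with a Gaussian-smoothed $[0,1]$-valued function---is exactly their Lemma~2), noting that the same technique as in the proof of \Cref{lem:lipschitz} applies. Your explicit observation that the naive chain-rule bound via \Cref{lem:lipschitz} fails near $g(s)\in\{0,1\}$, so that one genuinely needs the finer Gaussian integration-by-parts estimate, is a useful clarification that the paper omits.
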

The lemma can be proved following the same technique as the proof for~\Cref{lem:lipschitz} in~\Cref{append:proof-lip-local}. The detailed proof can be referred to in the proof for Lemma 2 of~\citet{salman2019provably} in their Appendix A.
We next show how to leverage~\Cref{lem:helper} to prove~\Cref{thm:radius}.

    \begin{proof}[Proof for~\Cref{thm:radius}]
        Let the perturbation be $\delta_t$, 
        based on the \lip continuity of the mapping $\eta$, we have
        \vspace{3mm}
        
        \begin{align}
            \label{eq:proof-sum-1}
            \eta_{a_1}(s_t) - \eta_{a_1}(s_t+\delta_t) \leq \norm{\delta_t}_2,\\
            \label{eq:proof-sum-2}
            \eta_{a_2}(s_t+\delta_t) - \eta_{a_2}(s_t) \leq \norm{\delta_t}_2.
        \end{align}
        \vspace{3mm}
        
        Suppose that under perturbation $\delta_t$, the action selection would be misled in the sense that the smoothed value for the original action $a_1$ is lower than that of another action $a_2$, \ie, $\wqpi(s_t+\delta_t,a_1)\leq \wqpi(s_t+\delta_t,a_2)$.
        Then, based on the monotonicity of $\eta$, we have
        \vspace{3mm}
        
        \begin{align}
            \label{eq:proof-sum-3}
            \eta_{a_1}(s_t+\delta) \leq \eta_{a_2}(s_t+\delta).
        \end{align}
        \vspace{3mm}
        
        Summing up (\ref{eq:proof-sum-1}), (\ref{eq:proof-sum-2}), and (\ref{eq:proof-sum-3}), we obtain
        \vspace{3mm}
        
        \begin{align*}
            \norm{\delta_t}_2 
            &\geq \frac{1}{2}\left(\eta_{a_1}(s_t) -  \eta_{a_2}(s_t) \right)\\
            &= \frac{\sigma}{2}\left(
    \Phi^{-1}\left(\frac{\wtilde Q^\pi(s_t,a_1)-V_{\min}}{V_{\max}-V_{\min}}\right)-
    \Phi^{-1}\left(\frac{\wtilde Q^\pi(s_t,a_2)-V_{\min}}{V_{\max}-V_{\min}}\right)
    \right)
        \end{align*}
        \vspace{5mm}
        
        which is a lower bound of the \textit{maximum perturbation magnitude} $\bar\eps (s_t)$ that can be tolerated at state $s_t$.
        Hence, when $r_t$ takes the value of the computed lower bound, it satisfies the condition that $r_t \leq \bar\eps (s_t)$.
    \end{proof}

\subsection{Proof of \texorpdfstring{\Cref{lem:global-lip}}{Lemma 2}}
\label{append:proof-global-lip}
We recall~\Cref{lem:global-lip}:

\lemlipf*

\begin{proof}
    The proof can be done in a similar fashion as the proof for~\Cref{lem:lipschitz} in~\Cref{append:proof-lip-local}.
    Compared with the smoothed value network where the expectation is taken over the sampled states, here, similarly, the smoothed \namef is derived by taking the expectation over sampled \sigrtjs. 
    The difference is that the output range of the Q-network is $[\vmin,\vmax]$, while the output range of the \namef is $[\jmin,\jmax]$. 
    Thus, the smoothed \namef $\wtilde F$ is $\frac{(J_{\max}-J_{\min})}{\sigma}\sqrt{\nicefrac{2}{\pi}}$-\lip continuous.
\end{proof}

\subsection{Proof of \texorpdfstring{\Cref{thm:exp-bound}}{Theorem 2}}
\label{append:proof-exp-bound}

We recall the definition of $\wtilde F_\pi$ as well as ~\Cref{thm:exp-bound}:
\vspace{8mm}
\begin{small}
\begin{align*}
    \wtilde F_\pi\left(\vert_{t=0}^{H-1}\delta_t\right)
              := \underset{\zeta \sim \cal N (0,\sigma^2I_{H\times N})}{\E} F_\pi\left(\vert_{t=0}^{H-1}(\delta_t+\zeta_t)\right).
\end{align*}
\end{small}
\vspace{5mm}

\expbound*

        \begin{proof}
        
            We note the following equality
            \vspace{8mm}
            \begin{small}
            \begin{align}
            \label{eq:long-eq}
                 \wtilde F_\pi(\vert_{t=0}^{H-1}\delta_t)
            \stackrel{(a)}{=} \E \left[F_\pi\left(\vertt{0}{H-1}(\delta_t+\zeta_t)\right)\right]
            \stackrel{(b)}{=} \E \left[F_{\pi'}\left(\vertt{0}{H-1}\delta_t\right)\right]
            \stackrel{(c)}{=} \bb E\left[J_\eps^H(\pi')\right],
            \end{align}
            \end{small}
            \vspace{5mm}
            
            where (a) comes from the definition of the smoothed \namef $\wtilde F_\pi$, (b) is due to the definition of the $\sigma$-randomized policy $\pi'$, and (c) arises from the definition of the \namepj $J_\eps^H$.
            Thus, the expected \namepj $\bb E\left[J_\eps^H(\pi')\right]$ is equivalent to the smoothed \namef $\wtilde F_\pi\left(\vertt{0}{H-1}\delta_t\right)$.
            Furthermore, since the distance between the all-zero $\vertt{0}{H-1}\mathbf{0}$ and the adversarial perturbations $\vertt{0}{H-1}\delta_t$ is bounded by $\eps\sqrt{H}$, leveraging the \lip smoothness of $\wtilde F$ in~\Cref{lem:global-lip}, we obtain the lower bound of the expected \namepj $\bb E\left[J_\eps^H(\pi')\right]$ as $\wtilde F_\pi\left(\vertt{0}{H-1}\mathbf{0}\right) - L\eps\sqrt{H}$.
        \end{proof}

\subsection{Proof of \texorpdfstring{\Cref{thm:perc-bound}}{Theorem 3}}
\label{append:proof-perc-bound}

We recall the definition of $\wtilde F_\pi^p$ as well as~\Cref{thm:perc-bound}:
        \vspace{8mm}
        \begin{small}
        \begin{align}
        \label{eq:wtilde-f-re}
            \wtilde F_{\pi}^p\left(\vertt{0}{H-1}\delta_t\right) := \mathrm{sup}_y\left\{ y\in \sR \mid \mathbb{P}\left[F_\pi\left(\vertt{0}{H-1}(\delta_t+\zeta_t)\right) \leq y\right] \leq p \right\}.
        \end{align}
        \end{small}
\vspace{5mm}        

\percbound*
    \begin{proof}
        To prove~\Cref{thm:perc-bound}, we leverage the technique in the proof for Lemma 2 of~\citet{chiang2020detection} in their Appendix B.
        
        For brevity, we abbreviate $\delta:=\vert_{t=0}^{H-1}\delta_t$ and redefine the plus operator such that  $\delta+\zeta:=\vert_{t=0}^{H-1}(\delta_t+\zeta_t)$.
        Then, similar to~\Cref{lem:helper}, we have the conclusion that
        \vspace{8mm}
        \begin{small}
        \begin{align*}
            \delta \mapsto \sigma \cdot \Phi^{-1} \left( \bb P\left[ F_\pi(\delta+\zeta) \leq \wtilde F_\pi^{p'}(\mathbf{0}) \right] \right)
        \end{align*}
        \end{small}
        \vspace{5mm}
        
        is $1$-\lip continuous, where $\zeta\sim \cal N(0,\sigma^2 I_{H\times N})$.
        
        Thus, under the perturbations $\delta_t\in\epsball$ for $t=0\ldots H-1$, we have
        \vspace{8mm}
        \begin{small}
        \begin{align*}
            \Phi^{-1}\left( \bb P\left[ F_\pi(\delta+\zeta) \leq \wtilde F_\pi^{p'}(\mathbf{0})\right] \right)
            &\leq 
            \Phi^{-1}\left( \bb P\left[ F_\pi(\zeta) \leq \wtilde F_\pi^{p'}(\mathbf{0})\right] \right) + \frac{\norm{\delta}_2}{\sigma}\\
            &\leq 
            \Phi^{-1}\left( \bb P\left[ F_\pi(\zeta) \leq \wtilde F_\pi^{p'}(\mathbf{0})\right] \right) + \frac{\eps\sqrt{H}}{\sigma}
            &\pushright{\textit{(Since $\norm{\delta}_2\leq \eps\sqrt{H}$)}}\\
            &= 
            \Phi^{-1}(p') + \frac{\eps\sqrt{H}}{\sigma}
            &\pushright{\textit{(By definition of $\wtilde F_\pi^p$)}}\\
            &= 
            \Phi^{-1}(p)
            &\pushright{\textit{(By definition of $p'$)}}.
        \end{align*}
        \end{small}
        \vspace{5mm}
        
        Since $\Phi^{-1}$ monotonically increase, this implies that 
        $
            \bb P\left[ F_\pi(\delta+\zeta) \leq \wtilde F_\pi^{p'}(\mathbf{0})\right] \leq p
        $.
        According to the definition of $\wtilde F_\pi^p$ in (\ref{eq:wtilde-f-re}), we see that $\wtilde F_\pi^{p'}(\mathbf{0})\leq \wtilde F_{\pi}^p\left(\delta\right)$, \ie, $\ujp \leq$ the $p$-th percentile of $J_\eps(\pi')$. Hence, the theorem is proved.
    \end{proof}

\subsection{Proof of \texorpdfstring{\Cref{thm:extend-radius}}{Theorem 4}}
\label{append:proof-extend-rad}

We recall~\Cref{thm:extend-radius}:

\thmextendrad*

    \begin{proof}
        Replacing $a_2$ with $a_{k+1}$ in the proof for~\Cref{thm:radius} in~\Cref{append:proof-radius} directly leads to~\Cref{thm:extend-radius}.
    \end{proof}

\section{Additional Details of Certification Strategies}
\label{append:methods}

In this section, we cover the concrete details regarding the implementation of our three certification strategies, as a complement to the high-level ideas introduced back in~\Cref{sec:cert-rad} and~\Cref{sec:cert-cum}.

\subsection{Detailed Algorithm of \staters}
\label{append:staters}

\begin{algorithm}[H]
\algsetup{linenosize=\tiny}
  \footnotesize
\setlength{\columnsep}{15pt}
\begin{multicols}{2}
\KwIn{state $s$, trained value network $Q^\pi$ with range $[V_{\min},V_{\max}]$; parameters for smoothing: sampling times $m$, smoothing variance $\sigma^2$, one-sided confidence parameter $\alpha$}
\KwOut{smoothed value network $\wtilde Q^\pi$, selected action $a$, certification indicator $cert$, certified radius $r$ constant $L$}
\SetKwFunction{funca}{pre-processing}
\SetKwFunction{funcb}{take\_one\_step}
\SetKwProg{Pn}{Procedure}{:}{}
\SetKwProg{Fn}{Function}{:}{}
\DontPrintSemicolon


    \Comment*[l]{\footnotesize Step 1: smoothing}
    Generate noise samples $\delta_i\sim\cal N(0,\sigma^2 I)$ for $1\leq i\leq m$\;
    \For {each action $a \in \cal A$} {\Comment*[r]{\scriptsize clipping and averaging}
        $\wtilde Q^\pi(s,a)\leftarrow \frac{1}{m}\sum_{i=1}^{m} \texttt{clip}( Q^\pi(s+\delta_i, a), \min=V_{\min}, \max=V_{\max})$
    }
    $a_1,a_2\leftarrow $ best action and runner-up action given by $\wqpi$\;
    
    \columnbreak
    
    \Comment*[l]{\footnotesize Step 2: certification}
    $\Delta =  \left( V_{\max} - V_{\min}\right) \sqrt{\frac{1}{2m} \ln \frac{1}{\alpha}}$ \\\Comment*[r]{\scriptsize confidence interval}
    
    \uIf {$\wqpi(s,a_1) \geq \wqpi(s,a_2) + 2\Delta$} {
        \Comment*[r]{\scriptsize certification success}
        $cert\leftarrow$ \texttt{True}\;
        $r \leftarrow \frac{\sigma}{2}(
                                            \Phi^{-1}(\frac{\wqpi(s,a_1) - \Delta -V_{\min}}{V_{\max}-V_{\min}})
                                            -
                                            \Phi^{-1}(\frac{\wqpi(s,a_2) + \Delta -V_{\min}}{V_{\max}-V_{\min}})
                                      )$\;
    }
    \Else {
        \Comment*[r]{\scriptsize certification failure}
        $cert\leftarrow$ \texttt{False}\;
        $r \leftarrow $ \texttt{undefined}\;
    }
    \KwRet $\wqpi$, $a_1$, $cert$, $r$\;
\end{multicols}
\caption{\small \staters: Local smoothing for certifying per-state action
}\label{alg:smooth-q}
\end{algorithm}

We present the concrete algorithm of \staters in~\Cref{alg:smooth-q} for the procedures introduced in~\Cref{sec:cert-rad-algo}.
For each given state $s_t$, we first perform Monte Carlo sampling~\citep{cohen2019certified,lecuyer2019certified} to achieve local smoothing. 
Based on the smoothed value function $\wqpi$, we then compute the robustness certiﬁcation for per-state action, \ie, the certified radius $r_t$ at the given state $s_t$, following~\Cref{thm:radius}.

\textbf{Detailed Inference Procedure.}\quad
During inference, we invoke the model with $m$ samples of Gaussian noise at each time step, and use the averaged Q value on these $m$ noisy samples to obtain the greedy action selection and compute the certification.
This procedure is similar to \textsc{Predict} in~\citet{cohen2019certified}, but since RL involves multiple step decisions, we do not take the ``abstain'' decision as in \textsc{Predict} in~\citet{cohen2019certified}; instead, \staters will take the greedy action at all steps no matter whether the action can be certified or not.

\textbf{Estimation of the Algorithm Parameters $\vmin,\vmax$.}\quad
Our estimate is obtained via sampling the trajectories and calculating the Q values associated with the state-action pairs along these trajectories. Since we perform clipping using the obtained bounds, the certification is sound. The cost for estimating $V_{\rm min}$ and $V_{\rm max}$ is essentially associated with the number of sampled trajectories, the number of steps in each trajectory, the number of sampled Gaussian noise $m$ per step, and the cost of doing one forward pass of the Q network; thus, the estimation can be done with low cost. 
Furthermore, we can balance the trade-off between the accuracy and efficiency of the estimation, and for any configuration of $V_{\rm min}$ and $V_{\rm max}$, the certification will invariably be sound (reason above).

\subsection{Detailed Algorithm of \glbrs}
\label{append:glbrs}
\begin{algorithm}[t]
\algsetup{linenosize=\tiny}
\footnotesize
\DontPrintSemicolon
\setlength{\columnsep}{15pt}
\begin{multicols}{2}
\KwIn{initial state distribution $d_0$, trained value network $\qpi$, game cumulative reward range $[\jmin,\jmax]$, number of steps in an episode $H$, perturbation magnitude at each state $\eps$, percentile $p$; parameters for smoothing: sampling times $m$, smoothing variance $\sigma^2$, one-sided confidence parameter $\alpha$}
\KwOut{Expectation bound $\uje$, $p$-th percentile bound $\ujp$}
\Comment*[l]{\footnotesize Step 1: smoothing}
\For {$i=1$ to $m$} {
    $s_0\sim d_0, J^C_i\leftarrow 0$ \Comment*[r]{\scriptsize initialization}
    Generate macro-state noise $\delta_t\sim \cal N(0, \sigma^2 I)$ for $0\leq t < H$\;
    \For {$t=0$ to $H-1$} {
        $a_t\leftarrow \argmax_{a}Q(s_{t}+\delta_t, a)$\;
        Execute action $a_t$ and observe reward $re_t$ and next state $s_{t+1}$ \Comment*[r]{\scriptsize take a step}
        $J^C_i\leftarrow J^C_i + re_t$ \Comment*[r]{\scriptsize accumulate the reward}
    }
}

\Comment*[l]{\footnotesize Step 2.1: certifying expectation bound}
$\wtilde F\leftarrow \frac{1}{m}\sum_{i=1}^{m} J^C_i$ \Comment*[r]{\scriptsize smoothed actual reward}
$\Delta_{\rm{conf}} \leftarrow (R_{\max}-R_{\min})\sqrt{\frac{\ln(1/\alpha)}{2m}}$ \\\Comment*[r]{\scriptsize confidence interval}
$\Delta_{\rm{lip}} \leftarrow \frac{R_{\max}-R_{\min}}{\sigma}\sqrt{\frac{2}{\pi}} \cdot \eps \sqrt{H}$ \\\Comment*[r]{\scriptsize bound given by \lip continuity}
$\uje \leftarrow \wtilde F - \Delta_{\rm{conf}} - \Delta_{\rm{lip}}$\;

\;
\Comment*[l]{\footnotesize Step 2.2: certifying percentile bound}
$k \leftarrow \texttc{ComputeOrderStats}(\eps,\sigma,p,m,H)$\;
$\ujp \leftarrow $ $k$-th smallest value in $\{J^C_i\}_{i=1}^{m}$\;
\;
\KwRet $\uje, \ujp$
\end{multicols}
\vspace{1em}
\caption{\small \glbrs: Global smoothing for certifying cumulative reward}\label{alg:global-rand}
\end{algorithm}
    We present the concrete algorithm of \glbrs in~\Cref{alg:global-rand} for the procedures introduced in~\Cref{sec:cert-glb}.
    Similarly to \staters, the algorithm also consists of two parts: performing smoothing and computing certification, where we compute both the \ebound $\uje$ and the \pbound $\ujp$.



        \textbf{Step 1: Global smoothing.}\quad
        We adopt Monte Carlo sampling~\citep{cohen2019certified,lecuyer2019certified} to estimate the smoothed \namef $\wtilde F$ by sampling multiple \sigrtjs via drawing $m$ noise sequences. For each noise sequence $\zeta\sim \gN(0,\sigma^2I_{H\times N})$,
        we apply noise $\zeta_t$ to the input state $s_t$  sequentially,
        and obtain the sum of the reward $J_i^C=\sum_{t=0}^{H-1}re_t$ as the return for this \sigrtj.
        We then aggregate the smoothed perturbed return values 
        $\{J_i^C\}_{i=1}^{m}$ 
        via mean smoothing and percentile smoothing.
        
        \textbf{Step 2: Certification for perturbed cumulative reward.}\quad
        First, we compute the \textit{\ebound} $\uje$ using~\Cref{thm:exp-bound}.
        Since the smoothed \namef $\wtilde F$ is obtained based on $m$ sampled noise sequences, we use Hoeffding's inequality~\citep{hoeffding1994probability} to compute the lower bound of the random variable $\wtilde F\left(\vertt{0}{H-1}\mathbf{0}\right)$
        with a confidence level $\alpha$.
        We then calculate the lower bound of $\wtilde F\left(\vertt{0}{H-1}\delta_t\right)$
        under all possible $\ell_2$-bounded perturbations $\delta_t\in\epsball$ leveraging the smoothness of $\wtilde F$.

        We then compute the \textit{\pbound} $\ujp$ using~\Cref{thm:perc-bound}.
        We let ${J_i^C}$ be sorted increasingly, 
        and perform normal approximations~\citep{stein1972bound} to compute the largest empirical order statistic $J^C_{k}$ such that $\bb P\left[ \ujp \geq J^C_{k}\right] \geq 1-\alpha$. The empirical order statistic $J^C_{k}$ is then used as the proxy of $\ujp$ under $\alpha$ confidence level.
        We next provide detailed explanations for \texttc{ComputeOrderStats}, which aims to compute the order $k$ using binomial formula plus normal approximation.

\textbf{Estimation of $\wtilde F_\pi$} (defined in~\Cref{lem:global-lip})\textbf{.}\quad
For computing the expectation bound (\ie, the lower bound of $\bb E_{\Delta}\left[ J_\eps(\pi')\right]$), an intermediate step is to estimate $\widetilde F_\pi$ (as shown in line 8 of~\Cref{alg:global-rand}). 

We emphasize that the \textit{accuracy} of the estimation does not influence the \textit{soundness} of the lower bound calculation. 
The reason is given below. 
The number of sampled randomized trajectories $m$ controls the trade-off between the estimation \textit{efficiency} and \textit{accuracy}, as well as the \textit{tightness} of the derived lower bound. 
Concretely, a small $m$ would provide high efficiency, low estimation accuracy, and loose lower bound; but the lower bound is always \textit{sound}, since our algorithm explicitly accounts for the \textit{inaccuracy} associated with $m$ via leveraging the Hoeffding’s inequality in line 9 in~\Cref{alg:global-rand}.

\textbf{Details of \texttc{ComputeOrderStats}.}\quad
We consider the sorted sequence $J_1^C\leq J_2^C\leq \cdots\leq J_m^C$. We additionally set $J_0^C=-\infty$ and $J_{m+1}^C=\infty$.
Our goal is to find the largest $k$ such that $\bb P\left[ \ujp \geq J^C_{k}\right] \geq 1-\alpha$. We evaluate the probability explicitly as follows:
\vspace{8mm}
\begin{small}
\begin{align*}
    \bb P\left[ \ujp \geq J^C_{k}\right] 
    = \sum_{i=k}^{m} \bb P\left[ J_i^C \leq \ujp < J_{i+1}^C \right]
    = \sum_{i=k}^{m} \pmat{m}{i}(p')^i (1-p')^{m-i}.
\end{align*}
\end{small}
\vspace{5mm}

Thus, the condition $\bb P\left[ \ujp \geq J^C_{k}\right] \geq 1-\alpha$ is equivalent to 
\vspace{8mm}
\begin{small}
\begin{align}
    \label{eq:lower-order}
    \sum_{i=0}^{k-1} \pmat{m}{i}(p')^i (1-p')^{m-i} \leq  \alpha.
\end{align}
\end{small}
\vspace{5mm}

Given large enough $m$, the LHS of (\ref{eq:lower-order}) can be approximated via a normal distribution with mean equal to $mp'$ and variance equal to $mp'(1-p')$.
Concretely, we perform binary search to find the largest $k$ that satisfies the constraint.

We finally explain the upper bound of $\eps$ that can be certified for each given smoothing variance.
In practical implementation, for a given sampling number $m$ and confidence level parameter $\alpha$, if $p'$ is too small, then the condition (\ref{eq:lower-order}) may not be satisfied even for $k=1$. This implies that the existence of an upper bound of $\eps$ that can be certified for each smoothing parameter $\sigma$, recalling that $p':=\Phi\left(\Phi^{-1}(p)-\nicefrac{\eps\sqrt{H}}{\sigma}\right)$.

\textbf{Detailed Inference and Certification Procedures.}\quad
We next provide detailed descriptions of the inference and certification procedures, respectively.

\textit{Inference procedure.}\quad 
We deploy the $\sigma$-randomized policy $\pi'$ as defined in~\Cref{def:sig-rand} in~\Cref{sec:cert-glb}. That is, for each observed state, we sample one time of Gaussian noise $\Delta_t$ to add to $s_t$, and take action according to this single randomized observation $a_t = \pi(s_t+\Delta_t)$. 
Thus, in deployment time, the $\Delta$-randomized policy $\pi'$ executes on \textit{one} rollout---at each step it takes one observation and chooses one action (following the procedure described above).

\textit{Certification procedure.}\quad
We sample $m$ randomized trajectories in \glbrs instead of sampling $m$ noisy states per time step as in \staters. For each sampled randomized trajectory, at each time step in the trajectory, we invoke the model with $1$ sample of Gaussian noise (as shown in~(\ref{eq:global-f})). Given the $m$ randomized trajectories and the cumulative reward for each of them, we compute the certification using~\Cref{thm:exp-bound} and~\Cref{thm:perc-bound}.

\textbf{The Algorithm Parameters $\jmin,\jmax$.}\quad
We use the default values of  $J_{\rm min}, J_{\rm max}$ in the game specifications rather than estimate them, which are independent with the trained models. Thus there is no computational cost at all for obtaining the two parameters. As mentioned under~\Cref{thm:exp-bound} in~\Cref{sec:cert-glb}, using these default values may induce a loose bound in practice, so we further propose the percentile smoothing that aims to eliminate the dependency of our certification on $J_{\rm min}$ and $J_{\rm max}$, thus achieving a tighter bound.

\subsection{Detailed Algorithms of \adasearch}
\label{append:adasearch}

\begin{algorithm}[t]
\algsetup{linenosize=\scriptsize}
\scriptsize
\DontPrintSemicolon
\setlength{\columnsep}{15pt}
\begin{multicols}{2}
\KwIn{Enivronment $\cal E=(\cal S, \cal A, R, \Gamma,d_0)$,
trained value network $Q^\pi$ with range $[\vmin, \vmax]$; parameters for randomized smoothing: sampling times $m$, smoothing variance $\sigma^2$, one-sided confidence parameter $\alpha$}
\KwOut{a map $M$ that maps an attack magnitude $\eps$ to the corresponding certified lower bound of reward $J$}
\SetKwFunction{funca}{\textsc{GetActions}}
\SetKwFunction{funcb}{\textsc{Expand}}
\SetKwProg{Pn}{Procedure}{:}{}
\SetKwProg{Fn}{Function}{:}{}
\Comment*[l]{\scriptsize Initialize global variables}
$p\_que \leftarrow \emptyset$ \Comment*[r]{\tiny initialize an empty priority queue containing tuples of (state $s$, action $a$, radius $r$, reward $J$), sorted by increasing $r$}
$M \leftarrow \emptyset$\; 
$\jglobal \leftarrow \infty$ \Comment*[r]{\tiny initialize global minimum reward}
$\Delta =  \left( V_{\max} - V_{\min}\right) \sqrt{\frac{1}{2m} \ln \frac{1}{\alpha}}$ \Comment*[r]{\tiny confidence bound}\;

\Fn{\funca{$s,\eps_{\rm lim}, \jcur$}}{
    Generate noise samples $\delta_i\sim \cal N(0, \sigma^2I)$ for $1\leq i\leq m$\; 
    \For {each action $a\in \cal A$} {
        $\wtilde Q^\pi(s,a)\leftarrow \frac{1}{m} \sum_{i=1}^{m}\texttt{clip}( Q^\pi(s+\delta_i, a), \min=V_{\min}, \max=V_{\max})$\;
    }
    $a^\star\leftarrow \argmax_{a\in\cal A} \wtilde Q^\pi(s,a)$\;
    $a\_list\leftarrow \emptyset$\;
    \For {each action $a\in \cal A$} {
        \If {$\Gamma(s,a)=\bot$} {
            \textbf{continue}\;
        }
        \uIf {$\wtilde Q^\pi(s,a^\star) \geq \wtilde Q^\pi(s,a)+2\Delta$} {
            $r\leftarrow \frac{\sigma}{2}(
                                                \Phi^{-1}(\frac{\wtilde Q^\pi(s,a^\star)-\Delta-V_{\min}}{V_{\max}-V_{\min}})-
                                                \Phi^{-1}(\frac{\wtilde Q^\pi(s,a)+\Delta-V_{\min}}{V_{\max}-V_{\min}})
                                            )$\;
        }
        \Else {
            $r\leftarrow $ 0\;
        }
        
        \uIf{$r\leq \eps_{\rm lim}$} { \Comment*[r]{\tiny take possible actions}
            $a\_list\leftarrow a\_list \cup \{a\}$ 
        }
        \Else { \Comment*[r]{\tiny store impossible actions in queue for later expansion}
            $p\_que.\texttt{push}((s, a, r, \jcur))$ 
        }
    }
    \KwRet{$a\_list$}\;
}
\;
\Pn{\funcb{$s,\eps_{\rm lim}, \jcur$}}{
    \If {$\jcur \geq \jglobal$} {
        \KwRet{$0$} \Comment*[r]{\tiny pruning}
    }
    $a\_list \leftarrow \funca(s,\eps_{\rm lim}, \jcur)$\;
    \If {$a\_list = \emptyset$} {
        $\jglobal \leftarrow \texttt{min}(\jglobal, \jcur)$\;
        \KwRet{$0$}\;
    }
    \For {$a \in a\_list$} {
        $s'\leftarrow \Gamma(s,a)$\;
        $ret\leftarrow \funcb(s',\eps_{\rm lim}, \jcur+R(s,a))$
    }
}
\;
$s_0\sim d_0$ \Comment*[r]{\tiny initialize initial state}
\funcb($s_0, \eps_{\rm lim}=0,\jcur=0$) \Comment*[r]{\tiny expand initial trajectory}
\While {True} {
    \If {$p\_que = \emptyset$} {
        \textbf{break}\;
    }
    \Comment*[r]{\tiny pop out the first element}
    $(s,a,r,J)\leftarrow p\_que.\texttt{pop}()$ \;
    \Comment*[r]{\tiny examine the next first element}
    $(\_,\_,r',\_)\leftarrow p\_que.\texttt{top}()$ \;
    \Comment*[r]{\tiny derive the critical $\eps$'s}
    $\eps \leftarrow r, \eps'\leftarrow r'$\;
    \Comment*[r]{\tiny obtain a pair of mapping}
    $M[\eps] \leftarrow \jglobal$ \;
    \Comment*[r]{\tiny expand the tree from the new node}
    \funcb$(\Gamma(s,a), \eps', J+R(s,a))$ 
}
\end{multicols}
\vspace{1em}
\caption{\adasearch: Adaptive search for certifying cumulative reward}\label{alg:adaptive-search-full}
\end{algorithm}%


    In the following, we will explain the \textit{trajectory exploration and expansion}, the \textit{growth of perturbation magnitude}, and \textit{optimization tricks} in details.

    \textbf{Trajectory Exploration and Expansion.}\quad 
    \adasearch organizes all possible trajectories in the form of a search tree and progressively grows it.
    Each node of the tree represents a state, and the depth of the node is equal to the time step of the corresponding state in the trajectory.
    The root node (at depth $0$) represents the initial state $s_0$.
    For each node, leveraging \Cref{thm:extend-radius}, we compute a non-decreasing sequence $\{r^k(s)\}_{k=0}^{|\gA|-1}$ corresponding to required perturbation radii for $\pi$ to choose each alternative action~(the subscript $t$ is omitted for brevity).
    Suppose the current $\eps$ satisfies $r^i(s) \le \eps < r^{i+1}(s)$. We grow $(i+1)$ branches from current state $s$ corresponding to the original action and $i$ alternative actions since $\eps \ge r^j(s)$ for $1\le j\le i$.
    For nodes on the newly expanded branch, we repeat the same procedure to expand the tree with depth-first search~\citep{tarjan1972depth} until the terminal state of the game is reached or the node depth reaches $H$.
    As we expand, we keep the record of cumulative reward for each trajectory and update the lower bound $\uj$ when reaching the end of the trajectory if necessary.
    
    \textbf{Perturbation Magnitude Growth.}\quad
    When all trajectories for perturbation magnitude $\eps$ are explored, we need to increase $\eps$ to seek for certification under larger perturbations.
    Luckily, since the action space is discrete, we do not need to examine every $\eps \in \sR^+ $~(which is infeasible) but only need to examine the next $\eps$ where the chosen action in some step may change.
    We leverage \textit{priority queue}~\citep{van1977preserving} to effectively find out such next ``critical'' $\eps$.
    Concretely, along the trajectory exploration, at each tree node, we search for the possible actions and store actions corresponding to $\{r^k(s)\}_{k=i+1}^{|\gA|-1}$ into the priority queue, since these actions are exactly those need to be explored when $\eps$ grows.
    After all trajectories for $\eps$ are fully explored, we pop out the head element from the priority queue as the next node to expand and the next perturbation magnitude $\eps$ to grow.
    We repeat this process until the priority queue  becomes empty or the perturbation magnitude $\eps$ reaches the predefined threshold.
    
    \textbf{Additional Optimization.}\quad
    We adopt some additional optimization tricks to reduce the complexity of the algorithm.
    First, for environments with no negative reward, we perform pruning to limit the tree size---if the cumulative reward leading to the current node already reaches the recorded lower bound, we can perform pruning, since the tree that follows will not serve to update the lower bound.
    This largely reduces the potential search space.
    We additionally adopt the memorization~\citep{michie1968memo} technique which is commonly applied in search algorithms.
    
    We point out a few more potential improvements.
    \textit{First}, with more specific knowledge of the game mechanisms, the search algorithm can be further optimized. Take the Pong game as an example, given the horizontal speed of the ball, we can compress the time steps where the ball is flying between the two paddles, thus reducing the computation.
    \textit{Second}, empirical attacks may be efficiently incorporated into the algorithm framework to provide upper bounds that help with pruning.

    \textbf{Time Complexity.}\quad
    The time complexity of \adasearch is $O(H |S_{\mathrm{explored}}| \times (\log |S_{\mathrm{explored}}| + |A| T))$, where $|S_{\mathrm{explored}}|$ is the number of explored states throughout the search procedure, which is no larger than cardinality of state set, $H$ is the horizon length, $|A|$ is the cardinality of action set, and $T$ is the time complexity of performing local smoothing. The main bottleneck of the algorithm is the large number of possible states, which is in the worst case exponential to state dimension. However, to provide a sound worst-case certification agnostic to game properties, exploring all possible states may be inevitable.

    
    \textbf{Detailed Inference and Certification Procedures and Important Modules in \Cref{alg:adaptive-search-full}.}\quad
    We next provide detailed descriptions of the inference and certification procedures,  respectively, along with other important modules.
    
    \textit{Inference procedure.}\quad
    In \adasearch, we deploy the locally smoothed policy $\tilde\pi$ as defined in (\ref{eq:smooth-q}) in~\Cref{sec:cert-act-thm}. Concretely, for each observed state $s_t$, we sample a batch of Gaussian noise to add to $s_t$, and take action according to the computed mean Q value on the batch of randomized observations. 
    (Actually, the \textit{inference} procedure per step is exactly the same as in \staters described in \Cref{append:staters}.) 
    
    \textit{Certification procedure.}\quad
    We obtain the \textit{certification} via~\Cref{thm:extend-radius} and the adaptive search algorithm, which outputs a collection of pairs $\{(\eps_{i}, \uj_{\eps_i})\}_{i=1}^{|C|}$ sorted in ascending order of $\eps_i$, where $|C|$ is the length of the collection. 
    The interpretation for this collection of pairs is provided below. 
    For all $\eps’$, let $i$ be the largest integer such that $ \eps_i \leq \eps’ < \eps_{i+1} $, then as long as the perturbation magnitude $\eps \le \eps'$, the cumulative reward $J_\eps(\tpi) \ge \uj_{\eps_i}$.
    This is supported by the fact that all certified radii associated with all nodes in the entire expanded tree compose a discrete set of finite cardinality; and the perturbation value between two adjacent certified radii in the returned collection will not lead to a different tree from the tree corresponding to the largest smaller perturbation magnitude.
    This actually is also the inspiration for the certification design.

    \textit{Important modules.}\quad
    The function \texttt{\textsc{GetAction}} computes the possible actions at a given state $s$ under the limit $\eps$, while the procedure \texttc{Expand} accomplishes the task of expanding upon a given node/state.
    The main part of the algorithm involves a loop that repeatedly selects the next element from the priority queue, \ie, a node associated with an $\eps$ value, to expand upon.

\textbf{Additional Clarifications.}\quad
We clarify that the algorithm only requires access to the environment such that it can obtain the reward and next state via \textit{interacting} with the environment $\cal E$ (\ie, taking action $a$ at state $s$ and obtain next action $s’$ and reward $r$), but does not require access to an oracle transition function $\Gamma$. We further emphasize that access to the environment that supports back-tracking is already sufficient for conducting our adaptive search algorithm.

\section{Discussion on the Certification Methods}
\label{append:discuss-cert}

We discuss the \textit{advantages} and \textit{limitations} of our certification methods, 
as well as possible direct \textit{extensions}, hoping to pave the way for future research along similar directions.
We also provide more \textit{detailed analysis} that help with the understanding of our algorithms.

\textbf{\staters}.\quad
The algorithm provides state-wise robustness certification in terms of the stability/consistency of the per-state action. 
It treats each time step \textit{independently}, smooths the given state at the given time step, and provides the corresponding certification. 
Thus, one potential \textit{extension} is to expand the time window from one time step to a consecutive sequence of several time steps, and provide certification for the sequences of actions for the given window of states.

\looseness=-1
\textbf{\glbrs.}\quad
As explained in~\Cref{sec:cert-cum}, the \ebound $\uje$ is too loose to have any practical usage. 
In comparison, \pbound $\ujp$ is much tighter and practical.
However, one limitation of $\ujp$ is that there exists an upper bound of the attack magnitude $\eps$ that can be certified for each $\sigma$, as explained in~\Cref{append:glbrs}. For attack magnitudes that exceed the upper bound, we can obtain no useful information via this certification (though the upper bound is usually sufficiently large).

\textit{One limitation of \glbrs.}\quad
\Cref{lem:global-lip} requires knowing the noise added to each step beforehand so as to generate $m$ randomized trajectories given the known noise sequence $\{\delta_t\}_{t=0}^{H-1}$. Thus, it cannot be applied against an adaptive attacker.
We clarify that our \staters and \adasearch does not have such limitation.

\textbf{\adasearch.}\quad
The algorithm provides the absolute lower bound $\uj$ of the cumulative reward for any finite-horizon trajectory with a given initial state.
The advantage of the algorithm is that $\uj$ is an absolute lower bound that bounds the worst-case situation (apart from the exceptions due to the probabilistic confidence $\alpha$), rather than the statistical lower bounds $\uje$ and $\ujp$ that characterize the statistical properties of the random variable $J_\eps$.

\textit{One potential \textit{pitfall} in understanding $\uj$.}\quad 
What \adasearch certifies is the lower bound for the trajectory with \textit{a given initial state} $s_0\sim d_0$, rather than all possible states in $d_0$. This is because our search starts from a root node of the tree, which is set to the fixed given state $s_0$.

\looseness=-1
\textit{Confidence of \adasearch.}\quad
Regarding the \textit{confidence} of our probabilistic certification, we clarify that we consider the independent multiple-test. Concretely, due to the independence of decision making errors, the confidence is $(1-\alpha)^N$, where $N$ is the maximum number of possible attacked states explored by \adasearch. 
Formally, 
\vspace{25pt}
\begin{align}
    \Pr[\text{\adasearch certification holds}] & = \prod_{\substack{\text{all } s \text{ explored }\\\text{by \adasearch}}} \Pr[\text{not make error on }s | \text{not make error on }s_{pre}] \notag
    \\ & \overset{\text{(i)}}{=} \prod_{\text{all attackable } s} (1-\alpha) \times \prod_{\text{all unattackable } s} 1 \notag
    \\ & \overset{\text{(ii)}}{=} (1-\alpha)^N.
\end{align}
\vspace{10pt}

In the above equation, we can see that (i) leverages the independence which in turn gives a bound of form $(1-\alpha)^N$. This is because in each step, the event of “certification does not hold” is \textit{independent} since we sample Gaussian noise \textit{independently}. Leveraging such independence, we obtain a confidence lower bound of $(1-\alpha)^N$.
From (ii), we see that the confidence is only related to the number of possible attacked states $N$. This is because for unattackable states, the certification deterministically holds since there is no attack at current step. Therefore, we only need to count the confidence intervals from all possibly attackable steps instead of all states explored.
We remark that in practice, the attacker usually has the ability to perturb only a limited number of steps, \ie, $N$ is typically small. Therefore, the confidence is non-trivial.

\looseness=-1
\textit{Main limitation of \adasearch.}\quad
The main \textit{limitation} is the high time complexity of the algorithm (details see~\Cref{append:adasearch}). The algorithm has exponential time complexity in worst case despite the existence of several optimizations. Therefore, it is not suitable for environments with a large action set, or when the horizon length is set too long.

\textbf{Extension to Policy-based Methods.}\quad
Though we specifically study the robustness certification in Q-learning in this paper,
our two certification criteria and three certification strategies can be readily extended to policy-based methods. The intuition is that, instead of smoothing the value function in the Q-learning setting, we directly smooth the policy function in policy-based methods. With the smoothing module replaced and the theorems updated, other technical details in the algorithms for certifying the per-state action and the cumulative reward would then be similar.

\section{Additional Experimental Details}
\label{append:exp}

\subsection{Details of the Atari Game Environment}
\label{append:env}
We experiment with two Atari-2600 environments in OpenAI Gym~\citep{brockman2016openai} on top of the Arcade Learning Environment~\citep{bellemare13arcade}.
The \textit{states} in the environments are high dimensional color images ($210\times 160\times 3$) and the \textit{actions} are discrete actions that control the agent to accomplish certain tasks.
Concretely, we use the \texttt{NoFrameskip-v4} version for our experiments, where the randomness that influences the environment dynamics can be fully controlled by setting the random seed of the environment at the beginning of an episode.

\subsection{Details of the RL Methods}
\label{append:impl-rl}

\textcolor{black}{We introduce the details of the nine RL methods we evaluated. 
Specifically, for each method, we first discuss the algorithm, and then present the implementation details for it.}



\textcolor{black}{\textbf{StdTrain}~\citep{mnih2013playing}: StdTrain~(naturally trained DQN model) is an algorithm based on Q-learning, in which DQN is used to reprent Q-value function and TD-loss is used to optimize DQN. Our StdTrain model is implemented with Double DQN~\citep{van2016deep} and Prioritized Experience Replay~\citep{schaul2015prioritized}. }

\textcolor{black}{\textbf{GaussAug}~\citep{behzadan2017whatever}: The algorithm of GaussAug is similar with AtdTrain, in which appropriate Gaussian random noises added to states during training. As for implementation, our GaussAug model is based on the same architecture with the same set of \textit{basic} training techniques of StdTrain, and adds Gaussian random noise with $\sigma=0.005$, which has the best testing performance under attacks compared with other $\sigma$, to all the frames during training. }

\textcolor{black}{\textbf{AdvTrain}~\citep{behzadan2017whatever}: Instead of using the original observation to train Double DQN~\citep{van2016deep}, AdvTrain generates adversarial perturbations and applies the perturbations to part of or all of the frames when training. In our case, we add the perturbations generated by $5$-step PGD attack to 50\% of the frames to make a balance between between stable training and effectiveness of adversarial training. }

\textcolor{black}{\textbf{SA-MDP (PGD) and SA-MDP (CVX)}~\citep{zhang2021robust}: Instead of utilizing adversarial noise to train models directly like AdvTrain, SA-MDP (PGD) and SA-MDP (CVX) regularize the loss function with the help of adversarial noise during training in order to train empirically robust RL agents. In our experiment settings, which is the same as SA-MDP (PGD) and SA-MDP (CVX)~\citep{zhang2021robust}, models consider adversarial noise with $\ell_\infty$-norm $\eps=\nicefrac{1}{255}$ when solving the maximization for the regularizer and minimize the original TD-loss concurrently. }

\textcolor{black}{\textbf{RadialRL}~\citep{oikarinen2020robust}: With the similar idea of adding regularization during training of SA-MDP (PGD) and SA-MDP (CVX), RadialRL compute the worst-case loss instead of regularizing of the bound of the difference of policy distribution under original states and adversarial states. In our case, the RadialRL model uses a linearly increasing perturbation magnitude $\eps$ (from $0$ to $\nicefrac{1}{255}$) to compute the adversarial loss.}

\textcolor{black}{\textbf{CARRL}~\citep{everett2021certifiable}: Unlike other methods which try to improve robustness of RL agents during training, CARRL enhances RL agents' robustness during testing time. It computes the lower bound of each action's Q value at each step and take the one with the highest lower bound conservatively. 
Given that the lower bound is derived via neural network verification methods~\citep{gowal2018effectiveness,weng2018towards}, CARRL can only be applied to low dimensional environments~\citep{everett2021certifiable}.
Thus, we only evaluate CARRL on CartPole and Highway, where we set the bound of the $\ell_2$ norm of the perturbation be $\eps=0.1$ and $\eps=0.05$ respectively when computing the lower bound of Q value, which demonstrate the best empirical performance under noise and empirical attacks.}

\textcolor{black}{\textbf{NoisyNet}~\citep{fortunato2017noisy}: Instead of the conventional exploration heuristics for DQN and Dueling agents which is $\eps$-greedy algorithms, NoisyNet adds parametric noise to DQN's weights, which is claimed to aid efficient exploration and yield higher scores than StdTrain. Concretely, our NoisyNet shares the same basic architecture with StdTrain, and samples the network weights during training and testing, where the weights are sampled from a Gaussian distribution with $\sigma=0.5$.}

\textcolor{black}{\textbf{GradDQN}~\citep{pattanaik2018robust}: GradDQN is a variation of AdvTrain, which utilizes conditional value of risk (CVaR) as optimization criteria for robust control instead standard expected long term return. This method maximizes the expected return over worst $\alpha$ percentile of returns, which can prevent the adversary from possible bad states. In our case, we generate the adversarial states with 10-step attack by calculating the direction of gradient and sampling in that direction from \textit{beta}\(1,1\) distribution in each step. }


\textcolor{black}{For StdTrain, SA-MDP (PGD), SA-MDP (CVX), RadialRL, we directly evaluate the trained models provided by the authors\footnote{\scriptsize StdTrain, SA-MDP (PGD), SA-MDP (CVX) from \url{https://github.com/chenhongge/SA_DQN} and RadialRL from \url{https://github.com/tuomaso/radial_rl} under Apache-2.0 License}.}

\subsection{Rationales for Settings and Experimental Designs}
\label{append:rationale}

\looseness=-1
\textbf{Finite Horizon Setting.}\quad
In this paper, we focus on a finite horizon setting, mainly for the evaluation purpose---we would need to compute the (certified and empirical) cumulative rewards from finite rollouts.
We clarify that our criteria, theorem, and algorithm are applicable to the infinite horizon case as well.

\textbf{The Role of Practical Attacks in Our Evaluation.}\quad
The goal of this paper is to provide certification that comes with theoretical guarantees, and therefore following the standard certified robustness literatures~\citep{cohen2019certified,li2020sok,salman2019provably,jeong2020consistency}, there is no need to evaluate the empirical attacks, since the certification always holds as long as the attacks satisfy certain conditions regardless of the actual attack algorithms.
The attacks we evaluated in our paper were only for the demonstration purpose and in the hope of providing an example.

\subsection{Detailed Evaluation Setup for Atari Games}
\label{append:exp-setup}

We introduce the detailed evaluation setups for the three certification methods corresponding to the experiments in~\Cref{sec:exp}.

\textbf{Evaluation Setup for \staters.}\quad
We report results averaged over $10$ episodes and set the length of the horizon  $H=500$.
At each time step, we sample $m=10,000$ noisy states for smoothing.
When applying Hoeffding's inequality, we adopt the confidence level parameter $\alpha=0.05$.
Since the input state observations for the two Atari games are in image space, 
we rescale the input states such that each pixel falls into the range $[0,1]$.
When adding Gaussian noise to the rescaled states, we sample Gaussian noise of zero mean and different variances. Concretely, the standard deviation $\sigma$ is selected among $\{0.001, 0.005, 0.01, 0.03, 0.05, 0.1, 0.5, 0.75, 1.0,1.5,2.0,4.0\}$.
We evaluate different parameters for different environments.

\textbf{Evaluation Setup for \glbrs.}\quad
We sample $m=10,000$ \sigrtjs, each of which has length $H=500$,
and conduct experiments with the same set of smoothing parameters as in the setup of \staters.
When accumulating the reward, we set the discount factor $\gamma=1.0$ with no discounting.
We take $\alpha=0.05$ as the confidence level when applying Hoeffding's inequality in the \ebound $\uje$, and $\alpha=0.05$ as the confidence level for \texttc{ComputeOrderStats} in the \pbound $\ujp$.
For the validation of tightness, concretely, we carry out a $10$-step PGD attack with $\ell_2$ radius within the perturbation bound at all time steps during testing to evaluate the tightness of our certification.

\textbf{Evalution Setup for \adasearch.}\quad
We set the horizon length as $H=200$ and sample $m=10,000$ noisy states with the same set of smoothing variance as in the setup of \staters.
Similarly, we adopt $\gamma=1.0$ as the discount factor and $\alpha=0.05$ as the confidence level when applying Hoeffding's inequality.
When evaluating Freeway, we modify the reward mechanism so that losing one ball incurs a zero score rather than a negative score. This modification enables the pruning operation and therefore facilitates the search algorithm, yet still respects the goal of the game.
We also empirically attack the policy $\tpi$ to validate the tightness of the certification via the same PGD attack as described above.
For each set of experiments, we run the attack one time with the same initial state as used for computing the lower bound in~\Cref{alg:adaptive-search-full}.

\subsection{Experimental Setup for CartPole}
We additionally experiment with CartPole-v0 in OpenAI Gym~\citep{brockman2016openai} on top of the Arcade Learning Environment~\citep{bellemare13arcade}. We introduce the experimental setup below.

\textbf{Details of the CartPole Environment.}\quad
The \textit{state} in the CartPole game is a low dimensional vector of length $4$ and the $\textit{action}\in\{\textit{move right},\textit{move left}\}$. 
The goal of the game is to balance the rod on the cart, and one reward point is earned at each timestep when the rod is upright.

\textbf{Implementation Details of the RL Methods on CartPole.}\quad
In addition to the eight RL Methods evaluated on Pong and Freeway as shown in~\Cref{sec:exp}, we evaluate another RL algorithm CARRL~\citep{everett2021certifiable}, which is claimed to be robust in low dimensional games like CartPole. This method relies on linear bounds output by the Q network and selects the action whose lower bound of the Q value is the highest. 
More detailed introduction to the algorithm and description of the implementation details are provided in~\Cref{append:impl-rl}.

\textbf{Evaluation Setup for \staters on CartPole.}\quad
We report results averaged over 10 episodes and set the length of the horizon $H=200$. At each time step, we sample $m=10,000$ noisy states for smoothing. When applying Hoeffding’s inequality, we adopt the confidence level parameter $\alpha=0.05$. 
We do not perform rescaling on the state observations.
When adding Gaussian noise to states, we sample Gaussian noise of zero mean and different variances. Concretely, the standard deviation $\sigma$ is selected among $\{0.001, 0.005, 0.01, 0.03, 0.05, 0.1\}$.

\textbf{Evaluation Setup for \glbrs on CartPole.}\quad
We sample $m=10,000$ \sigrtjs, each of which has length $H=200$,
and conduct experiments with the same set of smoothing parameters as in the setup of \staters. All other experiment settings are the same as Pong and Freeway in~\Cref{append:exp-setup}.

\textbf{Evalution Setup for \adasearch on CartPole.}\quad
Since the reward of CartPole game is quite dense, we set the horizon length as $H=10$ and sample $m=10,000$ noisy states with the same set of smoothing variance as in the setup of \staters. All other experiment settings are the same as Pong and Freeway in~\Cref{append:exp-setup}.

\subsection{Experimental Setup for Highway}
In addition to the previous environments in OpenAI Gym~\citep{brockman2016openai} on the Arcade Learning Environment~\citep{bellemare13arcade}, we also evaluate our methods in the Highway environment \citep{highway-env}. Specifically, we test out algorithms in the highway-fast-v0 environment.

\textbf{Details of the highway Environment.}\quad
The \textit{state} in the highway-fast-v0 environment is a $5 \times 5$ matrix, 
where each line represents a feature vectors of either the ego vehicle or other vehicles closest to the ego vehicle.
The feature vector for each vehicle has the form of $[x,y,vx,vy,1]$, corresponding to the two-dimensional positions and velocities and an additional indicator value.
The \textit{actions} are high-level control actions among \{\textit{lane left}, \textit{idle}, \textit{lane right}, \textit{faster}, \textit{slower}\}.
The \textit{reward} mechanism in this environment is associated with the status of the ego vehicle; more concretely, it gives higher reward for the vehicle when it stays at the rightmost lane, moves at high speed, and does not crash into other vehicles.

{\textbf{Evaluation Setup for \staters on highway.}\quad
We report results averaged over 10 episodes and set the length of the horizon $H=30$, which is the maximum length of an episode in the environment configuration. 
At each time step, we sample $m=10,000$ noisy states for smoothing. When applying Hoeffding’s inequality, we adopt the confidence level parameter $\alpha=0.05$. We do not perform rescaling on the state observations.
When adding Gaussian noise to states, we sample Gaussian noise of zero mean and different variances. Concretely, the standard deviation $\sigma$ is selected among $\{0.005, 0.05, 0.1, 0.5, 0.75, 1.0\}$.}

{\textbf{Evaluation Setup for \glbrs on highway.}\quad
We sample $m=10,000$ \sigrtjs (each of which has length $H=30$)
and conduct experiments with the same set of smoothing parameters as in the setup of \staters. All other experiment settings are the same as Pong and Freeway in~\Cref{append:exp-setup}.}

{\textbf{Evalution Setup for \adasearch on highway.}\quad
Since the reward of highway game is also quite dense, we set the horizon length as $H=10$ as in the CartPole environment and sample $m=10,000$ noisy states with the same set of smoothing variance as in the setup of \staters. All other experiment settings are the same as Pong and Freeway in~\Cref{append:exp-setup}.}

\section{Additional Evaluation Results and Discussions}
\label{append:results}

\subsection{Robustness Certification for Per-state Action -- Certified Ratio}
\label{append:cert-ratio}
\begin{figure}
\newlength{\utilheightb}
\settoheight{\utilheightb}{\includegraphics[width=.165\linewidth]{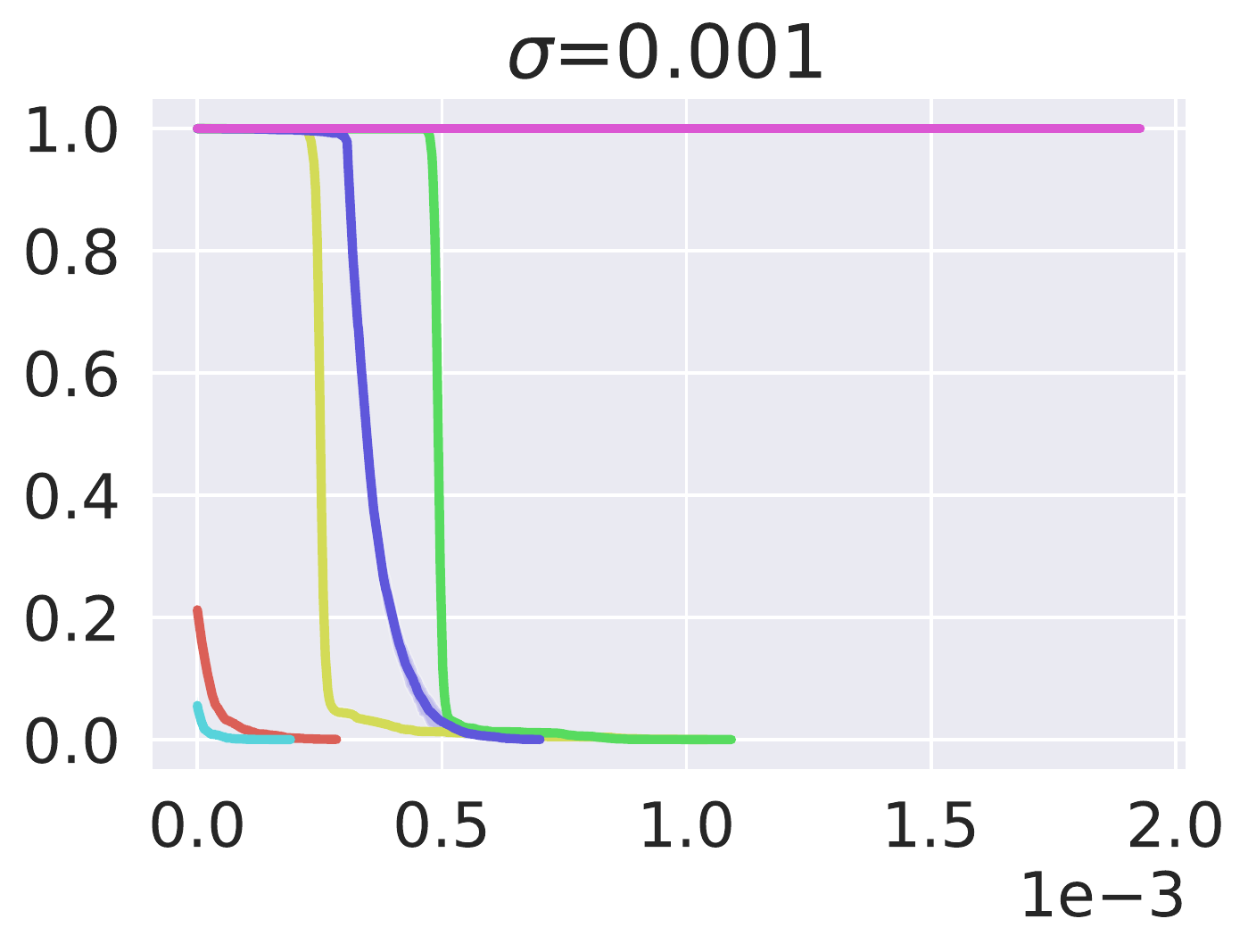}}%

\newlength{\legendheightratio}
\setlength{\legendheightratio}{0.3\utilheightb}%

\newcommand{\rowname}[1]
{\rotatebox{90}{\makebox[\utilheightb][c]{\tiny #1}}}

\centering

{
\renewcommand{\tabcolsep}{10pt}

\begin{subtable}[]{\linewidth}
\begin{tabular}{l}
\includegraphics[height=\legendheightratio]{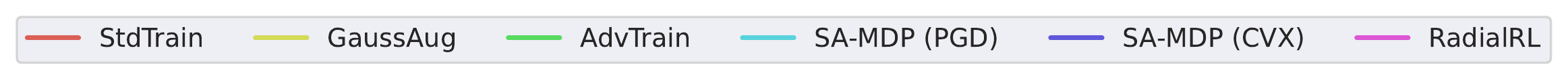}
\end{tabular}
\end{subtable}

\begin{subtable}[]{\linewidth}
\centering
\resizebox{\linewidth}{!}{%
\begin{tabular}{@{}p{5mm}@{}c@{}c@{}c@{}c@{}c@{}c@{}}
\rowname{\makecell{Freeway\\ratio \%}}&
\includegraphics[height=\utilheightb]{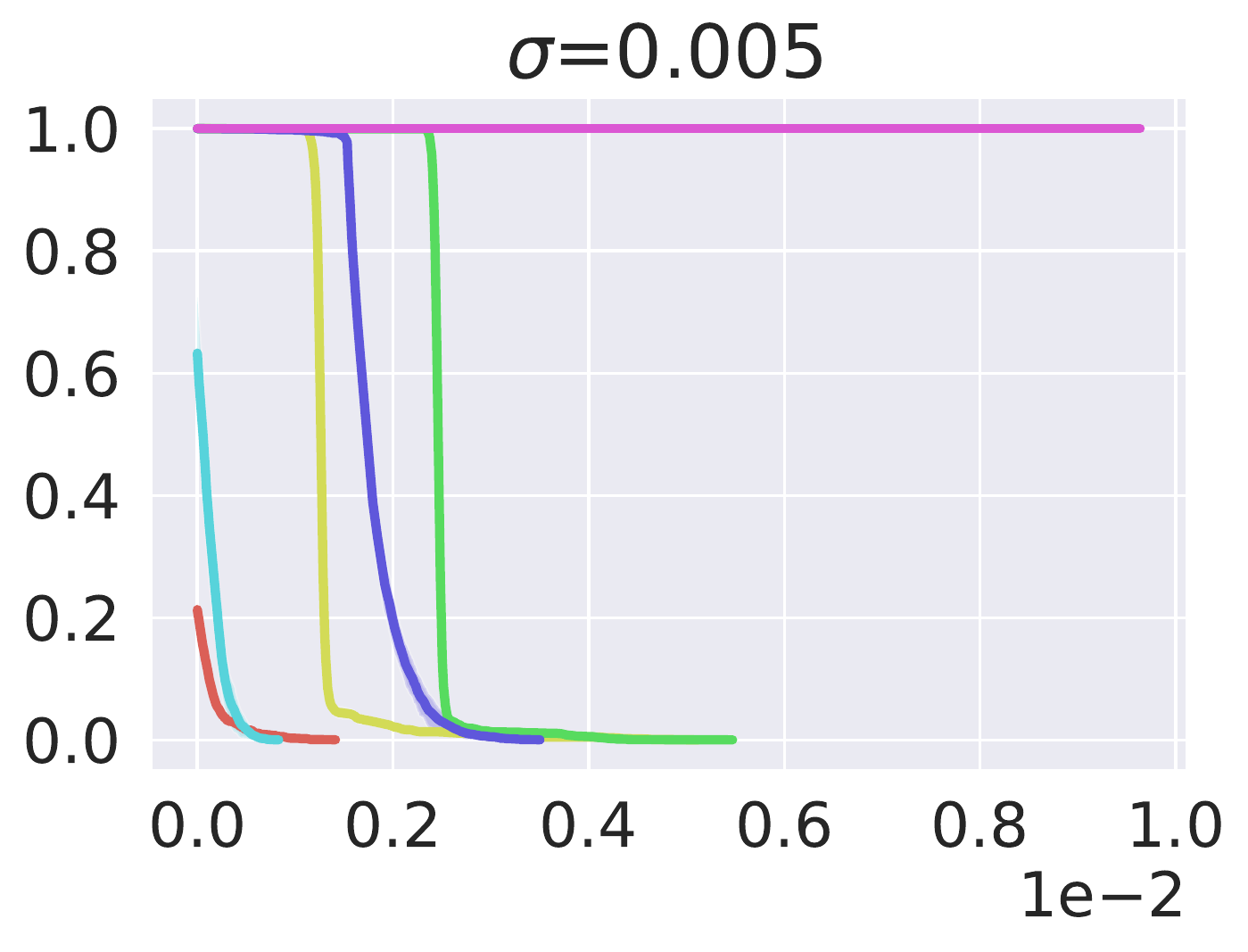}&
\includegraphics[height=\utilheightb]{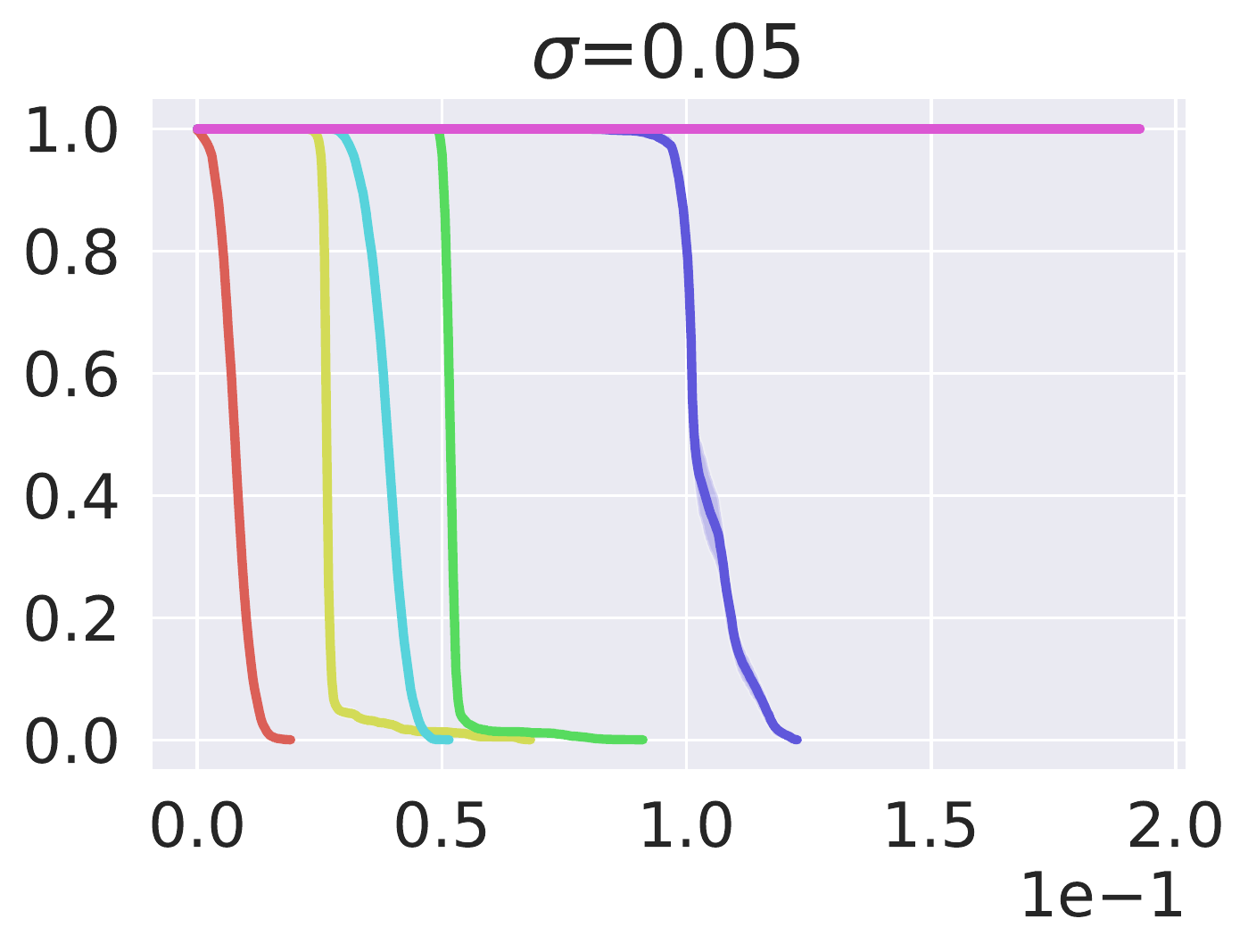}&
\includegraphics[height=\utilheightb]{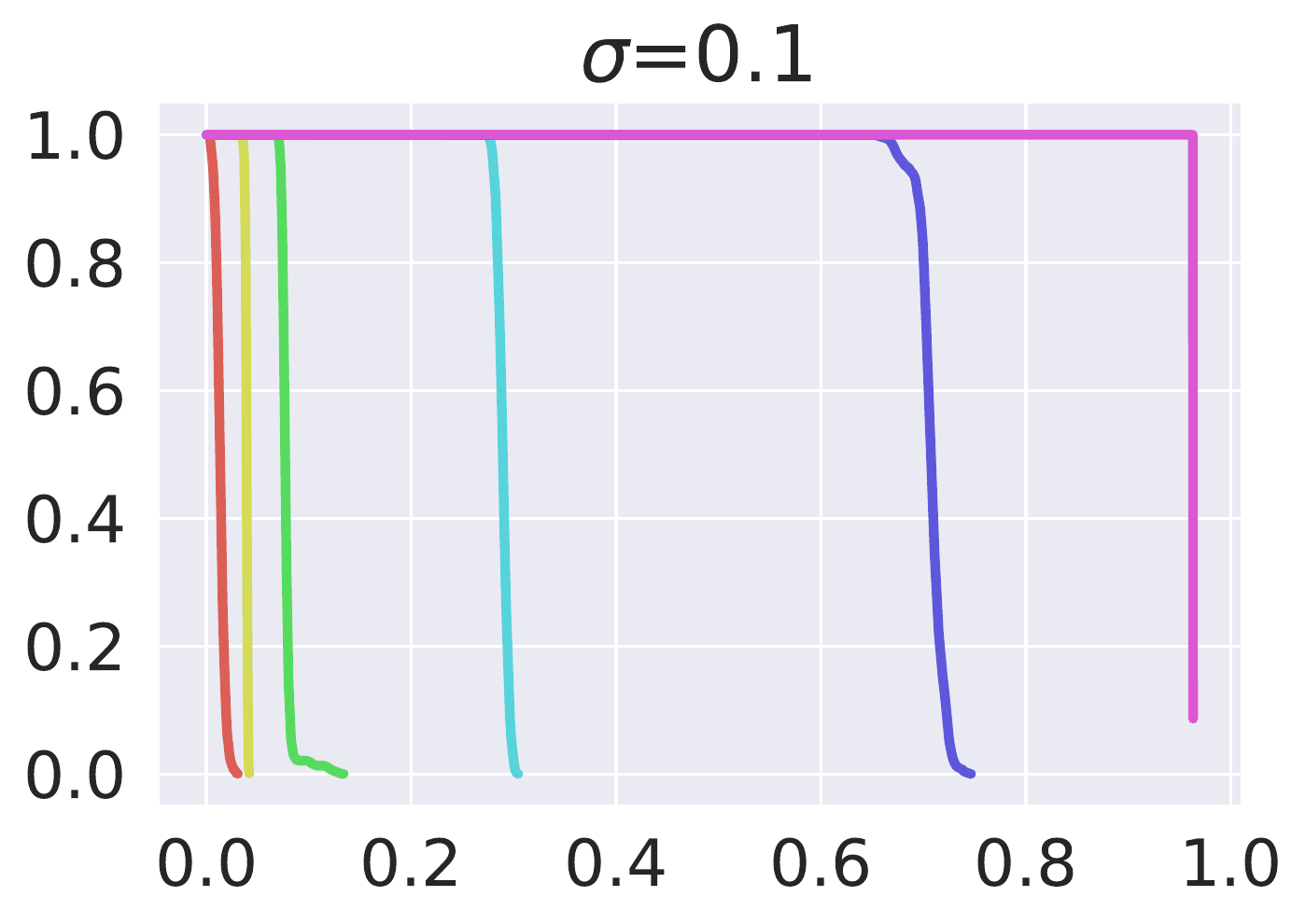}&
\includegraphics[height=\utilheightb]{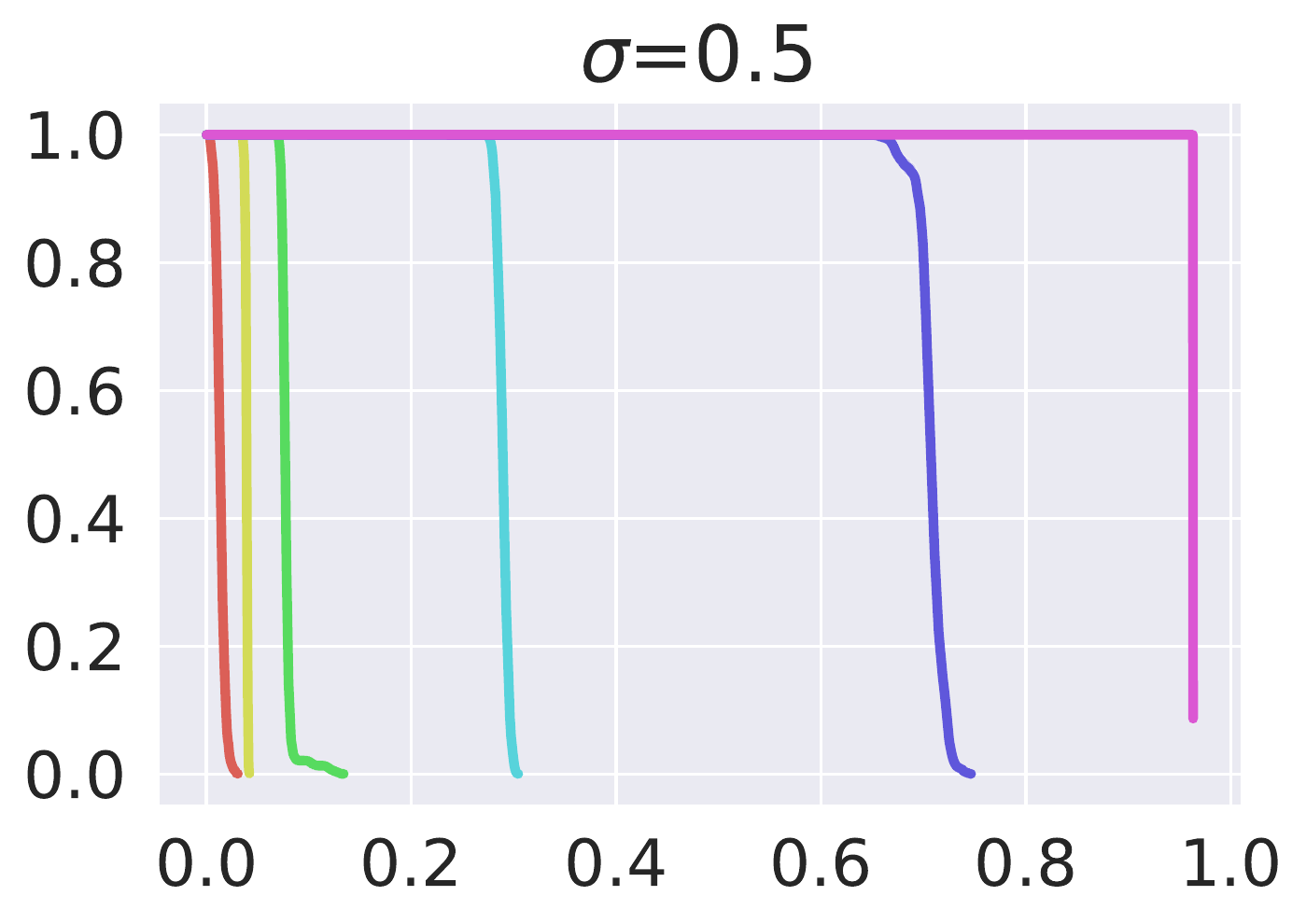}&
\includegraphics[height=\utilheightb]{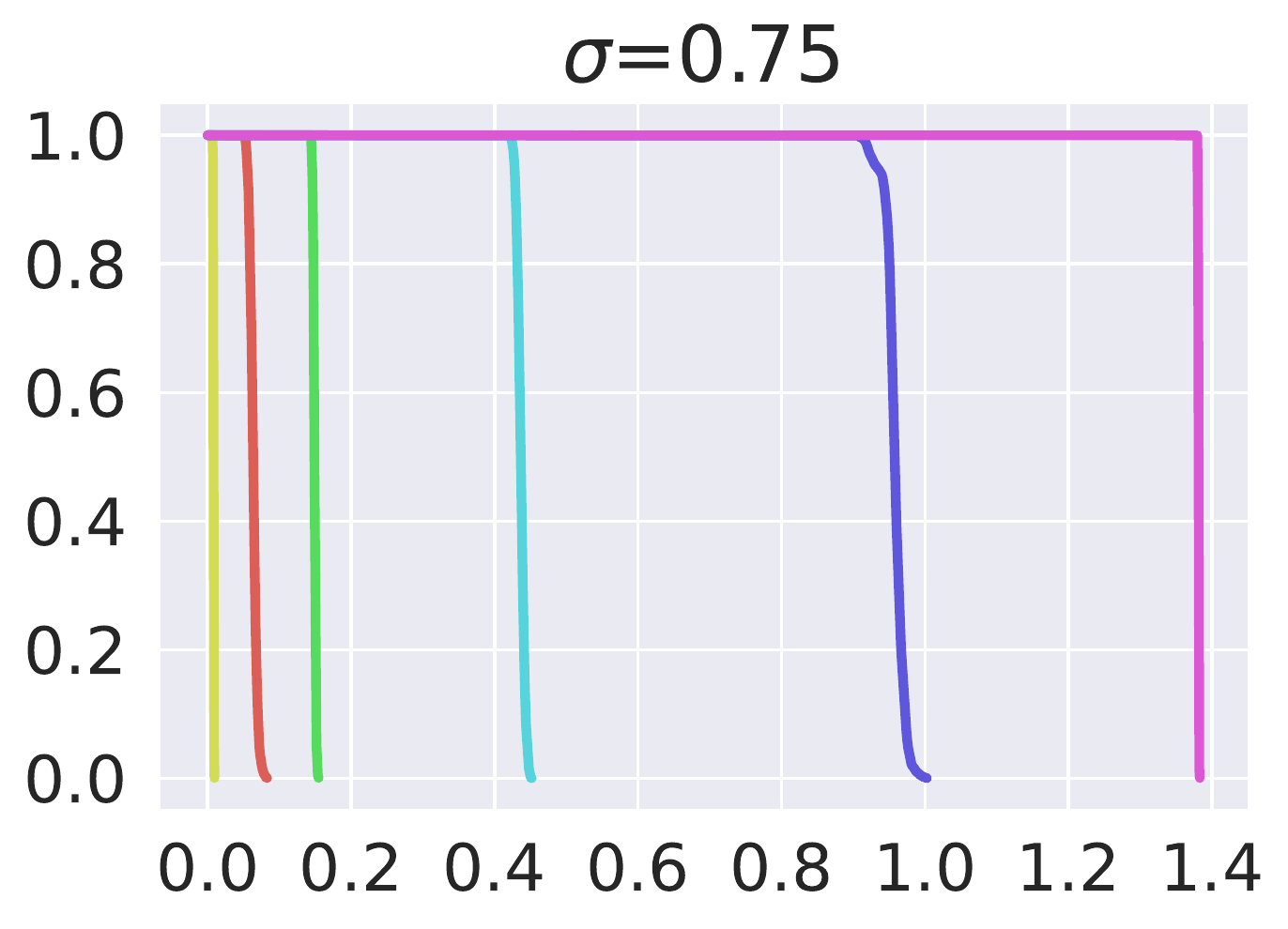}&
\includegraphics[height=\utilheightb]{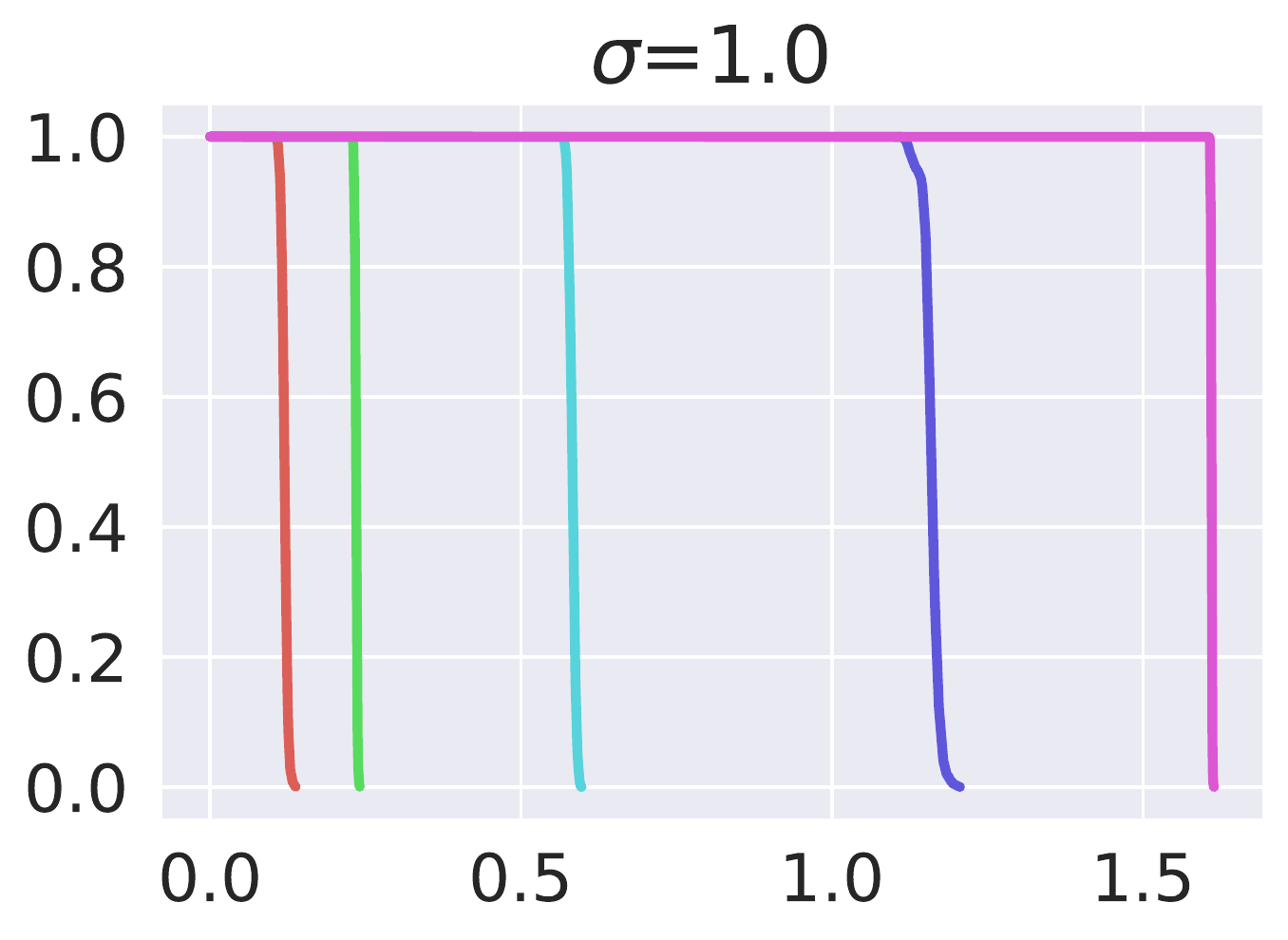}\\[-1.2ex]
\rowname{\makecell{Pong\\ratio \%}}&
\includegraphics[height=\utilheightb]{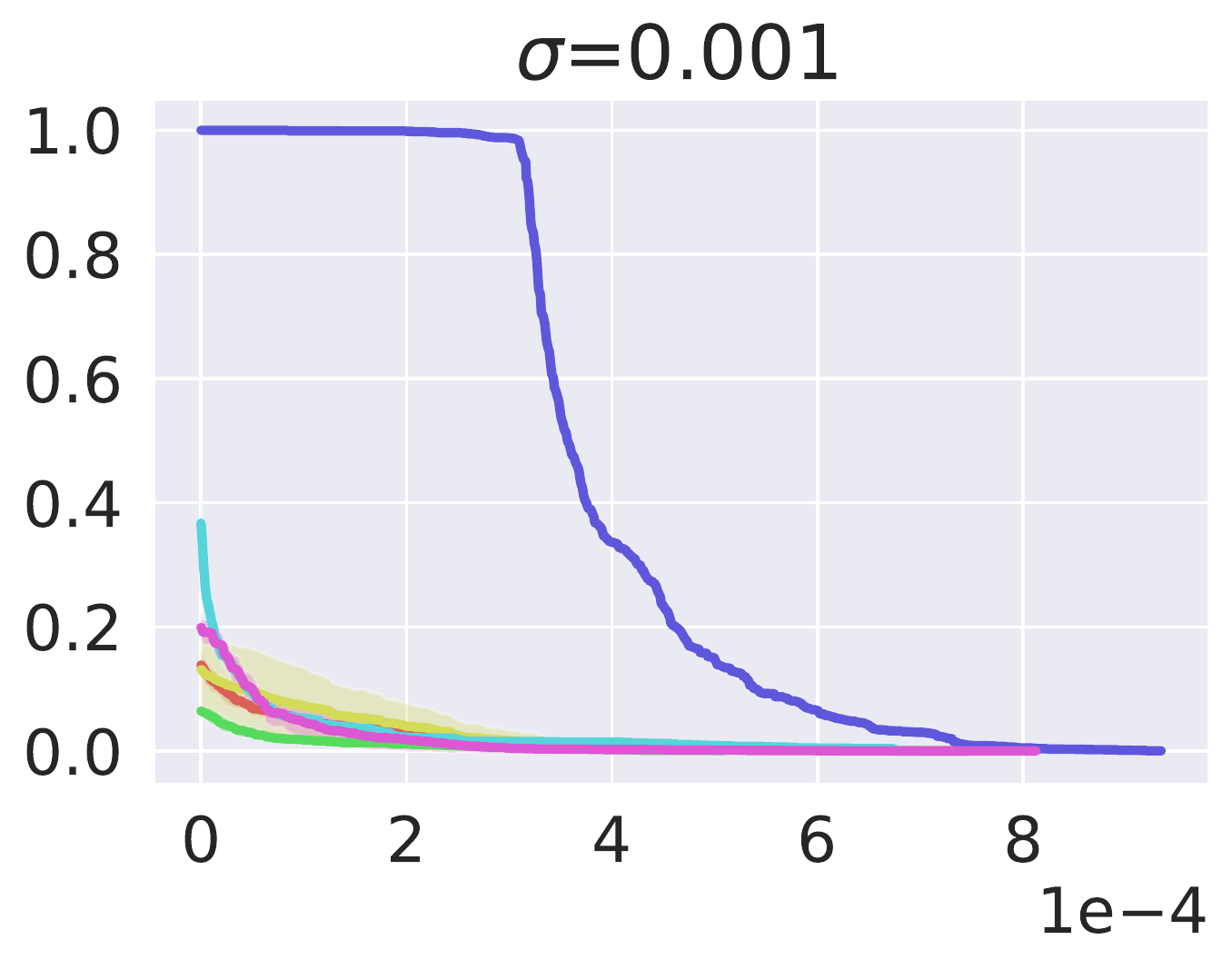}&
\includegraphics[height=\utilheightb]{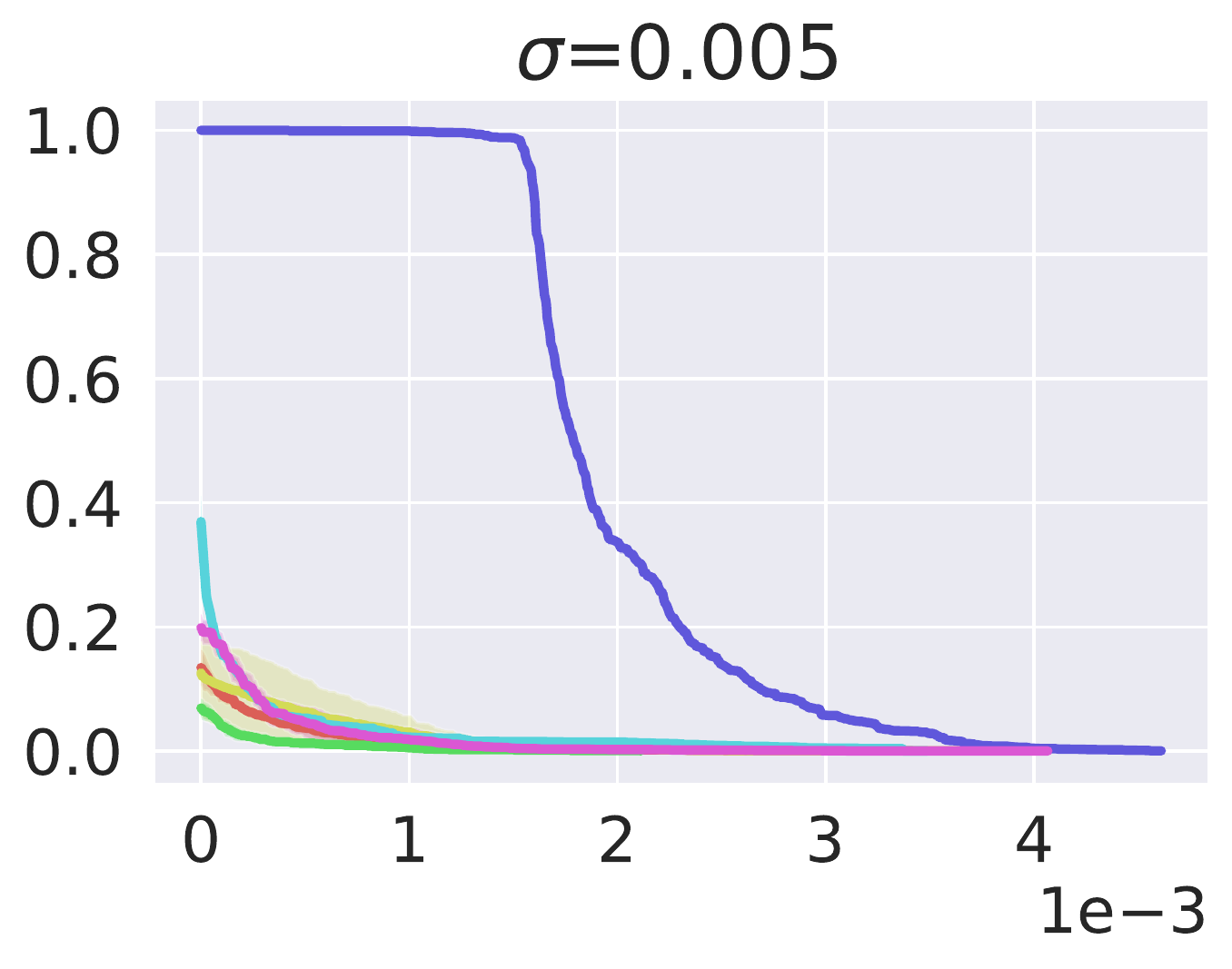}&
\includegraphics[height=\utilheightb]{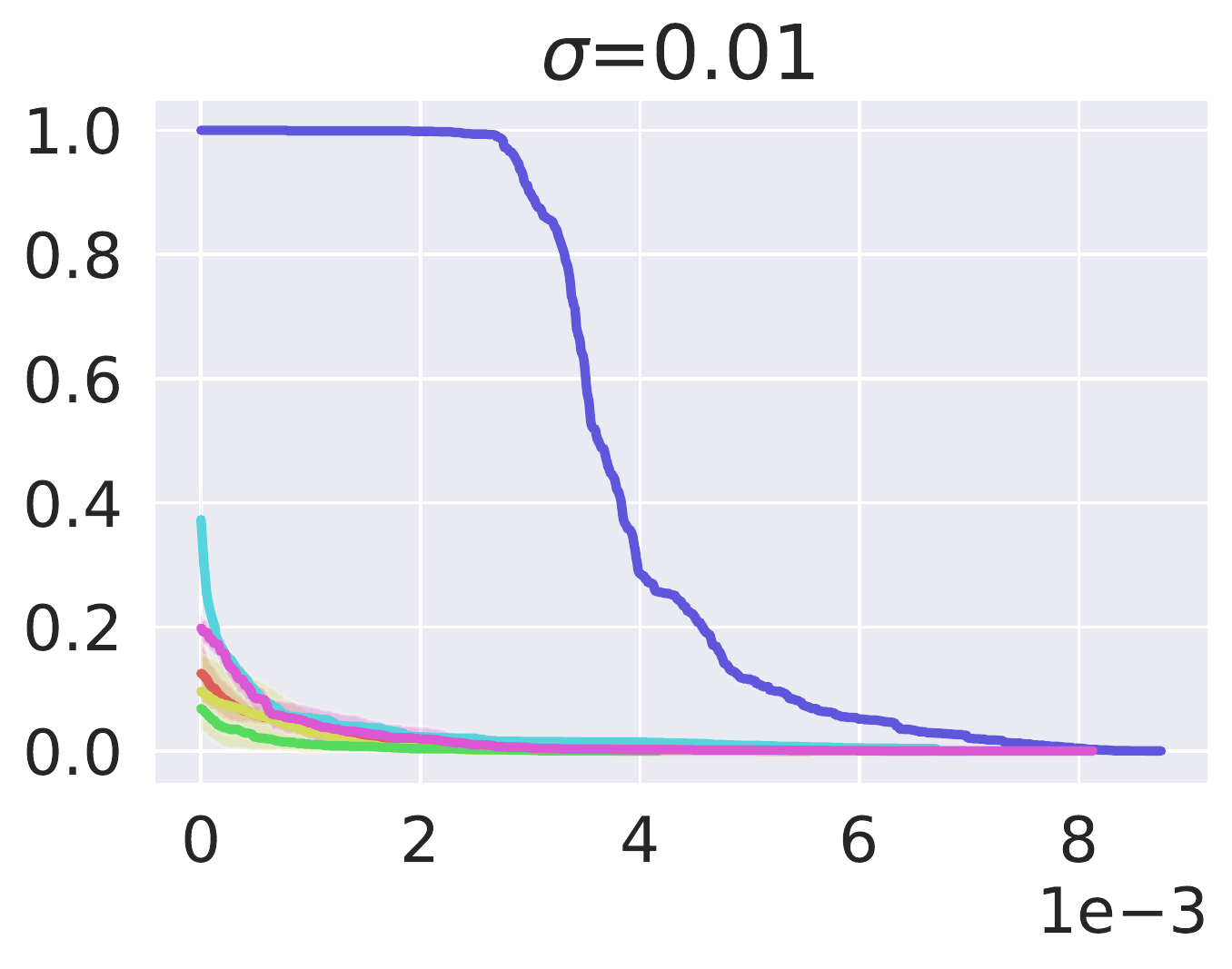}&
\includegraphics[height=\utilheightb]{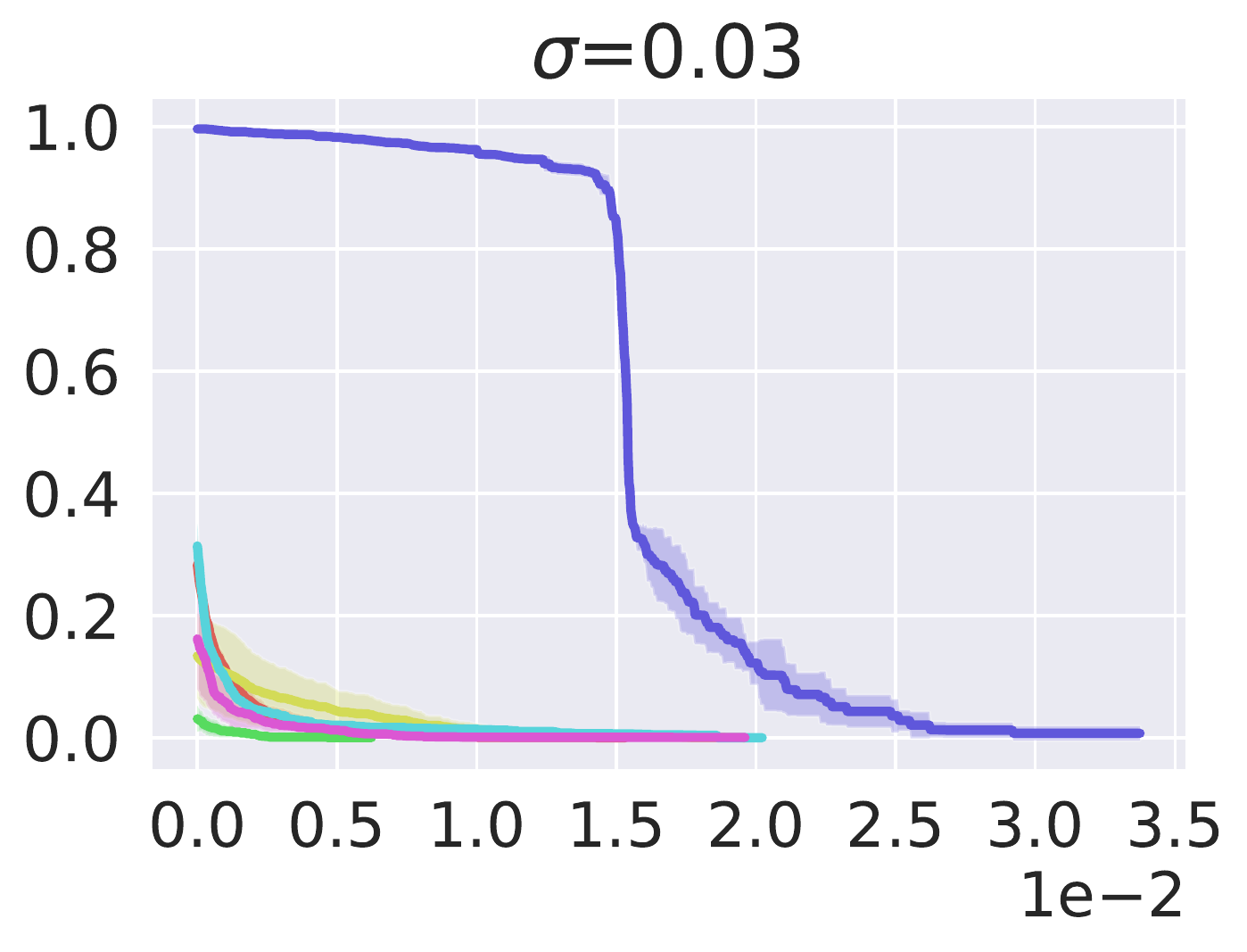}&
\includegraphics[height=\utilheightb]{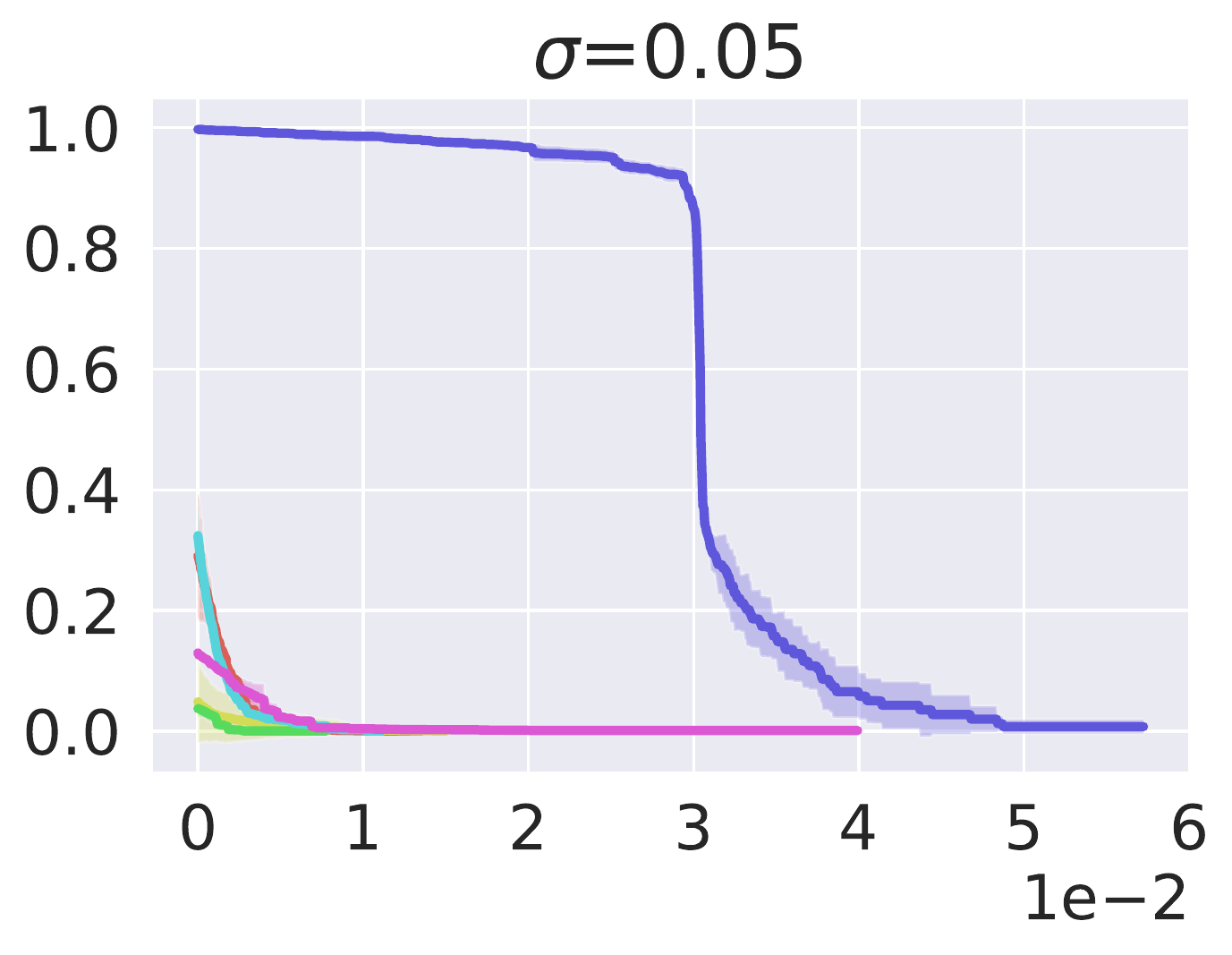}&
\includegraphics[height=\utilheightb]{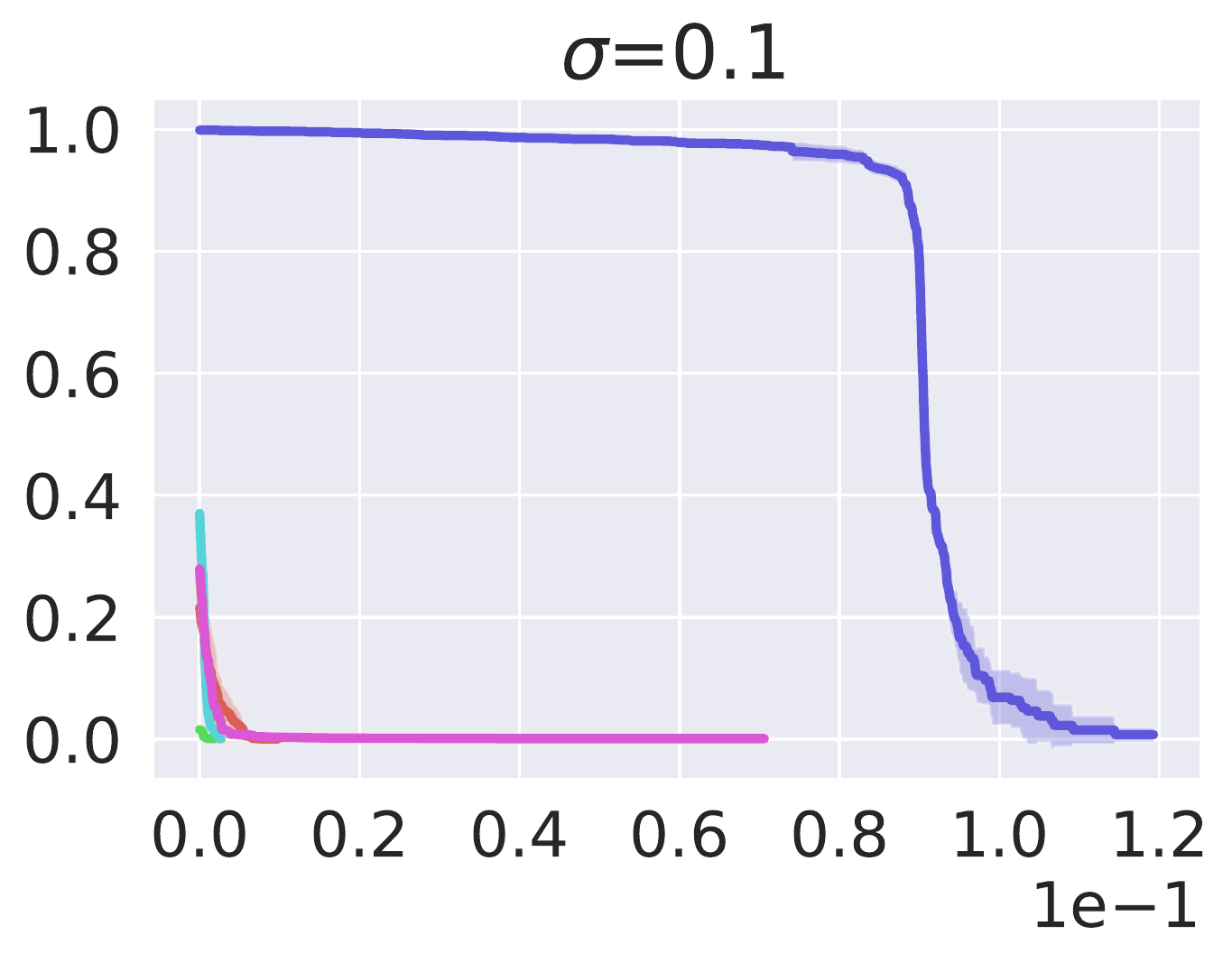}\\[-1.2ex]
        & \makecell{\tiny{radius $r$}}
        & \makecell{\tiny{radius $r$}}
        & \makecell{\tiny{radius $r$}}
        & \makecell{\tiny{radius $r$}}
        & \makecell{\tiny{radius $r$}}
        & \makecell{\tiny{radius $r$}}
\end{tabular}
}
\end{subtable}
}
\caption{\small Robustness certification for \textit{per-state action} in terms of certified ratio $\eta_r$ w.r.t. certified radius $r$. 
Each column corresponds to one smoothing parameter $\sigma$. The shaded area represents the standard deviation.
}%
\label{fig:cert-ratio}
\vspace{-2mm}
\end{figure}

We present the robustness certification for per-state action in terms of the certified ratio $\eta_r$ w.r.t. the certified radius $r$ in~\Cref{fig:cert-ratio}.
From the figure, we see that RadialRL is the most certifiably robust method on Freeway, followed by SA-MDP (CVX) and SA-MDP (PGD);
while on Pong, SA-MDP (CVX) is the most robust.
These conclusions are highly consistent with those in~\Cref{sec:eval-cert-rad} as observed from~\Cref{fig:statewise}.
Comparing the curves for Freeway and Pong, we note that Freeway not only achieves larger certified radius \textit{overall}, it also more \textit{frequently} attains large certified radius.

\subsection{Results of Freeway under Larger Smoothing Parameter $\sigma$}
\label{append:freeway-large-sigma}

\begin{figure}

\newlength{\utilheightlarge}
\settoheight{\utilheightlarge}{\includegraphics[width=.15\linewidth]{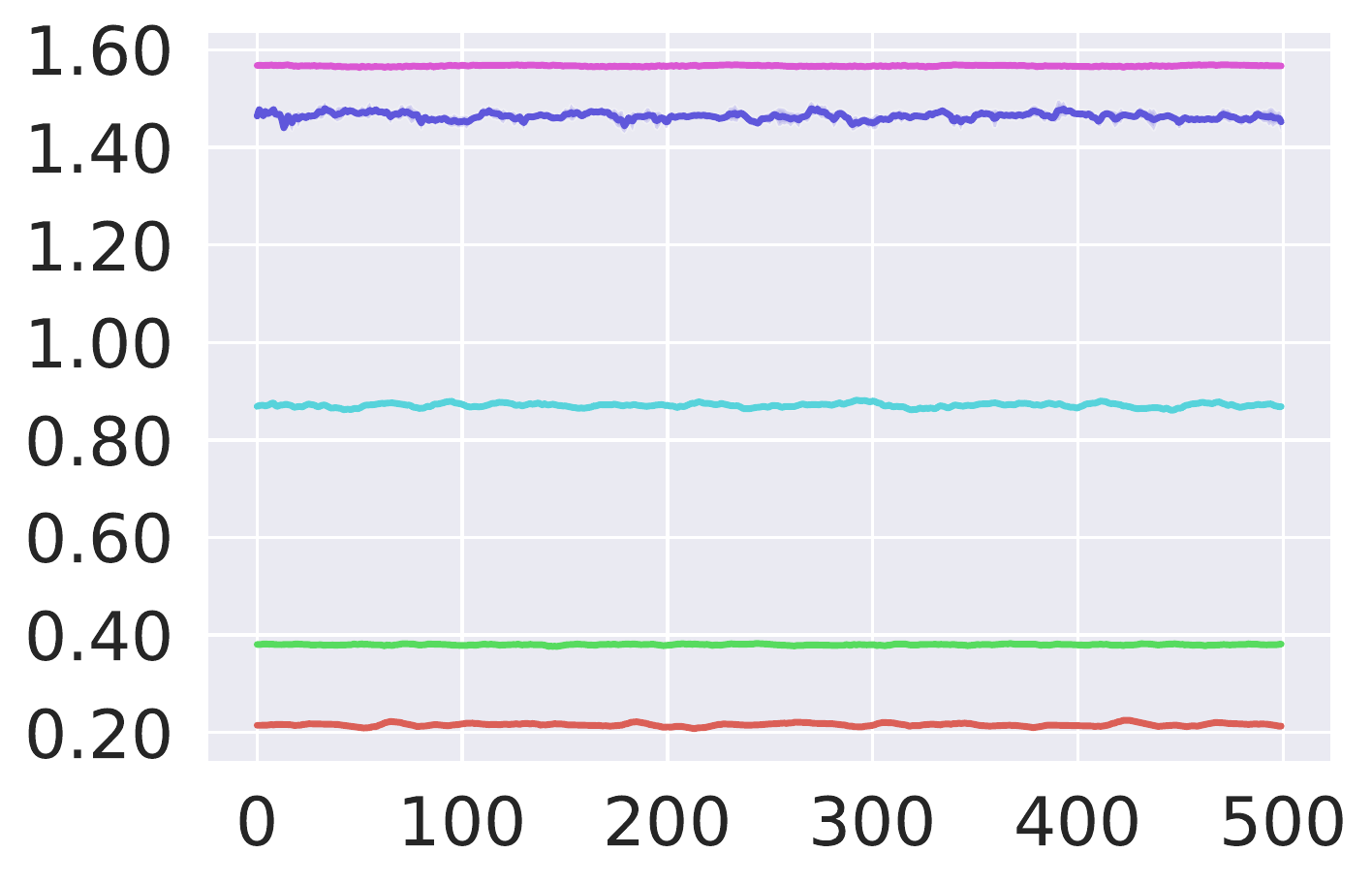}}%

\newlength{\utilheightlargeada}
\settoheight{\utilheightlargeada}{\includegraphics[width=.15\linewidth]{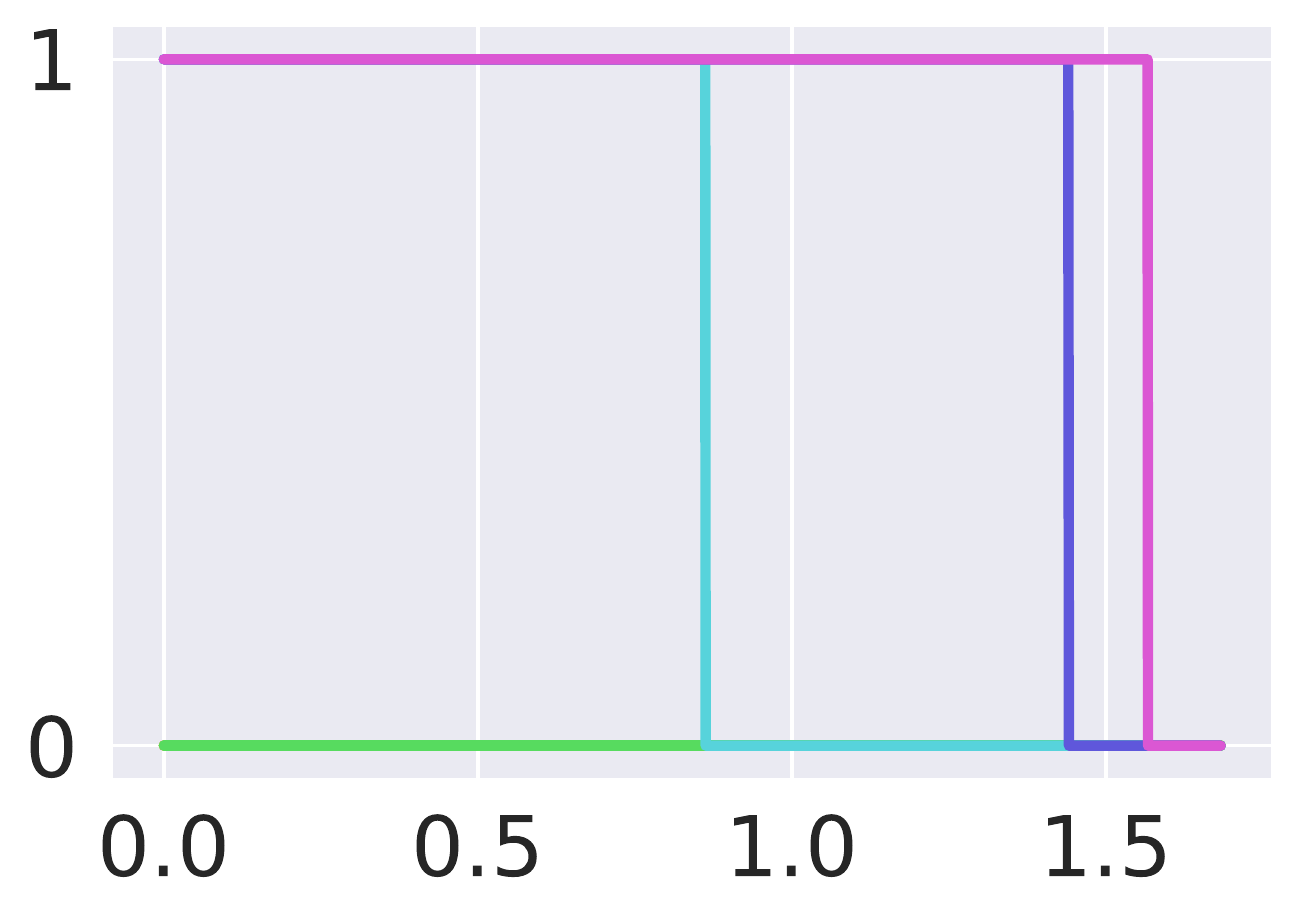}}%

\newlength{\legendheightlarge}
\setlength{\legendheightlarge}{0.4\utilheightlarge}%

\newcommand{\rowname}[1]
{\rotatebox{90}{\makebox[\utilheightlarge][c]{\tiny #1}}}

\centering

{
\renewcommand{\tabcolsep}{10pt}

\begin{subtable}[]{\linewidth}
\begin{tabular}{l}
\includegraphics[height=\legendheightlarge]{figures/legend_simp.pdf}
\end{tabular}
\end{subtable}

\begin{subtable}[]{0.48\linewidth}
\centering
\begin{tabular}{@{}p{3mm}@{}c@{}c@{}c@{}}
        & \makecell{\tiny{$\sigma=1.5$}}
        & \makecell{\tiny{$\sigma=2.0$}}
        & \makecell{\tiny{$\sigma=4.0$}}\\
\rowname{\makecell{Radius $r$}}&
\includegraphics[height=\utilheightlarge]{figures/Freeway_stepwise_sigma-1.5.pdf}&
\includegraphics[height=\utilheightlarge]{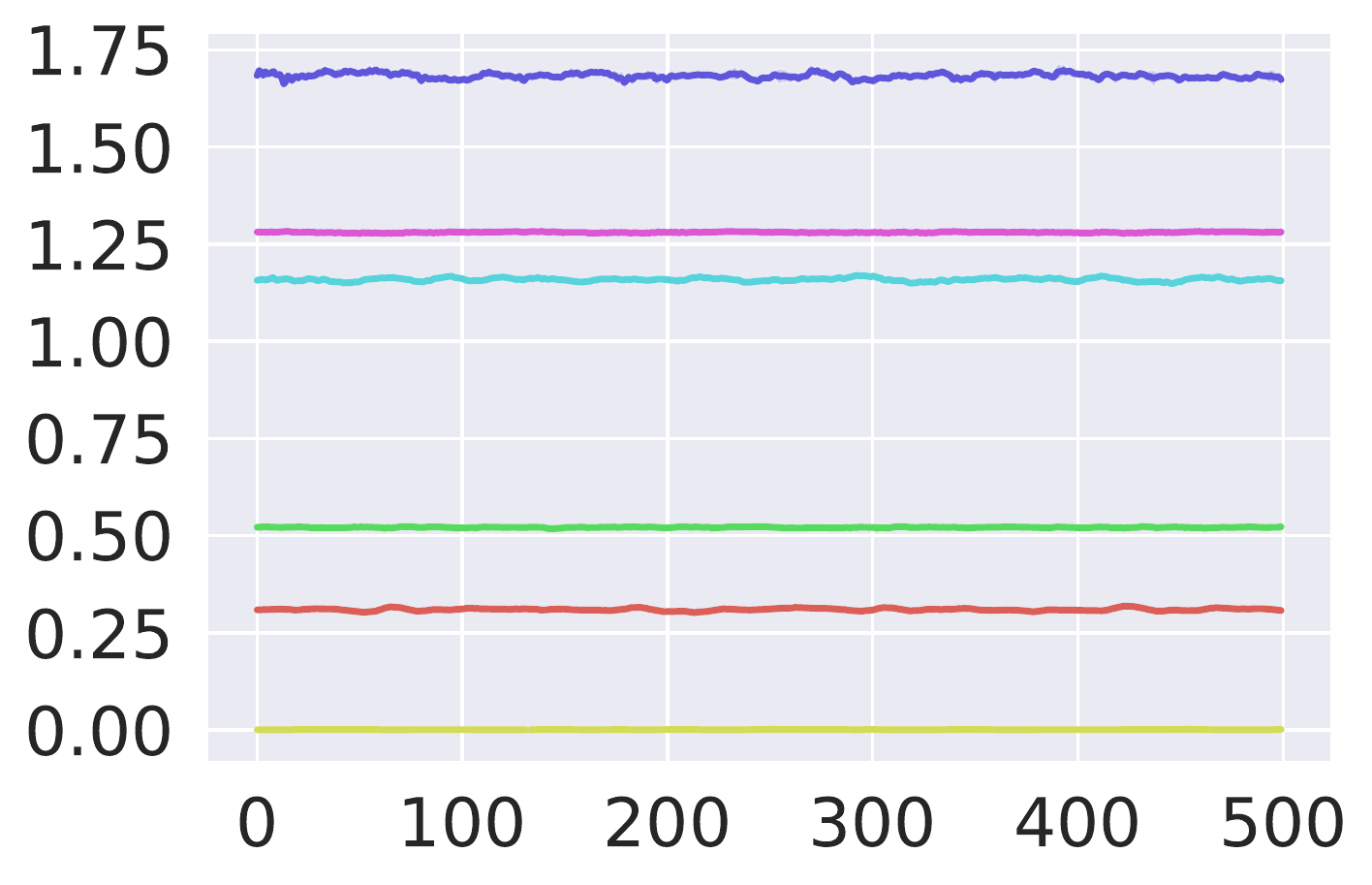}&
\includegraphics[height=\utilheightlarge]{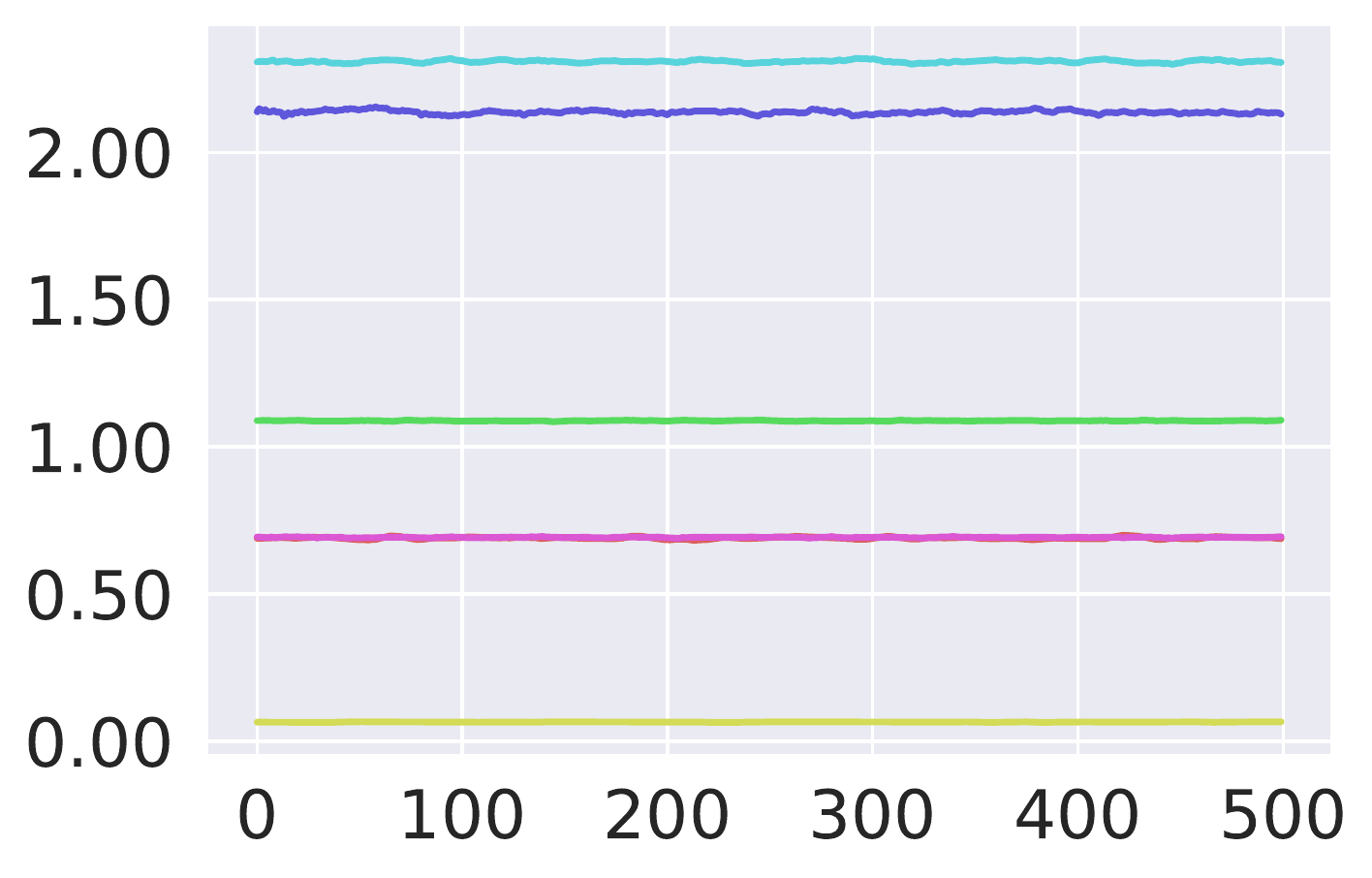}\\[-1.2ex]
        & \makecell{\tiny{time step $t$}}
        & \makecell{\tiny{time step $t$}}
        & \makecell{\tiny{time step $t$}}
\end{tabular}
\caption{\small Certified radius $R_t$ along time steps}\label{tab:cert-rad-large}
\end{subtable}
\begin{subtable}[]{0.48\linewidth}
\centering
\begin{tabular}{@{}p{3mm}@{}c@{}c@{}c@{}}
        & \makecell{\tiny{$\sigma=1.5$}}
        & \makecell{\tiny{$\sigma=2.0$}}
        & \makecell{\tiny{$\sigma=4.0$}}\\
\rowname{\makecell{$\uj$}}&
\includegraphics[height=\utilheightlargeada]{figures/Freeway_adasearch_sigma-1.5.pdf}&
\includegraphics[height=\utilheightlargeada]{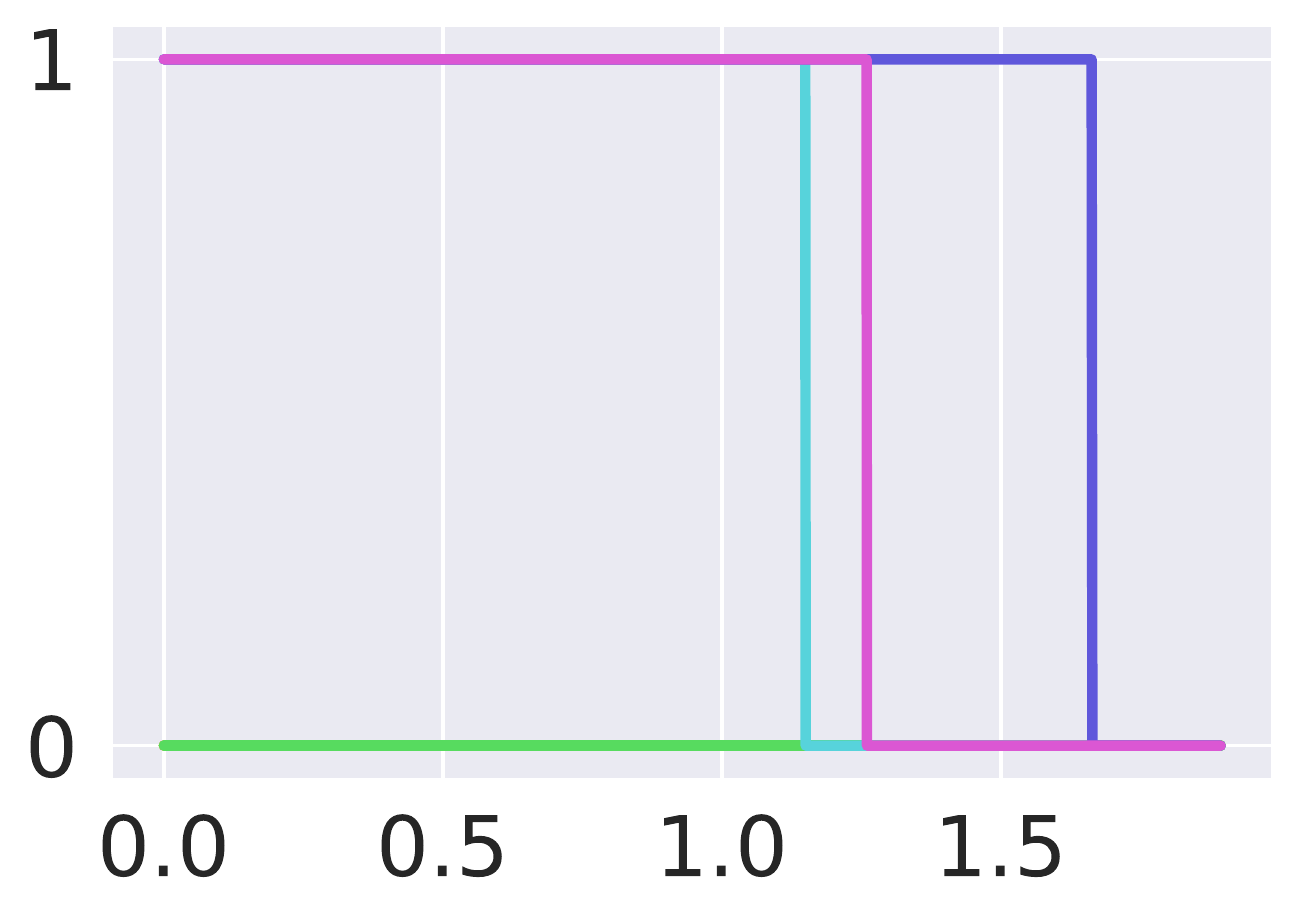}&
\includegraphics[height=\utilheightlargeada]{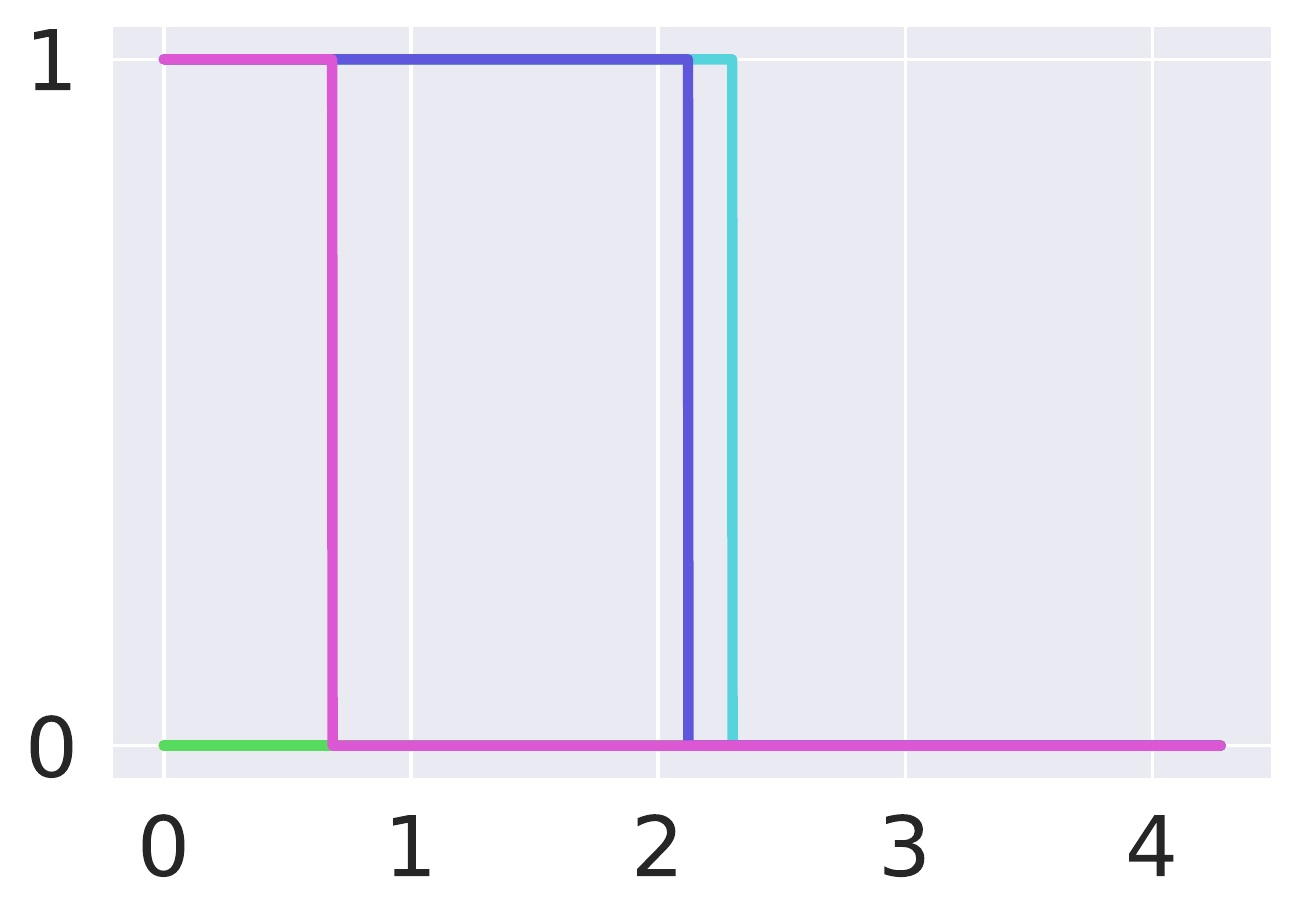}\\[-1.2ex]
        & \makecell{\tiny{attack $\eps$}}
        & \makecell{\tiny{attack $\eps$}}
        & \makecell{\tiny{attack $\eps$}}
\end{tabular}
\caption{\small Absolute lower bound $\uj$ w.r.t. attack $\eps$}\label{tab:cert-uj-large}
\end{subtable}

}
\caption{\small Robustness certification for (a) \textit{per-state action} in terms of certified radius $r$ at all time steps, and (b) \textit{cumulative reward} in terms of absolute lower bound $\uj$ w.r.t. the attack magnitude $\eps$. The results are reported under large smoothing parameter $\sigma$ compared with those evaluated in~\Cref{fig:statewise} and~\Cref{fig:all-bounds} in the main paper.
}%
\label{fig:statewise-large}
\vspace{-1em}
\end{figure}

Since the robustness of the methods RadialRL, SA-MDP (CVX), and SA-MDP (PGD) on Freeway have been shown to improve with the increase of the smoothing parameter $\sigma$ in~\Cref{sec:exp}, we subsequently evaluate their performance under even larger $\sigma$ values. We present the certification results in~\Cref{fig:statewise-large} and their benign performance in~\Cref{fig:clean-full}.
\newlength{\wrapheightb}
\settoheight{\wrapheightb}{\includegraphics[width=.40\textwidth]{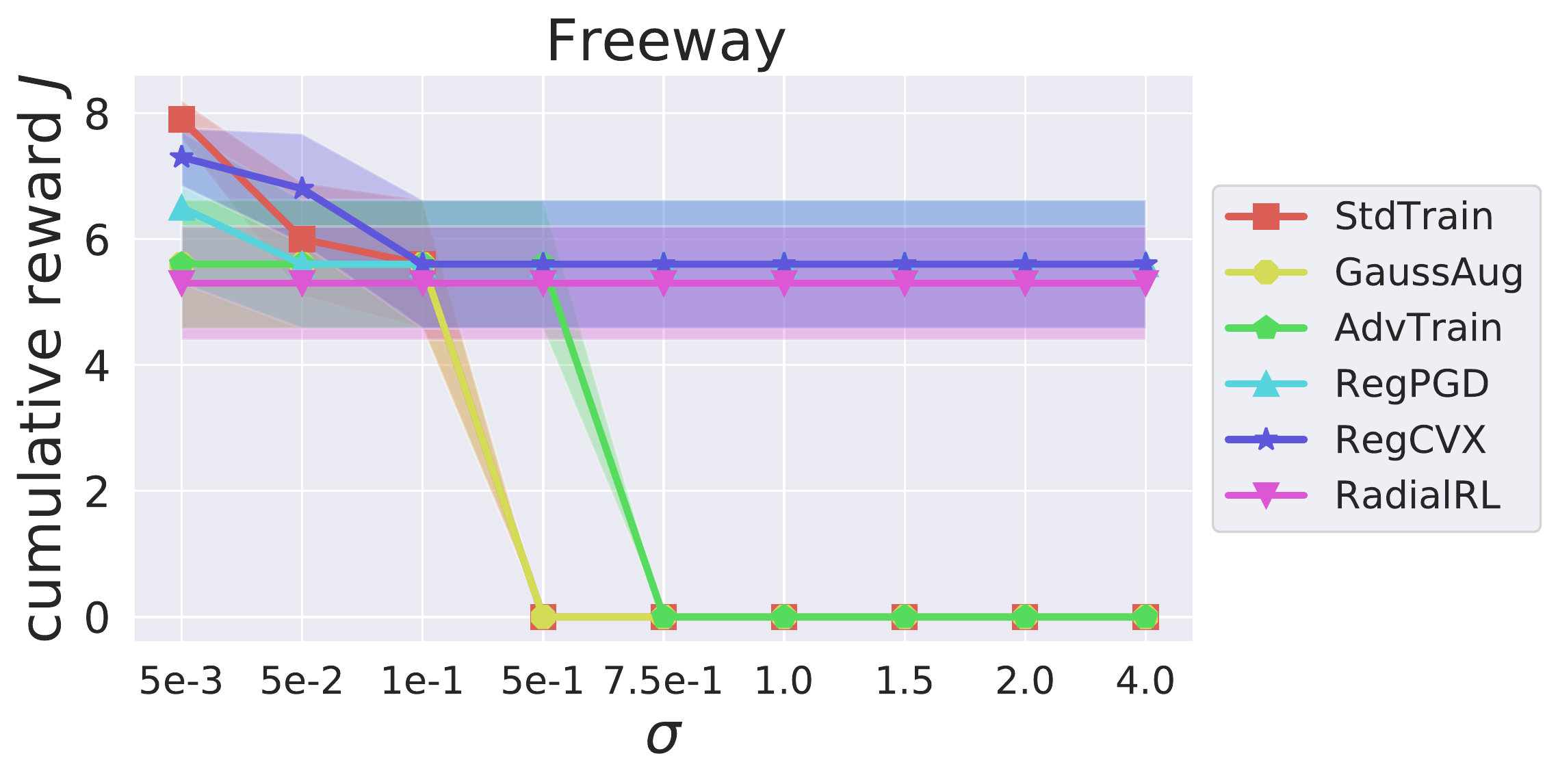}}%

\begin{wrapfigure}{r}{0.40\textwidth}
\vspace{-7mm}
\begin{center}
\includegraphics[height=\wrapheightb]{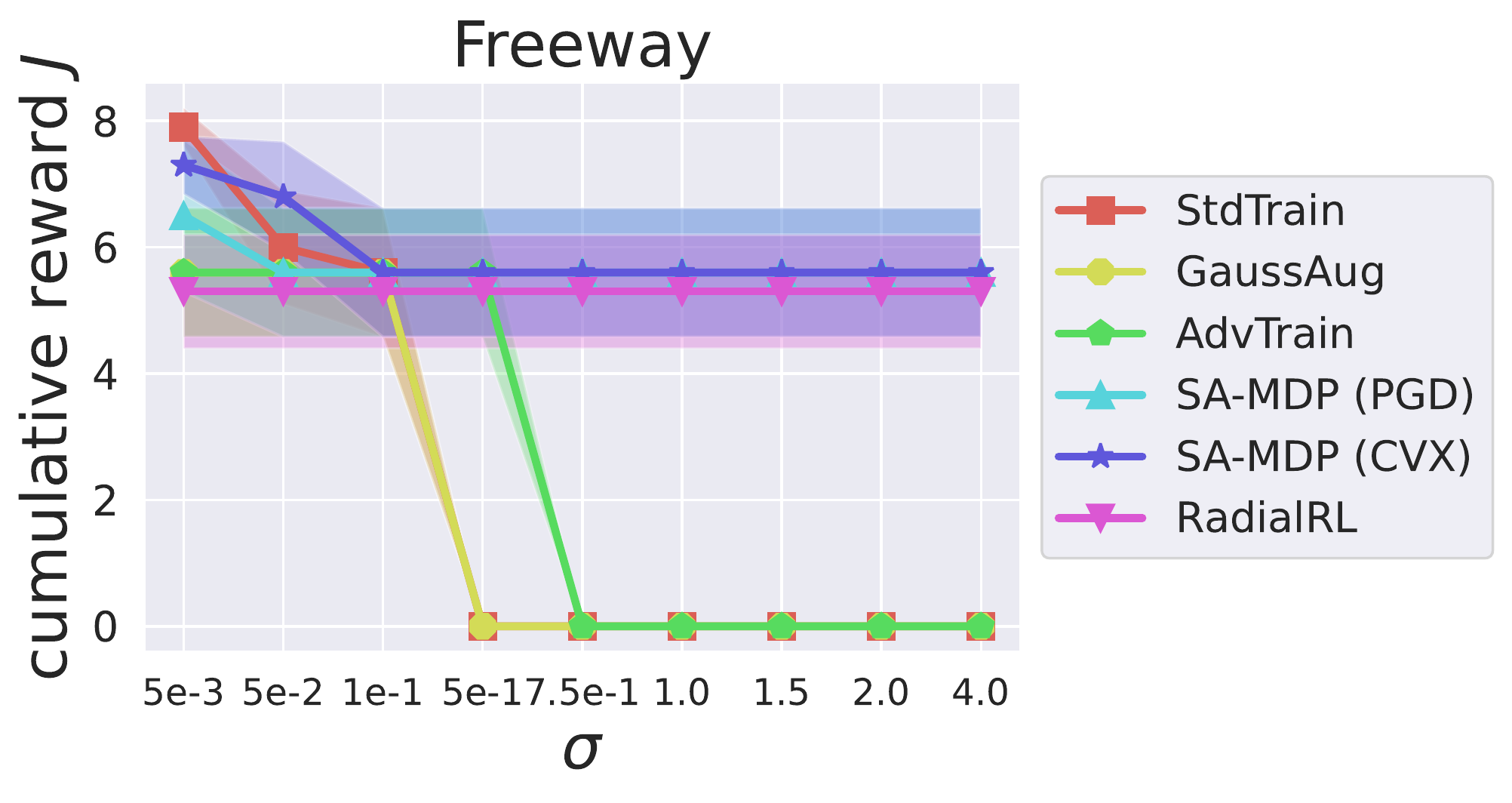}
\end{center}
\vspace{-5mm}
\caption{\small Benign performance of  locally smoothed policy $\tpi$ under a larger range of smoothing parameter $\sigma$ with clean state observations.}\label{fig:clean-full}
\end{wrapfigure}

First of all, as~\Cref{fig:clean-full} reveals, RadialRL, SA-MDP (CVX), and SA-MDP (PGD) can tolerate quite large noise---the benign performance of these methods does not drop even for $\sigma$ up to $4.0$.
For the remaining three methods, the magnitude of noise they can tolerate is much smaller, as already presented in~\Cref{fig:statewise-clean} in the main paper.
On the other hand, although all the three methods RadialRL, SA-MDP (CVX), and SA-MDP (PGD) invariably attain good benign performance as $\sigma$ grows, their certified robustness does not equally increase.
As~\Cref{fig:statewise-large} shows, the certified robustness of SA-MDP (PGD) increases the fastest, while that of RadialRL drops a little. 
The observation corresponds exactly to our discussion regarding the tradeoff between value function smoothness and the margin between the values to the top two actions in~\Cref{sec:cert-rad}.

\subsection{Periodic Patterns for Per-State Robustness}
\label{append:pong-periodic}

We specifically study the periodic patterns for per-state robustness in Pong and present the results in~\Cref{fig:pong-periodic}.
In the game frames, our agent controls the green paddle on the right. Whenever the opponent (the orange paddle on the left) misses catching a ball, we earn a point. 
We present two periods (frame 320-400 and frame 400-480) where
our agent each earns a point in the figure above. The total points increase from $2$ to $3$, and from $3$ to $4$, respectively.

We note that Pong is a highly periodic game, with each period consisting of the following four game stages corresponding to distinct stages of the \textit{certified robustness}:
1) A ball is initially fired and set flying towards the opponent. In this stage, the certified radius remains low, since our agent does not need to take any critical action.
2) The opponent catches the ball and reverses the direction of the ball. In this stage, the certified radius gradually increases, since the moves of our agent serve as the preparation to catch the ball.
3) The ball bounces from the ground, flies towards our agent, gets caught, and bounces back. In this stage, the certified radius increases rapidly, since every move is critical in determining whether we will be able to catch the ball, \ie, whether our point~(action value) will increase.
4) The opponent misses the ball, and the ball disappears from the game view. In this stage, the certified radius drastically plummets to a low value, since there is no clear signal (\ie, ball) in the game for the agent to take any determined action.

The illustration not only helps us understand the semantic meaning of the certified radius, but can further provide guidance on designing empirically robust RL algorithms that leverage the information regarding the high/low certified radius of the states.

\begin{figure}[]
    \centering
    \begin{tabular}{c|c}
\includegraphics[width=.475\linewidth]{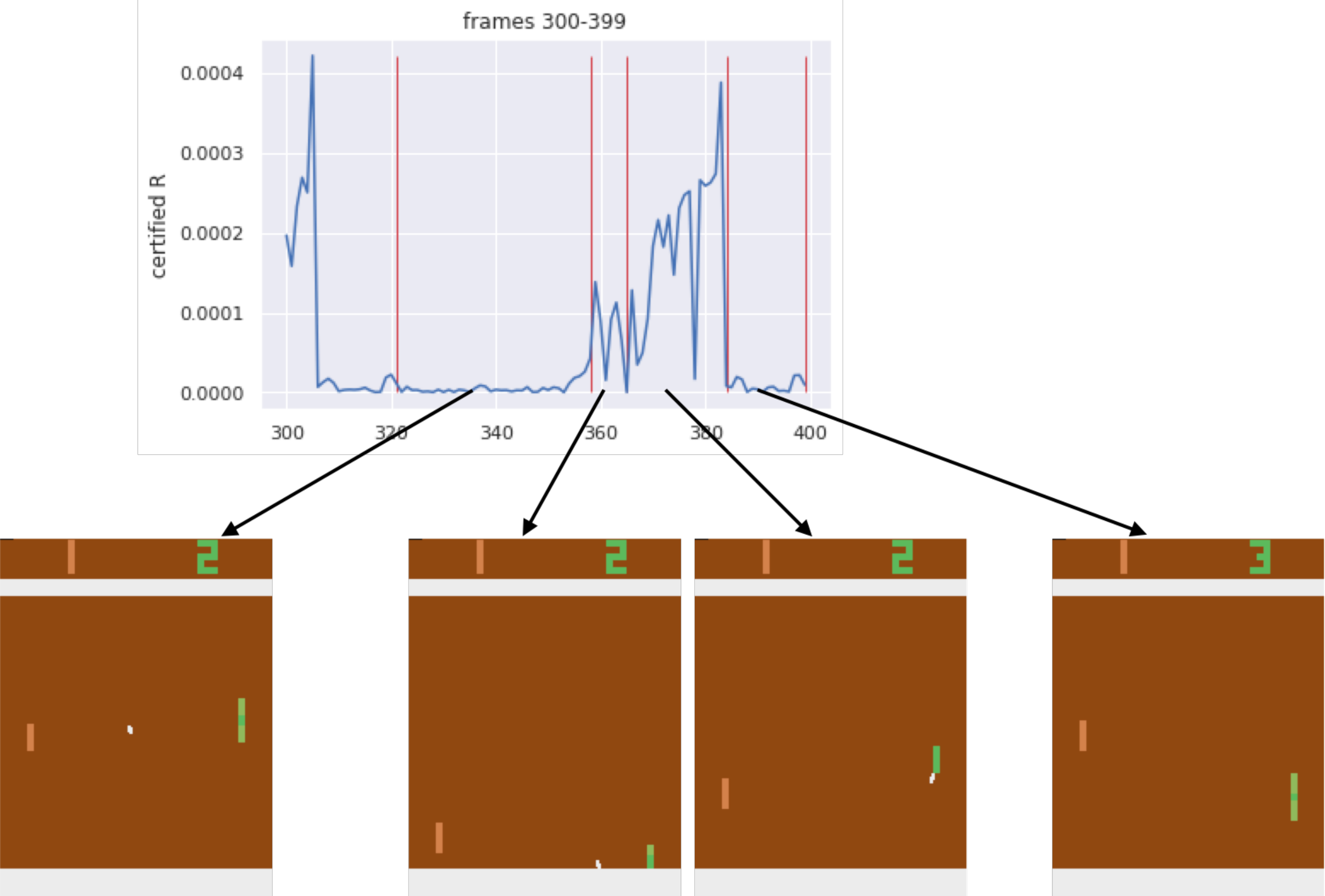}         &  \includegraphics[width=.475\linewidth]{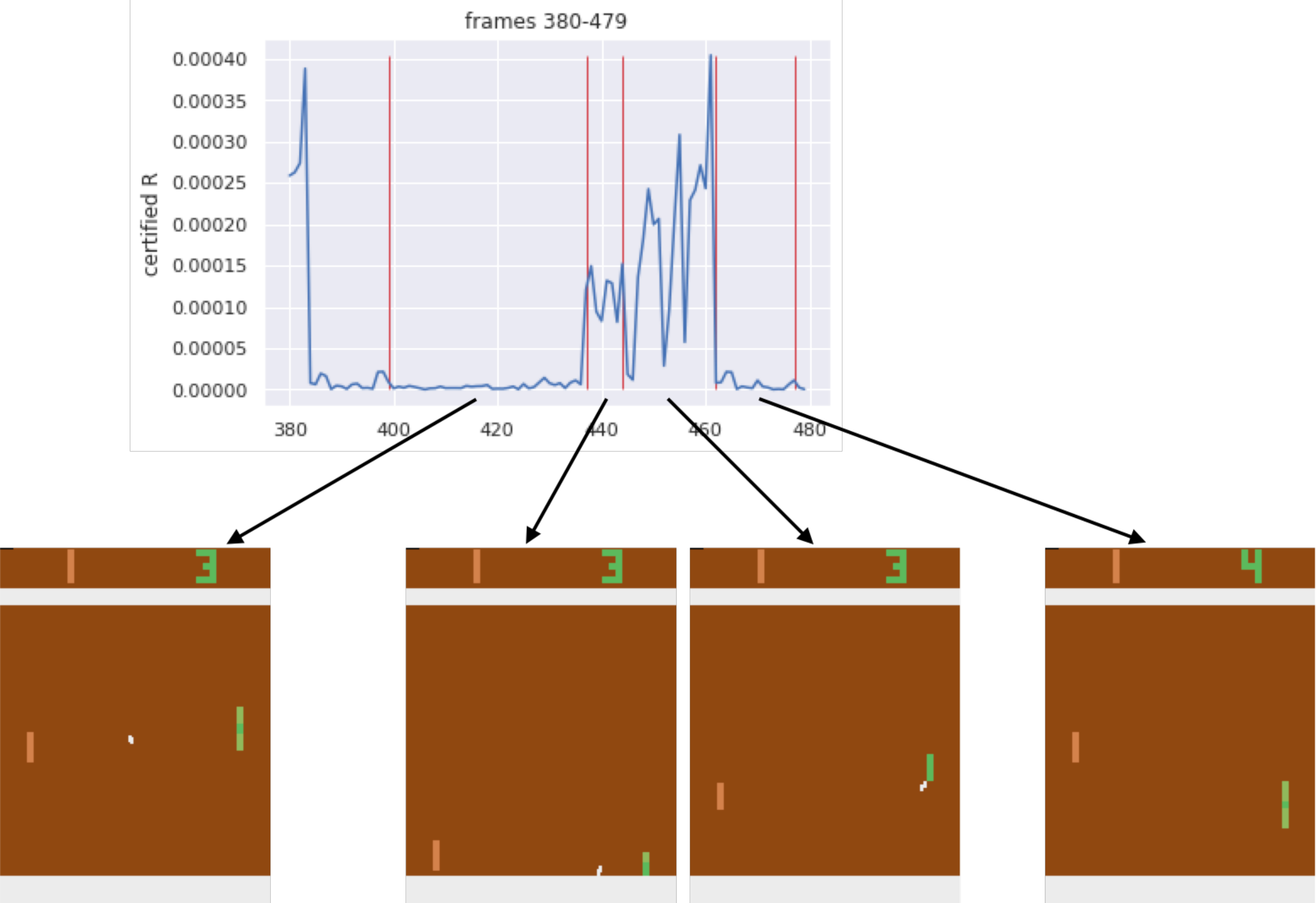}
    \end{tabular}
    \caption{\small Periodic patterns in Pong. 
    The figure shows two periods (left and right), including the certified radius $r$ w.r.t. the time steps (above), and
    the selected game frames corresponding to different stages in each period (below).
    Different periods are highly similar.}
    \label{fig:pong-periodic}
\end{figure}

\subsection{Detailed Discussions and Selection of Smoothing Parameter $\sigma$}
\label{append:discuss-sigma}

We first take Freeway as an example to concretely explain the impact of smoothing variance from the perspective of benign performance and certiﬁed results, and then introduce the guidelines for the selection of $\sigma$ based on different certification criteria.

On \uit{Freeway}, 
as $\sigma$ increases, the \textit{benign performance} of the locally smoothed policy $\tpi$ of StdTrain, GaussAug, and AdvTrain decrease rapidly in~\Cref{fig:statewise-clean}. 
Also, the \textit{certified radius} of these methods barely gets improved and remains low even as $\sigma$ increases.
For the \pbound $\ujp$, these three methods similarly suffer a drastic decrease of \textit{cumulative reward} as $\sigma$ grows.
This is however not the case for SA-MDP (PGD), SA-MDP (CVX), and RadialRL, for which not only the \textit{benign performance} do not experience much degradation for $\sigma$ as large as $1.0$, but the \textit{certified radius} even steadily increases. 
In addition, in terms of the \pbound $\ujp$,
as $\sigma$ grows,
even we increase the attack $\eps$ simultaneously,
$\ujp$ almost does not decrease.
This indicates that larger smoothing variance can bring more robustness to SA-MDP (PGD), SA-MDP (CVX), and RadialRL.

\textbf{\staters.}\quad
We discuss how to find the sweet spot that enables high benign performance and robustness simultaneously.
As $\sigma$ increases, the benign performance of different methods  generally decreases due to the noise added, while not all methods decrease at the same rate. 
On \underline{\textit{Freeway}}, StdTrain and GaussAug decrease the fastest, while SA-MDP (PGD), SA-MDP (CVX), and RadialRL do not degrade much even for $\sigma$ as large as $4.0$.
Referring back to~\Cref{fig:statewise} and~\Cref{fig:statewise-large}, we see that the certified radius of StdTrain and GaussAug remains low even though $\sigma$ increases, thus the best parameters for these two methods are both around $0.1$.

For SA-MDP (PGD) and SA-MDP (CVX), their certified radius instead steadily increases as $\sigma$ increases to $4.0$, indicating that larger smoothing variance can bring more robustness to these two methods without sacrificing benign performance, offering certified radius larger than $2.0$.
As to RadialRL, its certified radius reaches the peak at $\sigma=1.0$ and then decreases under larger $\sigma$. 
Thus, $1.0$ is the best smoothing parameter for RadialRL which offers a certified radius of $1.6$.
Generally speaking, it is feasible to select an appropriate smoothing parameter $\sigma$ for all methods to increase their robustness, and robust methods will benefit more from this by admitting larger smoothing variances and achieving larger certified radius.

\textbf{\glbrs.}\quad
We next discuss how to leverage~\Cref{fig:all-bounds} to assist with the selection of $\sigma$.
For a given $\eps$ or a known range of $\eps$, we can compare between the lower bounds corresponding to different $\sigma$ and select the one that gives the highest lower bound.
For example, if we know that $\eps$ is small in advance, we can go for a relatively small $\sigma$ which retains a quite high lower bound. 
If we know that $\eps$ will be large, we will instead choose among the larger $\sigma$'s, since a larger $\eps$ will not fall in the certifiable ranges of smaller $\sigma$'s. 
According to this guideline, we are able to obtain lower bounds of quite high value
for RadialRL, SA-MDP (CVX), and SA-MDP (PGD) on Freeway under a larger range of attack $\eps$.

\textbf{\adasearch.}\quad
Apart from the conclusion that larger smoothing parameter $\sigma$ often secures higher lower bound $\uj$, we point out that
smaller smoothing variances may be able to lead to a higher lower bound than larger smoothing variances for a certain range of $\eps$.
This is almost always true for very small $\eps$ since smaller $\sigma$ is sufficient to deal with the weak attack without sacrificing much empirical performance, \eg, in~\Cref{fig:all-bounds},
when $\eps=0.001$, SA-MDP (CVX) on Freeway can achieve $\uj=3$ at $\sigma=0.005$ while only $\uj=1$ for large $\sigma$.
This can also happen to robust methods at large $\eps$, \eg,
when $\eps=1.2$, RadialRL on Freeway achieves $\uj=2$ at $\sigma=0.75$ while only $\uj=1$ at $\sigma=1.0$.
Another case is when $\sigma$ is large enough such that further increasing $\sigma$ will not bring additional robustness, \eg, in~\Cref{fig:statewise-large},
when $\eps=1.5$, RadialRL on Freeway achieves $\uj=1$ at $\sigma=1.5$ while only $\uj=0$ at $\sigma=4.0$.

\subsection{Computational Cost}
\label{append:comp-cost}

Our experiments are conducted on  GPU machines, including GeForce RTX 3090, GeForce RTX 2080 Ti, and GeForce RTX 1080 Ti. 
The running time of $m=10000$ forward passes with sampled states for per-state smoothing ranges from $2$ seconds to $9$ seconds depending on the server load.
Thus, for one experiment of $\staters$ with trajectory length $H=500$ and $10$ repeated runs, the running time ranges from $2.5$ hours to $12.5$ hours.
\glbrs is a little more time-consuming than \staters, but the running time is still in the same magnitude. 
For trajectory length $H=500$ and sampling number $m=10000$, most of our experiments finish within $3$ hours.
For \adasearch specifically, the running time depends on the eventual size of the search tree, and therefore differs tremendously for different methods and different smoothing variances, ranging from a few hours to $4$ to $5$ days.








%
\vspace{-1mm}
\subsection{Discussion on Evaluation Results of CartPole}
\label{append:cartpole}

We provide the evaluation results for \sysname on CartPole, comparing nine RL algorithms: the six algorithms evaluated in our main paper (StdTrain, GaussAug, AdvTrain, SA-MDP (PGD), SA-MDP (CVX), and RadialRL), as well as three additional algorithms (CARRL, NoisyNet, and GradDQN). The detailed descriptions of these algorithms are provided in~\Cref{append:impl-rl}.
We present the evaluation results in~\Cref{tab:cartpole-figs}.

\textcolor{black}{\textbf{Impact of Smoothing Parameter $\sigma$.}\quad
In CartPole game, we draw similar conclusions regarding the impact of smoothing variance from the results given by $\staters$, \glbrs and \adasearch. For small $\sigma$, different methods are indistinguishable. As $\sigma$ increases, CARRL demonstrates its advantage as $\sigma$ increases, but AdvTrain and GradDQN tolerate large noise. One of differences when evaluating with \glbrs and \adasearch is that RadialRL can hold a high performance and even sometimes can be better than CARRL. Another one is that the lower bound for GradDQN is lower than all other methods in \adasearch. }

\textcolor{black}{\textbf{Tightness of the certiﬁcation $\uje$, $\ujp$ and $\uj$.}\quad 
In CartPole, we also compare the empirical cumulative rewards achieved under PGD attacks with our certiﬁed lower bounds $\uje$, $\ujp$ and $\uj$. Demonstrated by \Cref{tab:cartpole-bounds}, first of all, the correctness of our bounds is validated because the empirical results are consistently lower bounded by our certiﬁcations. Compared with the loose expectation bound $\uje$, the improved percentile bound $\ujp$ is much tighter. Compared with these two methods, the absolute lower bound $\uj$ is even tighter, especially noticing the zero gap between the certiﬁcation and the empirical result when the attack magnitude is not too large. Finally, methods with the same empirical results under attack may achieve very different certified lower bounds $\uje$, $\ujp$ and $\uj$, showing the importance of certification.
}


\textcolor{black}{\textbf{Environment Properties.}\quad 
Different from Pong and Freeway, CartPole is an \env with low dimension \textit{states}, in contrast to the high dimensional Atari games evaluated in~\Cref{sec:exp}.
This gives rise to several outcomes. 
First, StdTrain can tolerate noise well in CartPole. Second, CARRL demonstrates its advantage in low dimensional game, which is consistent with the empirical observations in~\citet{everett2021certifiable}. }

{
\renewcommand{\thesubfigure}{\alph{subfigure}}

\begin{figure}

\newlength{\cputilheighta}
\settoheight{\cputilheighta}{\includegraphics[width=.160\linewidth]{figures/Freeway_stepwise_sigma-0.001.pdf}}%

\newlength{\cpwrapheighta}
\settoheight{\cpwrapheighta}{\includegraphics[width=.400\textwidth]{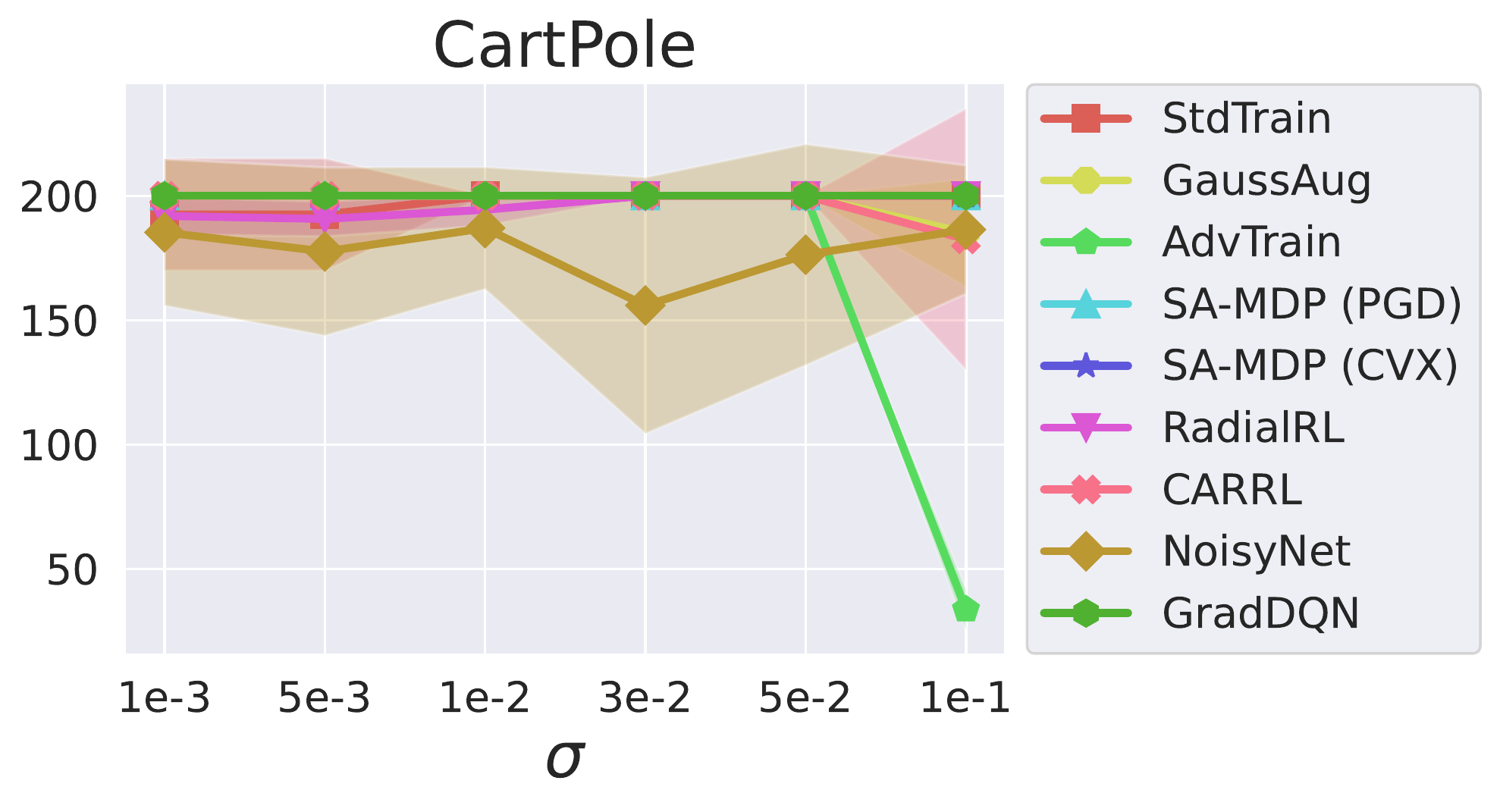}}%

\newlength{\cplegendheight}
\setlength{\cplegendheight}{0.3\cputilheighta}%

\newcommand{\cprowname}[1]
{\rotatebox{90}{\makebox[\cputilheighta][c]{\tiny #1}}}

\newlength{\cputilheightc}
\settoheight{\cputilheightc}{\includegraphics[width=.160\linewidth]{figures/Freeway_global_mean_sigma-0.001.pdf}}%

\newlength{\cputilheightd}
\settoheight{\cputilheightd}{\includegraphics[width=.165\linewidth]{figures/Freeway_global_median_sigma-0.001.pdf}}%

\newlength{\cputilheightaa}
\settoheight{\cputilheightaa}{\includegraphics[width=.162\linewidth]{figures/Freeway_adasearch_sigma-0.05.pdf}}%

\newlength{\cputilheightcp}
\settoheight{\cputilheightcp}{\includegraphics[width=.160\linewidth]{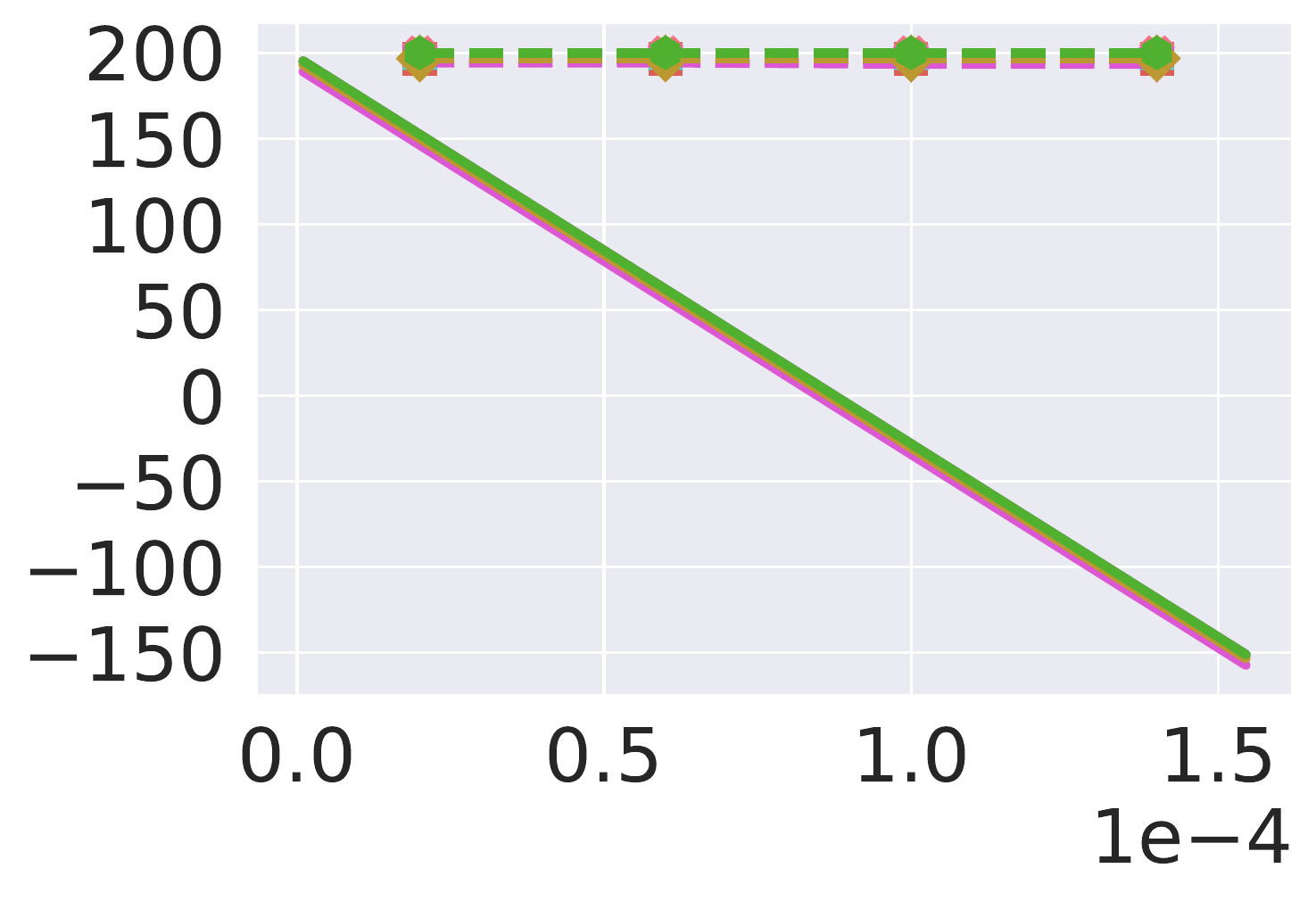}}%

\newlength{\cputilheightdp}
\settoheight{\cputilheightdp}{\includegraphics[width=.165\linewidth]{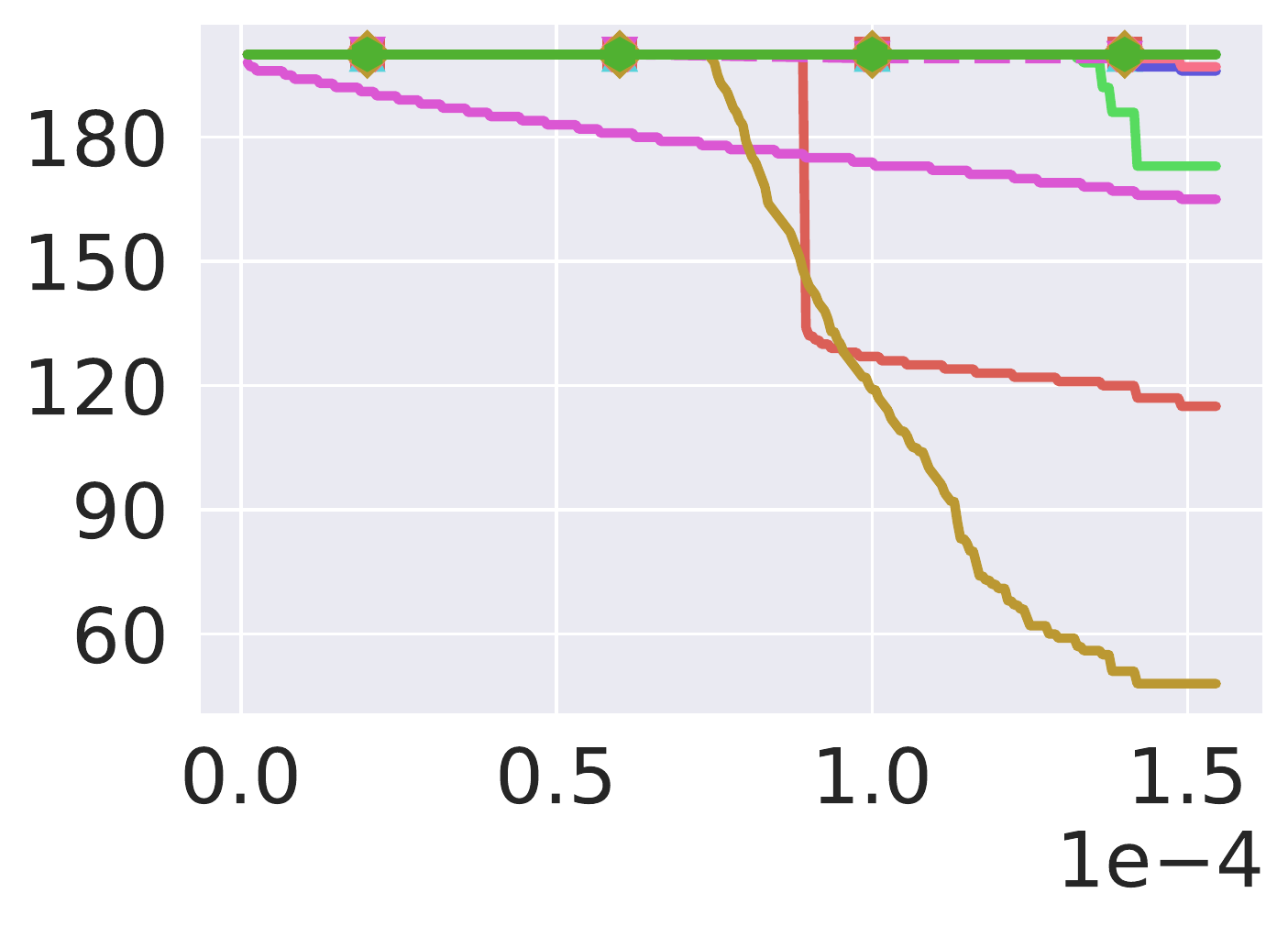}}%

\newlength{\cputilheightaap}
\settoheight{\cputilheightaap}{\includegraphics[width=.162\linewidth]{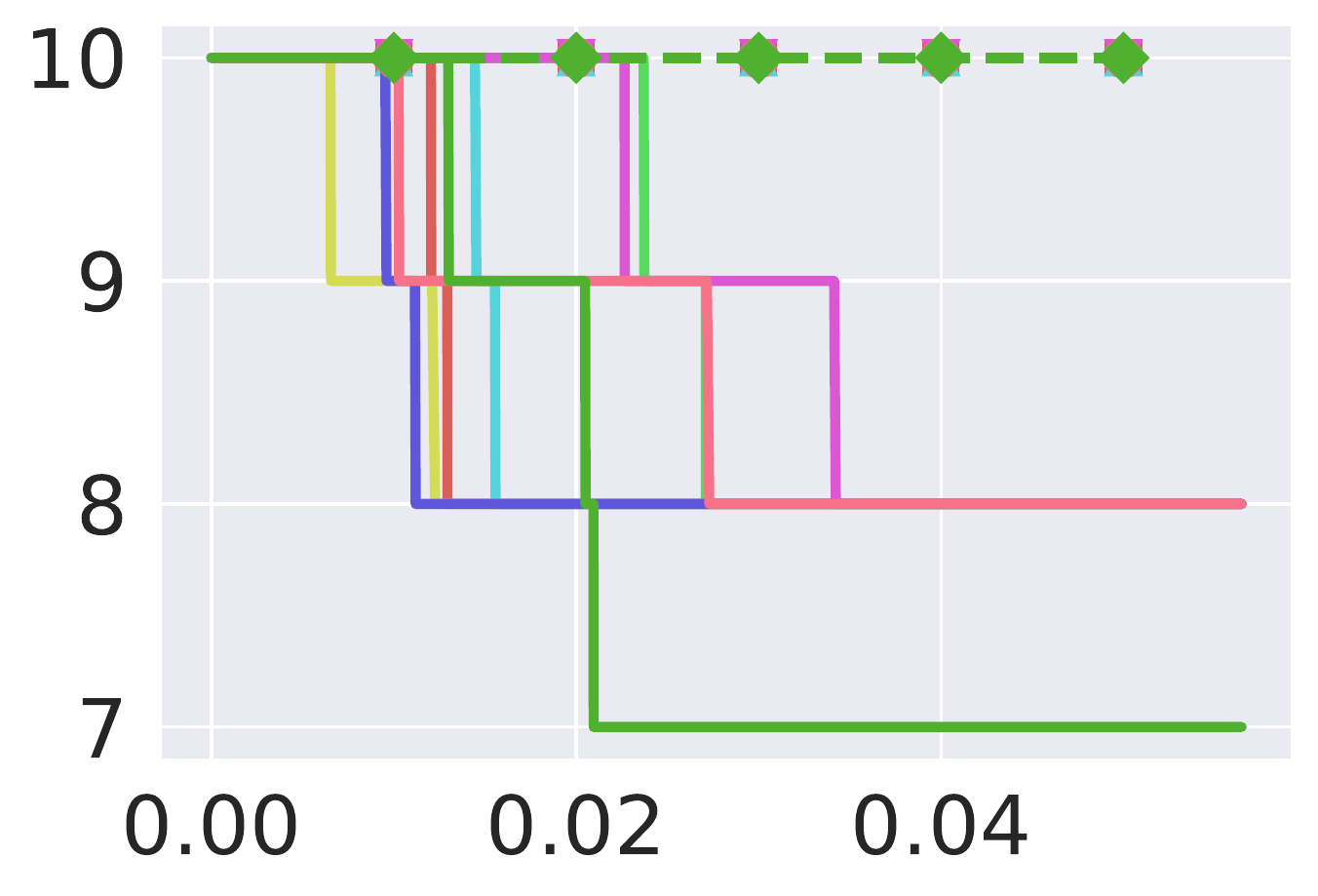}}%



\newlength{\cplegendheightb}
\setlength{\cplegendheightb}{0.3\cputilheightc}%

\newcommand{\cprownamec}[1]
{\rotatebox{90}{\makebox[\cputilheightc][c]{\tiny #1}}}

\newcommand{\cprownamed}[1]
{\rotatebox{90}{\makebox[\cputilheightd][c]{\tiny #1}}}

\centering

{
\renewcommand{\tabcolsep}{10pt}

\begin{subtable}[]{\linewidth}
\begin{tabular}{l}
\includegraphics[height=\cplegendheight]{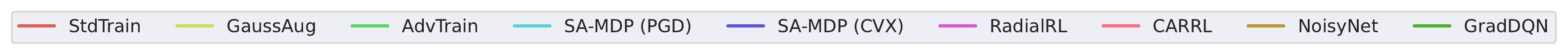}
\end{tabular}
\end{subtable}

\begin{subtable}[]{\linewidth}
\centering
\begin{tabular}{@{}p{5mm}@{}c@{}c@{}c@{}c@{}c@{}c@{}}
\cprowname{\makecell{CartPole\\Radius $r$}}&
\includegraphics[height=\cputilheighta]{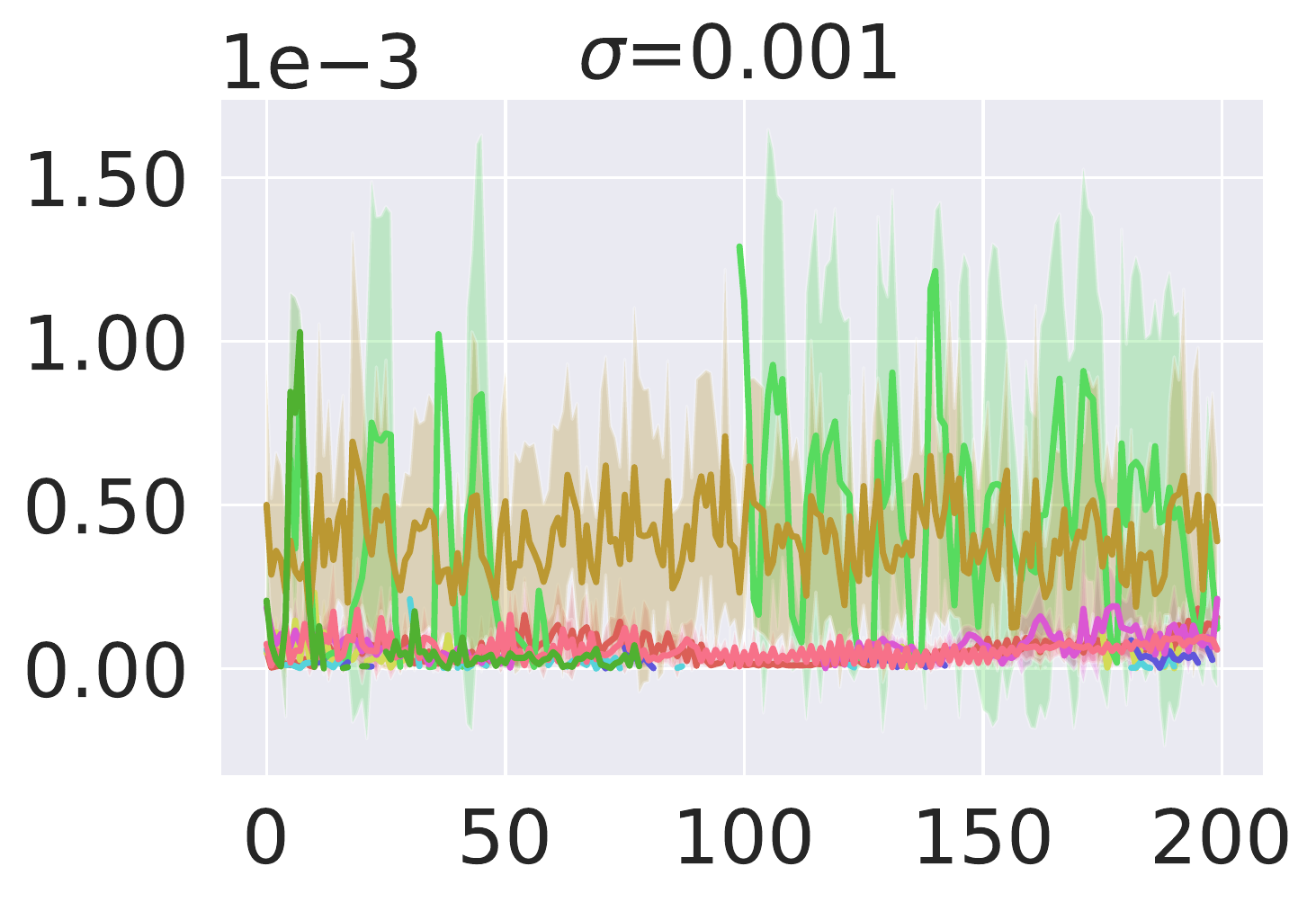}&
\includegraphics[height=\cputilheighta]{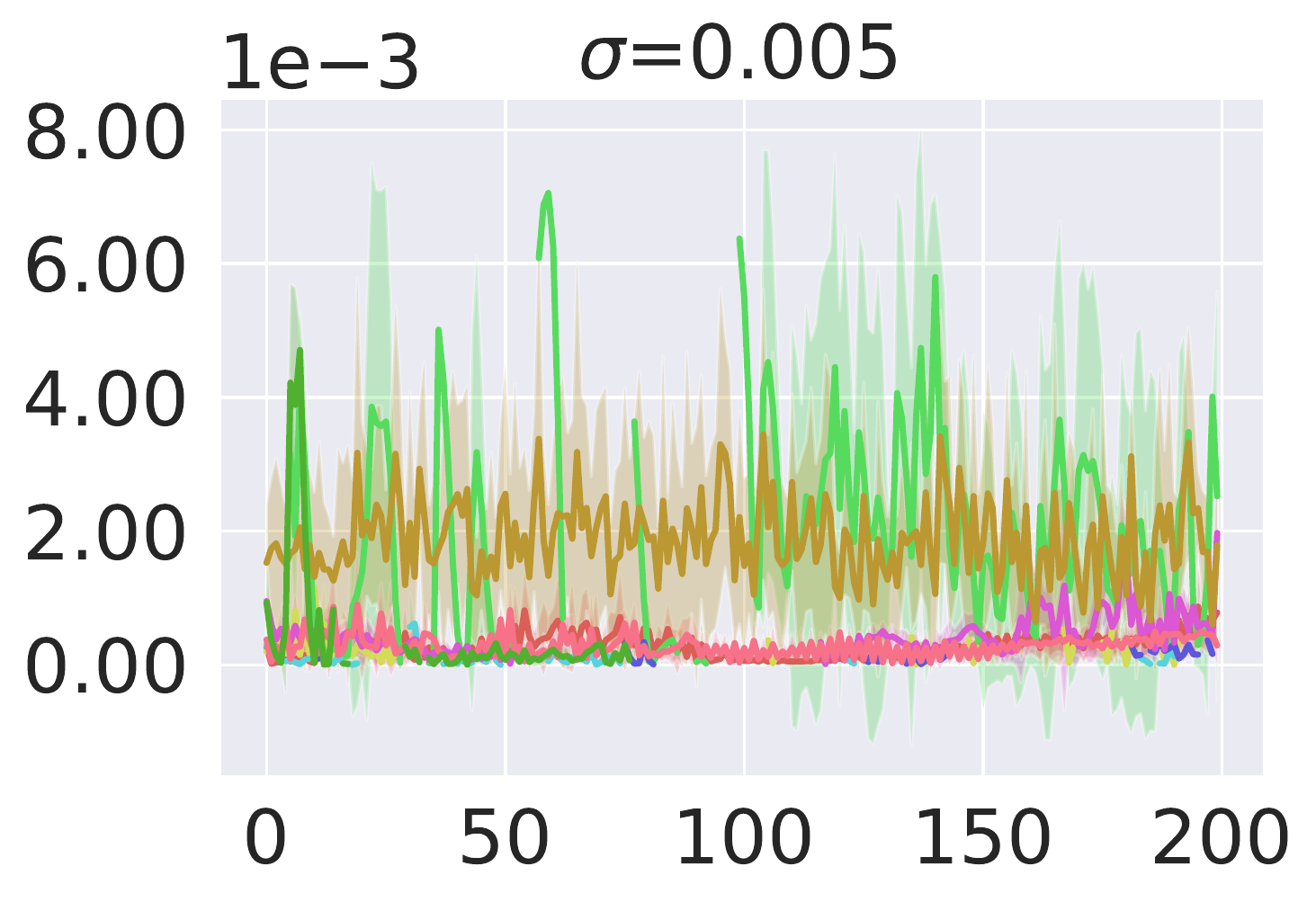}&
\includegraphics[height=\cputilheighta]{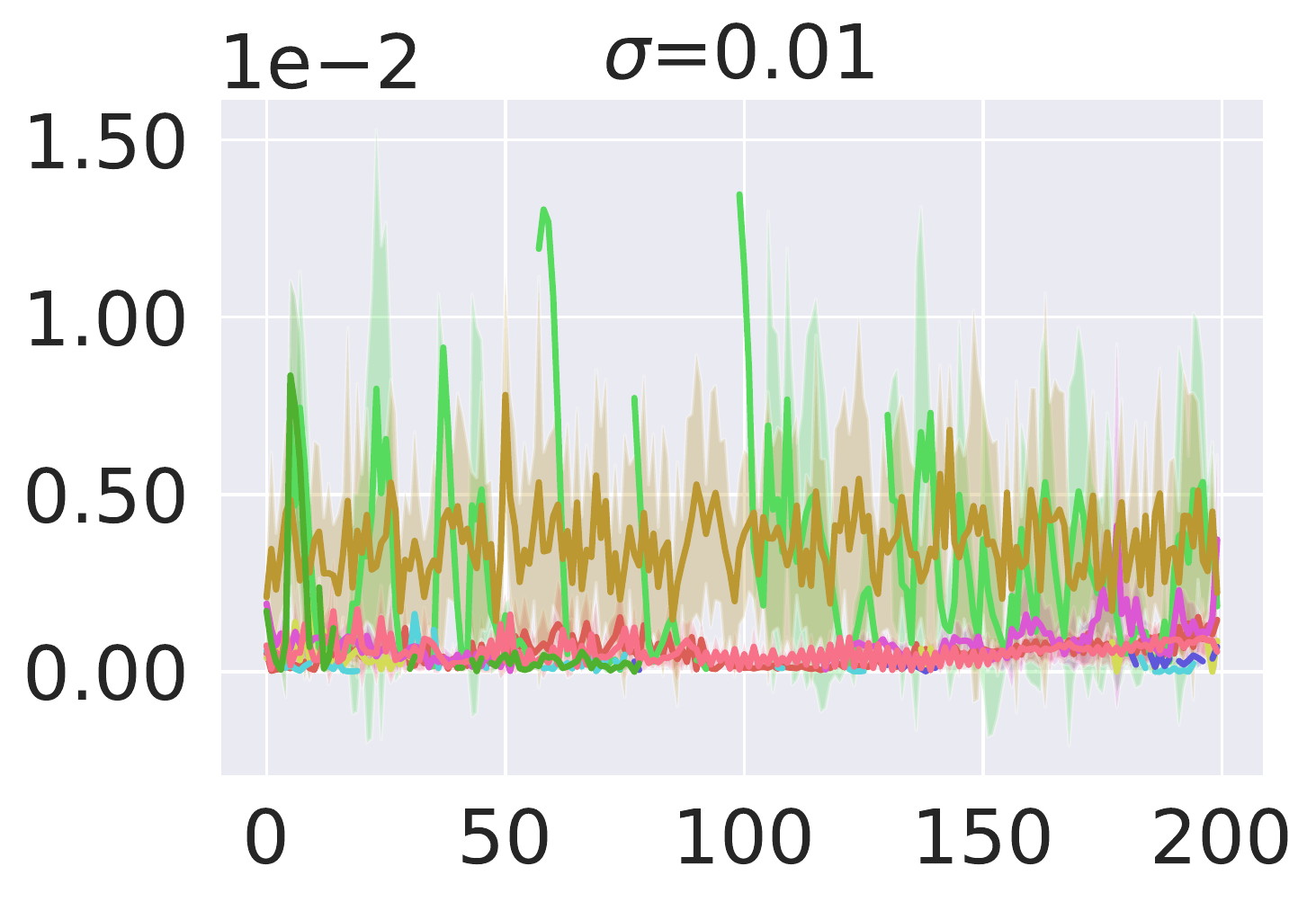}&
\includegraphics[height=\cputilheighta]{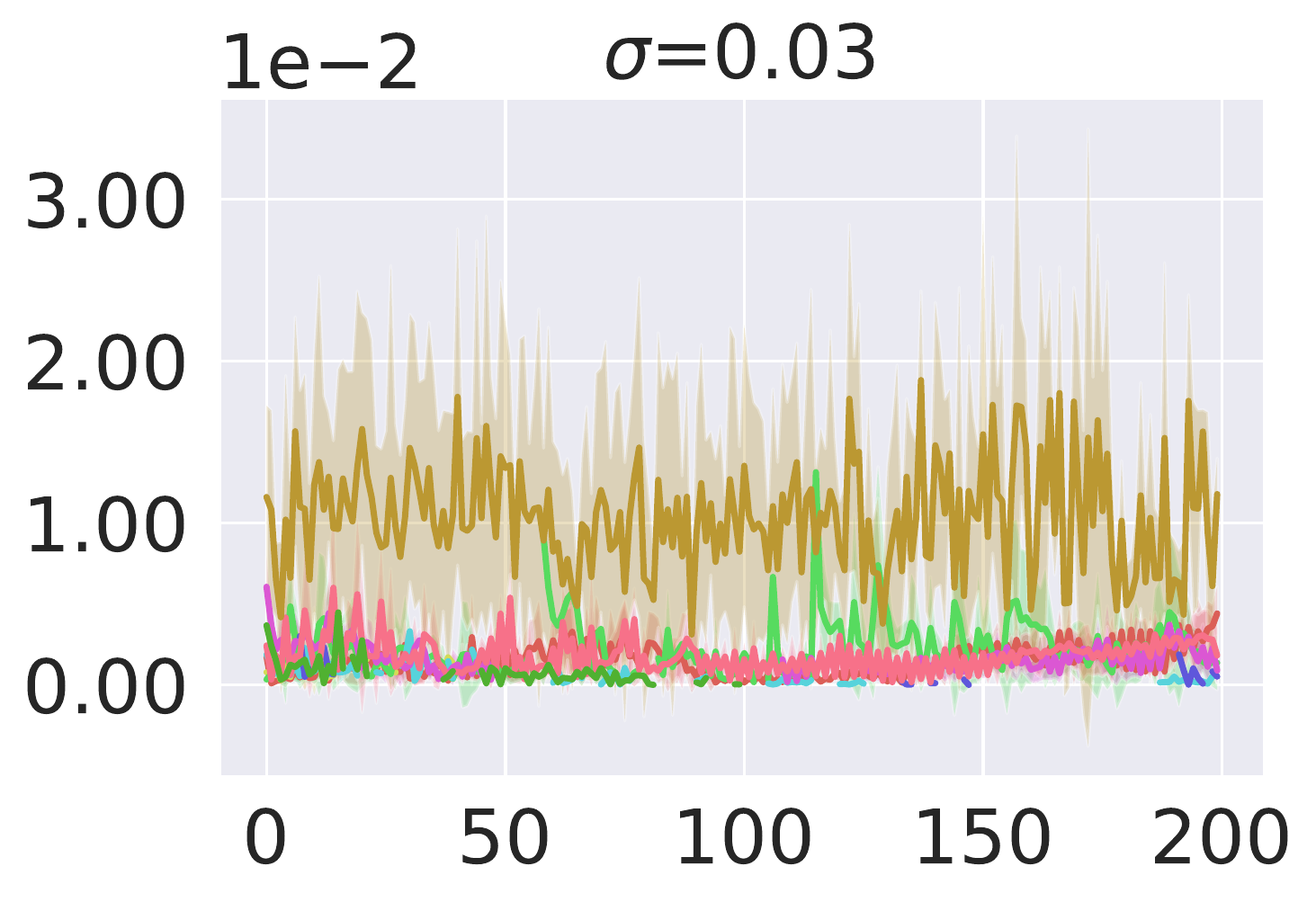}&
\includegraphics[height=\cputilheighta]{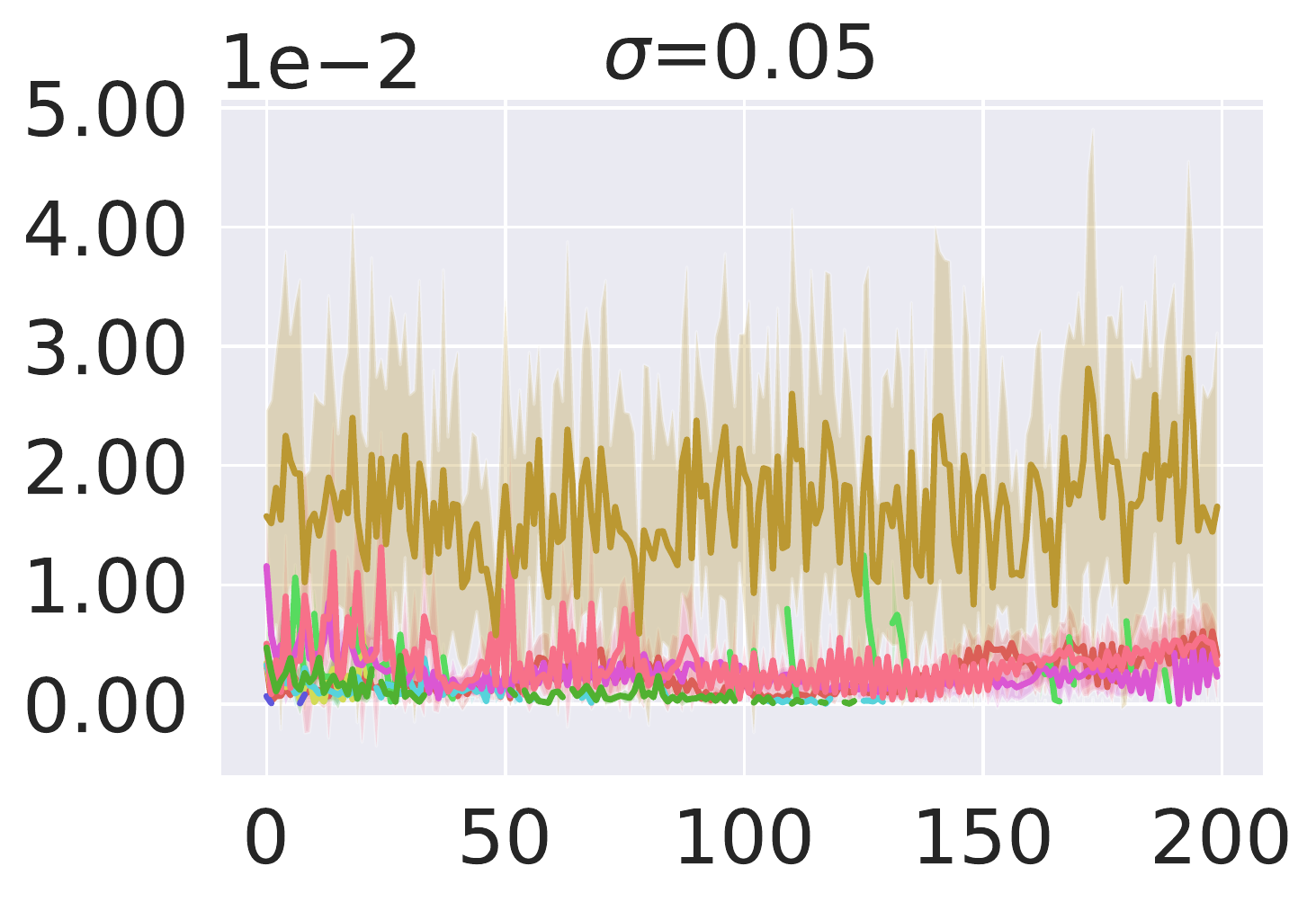}&
\includegraphics[height=\cputilheighta]{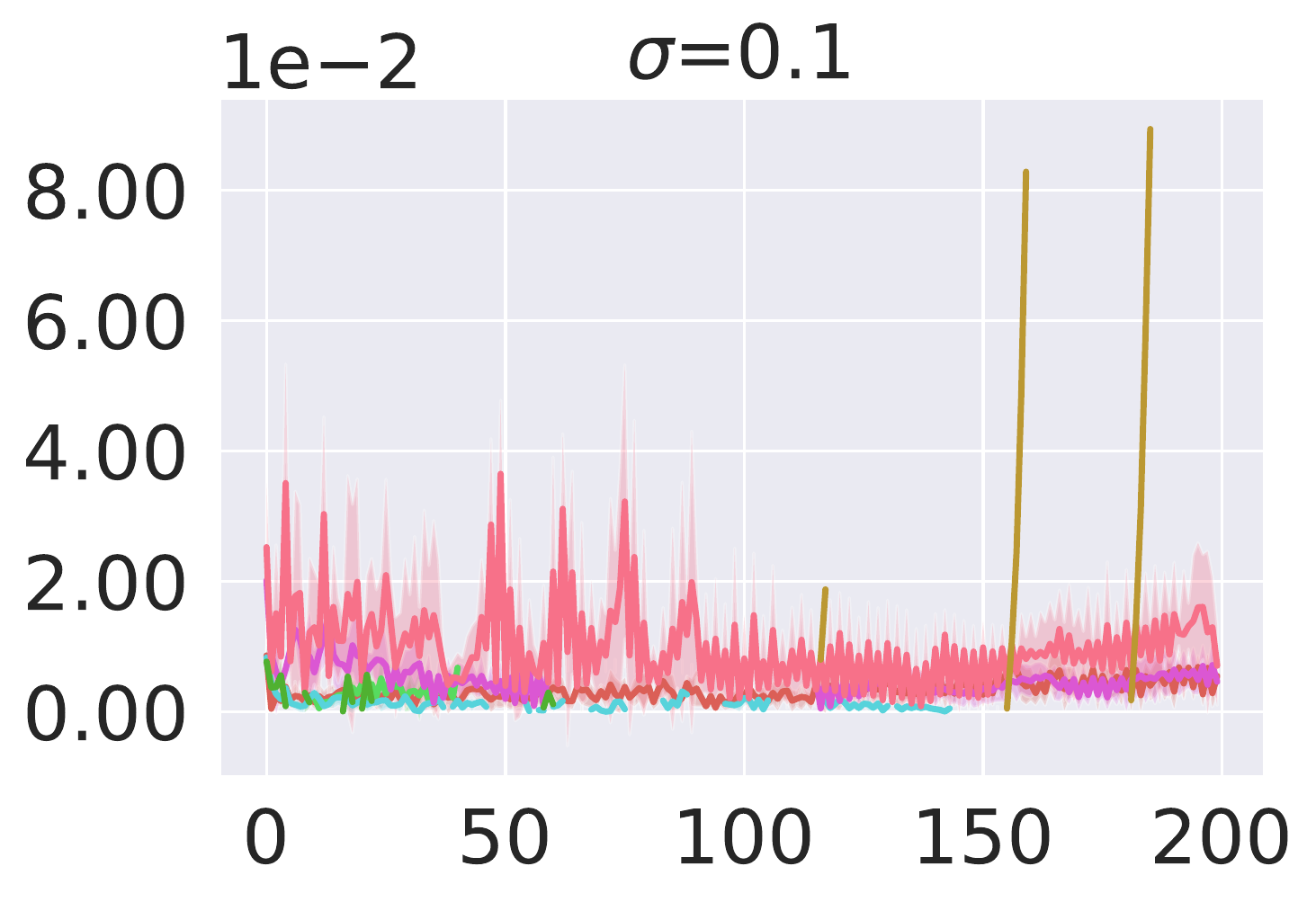}\\[-1.2ex]
        & \makecell{\tiny{time step $t$}}
        & \makecell{\tiny{time step $t$}}
        & \makecell{\tiny{time step $t$}}
        & \makecell{\tiny{time step $t$}}
        & \makecell{\tiny{time step $t$}}
        & \makecell{\tiny{time step $t$}}
\end{tabular}
\caption{\small Robustness certification for \textit{per-state action} in terms of certified radius $r$ at all time steps.
The shaded area represents the standard deviation.
}%
\label{fig:cartpole-statewise}
\end{subtable}

\begin{subtable}[]{\linewidth}
\centering
\includegraphics[height=\cpwrapheighta]{figures/Cartpole_loact_clean.pdf}
\vspace{-1mm}
\caption{\small Benign performance of  locally smoothed policy $\tpi$ under different  smoothing variance $\sigma$.}\label{fig:cartpole-statewise-clean}
\end{subtable}

\vspace{1em}

\begin{subtable}[]{\linewidth}
\begin{tabular}{l}
\includegraphics[height=\cplegendheightb]{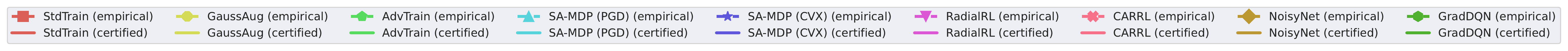}
\end{tabular}
\end{subtable}

\begin{subtable}[]{\linewidth}
\centering
\resizebox{\linewidth}{!}{%
\begin{tabular}{@{}p{3mm}@{}c@{}c@{}c@{}c@{}c@{}c@{}}
        & \makecell{\tiny{$\sigma=0.001$}}
        & \makecell{\tiny{$\sigma=0.005$}}
        & \makecell{\tiny{$\sigma=0.01$}}
        & \makecell{\tiny{$\sigma=0.03$}}
        & \makecell{\tiny{$\sigma=0.05$}}
        & \makecell{\tiny{$\sigma=0.1$}}
        \vspace{-1.7pt}\\
\cprownamec{\makecell{(Global) $\uje$}}&
\includegraphics[height=\cputilheightcp]{figures/CartPole_global_mean_sigma-0.001_cmp.pdf}&
\includegraphics[height=\cputilheightcp]{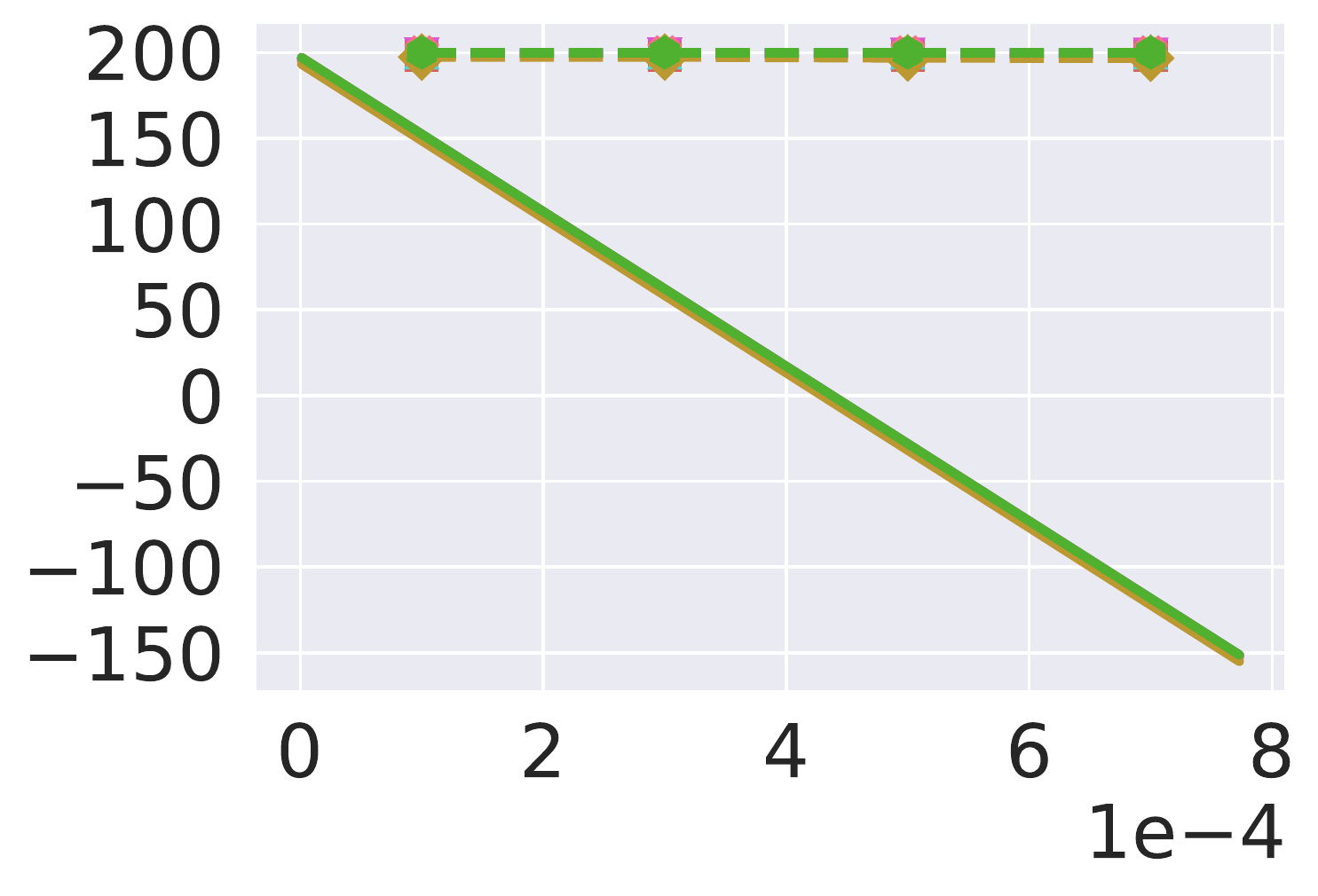}&
\includegraphics[height=\cputilheightcp]{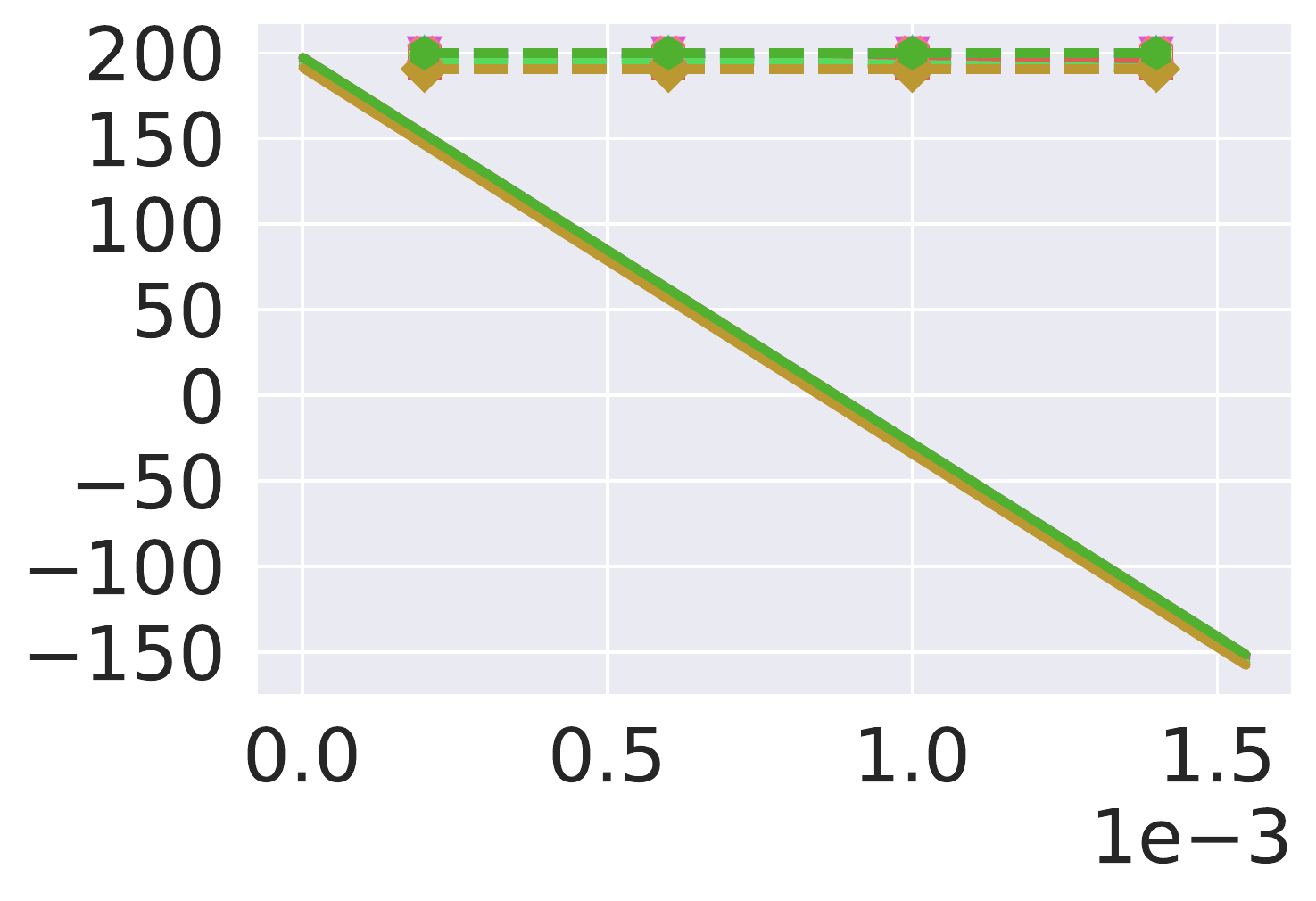}&
\includegraphics[height=\cputilheightcp]{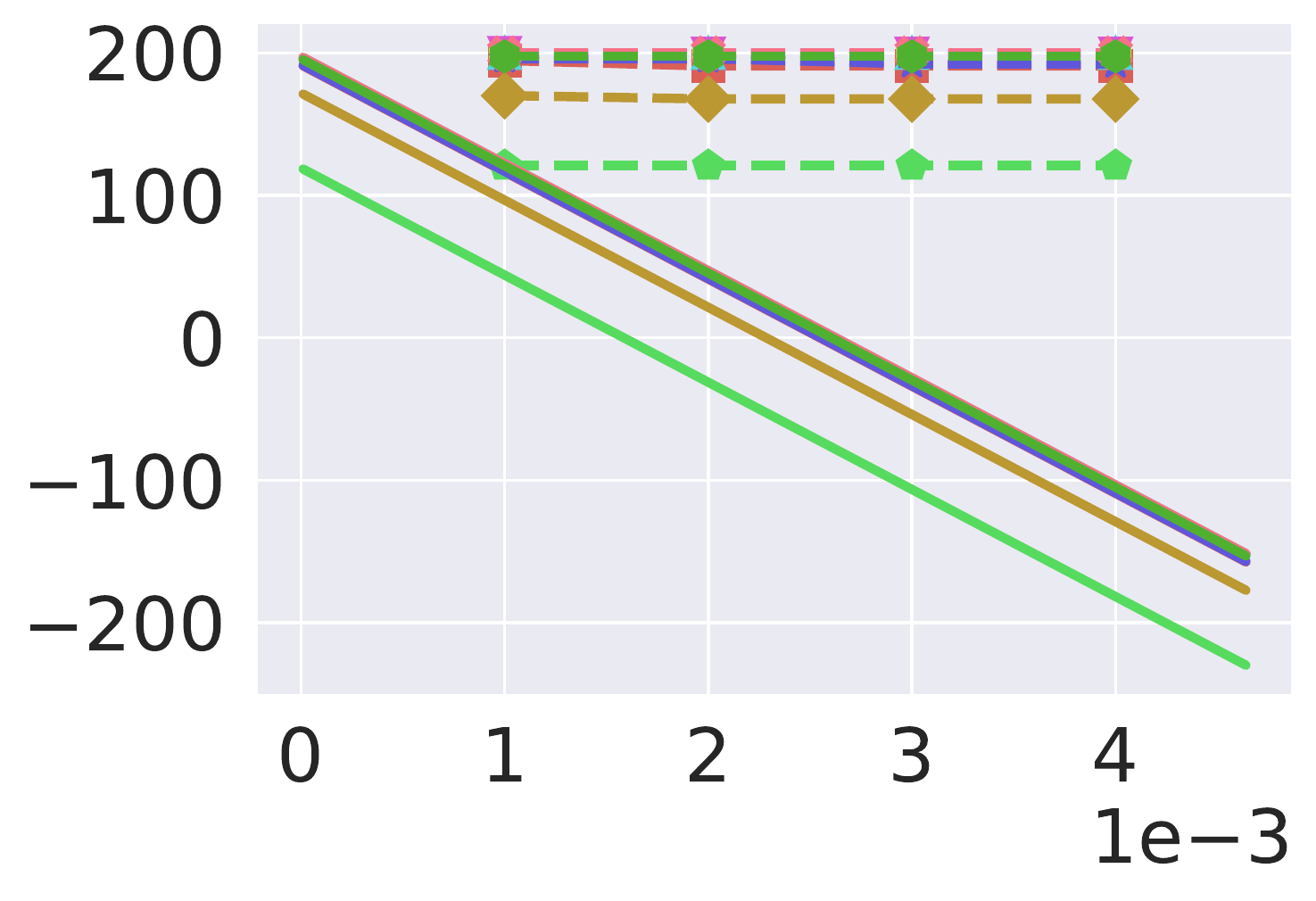}&
\includegraphics[height=\cputilheightcp]{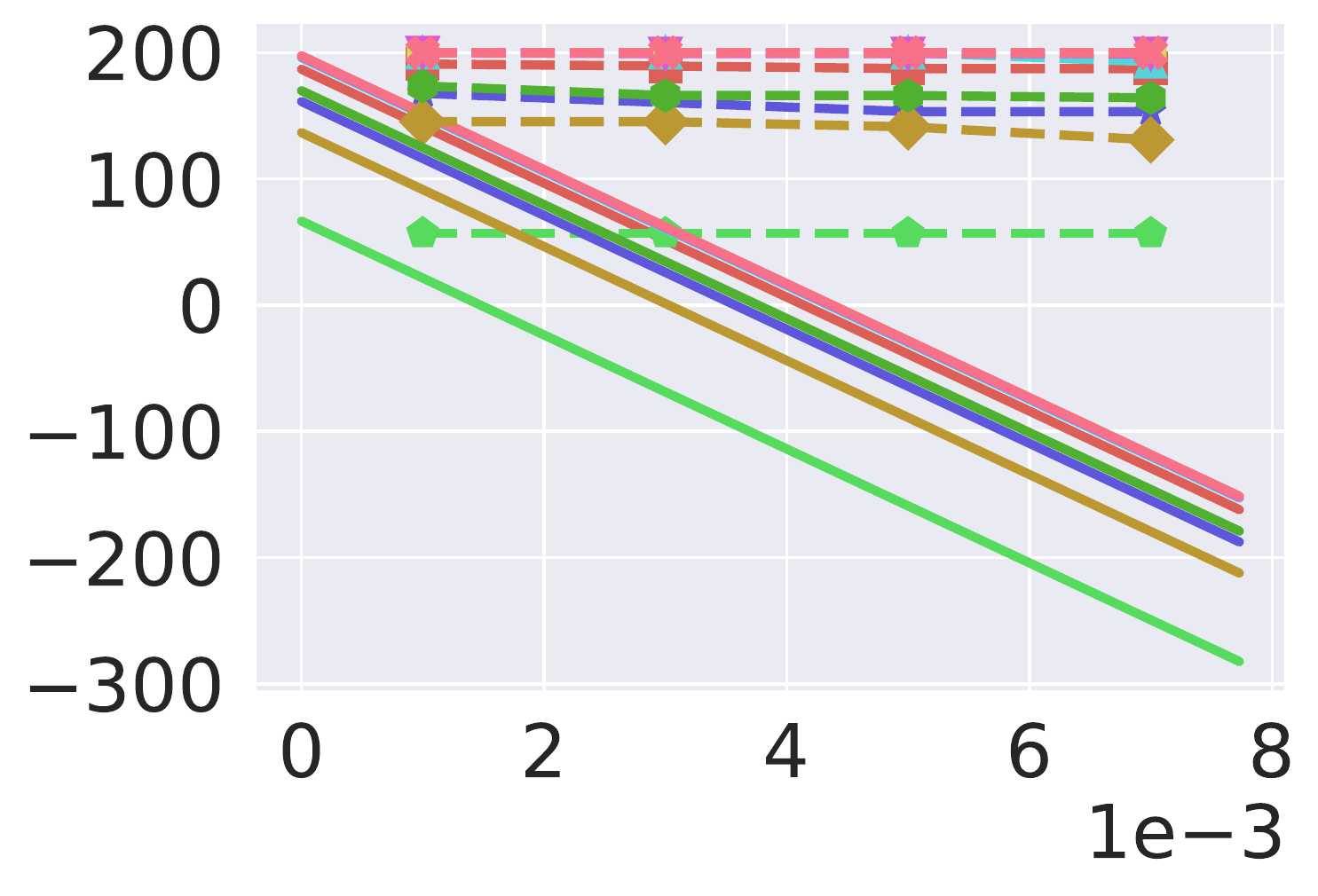}&
\includegraphics[height=\cputilheightcp]{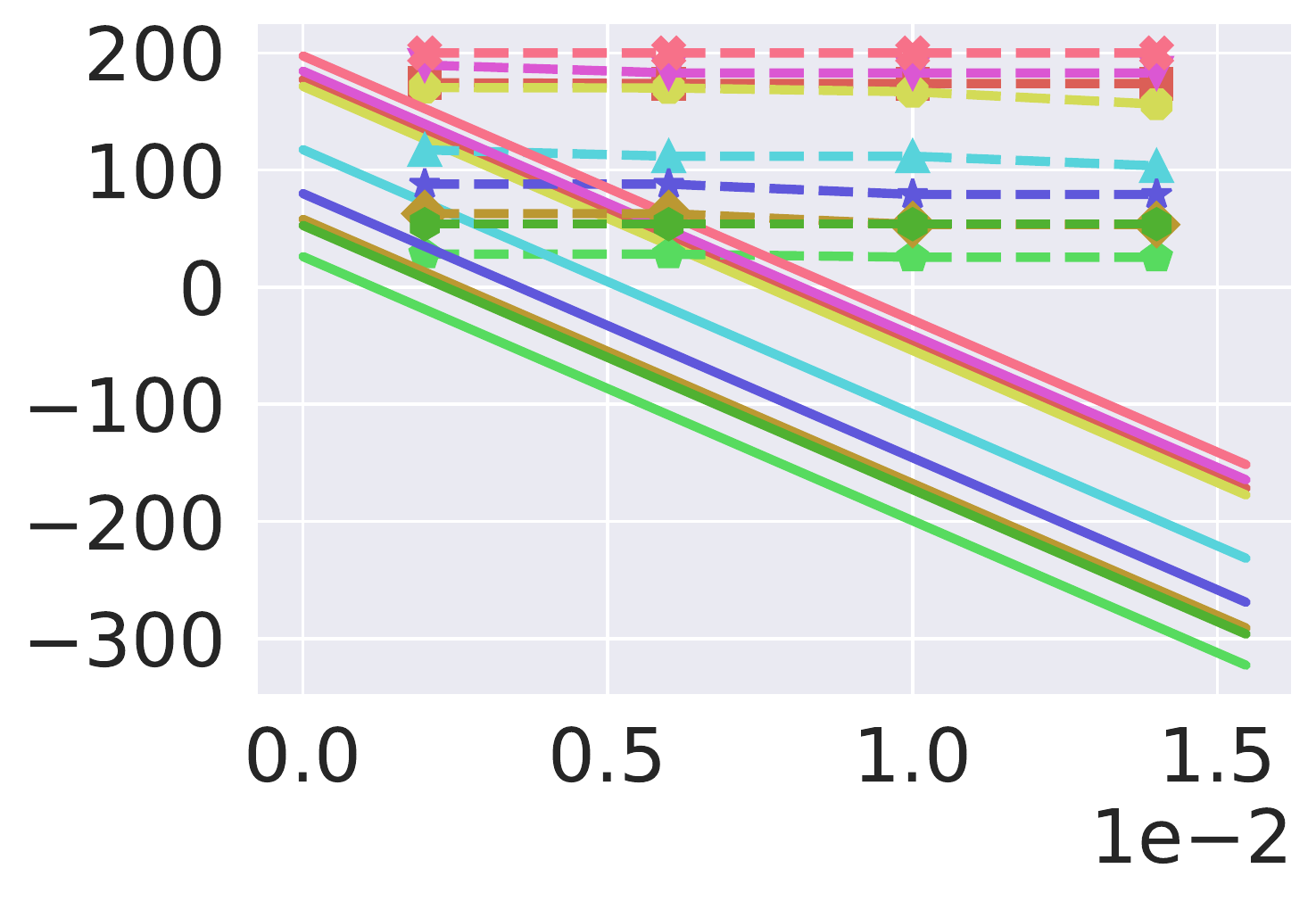}\\[-1.2ex]
\cprownamed{\makecell{(Global) $\ujp$}}&
\includegraphics[height=\cputilheightdp]{figures/CartPole_global_median_sigma-0.001_cmp.pdf}&
\includegraphics[height=\cputilheightdp]{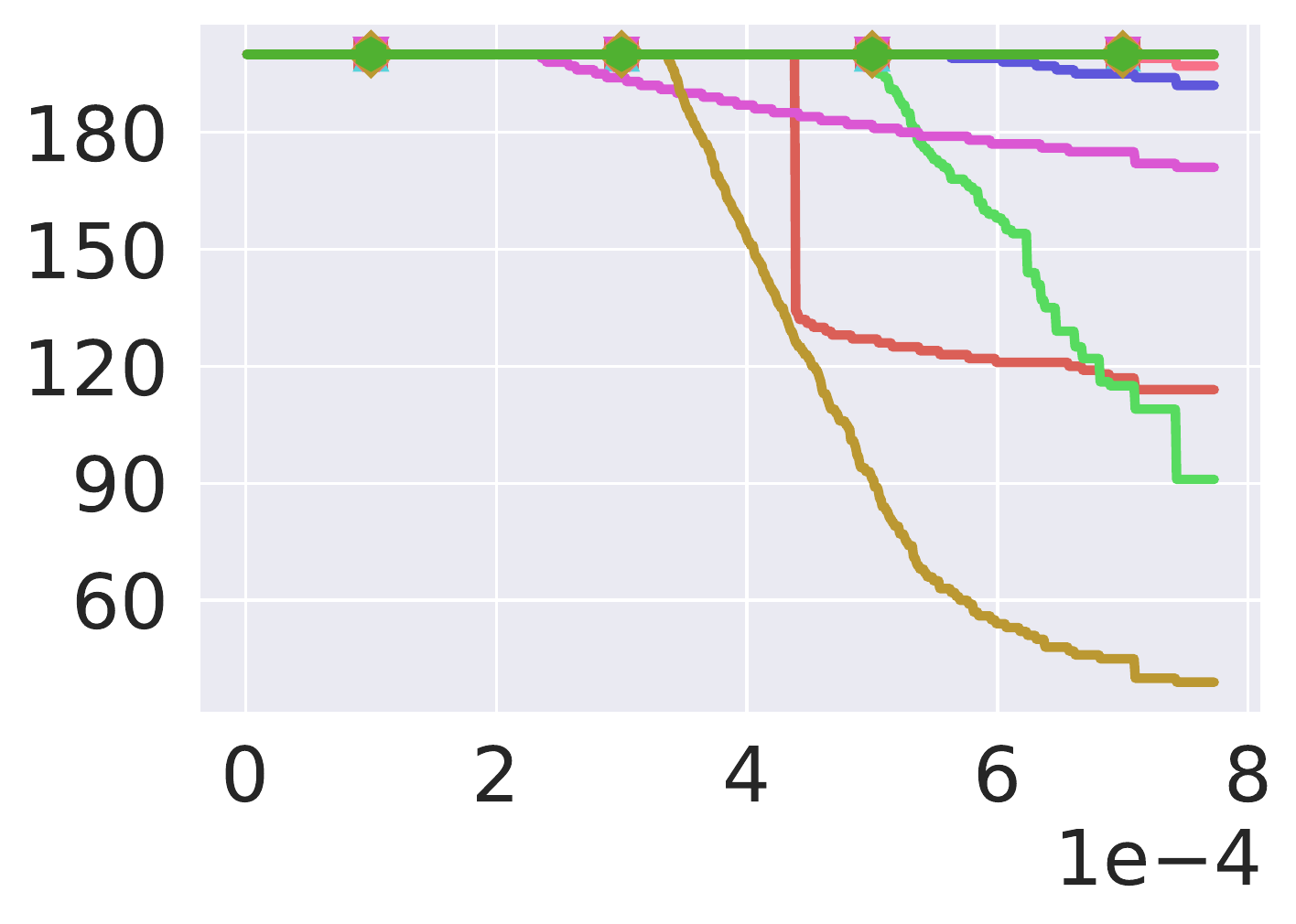}&
\includegraphics[height=\cputilheightdp]{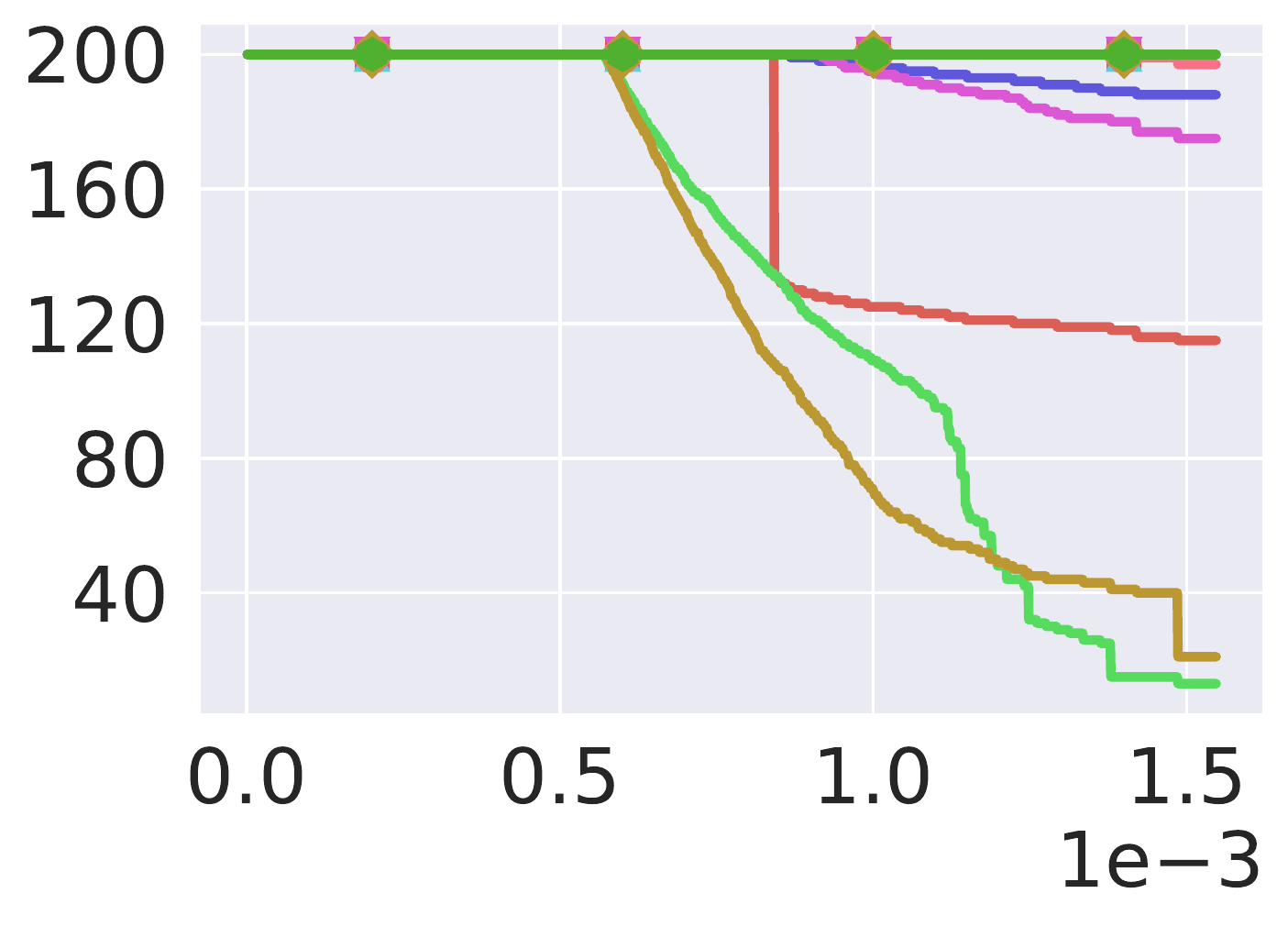}&
\includegraphics[height=\cputilheightdp]{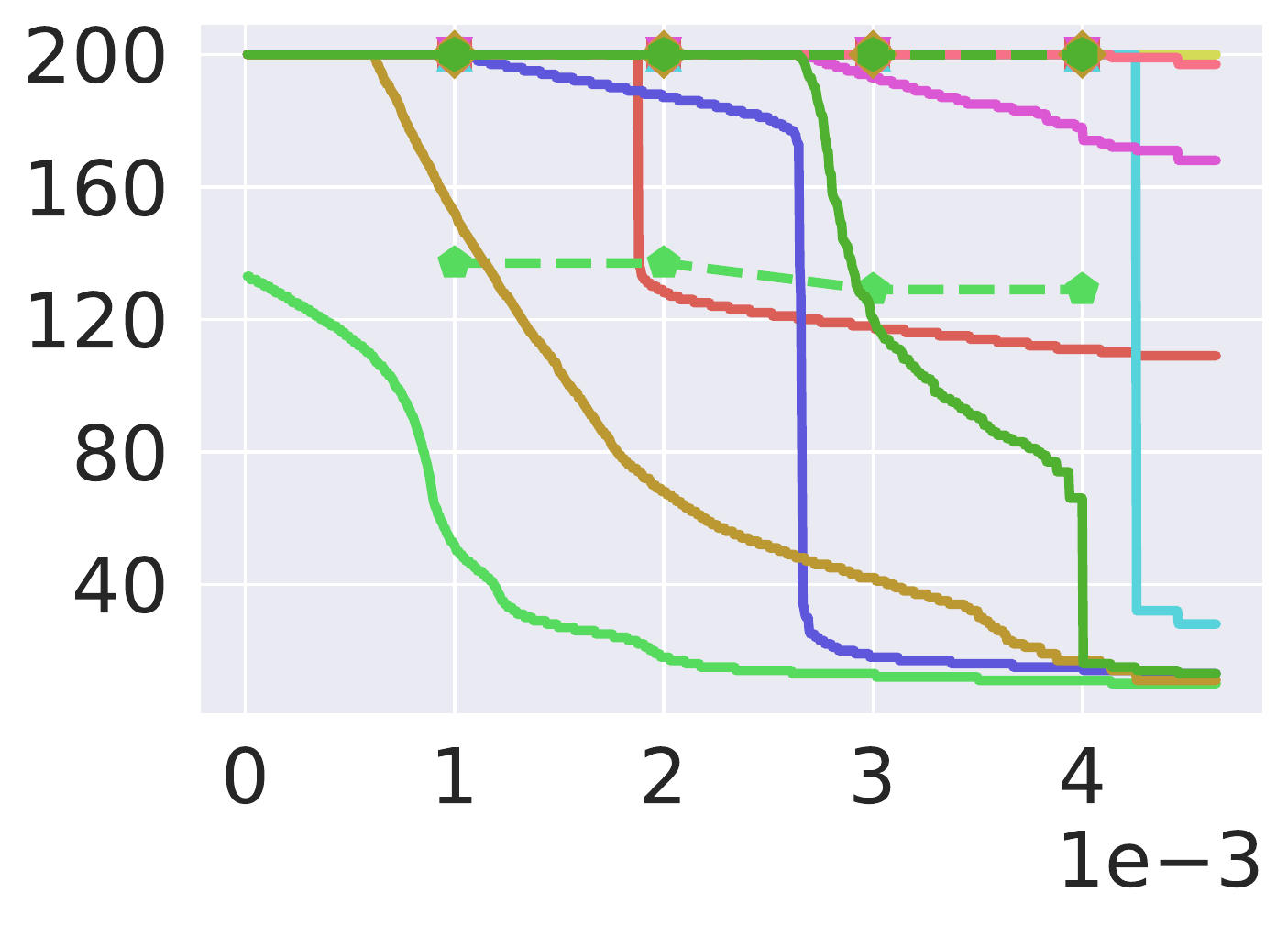}&
\includegraphics[height=\cputilheightdp]{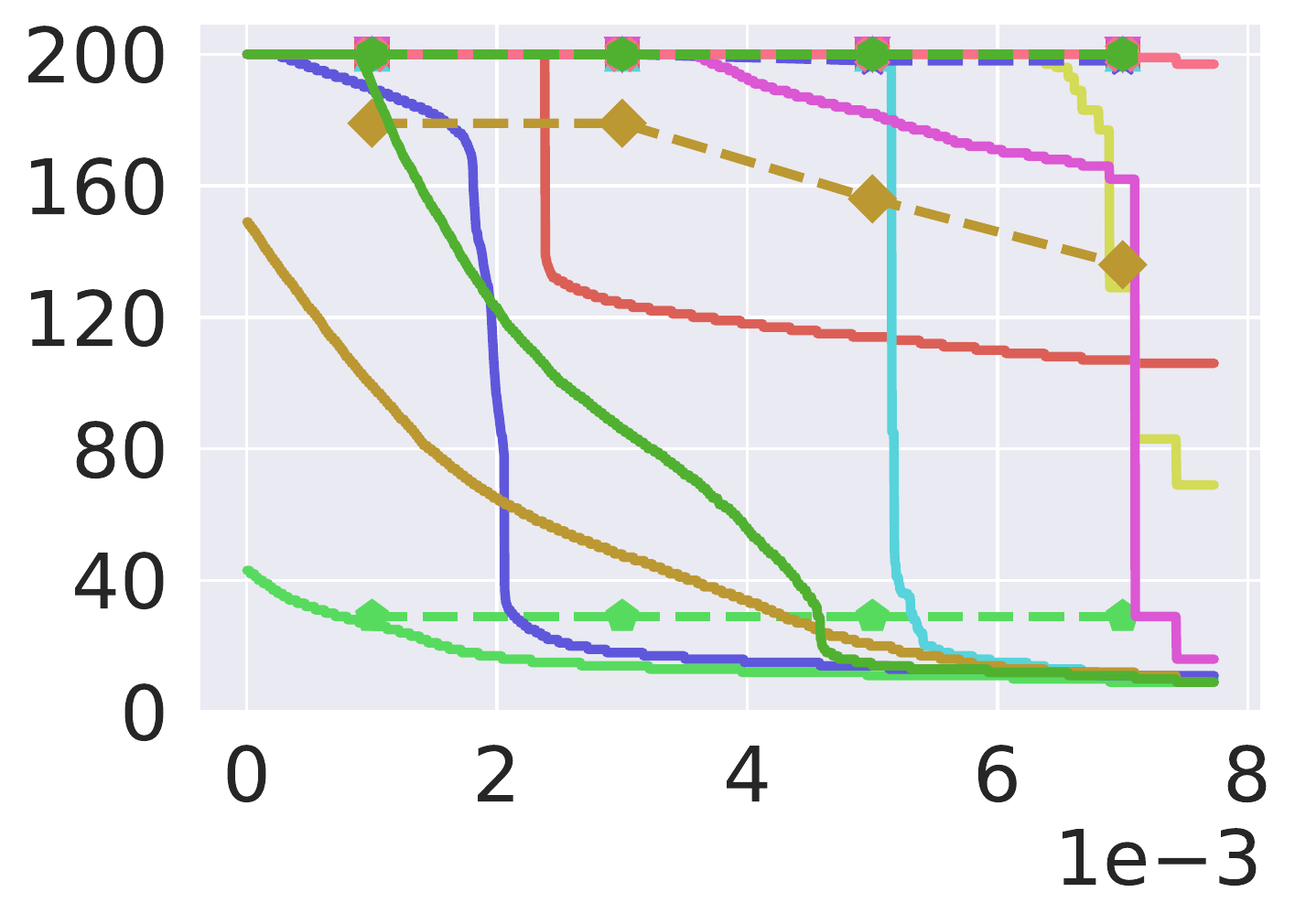}&
\includegraphics[height=\cputilheightdp]{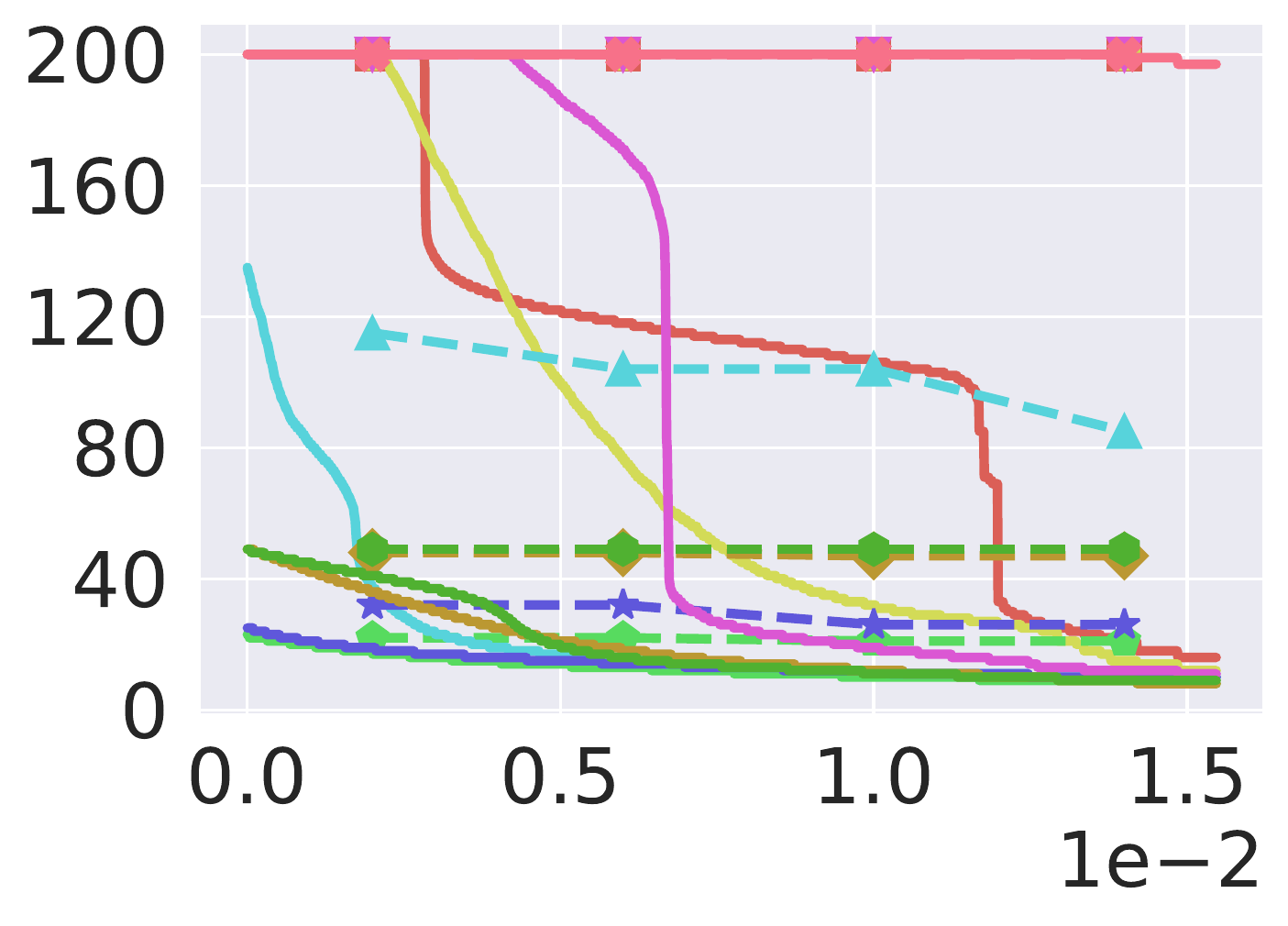}\\[-1.2ex]
\cprownamed{\makecell{(Local) $\uj$}}&
\includegraphics[height=\cputilheightaap]{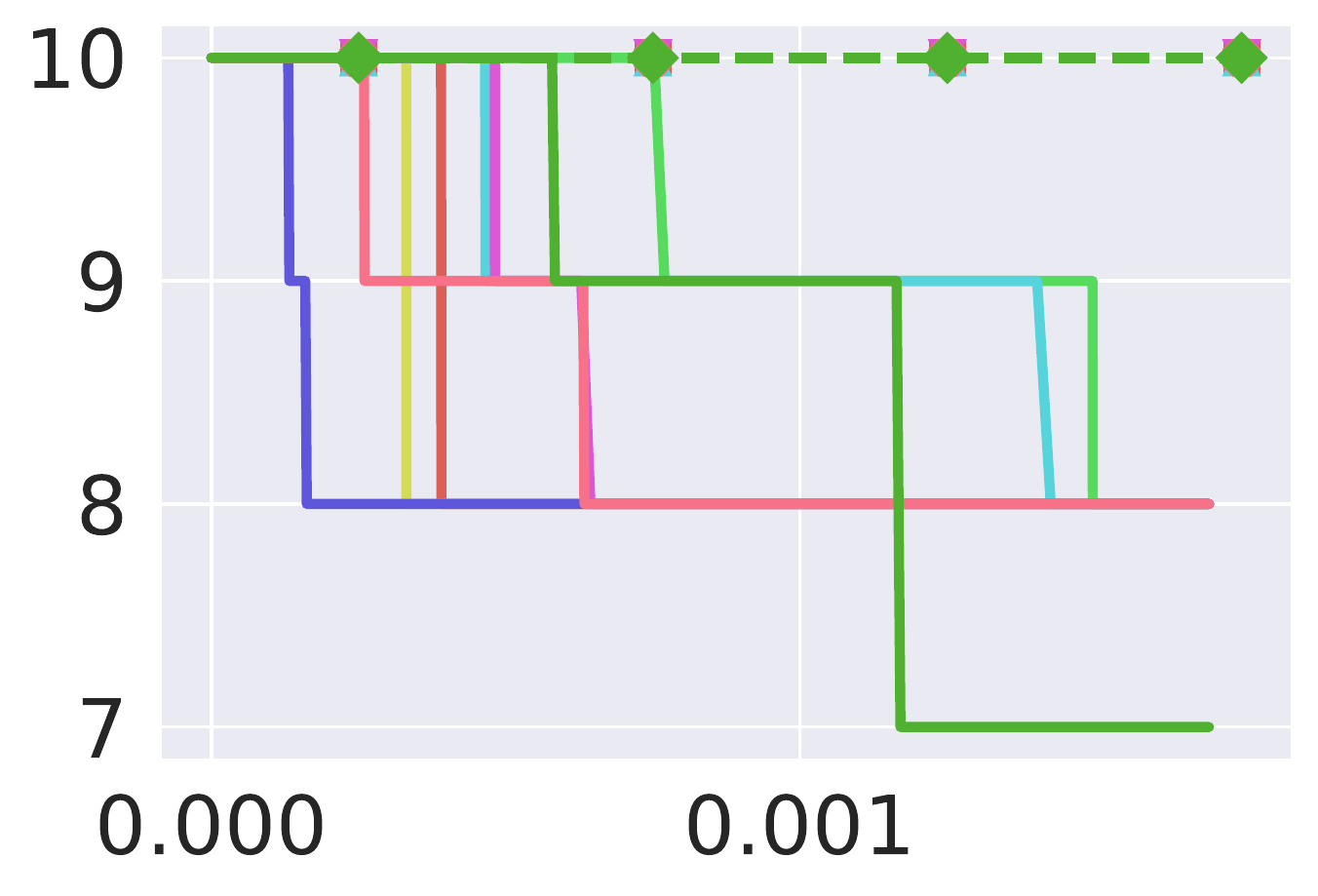}&
\includegraphics[height=\cputilheightaap]{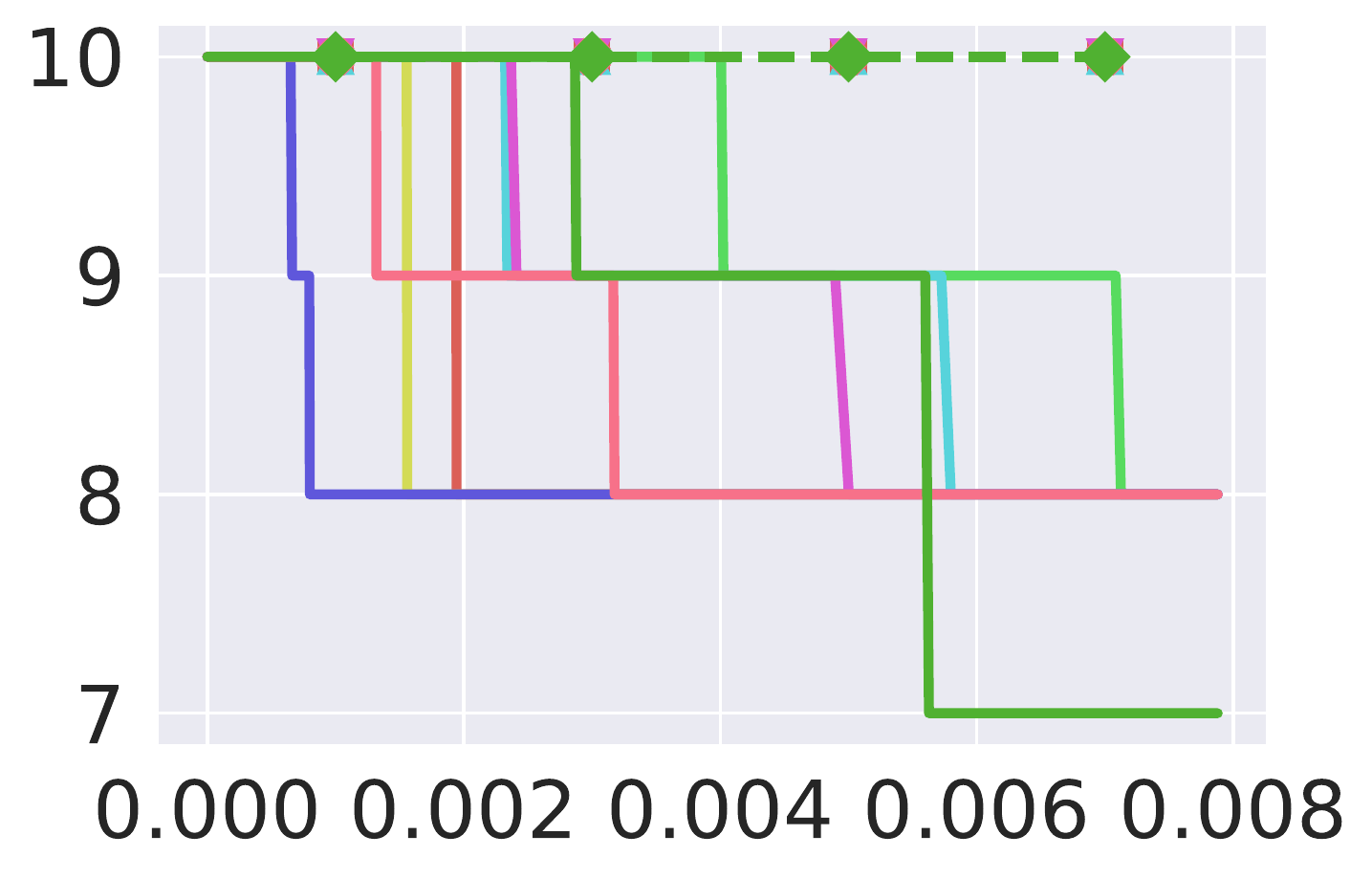}&
\includegraphics[height=\cputilheightaap]{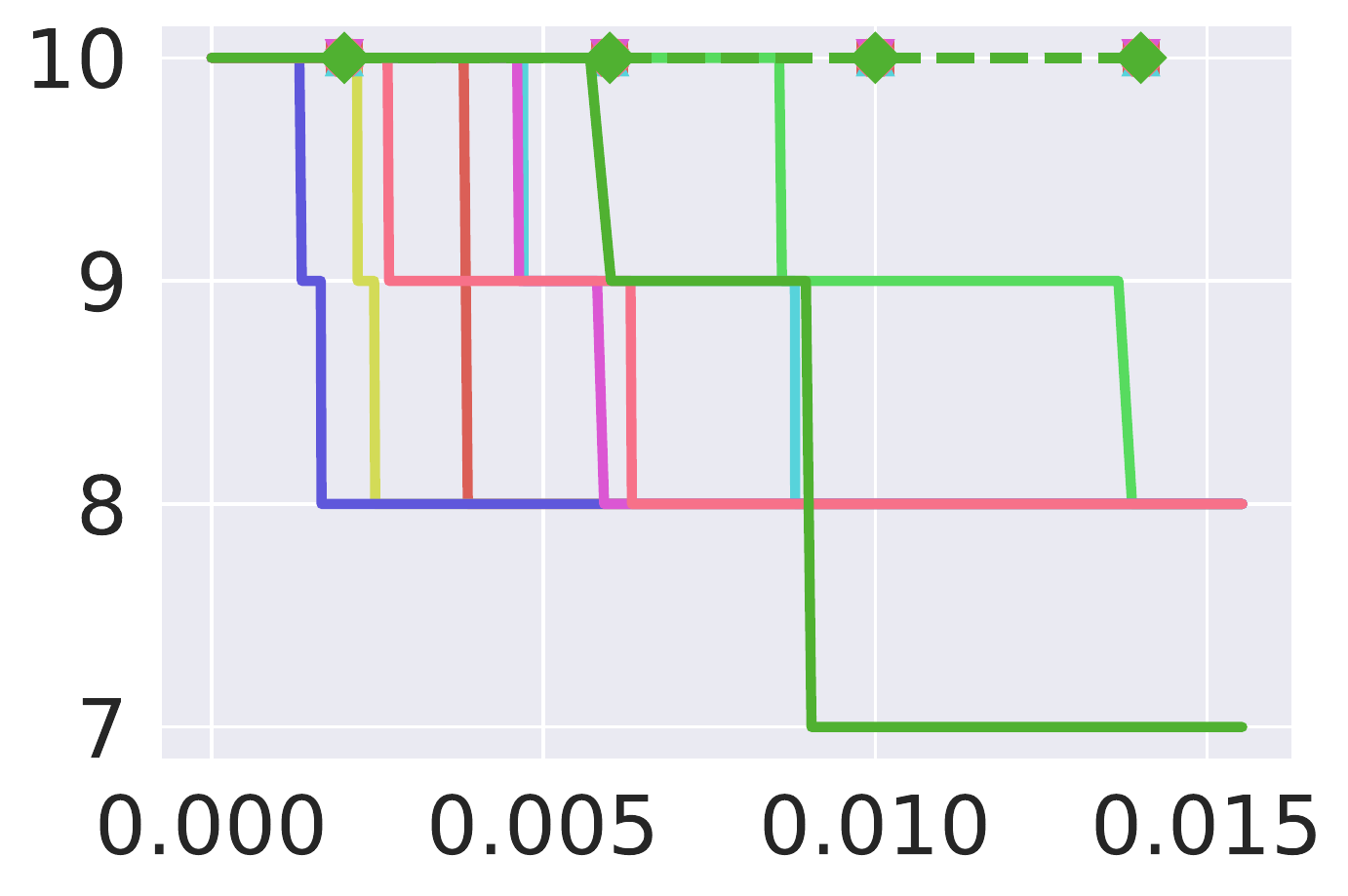}&
\includegraphics[height=\cputilheightaap]{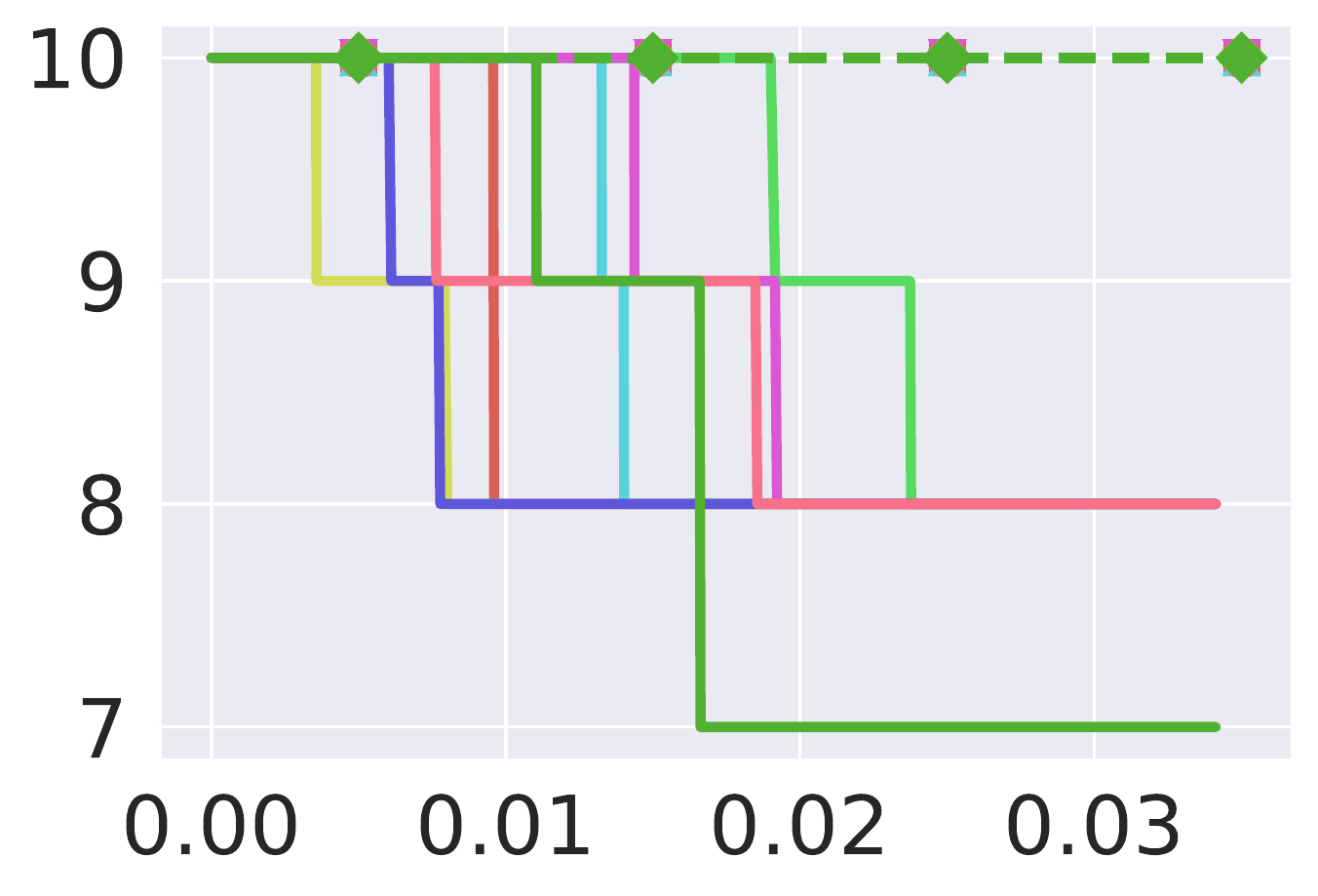}&
\includegraphics[height=\cputilheightaap]{figures/CartPole_adasearch_sigma-0.05_cmp.pdf}&
\includegraphics[height=\cputilheightaap]{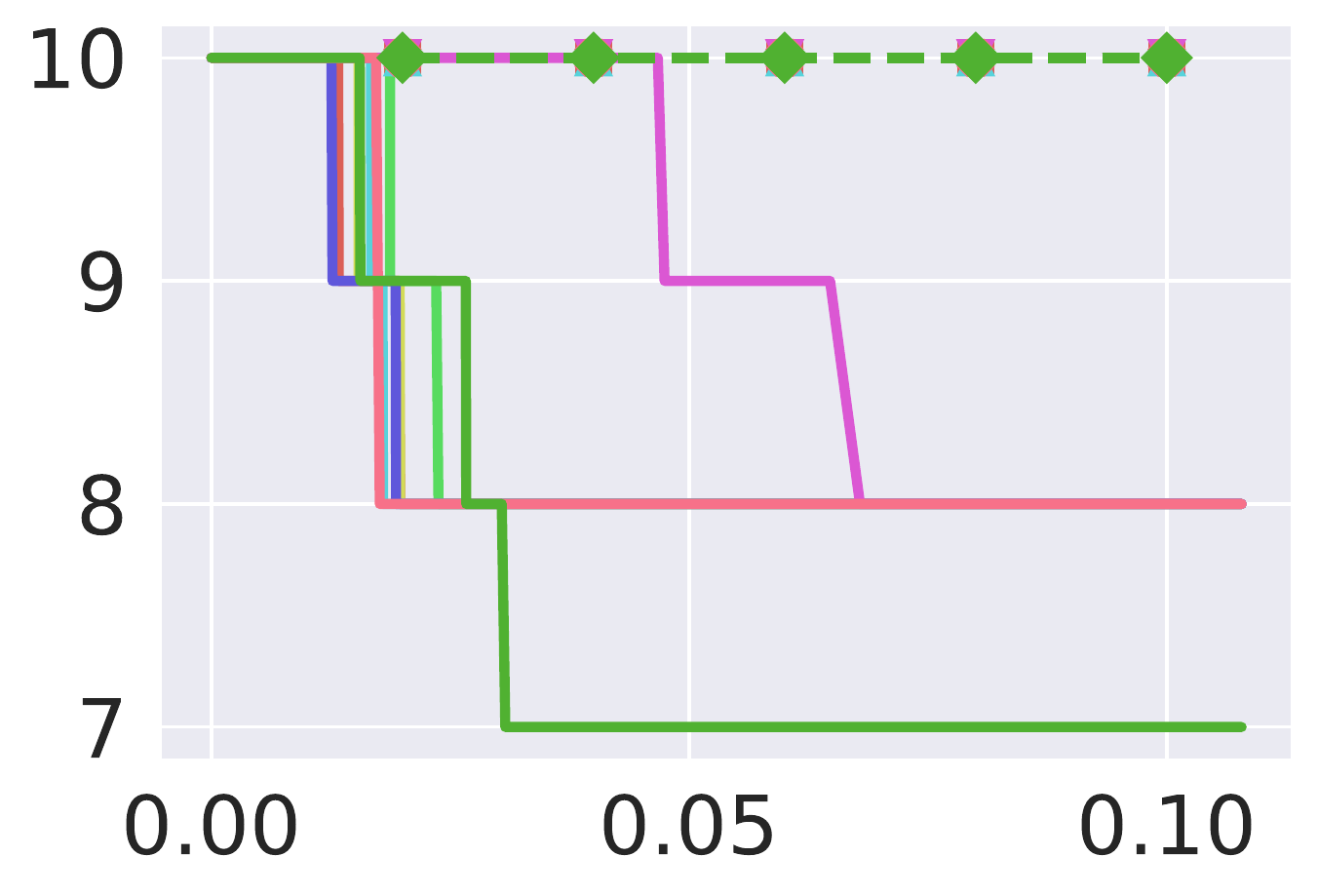}\\[-1.2ex]
        & \makecell{\tiny{attack $\eps$}}
        & \makecell{\tiny{attack $\eps$}}
        & \makecell{\tiny{attack $\eps$}}
        & \makecell{\tiny{attack $\eps$}}
        & \makecell{\tiny{attack $\eps$}}
        & \makecell{\tiny{attack $\eps$}}
\end{tabular}
}
\caption{\small Robustness certification as cumulative reward, including \textit{expectation bound} $\uje$, \textit{percentile bound} $\ujp$ ($p=50\%$), and \textit{absolute lower bound} $\uj$. 
Solid lines represent the certified reward bounds of different methods, and dashed lines show the empirical performance under PGD attacks.}\label{tab:cartpole-bounds}
\end{subtable}
}

\caption{\small \textbf{Robustness certification on CartPole} in terms of (a-b): \textit{robustness of per-state action} and (c): \textit{lower bound of cumulative rewards}.
In (a) and (c), each column corresponds to one smoothing variance. 
}\label{tab:cartpole-figs}
\end{figure}

}

{
\subsection{Discussion on Evaluation Results of Highway}
\label{append:highway}
We next present the evaluation results for \sysname on an autonomous driving environment Highway, whose state dimension is larger than CartPole but smaller than Atari games. We compare nine RL algorithms as introduced previously. We present the evaluation results in~\Cref{tab:highway-figs}.}

{\textbf{Impact of Smoothing Parameter $\sigma$.}\quad
In highway, CARRL demonstrates its advantage when $\sigma$ is small, which is supported by results in all three certifications $\staters$, \glbrs, and \adasearch. 
For action certification (\ie, $\staters$), SA-MDP (PGD) can tolerate a large range of $\sigma$, meaning that the action selection of model trained with SA-MDP (PGD) algorithm is more consistent than other methods. 
For reward certification (\ie, \glbrs and \adasearch), we derive the conclusion that RadialRL outperforms than other methods when $\sigma$ is large.} 

{\textbf{Tightness of the certiﬁcation $\uje$, $\ujp$ and $\uj$.}\quad
In highway, we compare the empirical cumulative rewards achieved under PGD attacks with our certiﬁed lower bounds $\uje$, $\ujp$, and $\uj$. The correctness of our bounds is validated because the empirical results are consistently lower bounded by our certiﬁcations, as shown in \Cref{tab:highway-bounds}. As for the tightness of reward certification methods, we draw similar conclusion as other environments---$\uj$ is the most tight, followed by $\ujp$; and $\uje$ is much looser than $\ujp$ and $\uje$. 
We also notice the non-negligible gap between certified lower bounds and the empirical rewards demonstrated by RadialRL and CARRL under large smoothing parameters. This may imply that there exists large scopes for improving the attack method, or there is potential to further tighten the certified lower bounds.
}


{
\renewcommand{\thesubfigure}{\alph{subfigure}}

\begin{figure}

\settoheight{\cputilheighta}{\includegraphics[width=.160\linewidth]{figures/Freeway_stepwise_sigma-0.001.pdf}}%

\settoheight{\cpwrapheighta}{\includegraphics[width=.400\textwidth]{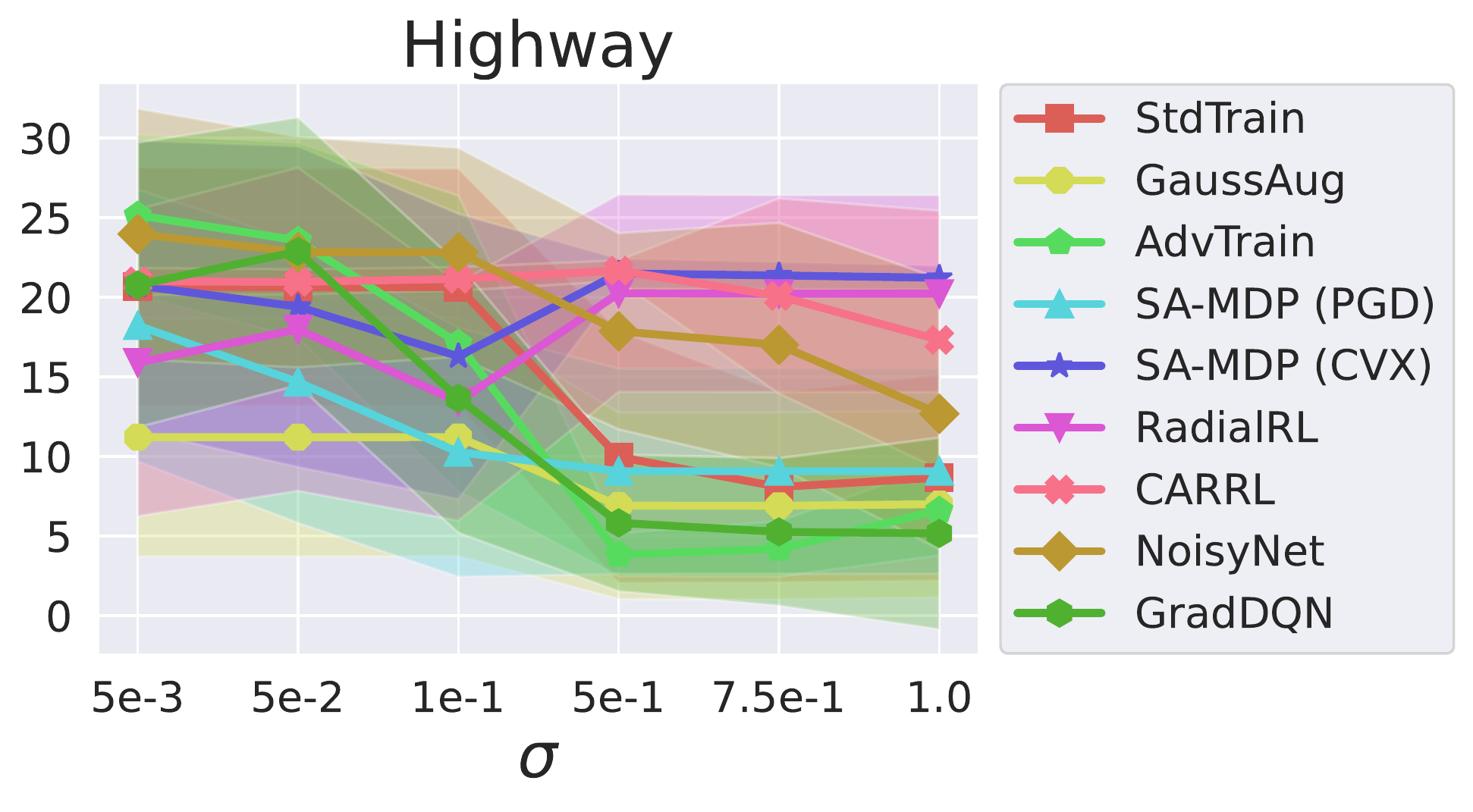}}%

\setlength{\cplegendheight}{0.3\cputilheighta}%

\newcommand{\cprowname}[1]
{\rotatebox{90}{\makebox[\cputilheighta][c]{\tiny #1}}}

\settoheight{\cputilheightc}{\includegraphics[width=.160\linewidth]{figures/Freeway_global_mean_sigma-0.001.pdf}}%

\settoheight{\cputilheightd}{\includegraphics[width=.165\linewidth]{figures/Freeway_global_median_sigma-0.001.pdf}}%

\settoheight{\cputilheightaa}{\includegraphics[width=.162\linewidth]{figures/Freeway_adasearch_sigma-0.05.pdf}}%

\settoheight{\cputilheightcp}{\includegraphics[width=.160\linewidth]{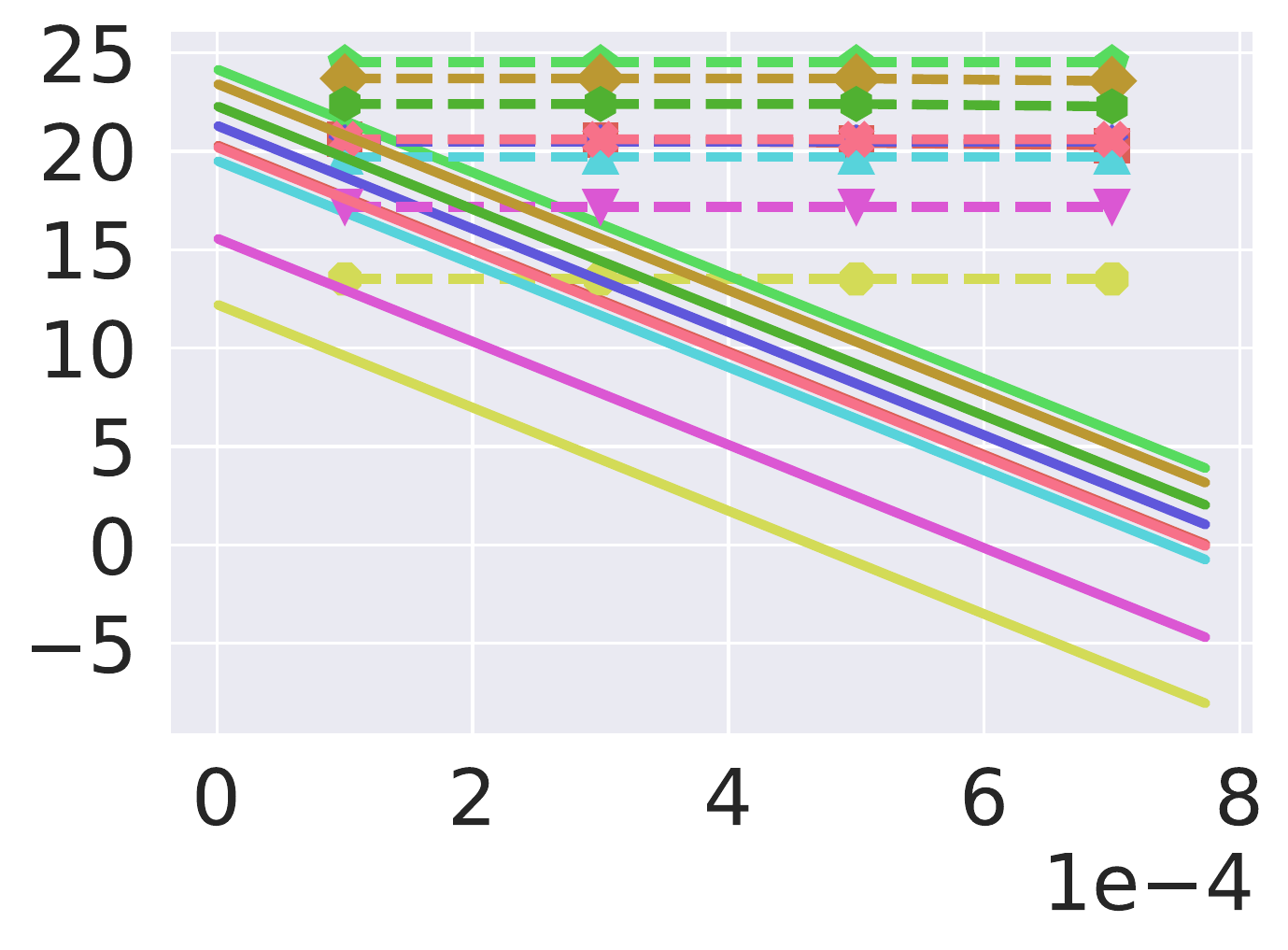}}%

\settoheight{\cputilheightdp}{\includegraphics[width=.165\linewidth]{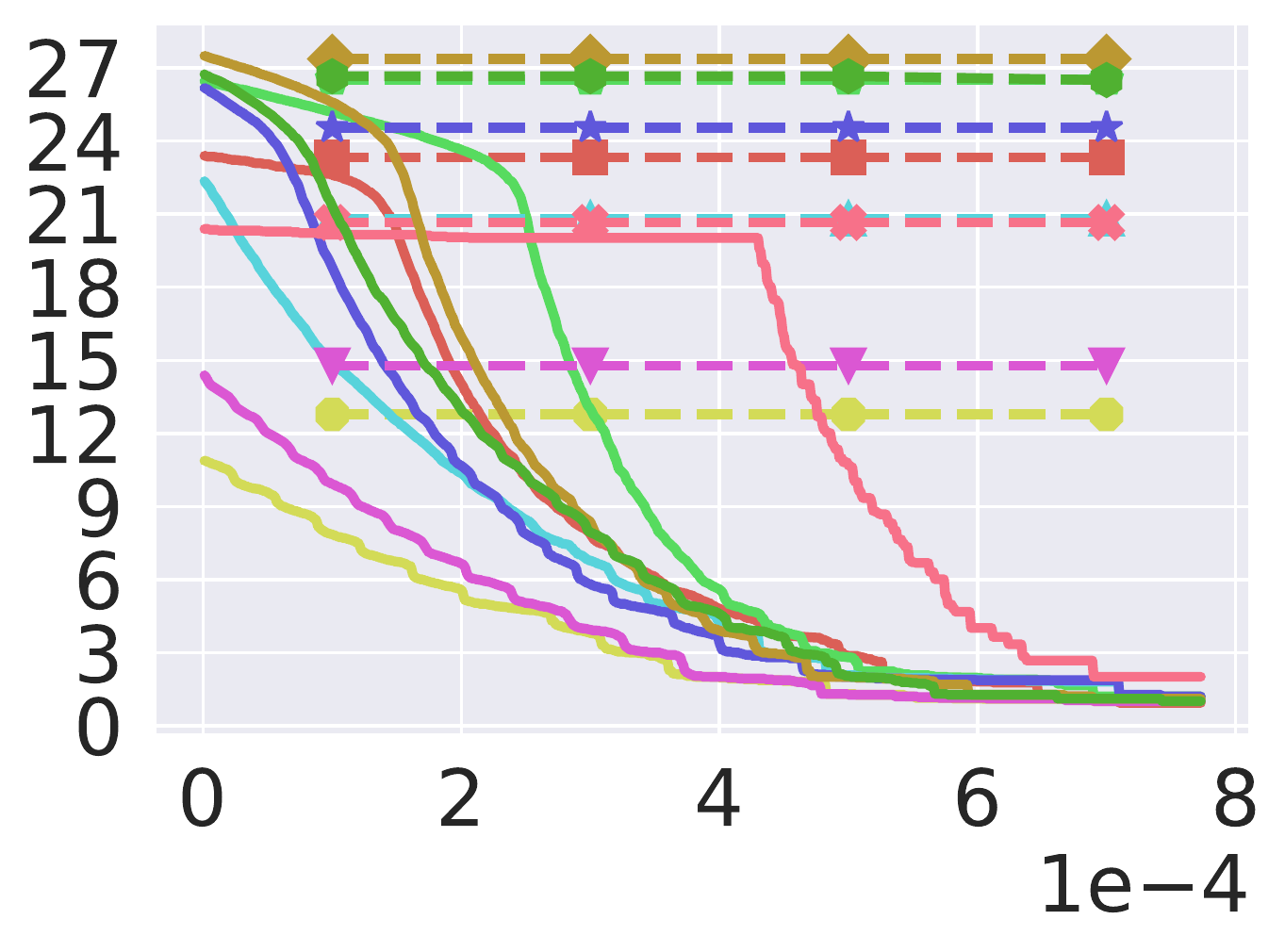}}%

\settoheight{\cputilheightaap}{\includegraphics[width=.162\linewidth]{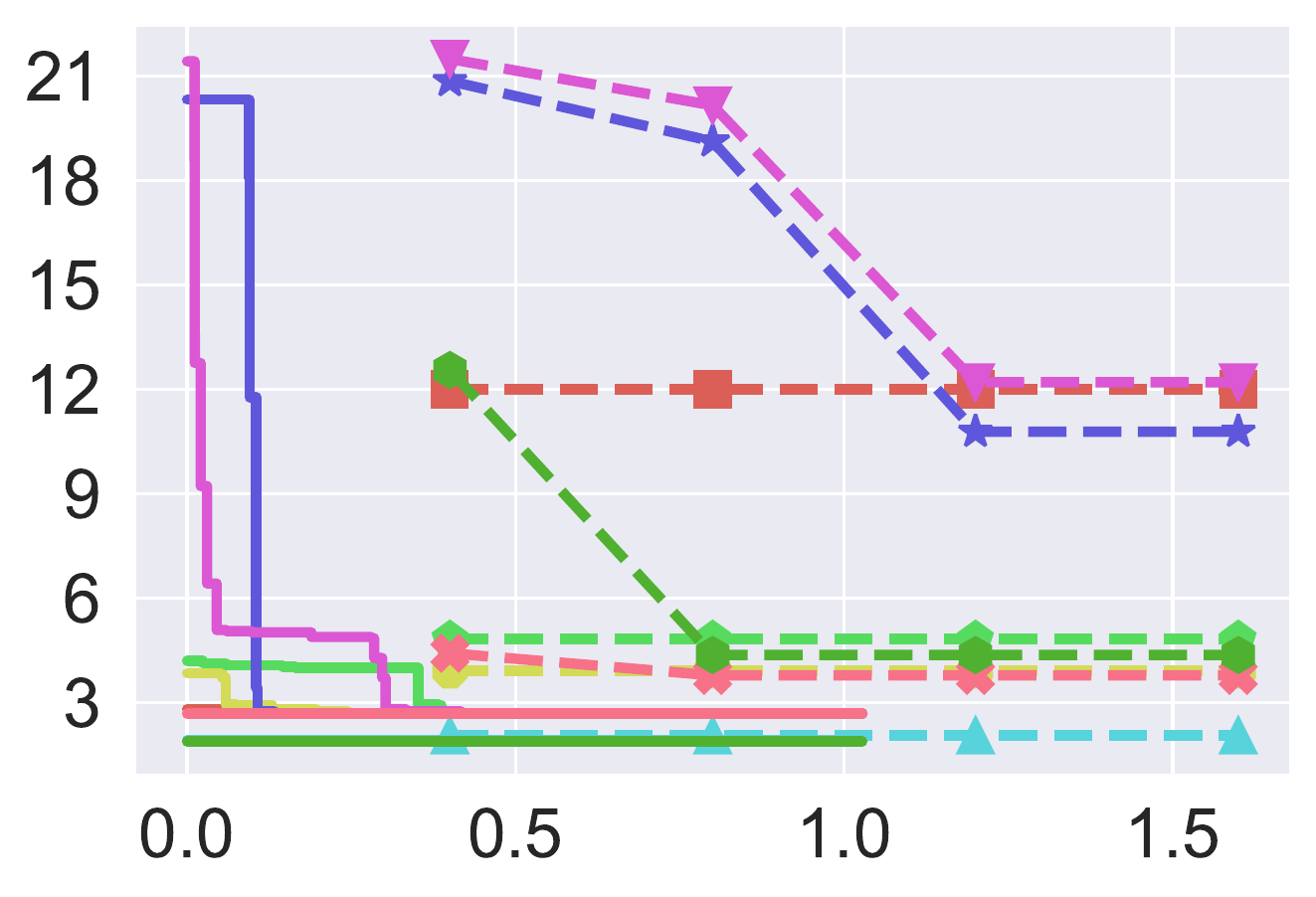}}%



\setlength{\cplegendheightb}{0.3\cputilheightc}%

\newcommand{\cprownamec}[1]
{\rotatebox{90}{\makebox[\cputilheightc][c]{\tiny #1}}}

\newcommand{\cprownamed}[1]
{\rotatebox{90}{\makebox[\cputilheightd][c]{\tiny #1}}}

\centering

{
\renewcommand{\tabcolsep}{10pt}

\begin{subtable}[]{\linewidth}
\begin{tabular}{l}
\includegraphics[height=\cplegendheight]{figures/Cartpole_legend.pdf}
\end{tabular}
\end{subtable}

\begin{subtable}[]{\linewidth}
\centering
\begin{tabular}{@{}p{5mm}@{}c@{}c@{}c@{}c@{}c@{}c@{}}
\cprowname{\makecell{highway\\Radius $r$}}&
\includegraphics[height=\cputilheighta]{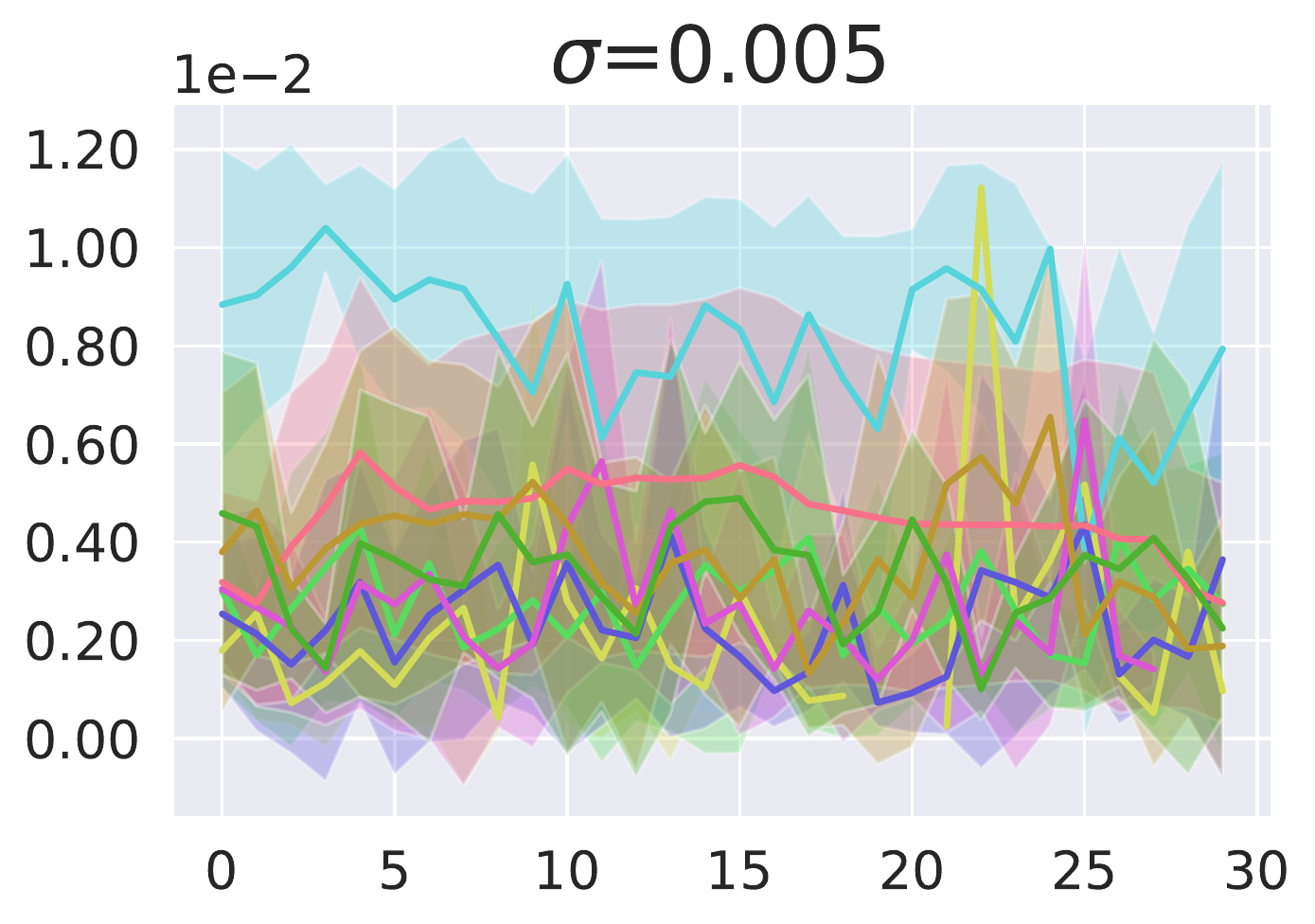}&
\includegraphics[height=\cputilheighta]{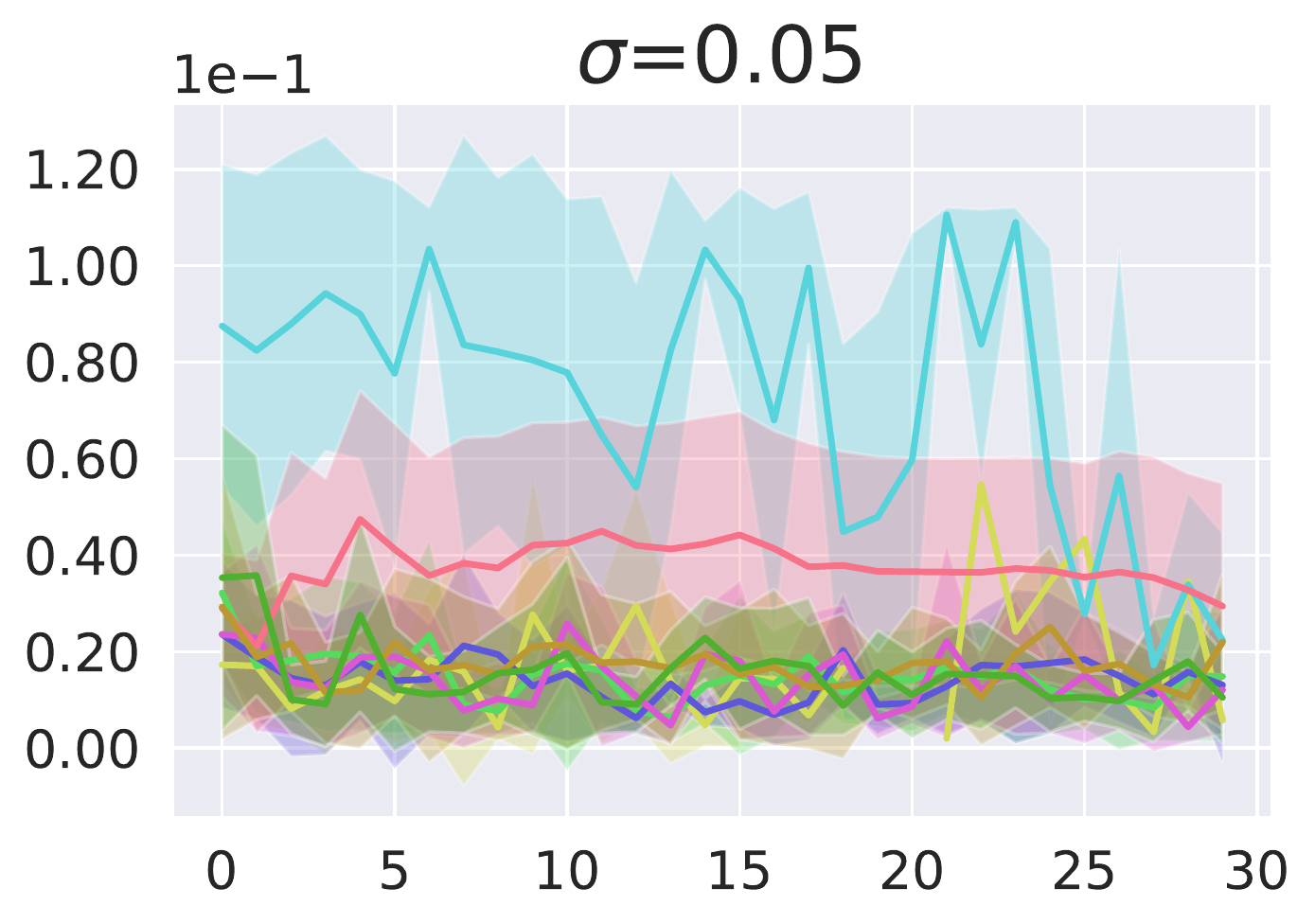}&
\includegraphics[height=\cputilheighta]{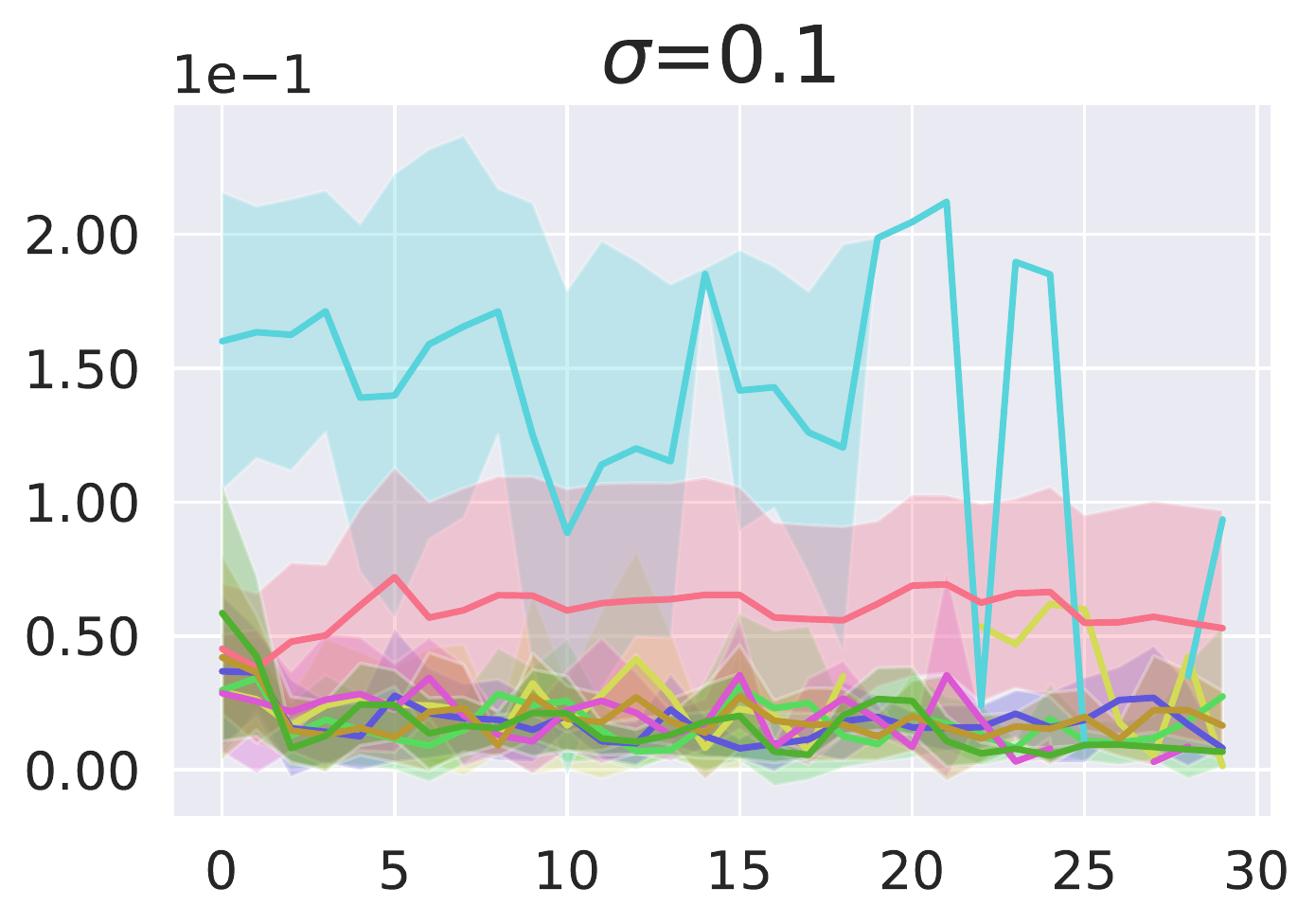}&
\includegraphics[height=\cputilheighta]{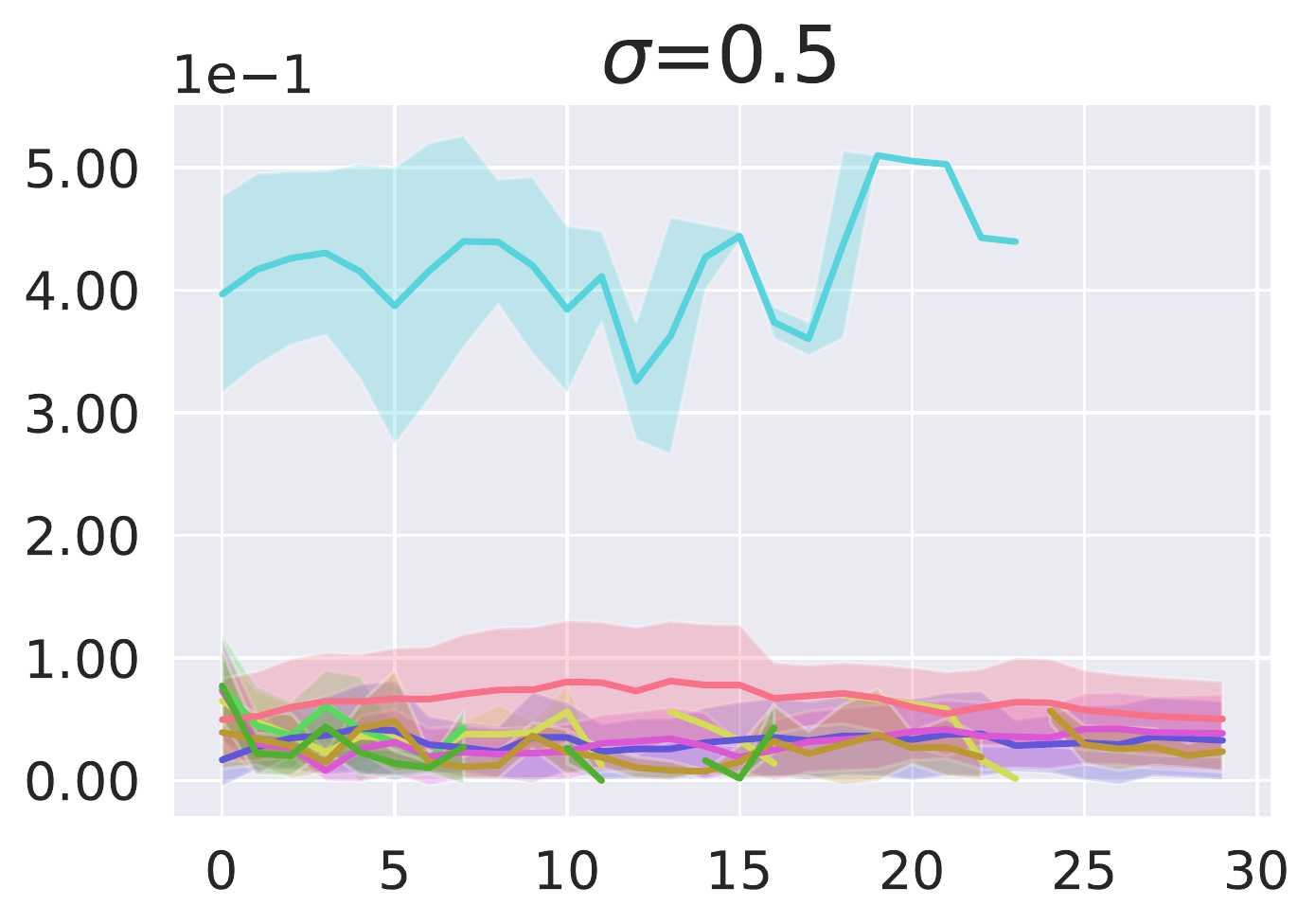}&
\includegraphics[height=\cputilheighta]{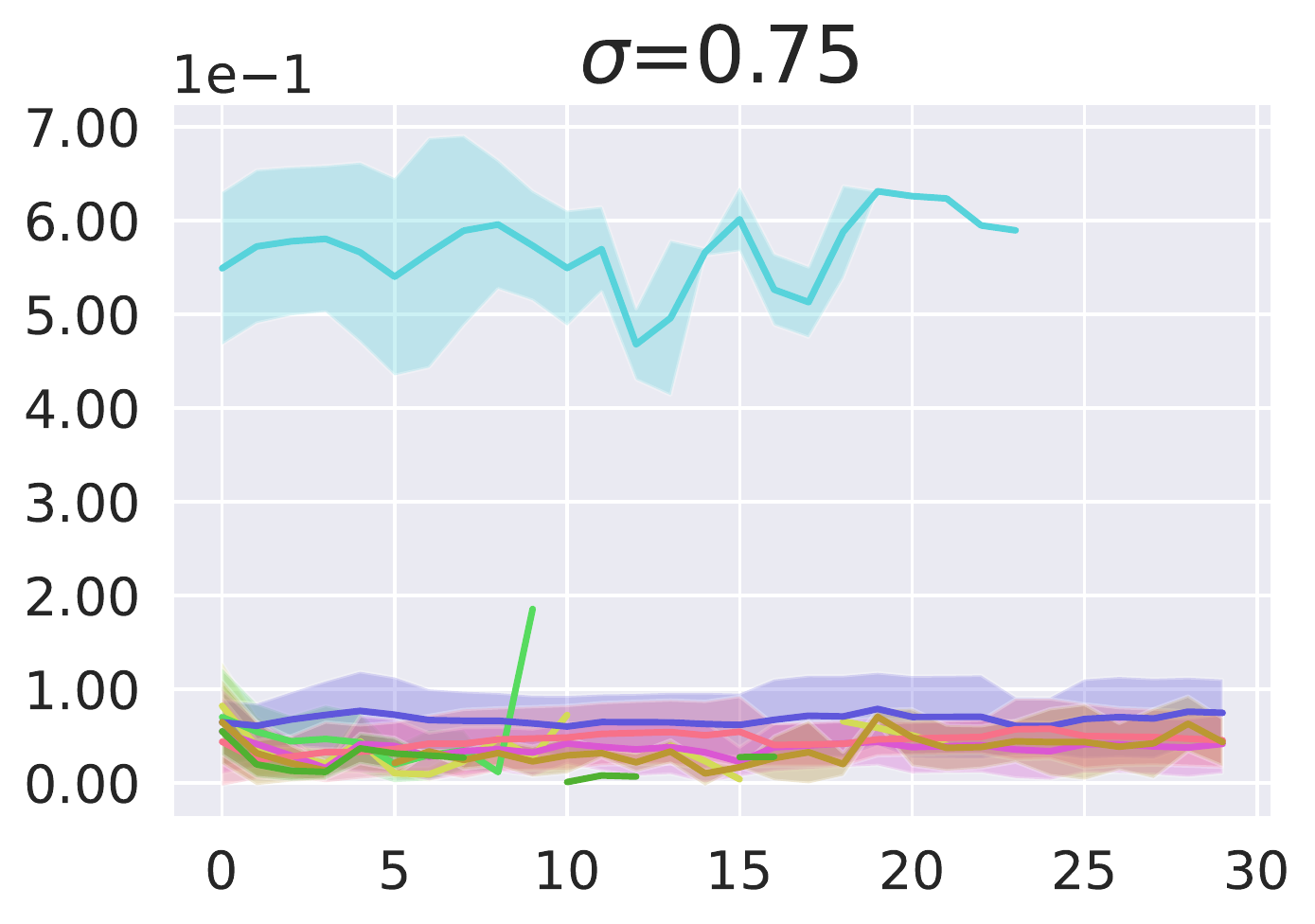}&
\includegraphics[height=\cputilheighta]{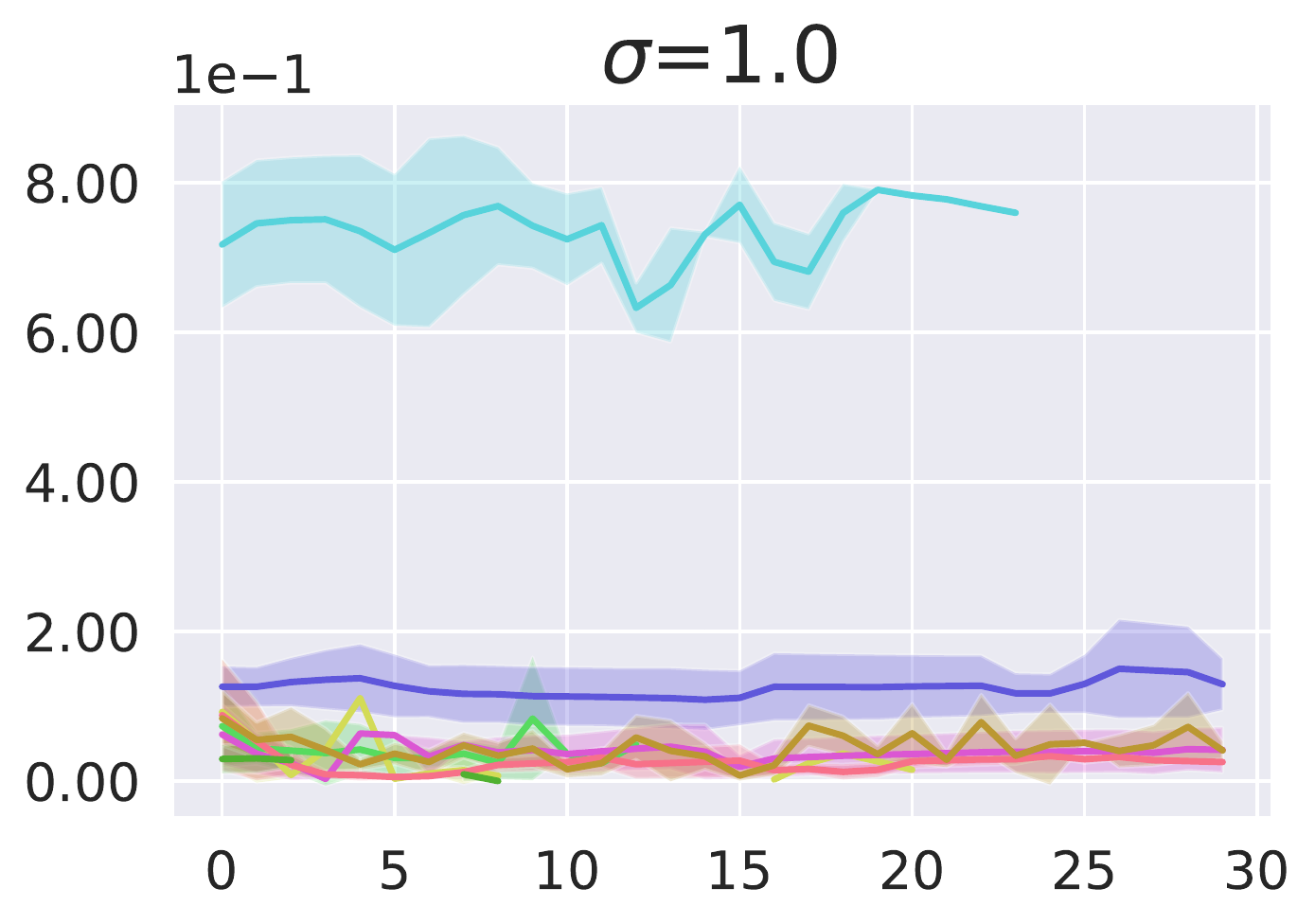}\\[-1.2ex]
        & \makecell{\tiny{time step $t$}}
        & \makecell{\tiny{time step $t$}}
        & \makecell{\tiny{time step $t$}}
        & \makecell{\tiny{time step $t$}}
        & \makecell{\tiny{time step $t$}}
        & \makecell{\tiny{time step $t$}}
\end{tabular}
\caption{\small Robustness certification for \textit{per-state action} in terms of certified radius $r$ at all time steps.
The shaded area represents the standard deviation.
}%
\label{fig:highway-statewise}
\end{subtable}

\begin{subtable}[]{\linewidth}
\centering
\includegraphics[height=\cpwrapheighta]{figures/highway-fast_loact_clean.pdf}
\vspace{-1mm}
\caption{\small Benign performance of  locally smoothed policy $\tpi$ under different  smoothing variance $\sigma$.}\label{fig:highway-statewise-clean}
\end{subtable}

\vspace{1em}

\begin{subtable}[]{\linewidth}
\begin{tabular}{l}
\includegraphics[height=\cplegendheightb]{figures/CartPole_legend_marker.pdf}
\end{tabular}
\end{subtable}

\begin{subtable}[]{\linewidth}
\centering
\resizebox{\linewidth}{!}{%
\begin{tabular}{@{}p{3mm}@{}c@{}c@{}c@{}c@{}c@{}c@{}}
        & \makecell{\tiny{$\sigma=0.005$}}
        & \makecell{\tiny{$\sigma=0.05$}}
        & \makecell{\tiny{$\sigma=0.1$}}
        & \makecell{\tiny{$\sigma=0.5$}}
        & \makecell{\tiny{$\sigma=0.75$}}
        & \makecell{\tiny{$\sigma=1.0$}}
        \vspace{-1.7pt}\\
\cprownamec{\makecell{(Global) $\uje$}}&
\includegraphics[height=\cputilheightcp]{figures/highway-fast_global_mean_sigma-0.005.pdf}&
\includegraphics[height=\cputilheightcp]{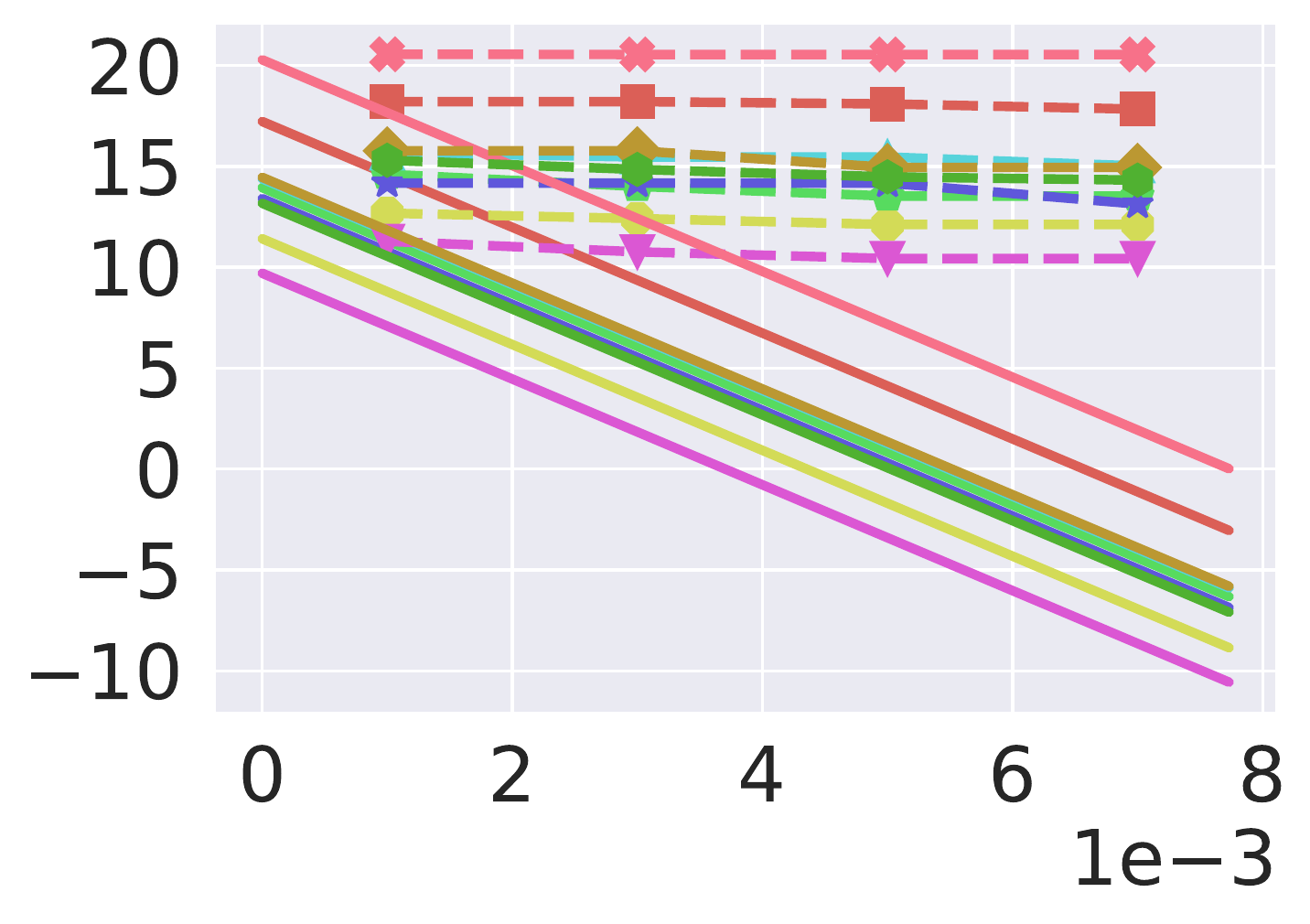}&
\includegraphics[height=\cputilheightcp]{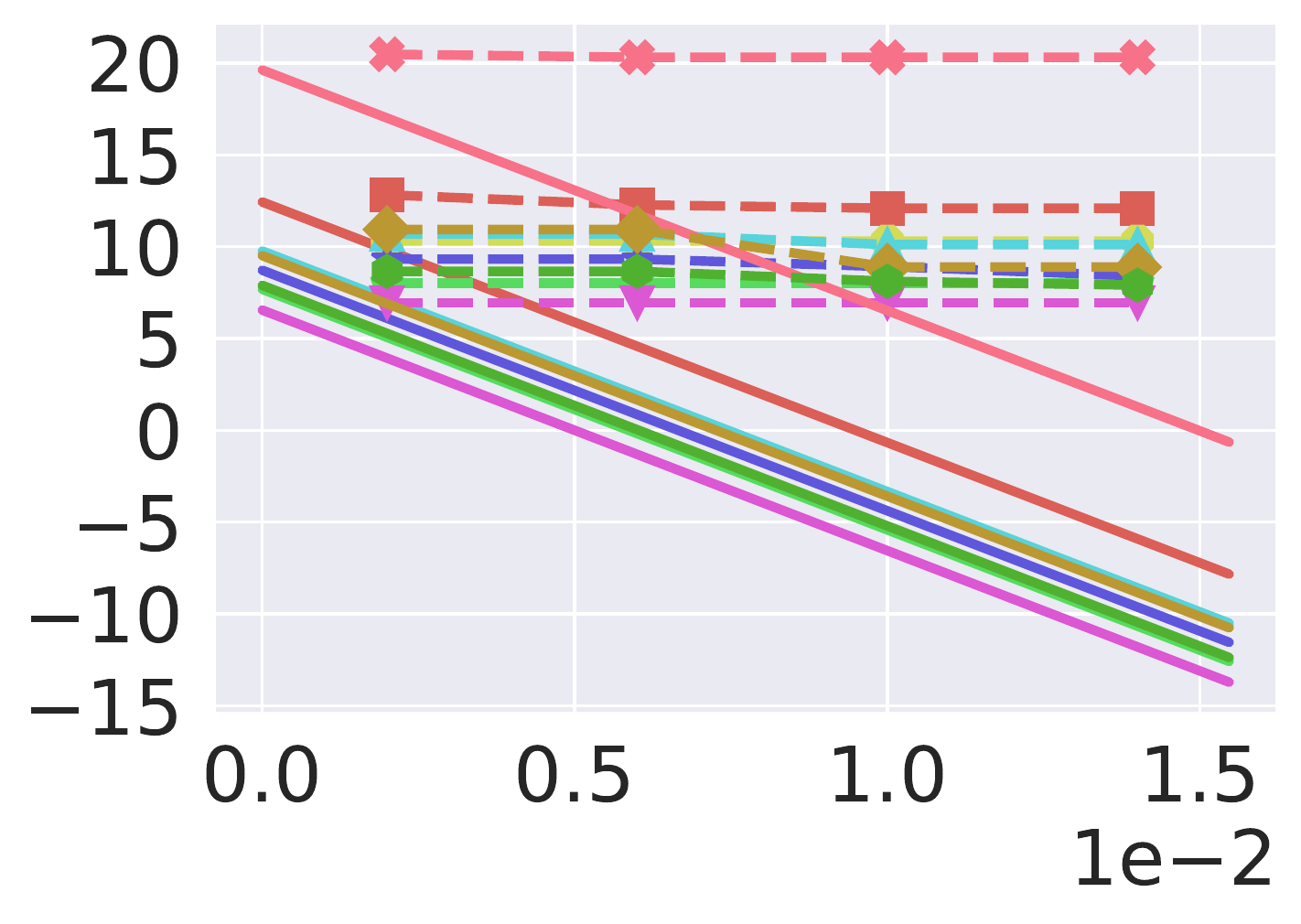}&
\includegraphics[height=\cputilheightcp]{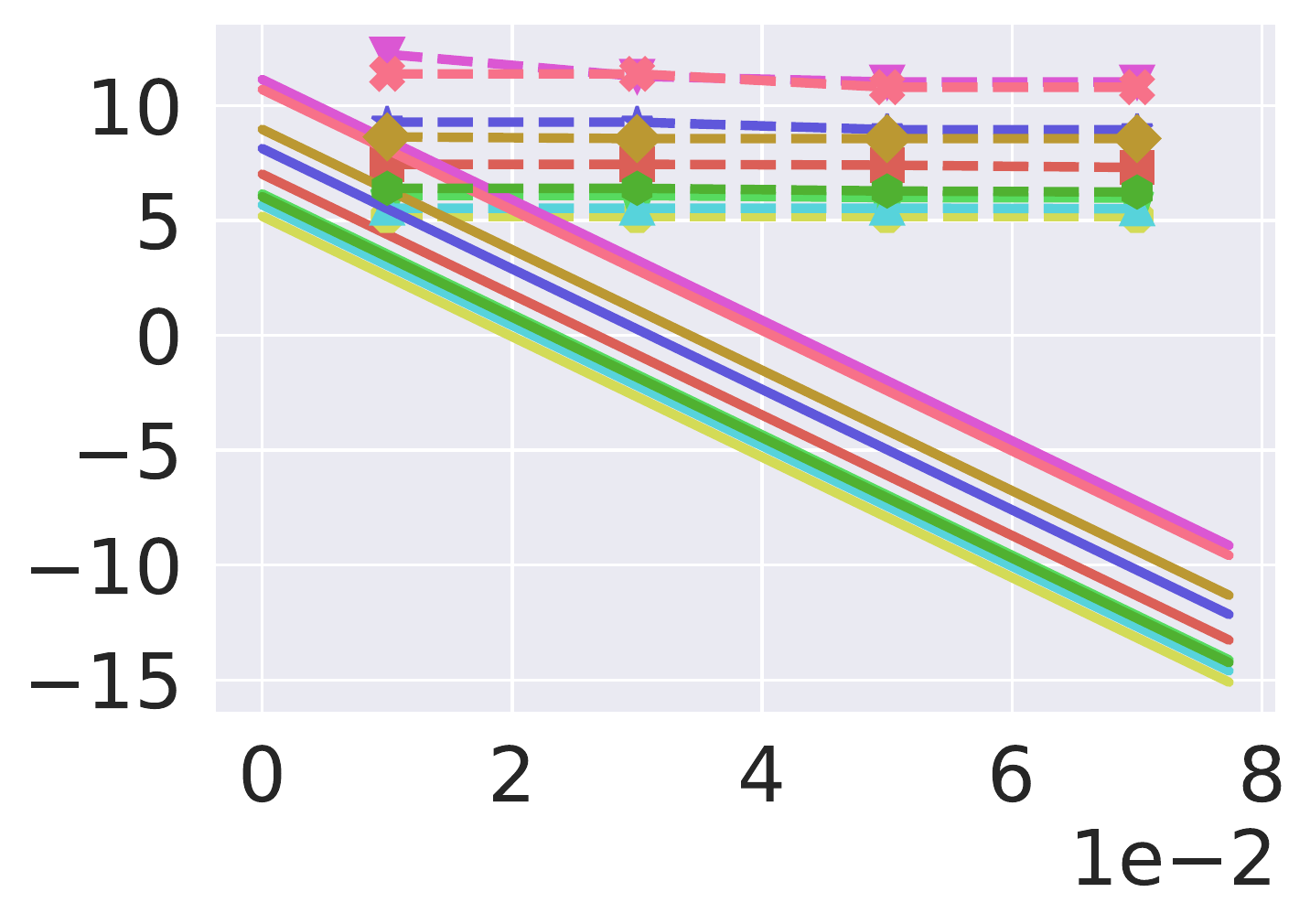}&
\includegraphics[height=\cputilheightcp]{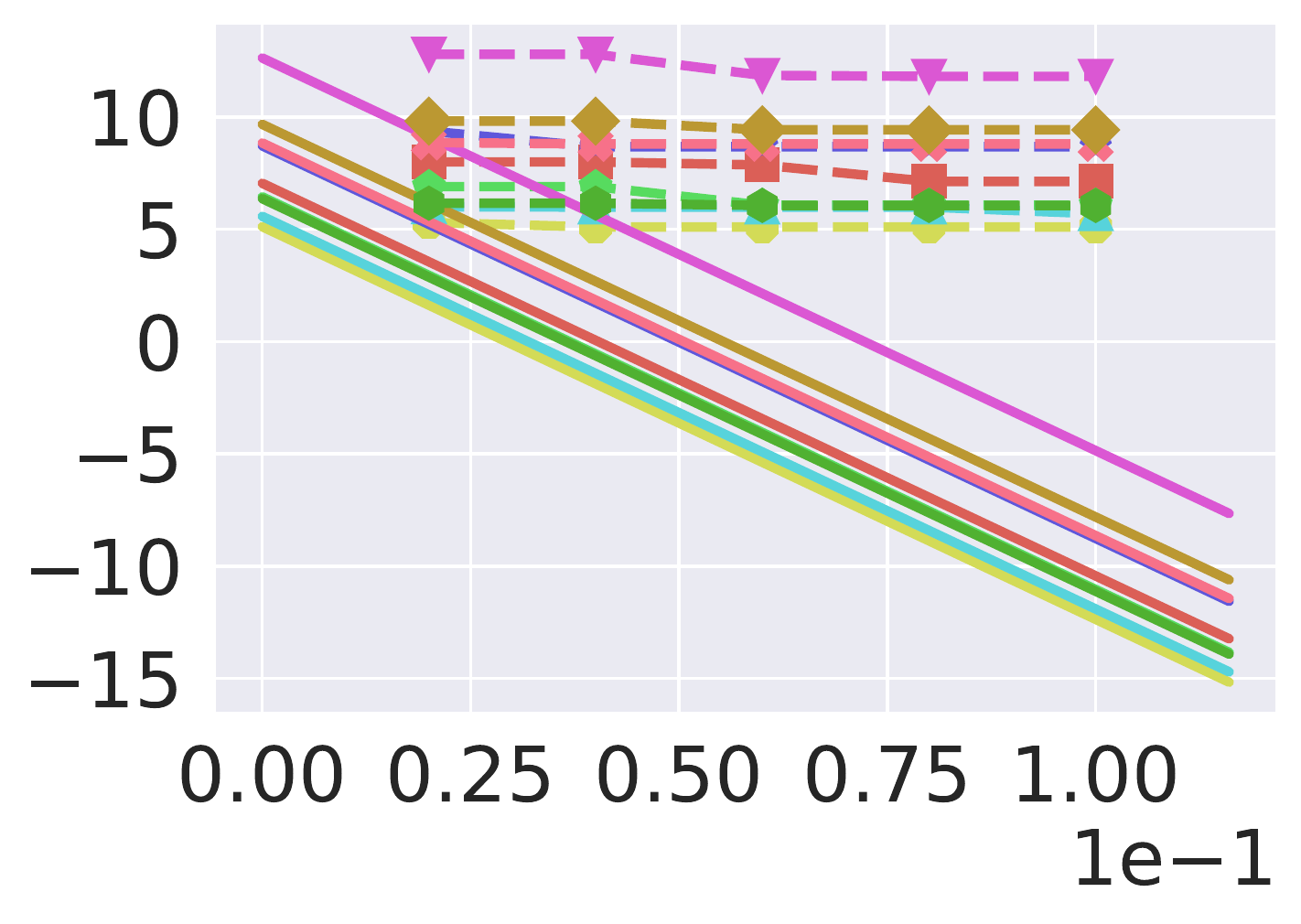}&
\includegraphics[height=\cputilheightcp]{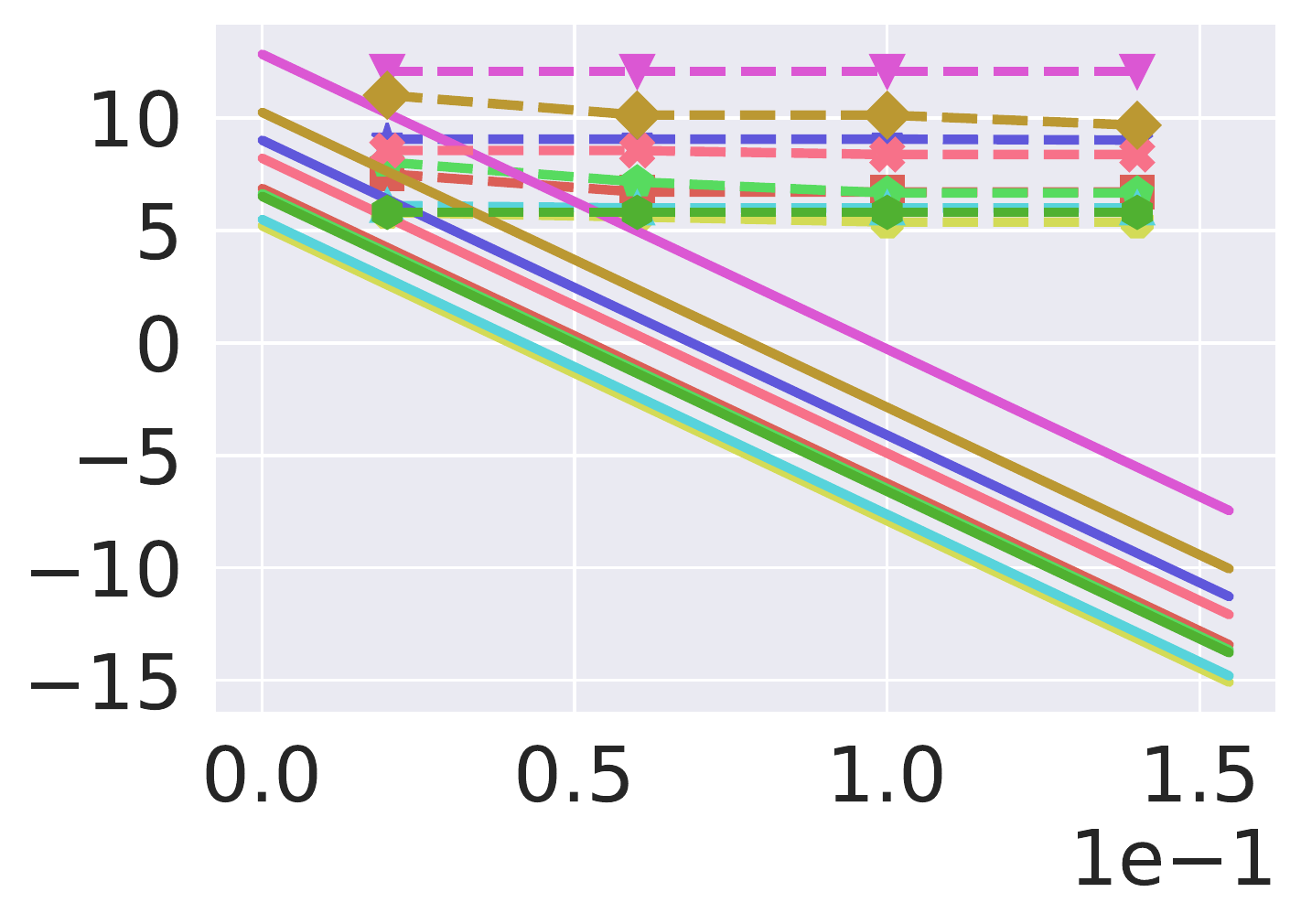}\\[-1.2ex]
\cprownamed{\makecell{(Global) $\ujp$}}&
\includegraphics[height=\cputilheightdp]{figures/highway-fast_global_median_sigma-0.005.pdf}&
\includegraphics[height=\cputilheightdp]{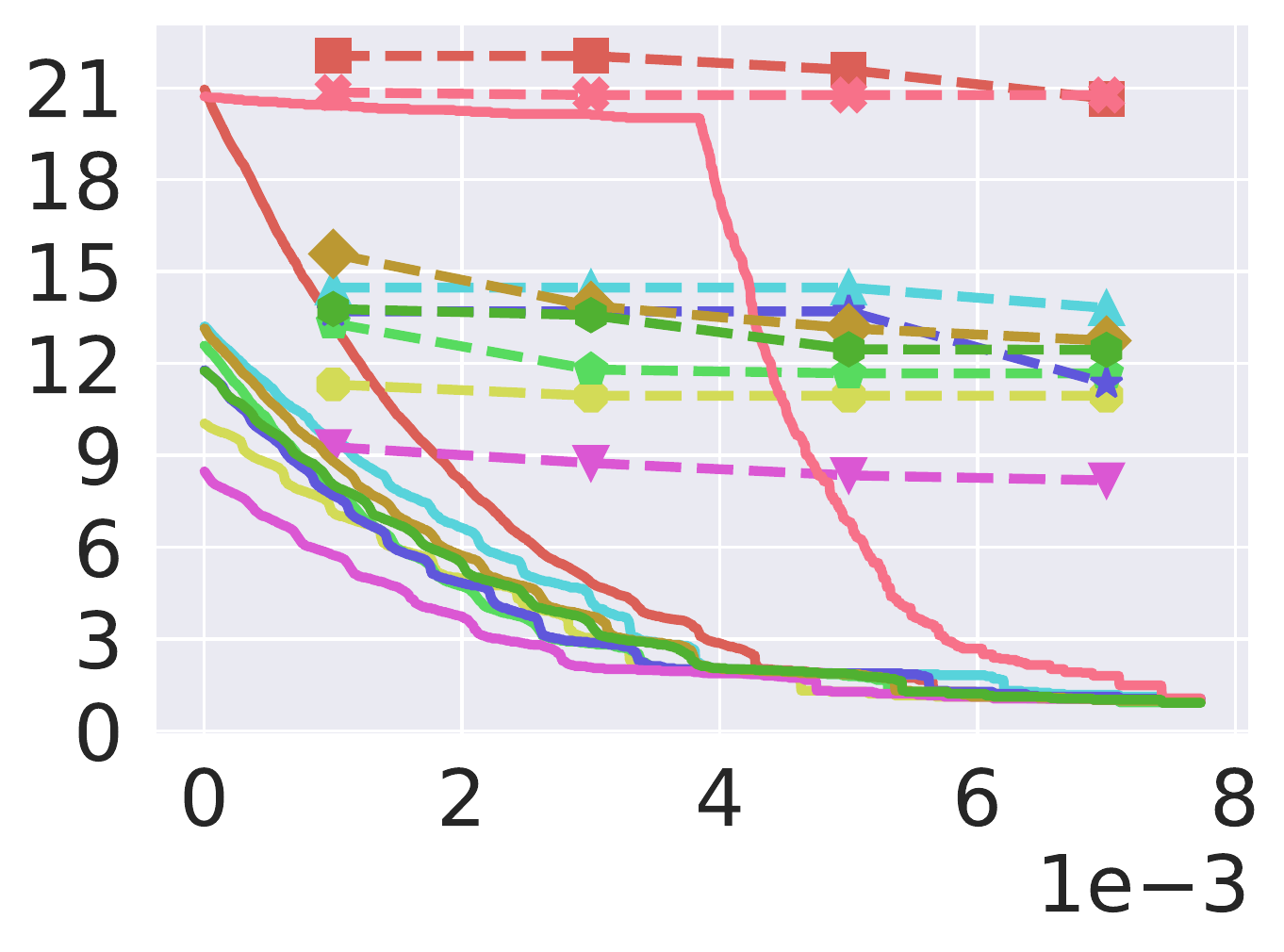}&
\includegraphics[height=\cputilheightdp]{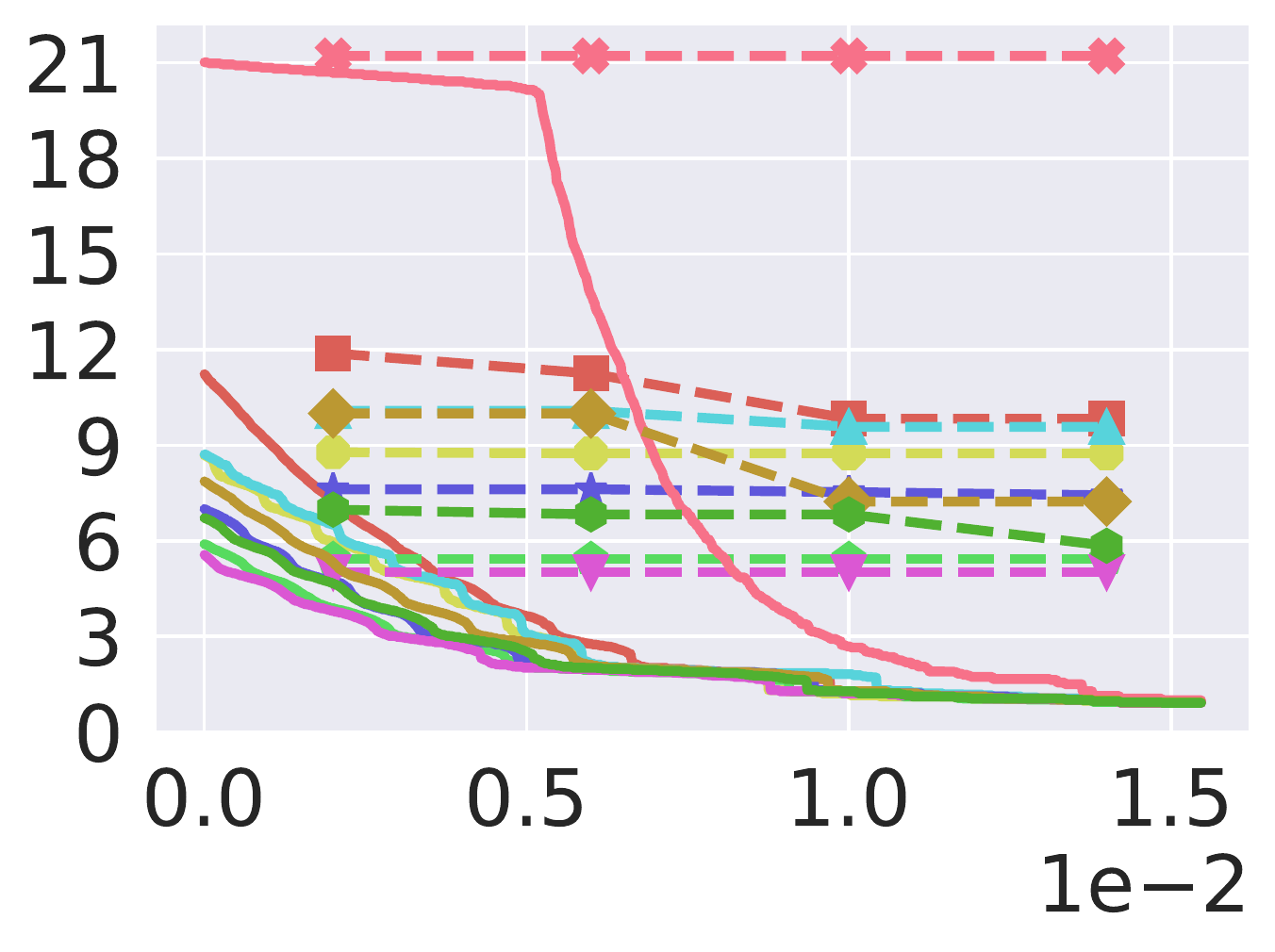}&
\includegraphics[height=\cputilheightdp]{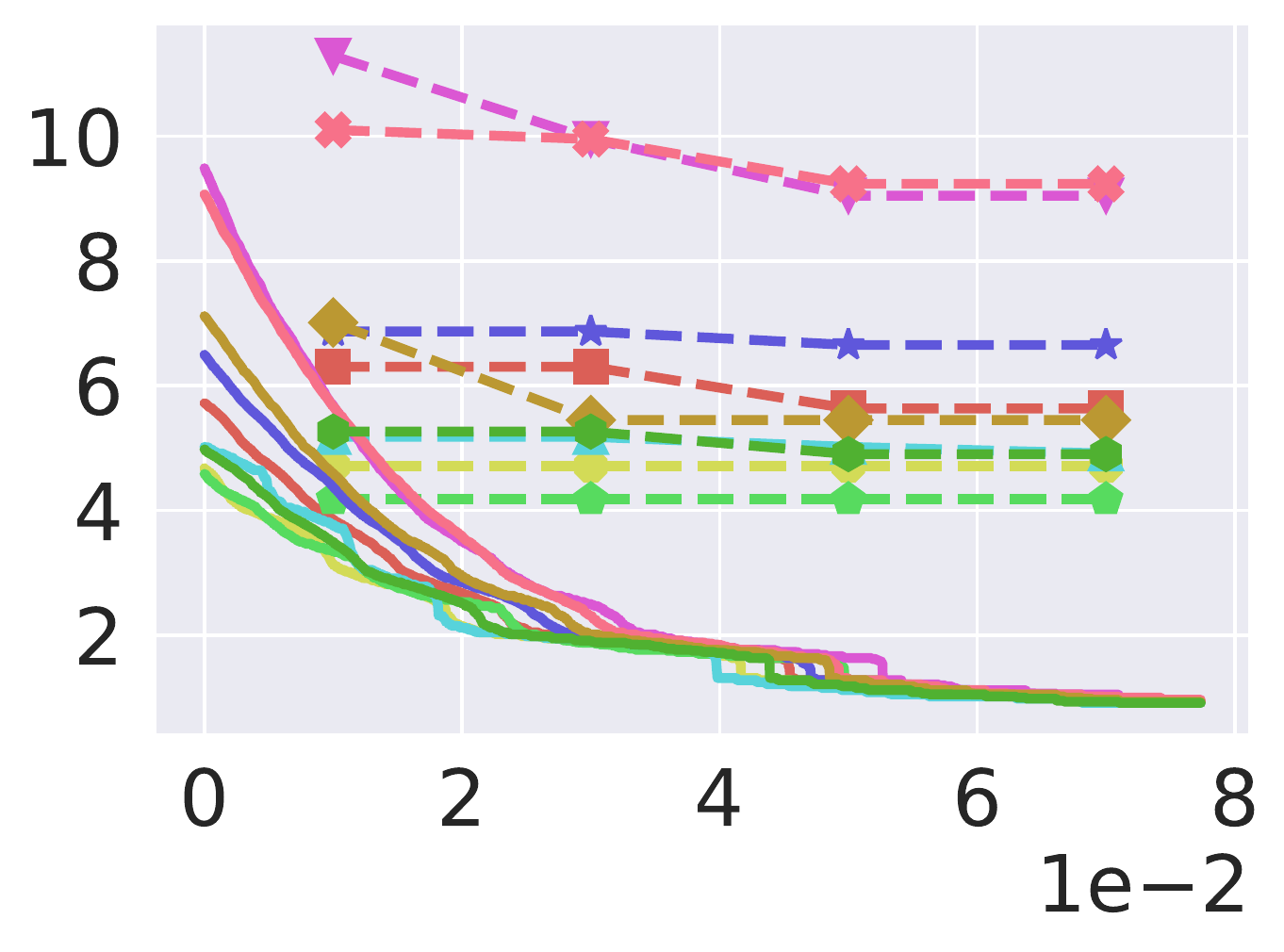}&
\includegraphics[height=\cputilheightdp]{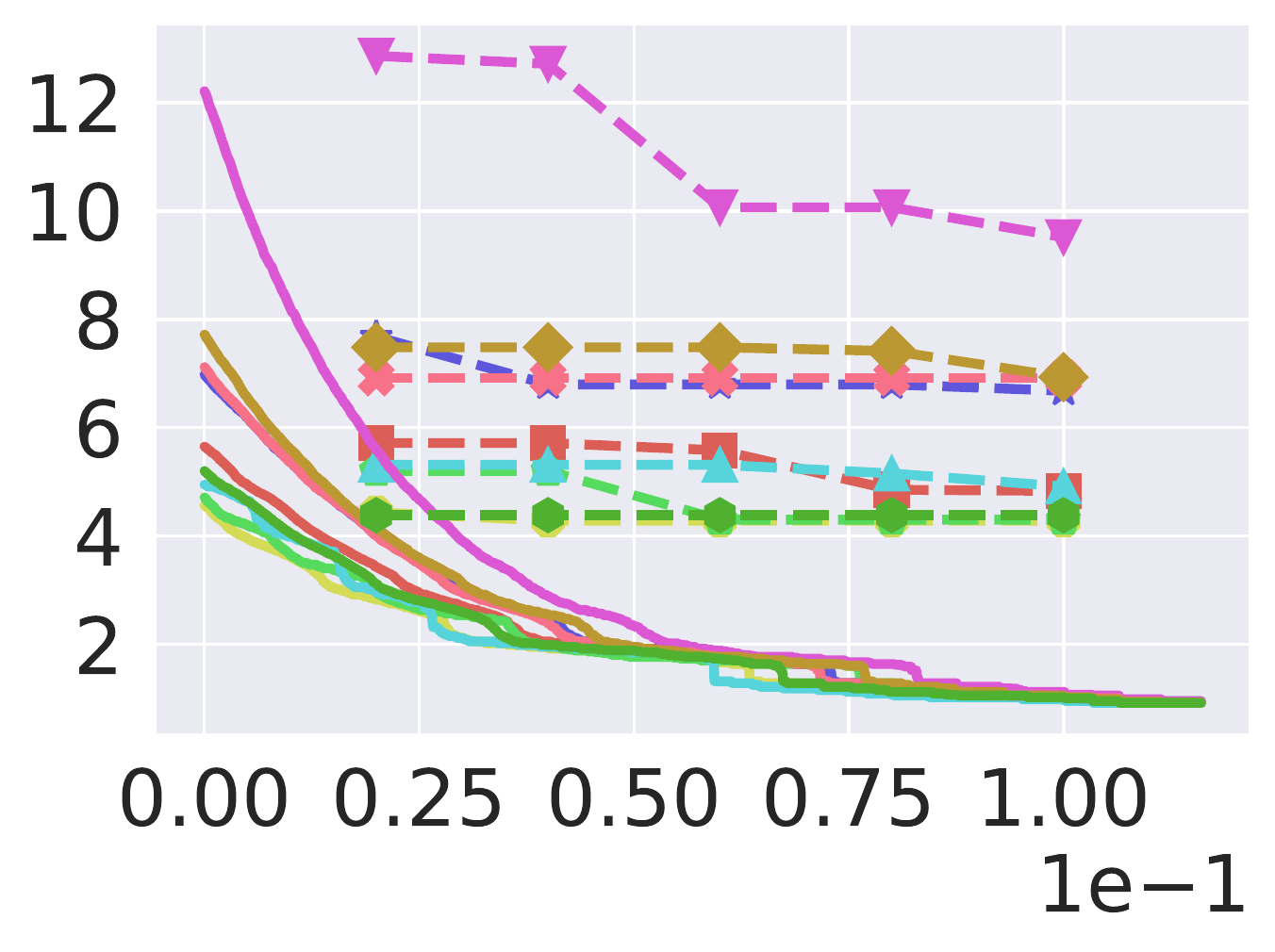}&
\includegraphics[height=\cputilheightdp]{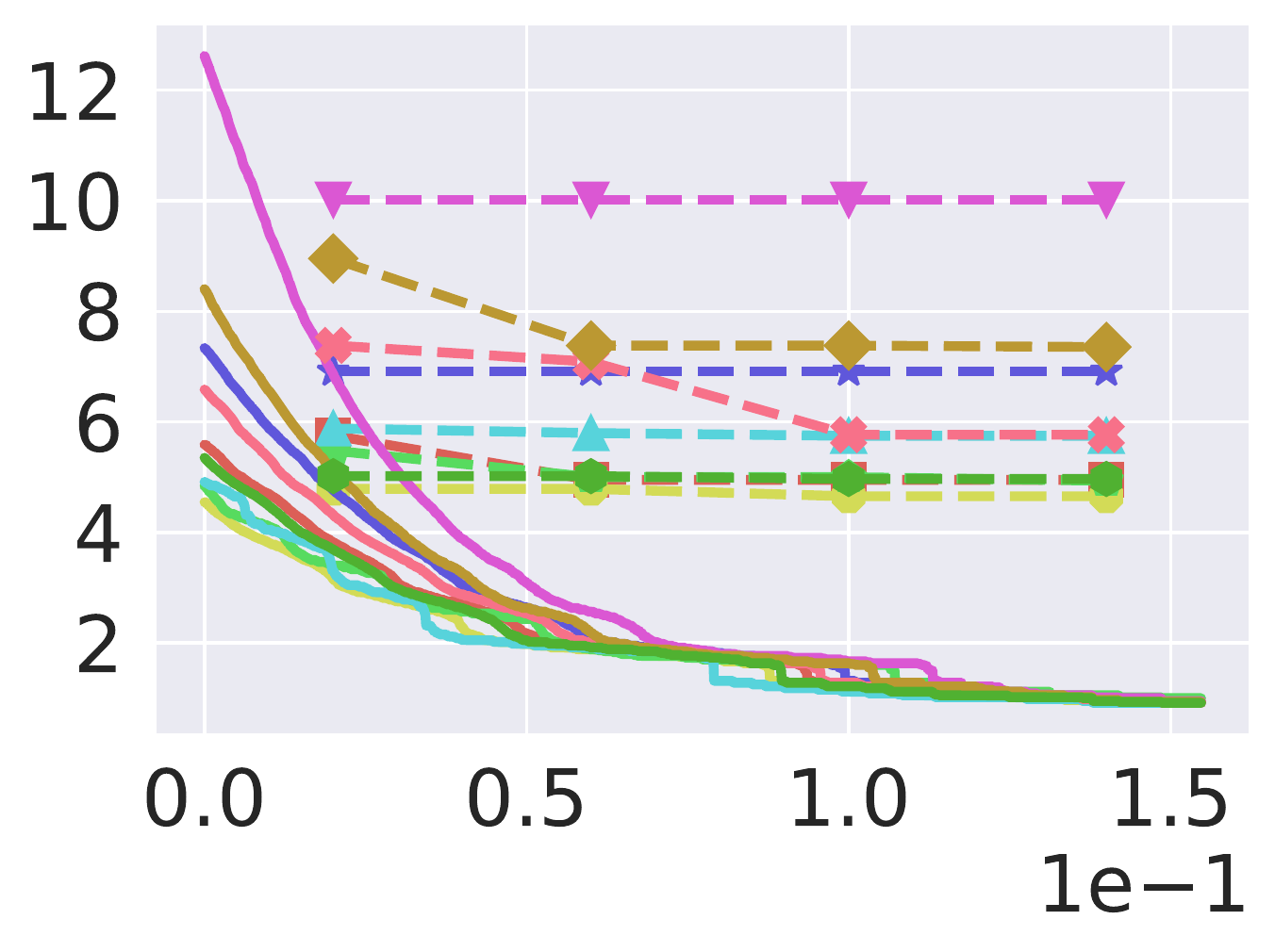}\\[-1.2ex]
\cprownamed{\makecell{(Local) $\uj$}}&
\includegraphics[height=\cputilheightaap]{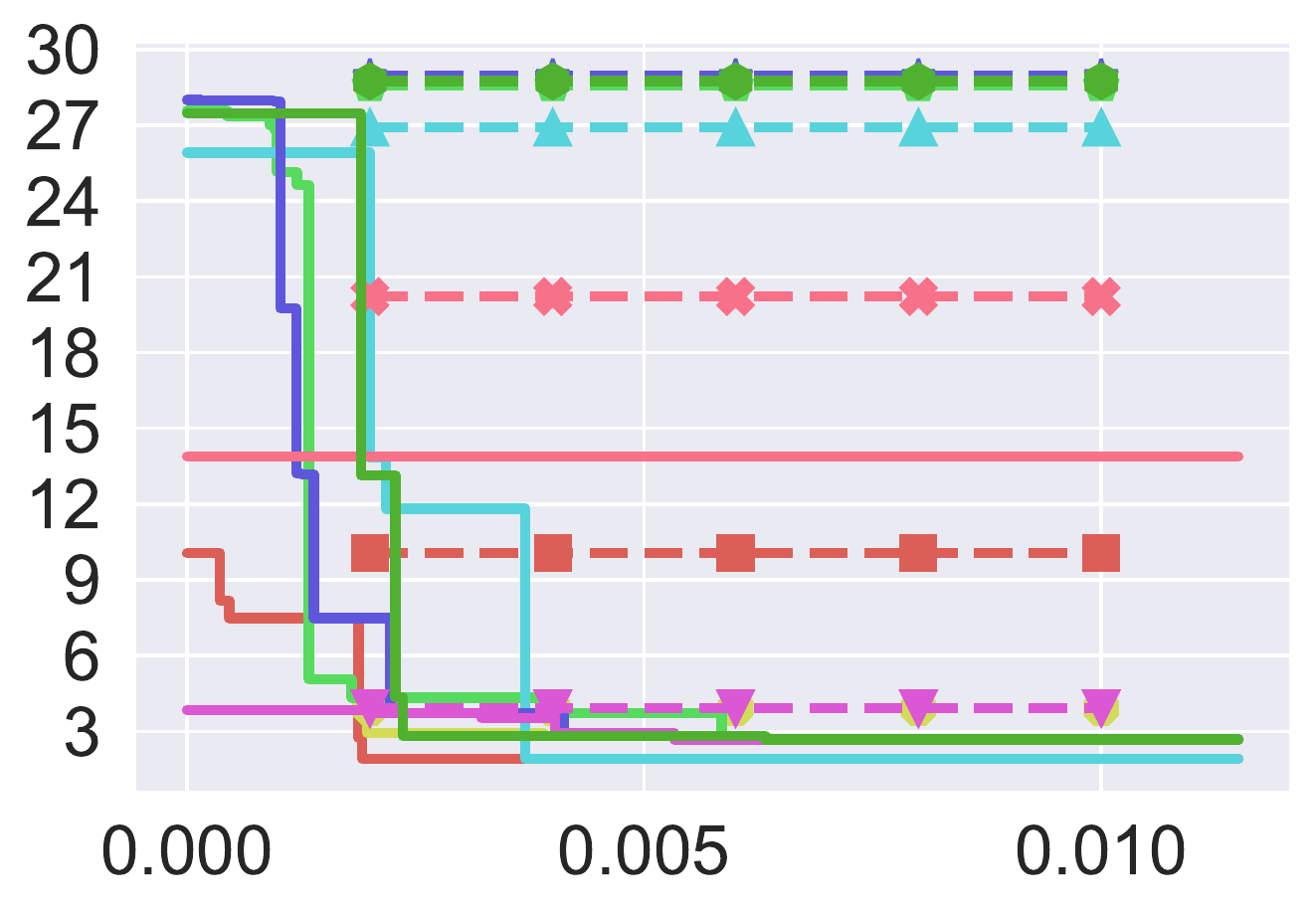}&
\includegraphics[height=\cputilheightaap]{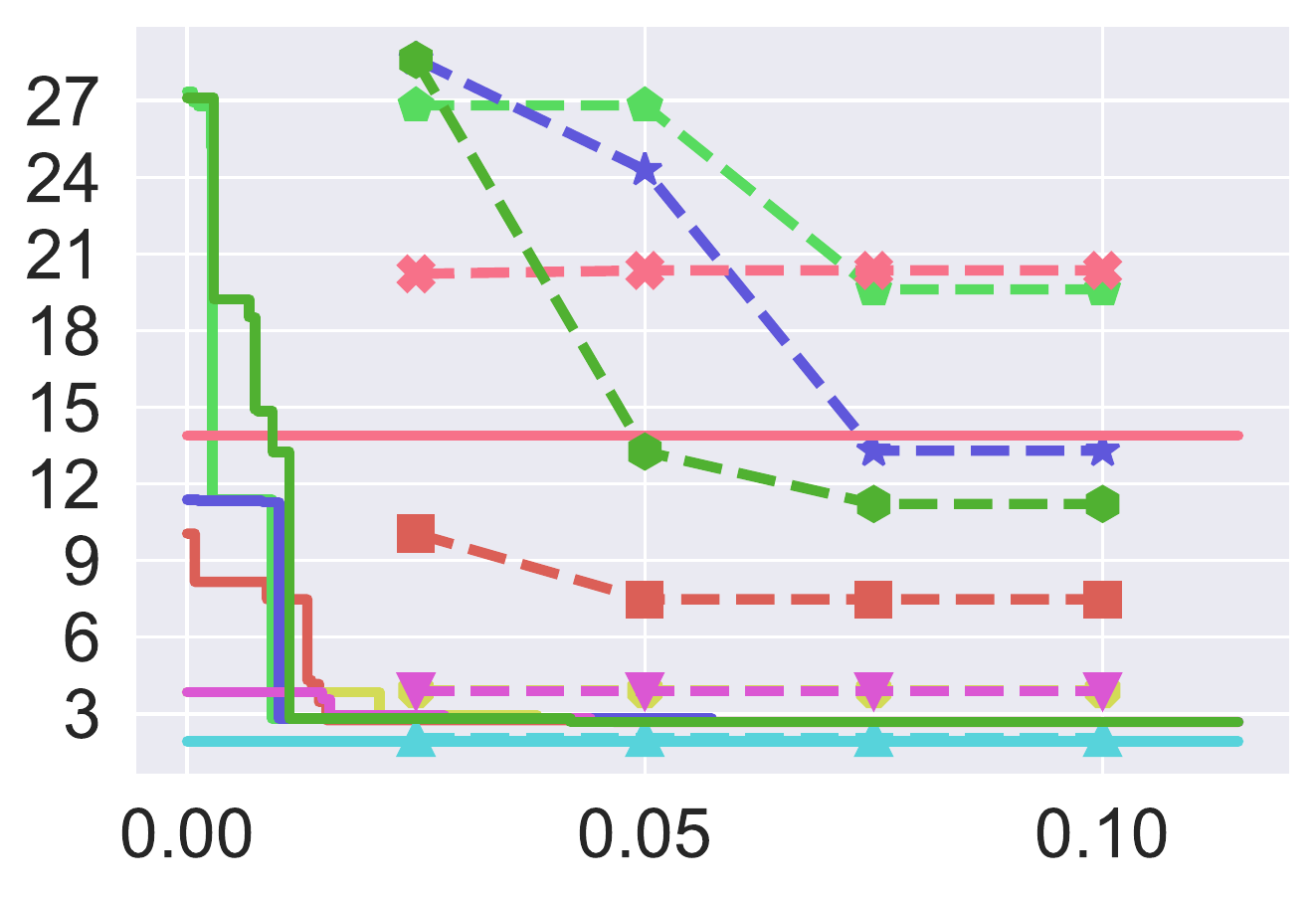}&
\includegraphics[height=\cputilheightaap]{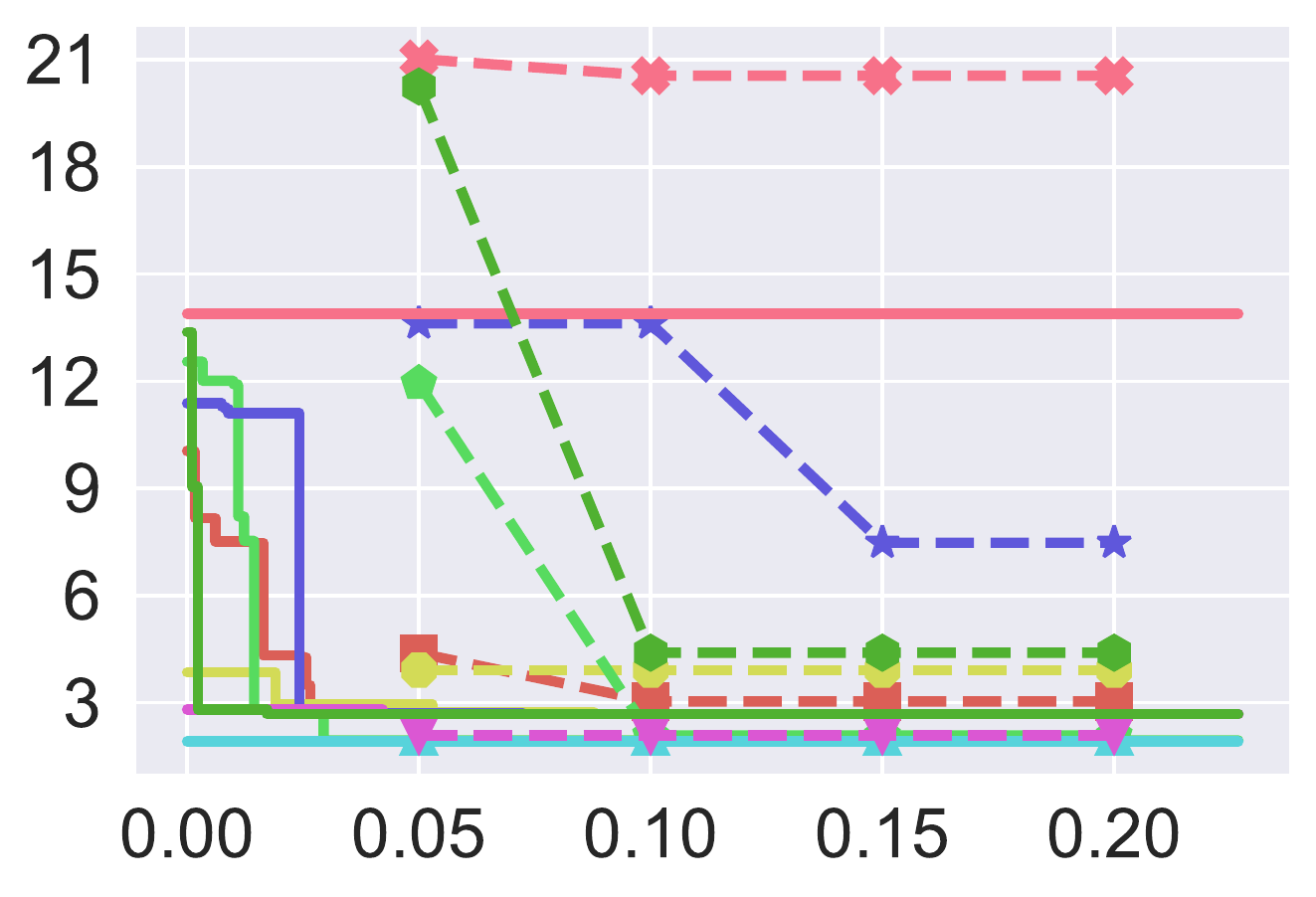}&
\includegraphics[height=\cputilheightaap]{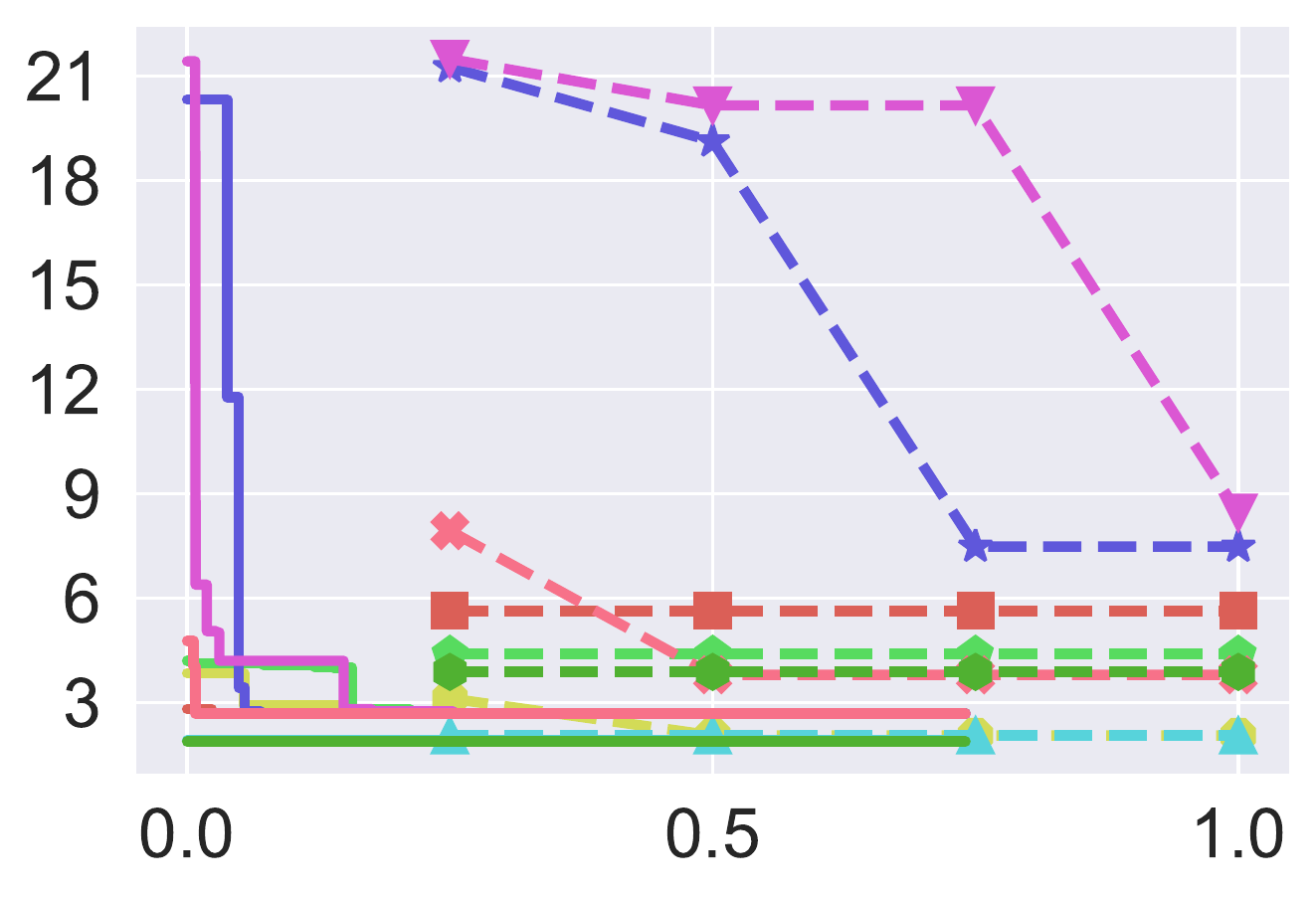}&
\includegraphics[height=\cputilheightaap]{figures/highway-fast_adasearch_sigma-0.75_cmp.pdf}&
\includegraphics[height=\cputilheightaap]{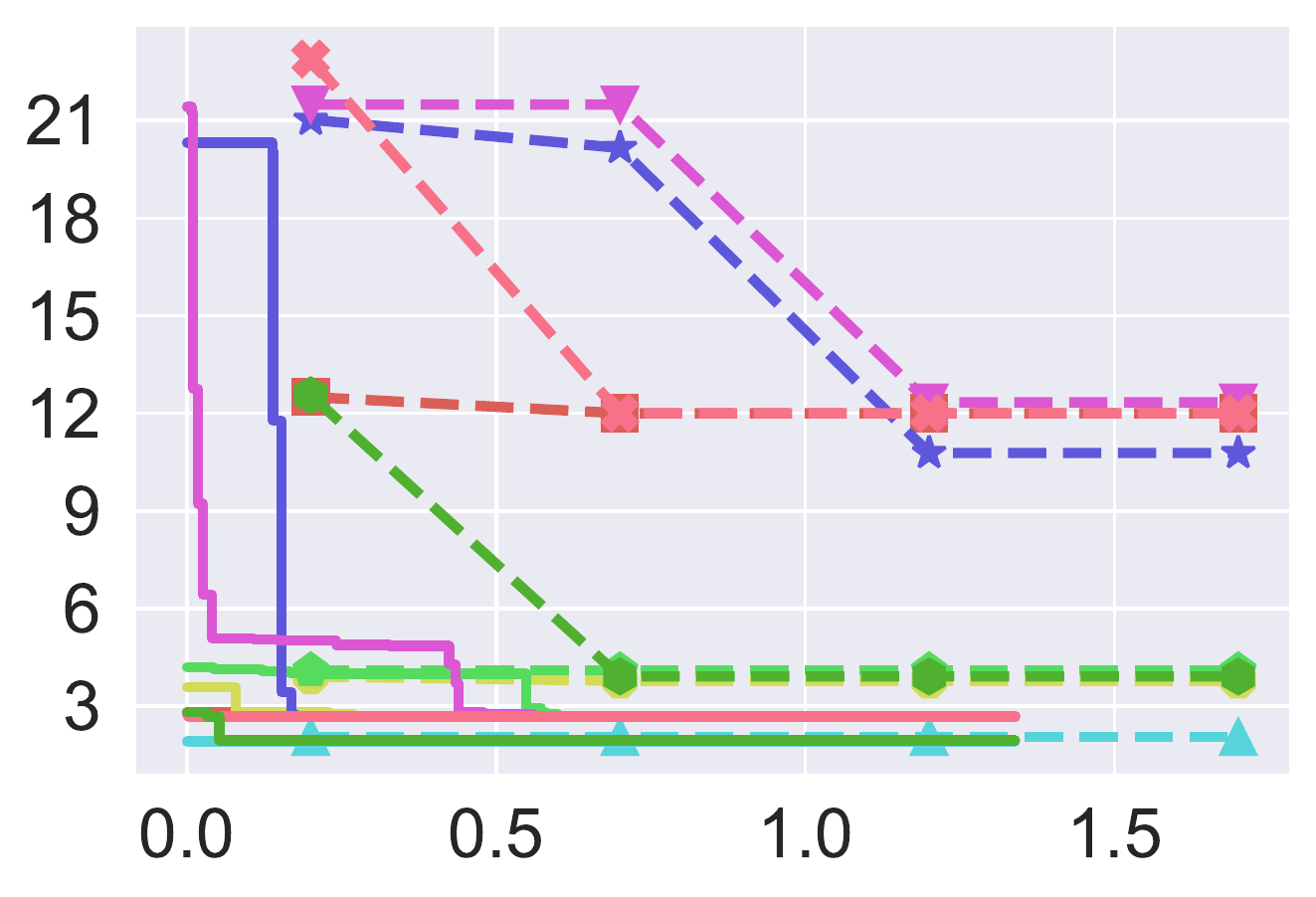}\\[-1.2ex]
        & \makecell{\tiny{attack $\eps$}}
        & \makecell{\tiny{attack $\eps$}}
        & \makecell{\tiny{attack $\eps$}}
        & \makecell{\tiny{attack $\eps$}}
        & \makecell{\tiny{attack $\eps$}}
        & \makecell{\tiny{attack $\eps$}}
\end{tabular}
}
\caption{\small Robustness certification as cumulative reward, including \textit{expectation bound} $\uje$, \textit{percentile bound} $\ujp$ ($p=50\%$), and \textit{absolute lower bound} $\uj$. 
Solid lines represent the certified reward bounds of different methods, and dashed lines show the empirical performance under PGD attacks.}\label{tab:highway-bounds}
\end{subtable}
}

\caption{\small \textbf{Robustness certification on highway} in terms of (a-b): \textit{robustness of per-state action} and (c): \textit{lower bound of cumulative rewards}.
In (a) and (c), each column corresponds to one smoothing variance. 
}\label{tab:highway-figs}
\end{figure}

}






\section{A Broader Discussion on Related Work}
\label{append:related}

\subsection{Evasion Attacks in RL}

\looseness=-1
We consider the adversarial attacks on state observations, where the attacker aims to add perturbations to the state during test time, so as to achieve certain \textbf{adversarial goals}, such as misleading the \textit{action} selection and minimizing the \textit{cumulative reward}. We discuss the related works that fall into these two categories below.

\textbf{Misleading the Action Selection.}\quad
It has been shown that different methods---applying \textit{random noise} to simply interfere with the action selection~\citep{kos2017delving}, or adopting \textit{adversarial attacks} (\eg, FGSM~\citep{goodfellow2014explaining} and CW attacks~\citep{carlini2017towards}) to deliberately alter the probability of action selection---are effective to different degrees. 
More concretely, when manipulating the action selection probability, some works aim to reduce the probability of selecting the optimal action~\citep{huang2017adversarial,kos2017delving,behzadan2017whatever}, while others try to increase the probability of selecting the worst action~\citep{lin2017tactics,pattanaik2018robust}.

\textbf{Minimizing the Cumulative Reward.}\quad
ATLA~\citep{zhang2021atla} and PA-AD~\citep{sun2021strongest} consider an \textit{optimal adversary} under the SA-MDP framework~\citep{zhang2021robust}, which aims to lead to minimal value functions under bounded state perturbations.
To find this \textit{optimal adversary} (\ie, the optimal adversarial state perturbation), 
ATLA~\citep{zhang2021atla} proposes to train an adversary whose action space is the perturbation set in the state space,
while PA-AD~\citep{sun2021strongest} further decouples the problem of finding state perturbations into finding policy perturbations plus finding the state that achieves the lowest value policy, thus addressing the challenge of large state space.

\subsection{Robust RL}

\textbf{Distributionally Robust RL.}\quad
\citet{nilim2005robust} and~\citet{iyengar2005robust} consider the problem of distributionally robust RL, where there is normally an uncertainty set with prior information regarding the distribution of the uncertain parameters. 
\citet{iyengar2005robust} directly put constraints on environmental dynamics and reward functions by assuming that they take values in an uncertainty set.
Another line of research assumes the environmental dynamics and reward function as the uncertain parameters and are sampled from an uncertain distribution in the uncertain set. The uncertainty set is in general state-wise independent, \ie, ``s-rectangularity'' in~\citet{nilim2005robust}. 
Both types aim to derive a policy by solving a max-min problem, with the former taking the minimum directly over the parameters, while the latter taking the minimum over the distribution.

\noindent\textbf{Empirically Robust RL against Evasion Attacks.}\quad
We have briefly introduced related work on empirically robust RL in~\Cref{sec:related} in the main paper.
We emphasize two main differences between the contributions of these works and our \sysname. 
\textbf{First}, most of these robust RL methods only provide empirical robustness against perturbed state inputs during test time, but cannot provide theoretical guarantees for the performance of the trained models under any bounded perturbations; while our \sysname framework can provide practically computable \textit{certified robustness} w.r.t. two robustness certification criteria (per-state action stability and cumulative reward lower bound), \ie, for a given RL algorithm, we can compute certifications for it that indicate its robustness.
(We will discuss a few more related work that can provide certified robustness in~\Cref{append:related-cert-rl}.)
\textbf{Second}, most of these Robust RL methods focus on only the per-state decision, while we additionally consider the \textit{certification of trajectories} to obtain the lower bound of cumulative rewards.
Below, we provide a more comprehensive review of the categories of RL methods that demonstrate empirical robustness against evasion attacks.

\textit{Randomization methods}~\citep{tobin2017domain,akkaya2019solving} were first proposed to encourage exploration. 
This type of method was later systematically studied for its potential to improve model robustness.
NoisyNet~\citep{fortunato2017noisy} adds parametric noise to the network's weight during training, providing better resilience to both training-time and test-time attacks~\citep{behzadan2017whatever,behzadan2018mitigation},
also reducing the transferability of adversarial examples,
and enabling quicker recovery with fewer number of transitions during phase transition.

Under the \textit{adversarial training} framework,
\citet{kos2017delving} and \citet{behzadan2017whatever} show that re-training with random noise and FGSM perturbations increases the resilience against adversarial examples.
\citet{pattanaik2018robust} leverage attacks using an engineered loss function specifically designed for RL
to significant increase the robustness to parameter variations.
RS-DQN~\citep{fischer2019online} is an \textit{imitation learning} based approach that trains a robust student-DQN in parallel with a standard DQN in order to incorporate the constrains such as SOTA adversarial defenses~\citep{madry2017towards,mirman2018differentiable}. 

SA-DQN~\citep{zhang2021robust} is a \textit{regularization} based method that 
adds regularizers to the training loss function to 
encourage the top-$1$ action to stay unchanged under perturbation.

Built on top of the \textit{neural network verification} algorithms~\citep{gowal2018effectiveness,weng2018towards}, Radial-RL~\citep{oikarinen2020robust} proposes to minimize an adversarial loss function that incorporates the upper bound of the perturbed loss, computed using certified bounds from verification algorithms.
CARRL~\citep{everett2021certifiable} aims to compute the  lower bounds of action-values under potential perturbation and select actions according to the worst-case bound, but it relies on linear bounds~\citep{weng2018towards} and is only suitable for low-dimensional environments.

\subsection{Robustness Certification for RL}
\label{append:related-cert-rl}

Despite the abundant literature in robustness certification in supervised learning, there is almost no work on robustness certification for RL, given the unclear criteria and the intrinsic difficulty of the task. In this part, we first introduce another two works that can achieve robustness certification at state-level, and then another concurrent work~\citet{kumar2021policy} which also aims to provide provable robustness regarding the cumulative reward for RL.

\textbf{Robustness Certification on State Level.}\quad
\citet{fischer2019online} and~\citet{zhang2021robust} have discussed the state level robustness certification as well, but the way they obtain the certification is different from our \sysname.
Concretely, we achieve robustness by probabilistic certification via randomized smoothing~\citep{cohen2019certified}, while~\citet{fischer2019online} and~\citet{zhang2021robust} directly achieve robustness by deterministic certification via leveraging the neural network verification techniques~\citep{zhang2018efficient,xu2020automatic,mirman2018differentiable}. 

Despite these works on state-level robustness certification for RL, we are the \textit{first} to provide the certification of cumulative rewards.
We emphasize that the certification of lower bound of the cumulative reward in RL is more challenging than the per-state certification considering the dynamic nature of RL. Our main contribution of the paper indeed mainly focuses on providing an efficient adaptive search based algorithm \adasearch to certify the lower bound of the cumulative reward together with rigorous analysis of the certification.

\textbf{Robustness Certification for Cumulative Rewards.}\quad
We compare our robustness certification for cumulative rewards with another concurrent work~\citet{kumar2021policy}.

In terms of the \textit{threat model}, both~\citet{kumar2021policy} and \sysname consider the type of adversary that can perturb the state observations of the agent during test time. In our CROP, we consider a perturbation budget per time step following previous works~\citep{huang2017adversarial,behzadan2017whatever,kos2017delving,pattanaik2018robust,zhang2021robust}, while they assume a budget for the entire episode. The two perspectives are closely related.
 
Regarding the \textit{certification criteria}, we certify both \textit{per-state action stability} and \textit{cumulative reward lower bound}. Although they also consider the cumulative reward in their certification goal, they formulate the certification as a classification problem---certifying whether the cumulative reward is above a threshold or not, in contrast to directly certifying the lower bound as in our case.
 
For the \textit{certification technique}, both works are developed based on randomized smoothing proposed in supervised learning~\citep{cohen2019certified}. We propose a global smoothing technique (\glbrs), as well as an adaptive search algorithm coupled with local smoothing (\adasearch) to achieve the certification of cumulative reward; while they propose an adaptive version of the Neyman-Pearson lemma, which is similar with our global smoothing. 
Among the three algorithms (\glbrs, \adasearch, and~\citet{kumar2020certifying}), \glbrs cannot defend against an adaptive adversary (as concretely stated in~\Cref{append:discuss-cert}) while the other two can; specifically, \adasearch is much more sophisticated and tight than the global smoothing ones, which is an important contribution in our work.


\end{document}
